\newtheorem{assumption}[theorem]{Assumption}
\begin{document}

%\title{Eliminating Ratio Bias for Gradient-Based Simulated Parameter Estimation}
\title{Beyond Likelihood Ratio Bias: Nested Multi-Time-Scale Stochastic Approximation for Likelihood-Free Parameter Estimation}

\author{\name Zehao Li \email zehaoli@stu.pku.edu.cn \\
       \addr Department of Management Science and Information Systems, Guanghua School of Management \\
       Peking University, Beijing, China
       \AND
       \name Zhouchen Lin \email zlin@pku.edu.cn \\
       \addr State Key Lab of General Artificial Intelligence \\
 School of Intelligence Science and Technology, Peking University, Beijing, China \\
 Pazhou Laboratory (Huangpu), Guangdong, China  
      \AND
       \name Yijie Peng \email pengyijie@pku.edu.cn \\
       \addr Department of Management Science and Information Systems, Guanghua School of Management  \\
       School of Artificial Intelligence for Science, Peking University, Beijing, China\\
       Xiangjiang Laboratory, Hunan, China}

%\editor{My editor}

\maketitle

\begin{abstract}%   <- trailing '%' for backward compatibility of .sty file
%This article addresses the challenge of parameter calibration in stochastic models where the likelihood function is not analytically available. We propose a gradient-based simulated parameter estimation framework, leveraging a nested multi-time scale (NMTS) algorithm that tackles the issue of ratio bias in both maximum likelihood estimation and posterior density estimation problems. 
We study parameter inference in simulation-based stochastic models where the analytical form of the likelihood is unknown. The main difficulty is that score evaluation as a ratio of noisy Monte Carlo estimators induces bias and instability, which we overcome with a ratio-free nested multi-time-scale (NMTS) stochastic approximation (SA) method that simultaneously tracks the score and drives the parameter update. We provide a comprehensive theoretical analysis of the proposed NMTS algorithm for solving likelihood-free inference problems, including strong convergence, asymptotic normality, and convergence rates. We show that our algorithm can eliminate the original asymptotic bias $O\big(\sqrt{\frac{1}{N}}\big)$ and accelerate the convergence rate from $O\big(\beta_k+\sqrt{\frac{1}{N}}\big)$ to $O\big(\frac{\beta_k}{\alpha_k}+\sqrt{\frac{\alpha_k}{N}}\big)$, where $N$ is the fixed batch size, $\alpha_k$ and $\beta_k$ are decreasing step sizes with $\alpha_k$, $\beta_k$, $\beta_k/\alpha_k\rightarrow 0$. With proper choice of $\alpha_k$ and $\beta_k$, our convergence rates can match the optimal rate in the multi-time-scale SA literature. Numerical experiments demonstrate that our algorithm can improve the estimation accuracy by one to two orders of magnitude at the same computational cost, making it efficient for parameter estimation in stochastic systems.

\end{abstract}

\begin{keywords}
  stochastic approximation, likelihood-free estimation, ratio bias, convergence analysis
\end{keywords}

\section{Introduction} %先写density estimation
In statistical inference, likelihood-free parameter estimation (LFPE) refers to the case where the analytical form of the likelihood function is not available, and how to infer the model parameters based on observed data becomes difficult. 
Key inference methods include maximum likelihood estimation (MLE), which obtains a point estimate by maximizing the likelihood, and posterior density estimation (PDE), which views inference as density approximation and typically uses variational inference to select the distribution closest to the target posterior within a chosen variational family. %Basically, both of them require the analytical form of the likelihood function.

Traditional density estimation focuses on approximating an unknown distribution 
$p_{\text{data}}(x)$ with a model $p_{\theta}(x)$ parameterized by $\theta$. Depending on whether the likelihood is tractable, existing approaches include explicit likelihood models such as exponential families and normalizing flows \citep{papamakarios2021normalizing}, unnormalized energy models trained via score matching or Stein operators \citep{hyvarinen2005estimation,anastasiou2023stein}, and implicit generative models such as GANs and diffusion models \citep{puchkin2024rates}. Despite methodological diversity, these approaches share a common goal: learning a density over data. In contrast, this paper addresses a different class of inference problems, where the target of learning is not the density of data itself, but the parameters of a stochastic system that implicitly generates observable data.

This paper focuses on stochastic models or simulators characterized by system dynamics rather than explicit likelihood functions.  Examples include system dynamics models, complex network models, and Lindley's recursion in queueing models. In such systems, the likelihood function of the observed data 
$p(Y;\theta)$ is intractable, posing a major obstacle for parameter calibration and statistical inference \citep{gross2011fundamentals, shepherd2014review,Peng2020}.
Estimating or optimizing parameters in this setting requires differentiating the implicit likelihood with respect to $\theta$, leading to the score function $\nabla_{\theta}\log p(y;\theta)$. However, when only Monte Carlo estimators of 
$\nabla_{\theta} p(y;\theta)$ and 
$p(y;\theta)$ are available, the ratio 
$\nabla_{\theta}p(y;\theta)/p(y;\theta)$ introduces a systematic ratio bias that hampers convergence and stability.

Existing studies have investigated similar ratio-estimation problems from different perspectives. 
For instance, density-ratio estimation methods directly learn $q/p$ from samples drawn from two data distributions \citep{sugiyama2010least,sugiyama2012density,sugiyama2012density2,thomas2022likelihood}, and score-based or density-derivative-ratio estimators learn normalized derivatives like $\nabla_x p(x)/p(x)$ to characterize data-space geometry \citep{sasaki2017estimating,sasaki2018mode}.
These approaches rely on access to data-space gradients or two well-defined distributions in $x$, and the ratio is computed analytically within a deterministic objective. 
In contrast, our ratio $\nabla_\theta p_\theta(y)/p_\theta(y)$ resides in the parameter domain, and both numerator and denominator are obtained through Monte Carlo simulation. 
Consequently, existing ratio-estimation techniques cannot be applied, since their unbiasedness and convergence rely on exact evaluations or deterministic gradients, whereas in our case, the two components are stochastic and correlated, making the direct division intrinsically biased. 
This motivates our development of a ratio-free nested multi-time-scale (NMTS) stochastic approximation (SA) scheme that recursively tracks the desired parameter-space score without explicit division.

%To address the issue of ratio bias that arises in both MLE and PDE problems, we propose a gradient-based simulated parameter estimation (LFPE) algorithm framework based on the NMTS algorithm. 
Therefore, we cast the inference problem as a stochastic optimization task, jointly performing estimation and optimization \citep{Kushner1997StochasticAA,borkar2009stochastic}. The core idea is to treat both the parameters and the gradient of the logarithmic likelihood function jointly as components of a stochastic root-finding problem for a coupled system of nonlinear equations. This approach attempts to approximate the solution by devising two separate but coupled iterations, where one component is updated at a faster pace than the other.  Specifically, we find a recursive estimator that substitutes the ratio form of a gradient estimator. This method continually refines the gradient estimator by averaging all available simulation data, thereby eliminating ratio bias throughout the iterative process.

Building on the rich SA literature on the vanilla MTS methods \citep{borkar2009stochastic,Kushner1997StochasticAA,doan2022nonlinear,hu2022stochastic,hong2023two,hu2024quantile,zeng2024two,lin2025two,cao2025black,cao2024kernel}, which provides a versatile and well-understood foundation, we develop an NMTS scheme expressly tailored to the structural demands of our likelihood-free inference problem. In our setting, variational inference is used to perform density 
estimation by maximizing the evidence lower bound (ELBO) \citep{blei2017variational}, whose unbiased gradient estimator 
appears as an expectation involving a likelihood ratio. To evaluate this outer expectation, we 
first adopt the sample average approximation (SAA) to draw a finite set of outer samples. Then, 
for each outer sample, we design a set of parallel fast-timescale recursions that update local 
estimates of the likelihood-ratio gradient. These parallel recursions are subsequently coupled 
into the slow-timescale parameter update through a weighted averaging mechanism. This structure 
allows the fast-timescale to track the local ratio estimates, while the inner recursions 
continuously update the parameters by the overall ELBO gradient represented by the combination of parallel fast-timescale recursions. In this way, we effectively extend the classical multi-time-scale 
SA to a nested and parallel setting,  specifically adapted to 
intractable likelihood simulators.

We also provide a comprehensive theoretical analysis of the proposed NMTS algorithm, including almost-sure convergence, asymptotic normality, and the $\mathbb{L}^1$ convergence rate. 
The main technical challenge arises from the nested structure of the algorithm, where the inner recursions evolve on different timescales while the outer recursion depends on an SAA-based approximation of the ELBO gradient. In the strong convergence analysis, the introduction of outer samples requires proving that the SAA gradient estimator remains uniformly consistent when the inner parameter $\lambda_k$ changes during the iteration, a result guaranteed by Donsker's Theorem in  Proposition \ref{proposition1}. 
Theorems \ref{thm1} and \ref{thm2} respectively establish the uniform convergence of the parallel fast-timescale recursions and the convergence of the slow-timescale updates. 
Our proof constructs a sequence of continuous time interpolations of the discrete iterates and shows that they are asymptotically governed by a limiting system of ordinary differential equations (ODEs). 
The stationary point of this ODE system coincides with the almost sure limit of the NMTS iterates, thereby extending convergence results in prior work to the nested and high-dimensional setting considered here. Proposition \ref{proposition3} and \ref{proposition4} further establish that the convergence results hold for the two-layer structure of the algorithm, extending the previous pioneering stochastic approximation literature \citep{doan2022nonlinear,hong2023two,zeng2024two,lin2025two,hu2022stochastic,hu2024quantile,cao2025black}. 

For the weak convergence results, we build upon the framework in \cite{mokkadem2006convergence} to analyze the weak limits of both the inner and outer stochastic processes. 
Theorem \ref{thm6} characterizes the weak convergence of the NMTS iterates, while Theorems \ref{theorem5} and \ref{theorem6} provide the weak convergence and corresponding rates for the outer-layer SAA sequence. 
Finally, the $\mathbb{L}^1$ convergence rate analysis in Theorem \ref{thm5} shows that, for a fixed batch size $N$, the NMTS algorithm achieves
$O(\frac{\beta_k}{\alpha_k}+\sqrt{\frac{\alpha_k}{N}})$,
where $\alpha_k$ and $\beta_k$ are decreasing step sizes satisfying $\beta_k$, $\alpha_k$, $\beta_k/\alpha_k\to0$. 
With an appropriate choice of step-size schedule, the optimal $\mathbb{L}^1$ rate is $O(k^{-1/3})$, which coincides with the convergence rate in the literature \citep{doan2022nonlinear,hong2023two}. 
In contrast, conventional single-timescale algorithms suffer from a persistent asymptotic bias of order $O(\sqrt{1/N})$ due to the ratio bias in gradient estimation. 
Our NMTS framework removes this bias through the coupled fast-timescale recursive estimator, thereby yielding a provably faster and consistent convergence behavior.

Furthermore, we introduce the concept of different timescales into neural network training to demonstrate the compatibility and scalability of our NMTS framework. For overly complex simulators, we use a neural network as an alternative to estimate the intractable likelihood. Additionally, when the posterior is complex and the simple variational distribution family has limited representational ability, another neural network can serve as the variational distribution. We design an NMTS algorithm that adjusts the update frequency of the two neural networks to ensure convergence and improve training outcomes. Our approach provides theoretical support for such estimation and optimization algorithms that require updates at different frequencies. More generally, this offers a new SA perspective on neural network training at various scales. 

We summarize our main contributions as follows:
\begin{itemize}
    \item \textbf{A ratio-free nested multi-time-scale (NMTS) algorithm.}
    We introduce a new NMTS algorithm that jointly addresses MLE and PDE problems when the likelihood function is intractable. The new NMTS SA scheme transforms the ratio estimator into a coupled root-finding system with two interacting time scales.  The fast recursions track local score estimates through parallel updates, while the slow recursion aggregates these estimates to optimize the ELBO or likelihood objective.  
    This structure extends classical multi-time-scale SA theory to a nested and parallel regime tailored to likelihood-free simulators.
    
    \item \textbf{Theoretical guarantees for nested stochastic optimization.}
    We establish strong convergence, weak convergence, and $\mathbb{L}^1$ convergence rate results for the NMTS algorithm, proving that the algorithm converges almost surely to the true solution of the intractable likelihood inference problem.  
    The analysis introduces new techniques to ensure uniform convergence of SAA-based gradient estimators along the entire algorithmic trajectory—an essential step absent in prior SA literature—and characterizes the joint asymptotic behavior of both layers.
    
    \item \textbf{Eliminating asymptotic bias and better empirical results.}
    Theoretically, our NMTS algorithm removes the ratio-induced asymptotic bias $O(\sqrt{1/N})$ that persists in single-timescale methods and achieves a faster $\mathbb{L}^1$ convergence rate of 
    $O\!\left(\frac{\beta_k}{\alpha_k}+\sqrt{\frac{\alpha_k}{N}}\right)$,
    with an optimal decay rate of $O(k^{-1/3})$ under proper step-size scheduling.
    This result significantly sharpens the asymptotic efficiency bounds for simulation-based inference. Consistent with the theory, our experiments show that NMTS delivers lower error than STS under matched computational budgets.
\end{itemize}
 
The remainder of the paper is organized as follows. Section \ref{relatedwork} reviews the related works. Section \ref{section2} provides the necessary background and introduces the NMTS algorithm for both MLE and PDE cases. In Section \ref{sec3}, we conduct an in-depth analysis of the algorithm, establishing consistency results and convergence rates. Section \ref{sec4} extends the NMTS framework to neural network training. Section \ref{sec5} presents numerical results, and Section \ref{sec6} concludes the paper.

\section{Related Works}\label{relatedwork}
The existing literature related to our topic can be organized into two main parts: likelihood-free parameter
estimation and the MTS algorithm.

\textbf{Likelihood-free parameter estimation.} %密度估计的解法，最流行的是gradient-based
The LFPE problem is closely related to the simulation-based inference problem in the literature. As a special case, the MLE problem with such an intractable likelihood was first addressed by the gradient-based simulated maximum likelihood estimation (GSMLE) method \citep{Peng2020,peng2016gradient,peng2014gradient}. The Robbins-Monro algorithm, a classic SA technique \citep{Kushner1997StochasticAA, borkar2009stochastic}, is applied to optimize unknown parameters for MLE. 
In the absence of an analytical form for the likelihood function, the generalized likelihood ratio (GLR) method is employed to obtain unbiased estimators for the density and its gradients \citep{peng2018new}. The GLR estimator provides unbiased estimators for the distribution sensitivities in \cite{Lei2018ApplicationsOG} and achieves a square-root convergence rate \citep{glynn2021computing}. However, the plug-in gradient estimator for the log-likelihood in the SA literature suffers from a ratio bias, leading to inaccuracies in the optimization process \citep{Peng2020,li2025new}.

For the PDE problem, traditional methods include approximate Bayesian computation and synthetic likelihood methods \citep{tavare1997inferring}. Techniques such as variational Bayes synthetic likelihood \citep{ong2018variational} and multilevel Monte Carlo variational Bayes \citep{he2022unbiased} have been applied to likelihood-free models, such as the g-and-k distribution and the $\alpha$-stable model \citep{peters2012likelihood}, but not to stochastic models. Additionally, these methods often require carefully designed summary statistics and distance functions. Meanwhile, numerous approaches leverage neural networks to estimate likelihoods or posteriors that are otherwise intractable to solve \citep{tran2017hierarchical,papamakarios2019sequential,greenberg2019automatic,glockler2022variational}. However, the likelihood functions inferred through neural networks tend to be biased. The incorporation of neural networks and the presence of such bias present theoretical challenges for these algorithms. To simplify the problem and make theoretical analysis feasible, we adopt an SA perspective, using unbiased gradient estimators for the likelihood function.

\textbf{MTS algorithms.} %讲和现有的MTS解法相比有什么根本性的差别和困难，写一下为什么要这么做，motivation，差别要体现novelity
In a different line of recent research, MTS algorithms have been applied to a variety of problems, including bilevel optimization \citep{hong2023two}, minmax optimization \citep{lin2025two}, and reinforcement learning scenarios, such as actor-critic methods \citep{heusel2017gans, wu2020finite, khodadadian2022finite}. They have also been used extensively in quantile optimization \citep{hu2022stochastic, hu2024quantile, jiang2023quantile}, black-box CoVaR estimation \citep{cao2025black}, and dynamic pricing and replenishment problems \citep{zheng2024dual}. 

When considering theoretical results, the convergence and convergence rates of single-timescale (STS) SA have been studied for many years \citep{borkar2009stochastic,  bhandari2018finite, karimi2019non, liu2025ode}. In contrast, the convergence and convergence rates of MTS algorithms are not as well understood, primarily due to the complex interplay between the two step sizes and the iterates \citep{mokkadem2006convergence, karmakar2018two}. Specifically, research on MTS convergence has mostly focused on linear settings \citep{konda2004convergence, doan2021finite, kaledin2020finite}. Recently, theoretical results, including high-probability bounds, finite-sample analysis, central limit theorem (CLT), and convergence rates under various assumptions, have begun to emerge \citep{doan2022nonlinear, han2024finite, hu2024central, zeng2024two}. In contrast to this prior literature, the NMTS algorithm presented in our paper applies a parallel structure to perform gradient descent. We establish uniform convergence among the parallel gradient descent components and formulate this nested simulation optimization problem in the PDE case, focusing on its asymptotic analysis.

\section{Problem Setting and Algorithm Design}\label{section2}
This section introduces the basic LFPE problem setting. As a special case, we eliminate ratio bias in the MLE problem by the NMTS algorithm in Section \ref{section2.1}. Then the PDE problem will be solved by our method in Section \ref{section2.2}.
\subsection{Maximum Likelihood Estimation}\label{section2.1}
    
    Considering a stochastic model, let $X$ be a random variable with density function $f(x;\theta)$ where $\theta \in \mathbb{R}^d$ is the parameter with feasible domain $\Theta\subset\mathbb{R}^d$. Another random variable $Y$ is defined by the relationship
    %\begin{equation*}%\label{1}
        $Y = g(X;\theta)$,
    %\end{equation*}
    where $g$ is known in analytical form. In this model, $Y$ is observable with $X$ being latent. Our objective is to estimate the parameter $\theta$ based on the observed data $y:=\{Y_t\}_{t=1}^T$. 
    
    In a special case where $X$ is one-dimensional with density $f(x)$, and $g$ is invertible with a differentiable inverse with respect to $y$, a standard result in probability theory allows the density of $Y_t$ to be expressed in closed form as: $p(y;\theta) = f(g^{-1}(y;\theta))|\frac{d}{dy}g^{-1}(y;\theta)|.$
    %$$p(y;\theta) = f(g^{-1}(y;\theta))\bigg|\frac{d}{dy}g^{-1}(y;\theta)\bigg|.$$
    However, the theory developed in this paper does not require such restrictive assumptions. Instead, we only assume that $g$ is differentiable with respect to $x$ and that its gradient is non-zero a.e. 
    
    Under this weaker condition, even though the analytical forms of $f$ and $g$ are known, the density of $Y$ may still be unknown. In this case, the likelihood function for $Y$ can only be expressed as:
    \begin{equation}\label{2}
    L_T(\theta) := \sum\limits_{t=1}\limits^{T}\log p(Y_t;\theta).
\end{equation}
To maximize $L_{T}(\theta)$, we compute the gradient of the log-likelihood: 
\begin{equation}\label{equation3}
    \nabla_{\theta}L_T(\theta) = \sum\limits_{t=1}\limits^{T}\frac{\nabla_{\theta}p(Y_t;\theta)}{p(Y_t;\theta)}.
\end{equation}

    Suppose we have unbiased estimators for $\nabla_{\theta}p(Y_t;\theta)$ and ${p(Y_t;\theta)}$ for every $\theta$ and $Y_t$. Specifically, let $N$ represent 
 batch size, $G_1(X_i,y,\theta)$ and $G_2(X_i,y,\theta)$ represent unbiased estimators obtained via single-run Monte Carlo samples $X_i$. For simplicity, we define the estimators for $\nabla_{\theta}p(Y_t;\theta)$ and ${p(Y_t;\theta)}$ with batch size $N$ as
\begin{equation}\label{G1G2}
    G_1(Y_t,\theta) = \frac{1}{N}\sum_{i=1}^NG_1(X_i,Y_t,\theta),\quad G_2(Y_t,\theta) = \frac{1}{N}\sum_{i=1}^NG_2(X_i,Y_t,\theta),
\end{equation}
    where $\mathbb{E}_X[G_1(X,Y_t,\theta)] = \nabla_{\theta}p(Y_t;\theta), \  \mathbb{E}_X[G_2(X,Y_t,\theta)] = p(Y_t;\theta).$ The forms of $G_1$ and $G_2$ can be derived by the GLR estimators \citep{peng2018new}, and we also present them in Appendix \ref{appendixF} for completeness.  Alternative single-run unbiased estimators for $G_1$ and $G_2$ can also be obtained via the conditional Monte Carlo method, as described in \cite{fu2009conditional}.  Then, a natural idea is to construct a plug-in estimator and update the parameter by the STS algorithm, also called Robbins-Monro algorithm, as claimed in \cite{Peng2020}: \begin{equation}\label{single}
    \theta_{k+1} = \theta_k + \beta_k \sum_{t=1}^{T} \frac{G_1(Y_t, \theta_k)}{G_2(Y_t, \theta_k)}.
\end{equation}
    While these individual estimators are unbiased, the ratio of two unbiased estimators may introduce bias.  To address this issue, we adopt an NMTS framework that incorporates the gradient estimator into the iterative process, aiming for more accurate optimization results. 

We propose the iteration formulae for the NMTS algorithm as follows:
    \begin{equation}\label{eq3}
        D_{k+1} = D_k + \alpha_k(G_{1,k}(X,Y,\theta_k)-G_{2,k}(X,Y,\theta_k)D_k),
    \end{equation}
    \begin{equation}\label{eq2}
        \theta_{k+1} = \Pi_{\Theta}(\theta_k + \beta_k E D_{k}),
    \end{equation}
     where $\Pi_{\Theta}$ is the projection operator that maps each iteratively obtained $\theta_k$ onto the feasible domain $\Theta$. The algebraic notations are as follows. $G_{1,k}(X,Y,\theta_k)$ represents the combination of all estimators  $G_{1}(X,Y_t,\theta_k)$ under every observation $Y_t$, forming a column vector with $T\times d$ dimensions. $G_{2,k}(X,Y,\theta_k)$ is also the combination of all estimators $G_{2}(X,Y_t,\theta_k)$ under every observation $Y_t$. That is to say, $G_{2,k}(X,Y,\theta_k) = \operatorname{diag} \{G_{2}(X,Y_1,\theta_k)I_d,\cdots,G_{2}(X,Y_T,\theta_k)I_d\} = \operatorname{diag} \{G_{2}(X,Y_1,\theta_k),\cdots,G_{2}(X,Y_T,\theta_k)\}\otimes I_d$, which is a diagonal matrix with $T\times d$ rows and $T\times d$ columns. Here $\otimes$ stands for the Kronecker product and $I_d$ denotes the $d$-dimensional identity matrix. The constant matrix $E=[I_d,I_d,\cdots,I_d] = e^{\top}\otimes I_d$ is a block matrix with $d$ rows and $T\times d$ columns, where $e$ is a  column vector of ones. This matrix reshapes the long vector $D_k$ to match the structure of Equation (\ref{equation3}), the summation of $T$ $d$-dimensional vectors. 
     
     In these two coupled iterations, $\theta_k$ is the parameter being optimized in the MLE process, as in Equation (\ref{single}). The additional iteration for $D_k$  tracks the gradient of the log-likelihood function, mitigating ratio bias and numerical instability caused by denominator estimators. These two iterations operate on different time scales, with distinct update rates. Ideally, one would fix $\theta$, run iteration (\ref{eq3}) until it converges to the true gradient, and then use this limit in iteration (\ref{eq2}). However, such an approach is computationally inefficient. Instead, these coupled iterations are executed interactively, with iteration (\ref{eq3}) running at a faster rate than (\ref{eq2}), effectively treating $\theta$ as fixed in the second iteration. This timescale separation is achieved by ensuring that the step sizes satisfy: 
         $\frac{\beta_k}{\alpha_k} \rightarrow 0$ as $k$ tends to infinity. This design allows the gradient estimator's bias to average out over the iteration process, enabling accurate results even with a small Monte Carlo sample size $N$ in Equation (\ref{G1G2}).  Ultimately, $E D_{k}$ converges to zero, and $\theta$ converges to its optimal value. The NMTS framework for MLE is summarized in Algorithm \ref{algor:1}. 

\begin{algorithm}[h]
\small
   \caption{(NMTS for MLE)}
   \label{algor:1}
   \begin{algorithmic}[1]
   \State Input: data$\{Y_t\}_{t=1}^T$, initial iterative values $\theta_0$, $D_0$, number of samples $N$, iterative steps $K$, the step-sizes $\alpha_k$, $\beta_k$.
   \For {$k \text{ in } 0: K-1$}   
   \State For $i=1:N$, sample $X_i$ and get unbiased estimators $G_{1,k}(X_i,Y,\theta_k)$, $G_{2,k}(X_i,Y,\theta_k)$.
   \State Do the iterations: 
   \begin{equation*}
       \begin{aligned}
           D_{k+1} &= D_k + \alpha_k(G_{1,k}(X,Y,\theta_k)-G_{2,k}(X,Y,\theta_k)D_k), \\
   \theta_{k+1} &= \Pi_{\Theta}(\theta_k +\beta_k E D_{k}).
       \end{aligned}
   \end{equation*}
   \EndFor
   \State Output: $\theta_{K}$.
   \end{algorithmic}
\end{algorithm}
\vspace{-0.3cm}
\subsection{Posterior Density Estimation}\label{section2.2}

We now  turn to the problem of estimating the posterior distribution of the parameter $\theta$ in the stochastic model $Y=g(X;\theta)$, where the analytical likelihood is unknown. The posterior distribution is defined as
\begin{equation*}
    p(\theta|y) = \frac{p(\theta)p(y|\theta)}{\int p(\theta)p(y|\theta)d\theta},
\end{equation*}
where $p(\theta)$ is the known prior distribution, and $p(y|\theta)$ is the conditional density function that lacks an analytical form but can be estimated using an unbiased estimator. The denominator is a challenging normalization constant to handle, and variational inference is a practical approach \citep{blei2017variational}.

In the variational inference framework, we approximate the posterior distribution $p(\theta|y)$ using a tractable density $q_{\lambda}(\theta)$ with a variational parameter $\lambda$ to approximate. The collection $\{q_{\lambda}(\theta)\}$ is called the variational distribution family, and our goal is to find the optimal $\lambda$ by minimizing the KL divergence between tractable variational distribution $q_{\lambda}(\theta)$ and the true posterior $p(\theta|y)$:
$$KL(\lambda) = KL(q_{\lambda}(\theta)\Vert p(\theta|y))=\mathbb{E}_{q_{\lambda}(\theta)}[\log q_{\lambda}(\theta)-\log p(\theta|y)].$$
It is well known that minimizing KL divergence is equivalent to maximizing the ELBO, an expectation with respect to variational distribution $q_{\lambda}(\theta)$: $$L(\lambda) = \log p(y) - KL(\lambda) = \mathbb{E}_{q_{\lambda}(\theta)}[\log p(y|\theta) + \log p(\theta) - \log q_{\lambda}(\theta)].$$ The problem is then reformulated as:
\begin{equation*}
    \lambda^{*} = \arg\max\limits_{\lambda \in \Lambda}{L(\lambda)},
\end{equation*}
where $\Lambda$ is the feasible region of $\lambda$. It is essential to estimate the gradient of ELBO, which is an important problem in the field of machine learning and stochastic optimization \citep{mohamed2020monte}.  Common methods for deriving gradient estimators include the score function method \citep{ranganath2014black} and the re-parameterization trick \citep{kingma2013auto, rezende2014stochastic}. 
%In the simulation literature, these methods are also referred to as the likelihood ratio (LR) method and infinitesimal perturbation analysis (IPA) method, respectively \citep{FU2006575}.

In terms of the score function method, noting the fact that $\mathbb{E}_{q_{\lambda}(\theta)}[\nabla_{\lambda}\log q_{\lambda}(\theta)] = 0$, we have
\begin{equation*}
    \begin{aligned}
         \nabla_{\lambda}L(\lambda) =& \nabla_{\lambda}\mathbb{E}_{q_{\lambda}(\theta)}[\log p(y|\theta) + \log p(\theta) - \log q_{\lambda}(\theta)] \\ =&  \mathbb{E}_{q_{\lambda}(\theta)}[\nabla_{\lambda}  \log q_{\lambda}(\theta)(\log p(y|\theta) + \log p(\theta) - \log q_{\lambda}(\theta))].
    \end{aligned}
\end{equation*}
When the conditional density function $p(y|\theta)$ is given, we can get an unbiased estimator for $\nabla_{\lambda}L(\lambda)$ naturally by sampling $\theta$ from $q_{\lambda}(\theta)$. However, in this paper, $p(y|\theta)$ is estimated by simulation rather than computed precisely, inducing bias to the $\log p(y|\theta)$ term. Furthermore, the score function method is prone to high variance \citep{rezende2014stochastic}, making the re-parameterization trick a preferred choice.

Assume a variable substitution involving 
$\lambda$, such that $\theta = \theta(u;\lambda) \sim q_{\lambda}(\theta)$, where $u$ is a random variable independent of $\lambda$ with density $p_0(u)$. This represents a re-parameterization of $\theta$, where the stochastic component is incorporated into $u$, while the parameter $\lambda$ is isolated. Allowing the interchange of differentiation and expectation \citep{glasserman1990gradient}, we obtain
\begin{equation}\label{equation2}
    \begin{aligned}
        \nabla_{\lambda}L(\lambda) =& \nabla_{\lambda}\mathbb{E}_{q_{\lambda}(\theta)}[\log p(y|\theta) + \log p(\theta) - \log q_{\lambda}(\theta)] \\ =& \nabla_{\lambda}\mathbb{E}_{u}[\log p(y|\theta(u;\lambda)) + \log p(\theta(u;\lambda)) - \log q_{\lambda}(\theta(u;\lambda))] \\ =& \mathbb{E}_{u}[\nabla_{\lambda}\theta(u;\lambda)\cdot(\nabla_{\theta}\log p(y|\theta) + \nabla_{\theta}\log p(\theta) - \nabla_{\theta}\log q_{\lambda}(\theta))].
    \end{aligned}
\end{equation}

In Equation (\ref{equation2}), the Jacobi term $\nabla_{\lambda}\theta(u;\lambda)$, prior term $\log p(\theta)$ and variational distribution term $\log q_{\lambda}(\theta)$ are known. Therefore, the focus is on the term involving the intractable likelihood function. Similar to the MLE case, the term $\nabla_{\theta}\log p(y|\theta) = \frac{\nabla_{\theta}p(y|\theta)}{p(y|\theta)}$ contains the ratio of two estimators, which introduces bias. 

The problem differs in two aspects. First, the algorithm no longer iterates over the parameter $\theta$ to be estimated but over the variational parameter $\lambda$, which defines the posterior distribution. This shifts the focus from point estimation to function approximation, aiming to identify the best approximation of the true posterior from the variational family ${q_{\lambda}(\theta)}$. Second, this becomes a nested simulation problem because the objective is ELBO, an expectation over a random variable $u$. Estimating its gradient requires an additional outer-layer simulation using SAA. In the outer layer simulation, we sample $u$ to get the different $\theta$, representing various scenarios. For each $\theta$, the likelihood function and its gradient are estimated using the GLR method as in the MLE case, incorporating the NMTS framework to reduce ratio bias. After calculating the part inside the expectation in Equation (\ref{equation2}) for every sample $u$, we average the results with respect to $u$ to get the estimator of the gradient of ELBO. 
 
 Note that the inner layer simulation for term $\nabla_{\theta}\log p(y|\theta) = \frac{\nabla_{\theta}p(y|\theta)}{p(y|\theta)}$ depends on $u$, 
 so we need to fix outer layer samples $\{u_m\}_{m=1}^M$ at the beginning of the algorithm. Similar to the MLE case, $M$ parallel gradient iteration processes are defined as blocks $\{D_{k,m}\}_{m=1}^M$, where  $D_{k,m}$ tracks the gradient of the likelihood function $\nabla_{\theta}\log p(y|\theta(u_m;\lambda_k))$ for every outer layer sample $u_m$. The optimization process of $\lambda$ depends on the gradient of ELBO in Equation (\ref{equation2}), which is estimated by averaging over these $M$ blocks. An additional error arises between the true gradient of ELBO and its estimator due to outer-layer simulation. This will be analyzed in Section \ref{sec3.1}. Unlike Algorithm \ref{algor:1}, this approach involves a nested simulation optimization structure, where simulation and optimization are conducted simultaneously. 
 
The NMTS algorithm framework for the PDE problem is shown as follows in Algorithm \ref{algor:2}. Here, $G_{1,k}(X,Y,\theta_{k,m})$ and $G_{2,k}(X,Y,\theta_{k,m})$ could be GLR estimators satisfying $\mathbb{E}_X[G_{1,k}(X,Y_t,\theta_{k,m})] = \nabla_{\theta}p(Y_t|\theta_{k,m})$ and $\mathbb{E}_X[G_{2,k}(X,Y_t,\theta_{k,m})] = p(Y_t|\theta_{k,m})$ for every observation $t$ and block $m$. The matrix dimensions are consistent with those in the MLE case. The iteration for $D_{k,m}$ resembles the MLE case, except for the parallel blocks.  The iteration for $\lambda_k$ corresponds to the gradient  $\nabla_{\lambda}L(\lambda)$ in Equation (\ref{equation2}). Due to the nested simulation structure, Algorithm \ref{algor:1} is a special variant of Algorithm \ref{algor:2}. 
\begin{algorithm}[h] 
\small
   \caption{(NMTS for PDE)}
   \label{algor:2}
   \begin{algorithmic}[1]
   \State Input: data $\{Y_t\}_{t=1}^T$, prior $p(\theta)$, iteration initial value $\lambda_0$ and $D_0$, iteration times $K$, number of outer layer samples $M$, number of inner layer samples $N$, step-sizes $ \alpha_k$, $\beta_k$.
   \State Sample $\{u_m\}_{m=1}^M$ from $p_0(u)$ as outer layer samples.
   \For {$k \text{ in } 0: K-1$}
   \State $\theta_{k,m} = \theta(u_m;\lambda_k)$, for $m=1:M$;
   \State Sample $\{X_i\}_{i=1}^N$ and get the inner unbiased layer estimators $G_{1,k}(X,Y,\theta_{k,m})$, $G_{2,k}(X,Y,\theta_{k,m})$, for $i=1:N$ and  $m=1:M$;
   \State Do the iterations: 
   \begin{equation*}
   \small
       \begin{aligned}
        &D_{k+1,m} = D_{k,m} + \alpha_k(G_{1,k}(X,Y,\theta_{k,m})-G_{2,k}(X,Y,\theta_{k,m})D_{k,m}).\\&\lambda_{k+1} = \Pi_{\Lambda}\bigg(\lambda_k + \beta_k \frac{1}{M}\sum_{m=1}^M\bigg(\nabla_{\lambda}\theta(u;\lambda)\bigg|_{(u_m;\lambda_k)}\bigg(ED_{k,m} + \nabla_{\theta}\log p(\theta_{k,m}) - \nabla_{\theta}\log q_{\lambda}(\theta_{k,m})\bigg)\bigg)\bigg).   
       \end{aligned}
   \end{equation*}
  %\State The ELBO is $$\hat{L}(\lambda_k) = \frac{1}{M}\sum_{m=1}^M\bigg(\log p(Y|\theta_{k,m}) + \log p(\theta_{k,m}) - \log q_{\lambda}(\theta_{k,m})\bigg).$$     
   \EndFor
   \State Output: $\lambda_{K}$.
   \end{algorithmic}
\end{algorithm}

\section{Theoretical Results}\label{sec3}
%To maximize ELBO, an expectation with respect to variational distribution, we derived its gradient estimator by re-parameterization in Section \ref{section2.2}. 
In this section, we present the convergence results for the proposed NMTS algorithms. We first derive the gradient estimator of the ELBO using the SAA method and analyze its asymptotic properties in Section \ref{sec3.1}. The uniform convergence of the gradient estimator with respect to variational parameters plays a crucial role in ensuring the convergence of the two nested layers. Strong convergence results are presented in Section \ref{sec3.4}, followed by weak convergence results in Section \ref{sec3.5}. Notably, this NMTS algorithm framework involves two layers of asymptotic analysis, with the outer one on the SAA samples and the inner one on the iteration process of the algorithm.  Convergence rates and asymptotic normality are established for both layers. Furthermore, the  $\mathbb{L}^1$ convergence rate for the nested simulation optimization is analyzed in Section \ref{sec3.6}, showcasing the theoretical advantage of NMTS over STS. 

First, we will introduce some notations. Suppose that $\theta\in \mathbb{R}^d$ and the feasible domain $\Lambda \subset\mathbb{R}^l$ for the variational parameter $\lambda\in \mathbb{R}^l$ is a convex bounded set defined by a set of inequality constraints. For example, $\Lambda$ could be a hyper-rectangle or a convex polytope in $\mathbb{R}^l$. The optimal $\bar{\lambda}^M$ lies in the interior of $\Lambda$. Let $(\Omega, \mathcal{F}, P)$ be the probability space induced by this algorithm. Here, $\Omega$ is the set of all sample trajectories generated by the algorithm, $\mathcal{F}$ is the $\sigma$-algebra generated by subsets of $\Omega$, and $P$ is the probability measure on $\mathcal{F}$. %Since the iterative process is dependent on the outer layer samples $\{u_m\}_{m=1}^M$, we need to fix those outer layer samples in the analysis. 
Define the $\sigma$-algebra generated by the iterations as $\mathcal{F}_k = \sigma\bigg\{\{u_m\}_{m=1}^M, \lambda_0, \{D_{0,m}\}_{m=1}^M, \lambda_1, \{D_{1,m}\}_{m=1}^M, \ldots, \lambda_k, \{D_{k,m}\}_{m=1}^M\bigg\}$ for all $k=0,1,\ldots$. For two real series $\{a_k\}$ and $\{b_k\}$, we write $a_k=O(b_k)$ if $\lim\sup_{k\rightarrow\infty} a_k/b_k<\infty$ and $a_k=o(b_k)$ if $\lim\sup_{k\rightarrow\infty} a_k/b_k=0$. For a sequence of random vectors $\{X_k\}$, we say $X_k=O_p(a_k)$ if $\Vert X_k/a_k\Vert$ is tight; i.e.,  for any $\epsilon>0$, there exists $M_{\epsilon}$, such that $\sup_n P(\Vert X_k/a_k\Vert > M_{\epsilon})<\epsilon$. 

Recall that the notation $\theta_{k,m}$ denotes re-parameterization process $\theta_{k,m} = \theta(u_m;\lambda_k)$ at the $k$th iteration for outer sample $u_m$. Based on the earlier definitions, we introduce the following notations. Let the GLR estimators $G_{1,k}(X,Y,\theta_{k,m})$ and $G_{2,k}(X,Y,\theta_{k,m})$ be denoted as $G_{1,k,m}$ and $G_{2,k,m}$, respectively. For the sake of subsequent analyses, we put the notation of all the $M$ outer layer samples together. Define $G_{1,k}$ as a column vector that combines all the columns $\{G_{1,k,m}\}_{m=1}^M$ in order, resulting in a vector with $M\times T\times d$ dimensions. Define $G_{2,k}=\operatorname{diag}\{G_{2,k,1}\otimes I_d,\cdots,G_{2,k,M}\otimes I_d\}$ as a diagonal matrix with $M\times T\times d$ rows and $M\times T \times d$ columns. Define $D_k = [D_{k,1}^{\top},\cdots,D_{k,M}^{\top}]^{\top}$ as a vector with $M\times T\times d$ dimensions. Then the iteration for $\{D_{k,m}\}_{m=1}^M$ can be rewritten as 
\begin{equation}\label{equation4}
    D_{k+1} = D_{k} + \alpha_k(G_{1,k}(\lambda_k)-G_{2,k}(\lambda_k)D_{k}).
\end{equation}

Define $B(\lambda) = [B_{1}(\lambda)^{\top},\cdots,B_{M}(\lambda)^{\top}]^{\top}$,  where $B_{m}(\lambda) = \nabla_{\theta}\log p(\theta(u_m;\lambda))$ and $B(\lambda)$ is a vector with $M\times d$ dimensions. $C(\lambda) := [C_{1}(\lambda)^{\top},\cdots,C_{M}(\lambda)^{\top}]^{\top}$, where $C_{m}(\lambda) = \nabla_{\theta}\log q_{\lambda}(\theta(u_m;\lambda))$ and $C(\lambda)$ is a vector with $M\times d$ dimensions. Define $E^M=\\ \operatorname{diag}\{[I_d,\cdots,I_d],\cdots,[I_d,\cdots,I_d]\} = I_M\otimes E$ as a block diagonal matrix with $M\times d$ rows and $M\times T\times d$ columns. $A(\lambda) = [A_{1}(\lambda),\cdots,A_{M}(\lambda)]$, where $A_{m}(\lambda) = \nabla_{\lambda}\theta(u_m;\lambda)$ is a Jacobian matrix and $A(\lambda)$ is a matrix with $l$ rows and $M\times d$ columns. Then the iteration for $\lambda$ can be rewritten as
\begin{equation}\label{equation5}
    \lambda_{k+1} = \lambda_k + \beta_k \bigg(\frac{A(\lambda_k)}{M}\bigg(E^MD_k + B(\lambda_k) - C(\lambda_k)\bigg)+Z_k\bigg),
\end{equation}
where $Z_k$ is a projection term representing the shortest vector from the previous point plus updates to the feasible domain $\Lambda$. Furthermore, $-Z_k$ lies in the normal cone at $\lambda_{k+1}$, meaning that $\forall \lambda \in \Lambda$, $Z_k^{\top}(\lambda-\lambda_{k+1})\geq 0$. In particular, when $\lambda_k$ lies in the interior of $\Lambda$, $Z_k=0$. For the convenience of analysis, we define
\begin{equation}\label{eq:s}
    S_k := \frac{A(\lambda_k)}{M}\bigg(E^MD_k + B(\lambda_k) - C(\lambda_k)\bigg).
\end{equation}
 It can be observed from the definition that we want $S_k$ to track the gradient of the approximate ELBO, i.e., $\nabla_{\lambda}\hat{L}_M(\lambda)$, which will be proved later.

%Note that the iterative process is dependent on $\{u_m\}_{m=1}^M$, 
We denote $S_k^M$ as the $k$th iteration of the simulation, where there are $M$ outer layer samples $\{u_m\}_{m=1}^M$. The similar definition is for $\lambda_k^M$. For simplicity, we will write them as $S_k$ and $\lambda_k$ if $M$ is fixed. All the matrices and vector norms are taken as the Euclidean norm. 
 \subsection{Outer Layer Gradient Estimator and Its Asymptotic Analysis}\label{sec3.1}
 To maximize ELBO,  we first use SAA to obtain an unbiased gradient estimator.  It is an approximation since the outer layer samples $\{u_m\}_{m=1}^M$ are fixed, which is necessary because the fixed point of each inner iteration depends on $u_m$. To be specific, the problem approximation can be formulated as below. According to the form of ELBO, the approximation function is defined as
\begin{equation*}
    \hat{L}_M(\lambda) := \hat{L}(\lambda;u_1,\dots,u_M) = \frac{1}{M}\sum_{m=1}^M\bigg(\log p(y|\theta(u_m;\lambda))+\log p(\theta(u_m;\lambda))-\log q_{\lambda}(\theta(u_m;\lambda))\bigg),
\end{equation*}
where $\{u_m\}_{m=1}^M$ are sampled from $p_0(u)$, such that $\theta$ follows the distribution $q_{\lambda}(\theta)$.
Using the chain rule, the gradient of $\hat{L}_M(\lambda)$ becomes 
\begin{equation*}
\small
\begin{aligned}    \nabla_{\lambda}\hat{L}_M(\lambda) =&  \frac{1}{M}\sum_{m=1}^M\nabla_{\lambda}\theta(u_m;\lambda)\bigg(\sum_{t=1}^T\frac{\nabla_{\theta}p(Y_t|\theta(u_m;\lambda))}{p(Y_t|\theta(u_m;\lambda))}+ \nabla_{\theta}\log p(\theta(u_m;\lambda)) - \nabla_{\theta}\log q_{\lambda}(\theta(u_m;\lambda))\bigg)\\
    :=& \frac{1}{M}\sum_{m=1}^Mh(u_m;\lambda). 
\end{aligned}
\end{equation*}
%In the NMTS framework, we use the approximate gradient $\nabla_{\lambda}\hat{L}_M(\lambda)$ rather than true gradient $\nabla_{\lambda}{L}(\lambda)$ to do the gradient descent. 
Thus, given the outer layer samples $\{u_m\}_{m=1}^M$, the algorithm solves the surrogate optimization problem
 \begin{equation*}
    \bar{\lambda}^M = \arg\max\limits_{\lambda \in \Lambda}{\hat{L}_M(\lambda)}.
\end{equation*}
Here, $M$ represents the degree of approximation. We now analyze the relationship between this approximate problem and the true problem, including asymptotic results. The gradient estimator's pointwise convergence follows directly from the law of large numbers. For every $\lambda$, almost sure convergence holds as $M$ tends to infinity:
\begin{equation*}
    \nabla \hat{L}_M(\lambda;u_1,\dots,u_M) \stackrel{a.s.} {\longrightarrow} \nabla L(\lambda).
\end{equation*}
The distance between $L(\lambda)$ and $\hat{L}_M(\lambda)$ can be measured using the  $\mathbb{L}^2$ norm. For every $\lambda$,
\begin{equation*}
    \mathbb{E}\Vert\nabla \hat{L}_M(\lambda;u_1,\dots,u_M)-\nabla L(\lambda)\Vert^2 = \frac{1}{M}\operatorname{Var}_u(h(u;\lambda)).
\end{equation*}
Furthermore,  a CLT applies for every $\lambda$ as $M$ tends to infinity:
\begin{equation*}
    \sqrt{M}(\nabla \hat{L}_M(\lambda;u_1,\dots,u_M)-\nabla L(\lambda)) \stackrel{d}{\longrightarrow} \mathcal{N}(0,\operatorname{Var}_u(h(u;\lambda))).
\end{equation*}

    However, since the iterative process in the NMTS algorithm involves a changing $\lambda_k$, we require uniform convergence of the gradient estimator with respect to $\lambda$. This ensures convergence across both nested layers as $k$ and $M$ approach infinity, and it is established using empirical process theory. 

Let $X_1,\cdots,X_n$ be random variables drawn from a probability distribution $P$ on a measurable space. Define 
%$$\mathbb{P}_nf = \frac{1}{n}\sum_{i=1}^nf(X_i), \quad Pf = \mathbb{E}f(X).$$
$\mathbb{P}_nf = \frac{1}{n}\sum_{i=1}^nf(X_i)$, $Pf = \mathbb{E}f(X).$
By the law of large numbers, the sequence $\mathbb{P}_nf$ converges almost surely to $Pf$ for every $f$ such that $Pf$ is defined. Abstract Glivenko-Cantelli theorems extend this result uniformly to $f$ ranging over a class of functions \citep{Vaart_1998}. A class $\mathcal{C}$ is called P-Glivenko-Cantelli if 
%\begin{equation*}
    $\Vert\mathbb{P}_nf-Pf\Vert_{\mathcal{C}} = \sup_{f\in\mathcal{C}}|\mathbb{P}_nf-Pf| \stackrel{a.s.} {\longrightarrow} 0.$
%\end{equation*}

The empirical process, evaluated at $f$, is defined as $\mathbb{G}_nf = \sqrt{n}(\mathbb{P}_nf-Pf)$. By the multivariate CLT, given any finite set of measurable functions $f_i$ with $Pf_i^2 < \infty$, 
%\begin{equation*}
$(\mathbb{G}_nf_1,\cdots,\mathbb{G}_nf_k)\stackrel{d} {\longrightarrow} (\mathbb{G}_Pf_1,\cdots,\mathbb{G}_Pf_k),$
%\end{equation*}
where the vector on the right follows a multivariate normal distribution with mean zero and covariances 
%\begin{equation*}
    $\mathbb{E}\mathbb{G}_Pf\mathbb{G}_Pg = Pfg - PfPg.$
%\end{equation*}
Abstract Donsker theorems extend this result uniformly to classes of functions. A class $\mathcal{C}$ is called P-Donsker if the sequence of processes $\{\mathbb{G}_nf:f\in \mathcal{C}\}$ converges in distribution to a tight limit process. In our case, this conclusion follows from the assumption stated below, with a proof in Appendix \ref{appendixA}.
\begin{assumption} \label{assumption1}
    Suppose the feasible region $\Lambda \subset \mathbb{R}^l$ of $\lambda$ is compact. Additionally, there exists a measurable function $m(x)$ with $\int_u m(u)^2p_0(u)du < \infty$ such that for every $\lambda_1$, $\lambda_2 \in \Lambda$,
\begin{equation*}
    \Vert h(u;\lambda_1) - h(u;\lambda_2)\Vert \le m(u)\Vert\lambda_1-\lambda_2\Vert.
\end{equation*}
\end{assumption}

Intuitively, because the slow iterate $\lambda_k$ evolves
across the whole feasible set $\Lambda$, we must guarantee that the SAA gradient
$\nabla \hat L_M(\lambda)$ tracks the population gradient $\nabla L(\lambda)$
simultaneously for all $\lambda\in\Lambda$. A standard route is to verify that the
relevant function class is $P$-Donsker, which yields both a uniform law of large numbers
 and a functional CLT for the SAA process.
\begin{proposition}\label{proposition1}
    Under Assumption \ref{assumption1}, the gradient estimator $\nabla \hat{L}_M(\lambda)$ converges to the true gradient uniformly with respect to $\lambda$: 
    \begin{equation*}
        \sup_{\lambda \in \Lambda}|\nabla \hat{L}_M(\lambda)-\nabla L(\lambda)| \stackrel{a.s.} {\longrightarrow} 0, \quad M \rightarrow \infty.
    \end{equation*}
    Furthermore, consider $\sqrt{M}(\nabla_{\lambda}\hat{L}_M(\lambda)-\nabla L(\lambda))$ as a stochastic process with respect to $\lambda$, it converges to a Gaussian process $G_P$ as $M$ tends to infinity:
    \begin{equation*}    \sqrt{M}(\nabla_{\lambda}\hat{L}_M(\cdot)-\nabla L(\cdot)) \stackrel{d} {\longrightarrow} G_P(\cdot),
    \end{equation*}
    where the Gaussian process $G_P$ has mean zero and covariances    \begin{equation*}
\mathbb{E}\mathbb{G}_P(\lambda_1)\mathbb{G}_P(\lambda_2) = \operatorname{Cov}(\nabla_{\lambda}\hat{L}_M(\lambda_1),\nabla_{\lambda}\hat{L}_M(\lambda_2)).
\end{equation*}
\end{proposition}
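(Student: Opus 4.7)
The plan is to recognize the statement as the standard Glivenko-Cantelli and Donsker theorems applied to the Lipschitz-parametrized function class $\mathcal{F} := \{h(\cdot;\lambda) : \lambda\in\Lambda\}$, treated coordinate-wise with envelope $m$ from Assumption \ref{assumption1}. Writing $\mathbb{P}_M = \frac{1}{M}\sum_{m=1}^M \delta_{u_m}$, one has $\nabla\hat{L}_M(\lambda) = \mathbb{P}_M h(\cdot;\lambda)$ and $\nabla L(\lambda) = P h(\cdot;\lambda)$, so the uniform almost-sure convergence and the Gaussian process limit reduce to showing that $\mathcal{F}$ is respectively $P$-Glivenko-Cantelli and $P$-Donsker.

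The key step is to bound the $L^2(P)$-bracketing numbers via a $\delta$-net on the parameter set. Since $\Lambda\subset\mathbb{R}^l$ is compact, for every $\delta>0$ there exists a $\delta$-net $\{\lambda_1,\ldots,\lambda_{N(\delta)}\}$ with $N(\delta)\le C\delta^{-l}$, and the Lipschitz bound from Assumption \ref{assumption1} gives $|h(u;\lambda)-h(u;\lambda_i)|\le \delta\, m(u)$ whenever $\|\lambda-\lambda_i\|\le\delta$. Hence the brackets $[h(\cdot;\lambda_i)-\delta m,\; h(\cdot;\lambda_i)+\delta m]$ cover $\mathcal{F}$ and each has $L^2(P)$-size $2\delta\|m\|_{L^2(P)}$, so after rescaling $\epsilon = 2\delta\|m\|_{L^2(P)}$ we obtain $\log N_{[]}(\epsilon,\mathcal{F},L^2(P)) \lesssim l\log(1/\epsilon)$. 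The bracketing-entropy integral $\int_0^{\infty}\sqrt{\log N_{[]}(\epsilon,\mathcal{F},L^2(P))}\,d\epsilon$ is therefore finite, and the envelope $m$ lies in $L^2(P)$, so Ossiander's bracketing CLT (Theorem 19.5 in \cite{Vaart_1998}) yields the Donsker property, which in turn implies the Glivenko-Cantelli property.

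Reading off the conclusions, Glivenko-Cantelli gives the first claim, and Donsker gives weak convergence of $\sqrt{M}(\nabla\hat{L}_M(\cdot)-\nabla L(\cdot))$ in $\ell^{\infty}(\Lambda)$ to a tight mean-zero Gaussian process whose covariance kernel is exactly $\operatorname{Cov}(h(u;\lambda_1),h(u;\lambda_2))$, matching the stated covariance. The vector-valued nature of $h$ is handled either coordinate-by-coordinate together with the Cramer-Wold device, or directly by taking the Euclidean norm as the ambient seminorm on $\mathcal{F}$, since the Lipschitz envelope $m$ controls all coordinates simultaneously.

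The main technical nuisance—rather than a genuine obstacle—is measurability of the suprema appearing in the uniform convergence statement; this is automatic because $\Lambda$ is separable and $\lambda\mapsto h(u;\lambda)$ is continuous by the Lipschitz bound, so suprema over $\Lambda$ coincide with those over a countable dense subset. The substantive content of the proof is concentrated in Assumption \ref{assumption1}: the square-integrable Lipschitz envelope is precisely what controls the bracketing entropy and simultaneously supplies both conclusions of the proposition.
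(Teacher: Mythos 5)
Your argument is correct and is essentially the paper's argument with the citation unpacked: the paper invokes Example 19.7 of van der Vaart (1998), which is precisely the Lipschitz-parametrization criterion you re-derive by constructing a $\delta$-net on $\Lambda$, bounding the bracketing numbers by $O(\delta^{-l})$, verifying finiteness of the bracketing entropy integral, and applying Ossiander's bracketing CLT. Your additional remarks on the vector-valued extension (Cramér–Wold or coordinatewise) and measurability of the supremum are sound and in fact address small points the paper's one-paragraph proof leaves implicit.
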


Proposition \ref{proposition1} guarantees that the SAA gradient $\nabla \hat L_M(\lambda)$ uniformly tracks the true gradient $\nabla L(\lambda)$ over the entire parameter set, which is essential for ensuring that the slow-timescale updates remain asymptotically correct along the whole algorithmic trajectory. 
Moreover, the functional CLT quantifies the outer-layer approximation error via a Gaussian process limit, enabling principled uncertainty assessment and yielding the $O(M^{-1/2})$ scaling that underpins our subsequent convergence rate.
%\subsection{Notations and Assumptions}\label{sec3.3}
%We will prove the strong convergence and convergence rate of NMTS in the following parts. First, we introduce the necessary notations and assumptions.
\subsection{Strong Convergence Results}\label{sec3.4}
In the following convergence proof, the following assumptions are made.
\begin{assumption}\label{assumption2}
\indent
\begin{flushleft}
\textbf{(1)}: There exists a constant $C_1>0$ such that $\sup_{k,u} \mathbb{E}[\Vert G_{1,k}(X,Y,\theta(u;\lambda_k))\Vert^2 |\mathcal{F}_k] \leq C_1$ w.p.1. \\
\textbf{(2)}: There exists a constant $\epsilon>0$ such that $\inf_{k,u,t}\mathbb{E}[G_{2,k}(X,Y_t,\theta(u;\lambda_k))|\mathcal{F}_k]\geq \epsilon$ w.p.1. \\
\textbf{(3)}: There exists a constant $C_2>0$ such that $\sup_{k,u} \mathbb{E}[\Vert G_{2,k}(X,Y,\theta(u;\lambda_k))\Vert^2 |\mathcal{F}_k]  \leq C_2$  w.p.1. \\
\textbf{(4)}: $\mathbb{E}[G_{1,k}(X,Y_t,\theta) |\mathcal{F}_k] = \nabla_{\theta}p(Y_t|\theta)$, $\mathbb{E}[G_{2,k}(X,Y_t,\theta) |\mathcal{F}_k] = p(Y_t|\theta)$ for every $\theta$ and $t$. \\
\textbf{(5)}: (a) $\alpha_k >0$, $\sum_{k=0}^{\infty}\alpha_k = \infty$, $\sum_{k=0}^{\infty}\alpha_k^2 < \infty$; (b) $\beta_k >0$, $\sum_{k=0}^{\infty}\beta_k = \infty$, $\sum_{k=0}^{\infty}\beta_k^2 < \infty$. \\
\textbf{(6)}: $\beta_k = o(\alpha_k)$. \\
\textbf{(7)}: $p(y|\theta)$ is positive and twice continuously differentiable with respect to $\theta$ in $\mathbb{R}^d$. $A(\lambda)$, $B(\lambda)$ and $C(\lambda)$ are continuously differentiable with respect to $\lambda$ in $\Lambda$.\\
\textbf{(8)}: $\hat{L}_M(\lambda)$ and $L(\lambda)$ are twice continuously differentiable with respect to $\lambda$ in $\Lambda$. Furthermore, the Hessian matrix $\nabla_{\lambda}^2L(\lambda)$ is reversible.  
\end{flushleft}
\end{assumption}

Assumptions \ref{assumption2}.1 and \ref{assumption2}.3 ensure the uniform bound for the second-order moments of estimators $G_{1,k,m}$ and $G_{2,k,m}$, which is crucial for proving the uniform boundedness of the iterative sequence $D_k$. Assumption \ref{assumption2}.2 is a natural assumption, given that $G_{2,k,m}$ is an estimator of the density function $p$, and it comes from the non-negativity property of the density function. Assumption \ref{assumption2}.4 naturally arises from the unbiasedness of GLR estimators. Assumption \ref{assumption2}.5 represents the standard step-size conditions in the SA algorithm.  Assumption \ref{assumption2}.6 is a core condition for the NMTS algorithm, where two sequences are descending at different time scales. Assumptions \ref{assumption2}.7-\ref{assumption2}.8 are common regularity conditions in optimization problems \citep{hong2023two,han2024finite}.

First, we will establish the strong convergence of the iteration $D_k$. Since $D_k$ is high-dimensional and can be spliced from $\{D_{k,m}\}_{m=1}^M$, we equivalently examine the uniform convergence of $\{D_{k,m}\}_{m=1}^M$. The proofs in this subsection can be found in Appendix \ref{appendixA}.

\begin{theorem}\label{thm1}
        Assuming that Assumptions \ref{assumption1} and \ref{assumption2}.1-\ref{assumption2}.7 hold, the iterative sequence $\{D_{k,m}\}$ generated by iteration (\ref{equation4}) converges to the gradient $\nabla_{\theta}\log p(y|\theta(u;\lambda))|_{(u;\lambda)=(u;\lambda_k)},$ uniformly for every outer layer sample $u_m$,  i.e., $$\lim_{k\rightarrow\infty}\sup_m\bigg|\bigg|D_{k,m}-\nabla _{\theta}\log p(y|\theta(u_m;\lambda_k))\bigg|\bigg|=0, \quad w.p.1,$$
        where $\nabla _{\theta}\log p(y|\theta(u_m;\lambda_k))$ is also a long vector with $T\times d$ dimensions describing every component of observations, which is defined as 
        $ [\nabla_{\theta}\log p(Y_1|\theta(u_m;\lambda_k))^{\top},\cdots,\nabla_{\theta}\log p(Y_T|\theta(u_m;\lambda_k))^{\top}]^{\top}$.
\end{theorem}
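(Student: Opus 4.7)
The plan is to establish Theorem \ref{thm1} via the ODE method for two-timescale stochastic approximation, applied component-wise across the $M$ parallel fast recursions. First, I would rewrite the update as
\begin{equation*}
    D_{k+1,m} = D_{k,m} + \alpha_k\bigl[\,\nabla_\theta p(y|\theta_{k,m}) - \bar G_{2}(\lambda_k,u_m)\,D_{k,m} + M_{k+1,m}\bigr],
\end{equation*}
where $\bar G_2(\lambda,u) := \operatorname{diag}\{p(Y_t|\theta(u;\lambda))\}\otimes I_d$ is the conditional mean of $G_{2,k,m}$ and $M_{k+1,m}$ is a martingale-difference noise by Assumption \ref{assumption2}.4. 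The conditions $\mathbb{E}[M_{k+1,m}\mid\mathcal{F}_k]=0$ and $\mathbb{E}[\|M_{k+1,m}\|^2\mid\mathcal{F}_k]\le K(1+\|D_{k,m}\|^2)$ follow directly from Assumptions \ref{assumption2}.1 and \ref{assumption2}.3.

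Next I would show almost-sure uniform boundedness $\sup_{k,m}\|D_{k,m}\|<\infty$. Using the Lyapunov function $V(D)=\|D\|^2$, a conditional expectation computation yields
\begin{equation*}
    \mathbb{E}[V(D_{k+1,m})\mid\mathcal{F}_k]\le (1-2\alpha_k\epsilon)V(D_{k,m})+2\alpha_k\|\nabla_\theta p(y|\theta_{k,m})\|\,\|D_{k,m}\|+\alpha_k^2\,C,
\end{equation*}
where the $-2\alpha_k\epsilon V$ drift follows from Assumption \ref{assumption2}.2 (eigenvalues of $\bar G_2$ are bounded below by $\epsilon$) and the $O(\alpha_k^2)$ remainder from Assumptions \ref{assumption2}.1 and \ref{assumption2}.3 together with the continuity/compactness in Assumption \ref{assumption2}.7. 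A standard Robbins–Siegmund argument, combined with $\sum\alpha_k^2<\infty$ (Assumption \ref{assumption2}.5), then gives uniform boundedness for every fixed $m$; since $m\in\{1,\dots,M\}$ is finite the bound is uniform in $m$.

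With boundedness in hand I would invoke the two-timescale ODE framework: because $\beta_k=o(\alpha_k)$ (Assumption \ref{assumption2}.6), on the fast timescale $t_k=\sum_{j<k}\alpha_j$ the slow iterate $\lambda_k$ is asymptotically constant, so the piecewise-linear interpolation of $D_{k,m}$ is an asymptotic pseudo-trajectory of the frozen-$\lambda$ ODE
\begin{equation*}
    \dot D_m = \nabla_\theta p(y|\theta(u_m;\lambda)) - \bar G_2(\lambda,u_m)\,D_m.
\end{equation*}
This linear ODE has negative-definite drift matrix $-\bar G_2(\lambda,u_m)\preceq -\epsilon I$ and thus the unique globally exponentially stable equilibrium
\begin{equation*}
    D^\ast_m(\lambda)=\bar G_2(\lambda,u_m)^{-1}\nabla_\theta p(y|\theta(u_m;\lambda))=\nabla_\theta\log p(y|\theta(u_m;\lambda)).
\end{equation*}
The Kushner–Clark/Borkar lemma on tracking a moving equilibrium then gives $\|D_{k,m}-D^\ast_m(\lambda_k)\|\to 0$ a.s.; taking the maximum over the finite set $m=1,\dots,M$ yields the stated uniform conclusion.

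The main obstacle will be the boundedness step under the nested coupling: the noise bound $K(1+\|D_{k,m}\|^2)$ grows with $D_{k,m}$, and $\theta_{k,m}=\theta(u_m;\lambda_k)$ is random and time-varying, so one must carefully exploit the uniform positivity of $\bar G_2$ (Assumption \ref{assumption2}.2) together with compactness of $\Lambda$ (Assumption \ref{assumption1}) to extract a genuine negative-definite drift that dominates the linear growth term. Once this is secured, the moving-target ODE tracking is a routine application of the two-timescale machinery, and uniformity across the finitely many parallel blocks is automatic.
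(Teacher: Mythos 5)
Your overall route matches the paper's almost exactly: establish boundedness of the fast iterates, pass to a frozen-$\lambda$ ODE via timescale separation, observe that the limiting ODE is linear in $D_m$ with negative-definite drift $-\bar G_2(\lambda,u_m)\preceq-\epsilon I$ so the unique equilibrium is the score, then take the max over the finite index set $m$. The paper does this by constructing piecewise-constant interpolations and invoking Arzel\`a–Ascoli (Lemmas~\ref{lemma3}--\ref{lemma7}) rather than citing a black-box Kushner--Clark/Borkar tracking lemma, and it builds an explicit Lyapunov function $V=\tfrac12\|\nabla_\theta p - p\,D_m\|^2$ rather than reading off exponential stability from linearity, but these are cosmetic differences.

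The one place where your sketch has a real gap is the almost-sure boundedness step. After applying Young's inequality to absorb the cross term $2\alpha_k\|\nabla_\theta p\|\,\|D_{k,m}\|$, the recursion becomes
\begin{equation*}
\mathbb{E}[V_{k+1}\mid\mathcal F_k]\le(1-\alpha_k\epsilon)V_k+\alpha_k\,C'+\alpha_k^2\,C'' ,
\end{equation*}
and the persistent $\alpha_k C'$ perturbation has $\sum_k\alpha_k C'=\infty$, so the Robbins--Siegmund almost-supermartingale lemma does not apply as stated; it gives at most a bound on $\sup_k\mathbb{E}[V_k]$ (which is exactly the paper's Lemma~\ref{lemma1}), not a pathwise bound. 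The pathwise statement $\sup_{k,m}\|D_{k,m}\|<\infty$ w.p.1 is the content of the paper's Lemma~\ref{lemma2}, which proceeds very differently: it unrolls the affine recursion $D_{k+1,m}=(I-U_{k,m})D_{k,m}+\tilde\alpha_k R_{k,m}+\alpha_k W_{k,m}+\alpha_k V_{k,m}$, bounds the deterministic pieces via the product $\prod(1-\tilde\alpha_j)$, and controls the two martingale-difference sums via the $L^2$-martingale convergence theorem together with Kronecker's lemma. You flagged this as ``the main obstacle,'' which is the right instinct, but you should replace the Robbins--Siegmund invocation by this (or an equivalent Borkar--Meyn-type stability) argument; otherwise the ODE step that follows has no pathwise equicontinuity/boundedness hypothesis to stand on.
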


Theorem~\ref{thm1} establishes that the fast-timescale recursion $\{D_{k,m}\}$ uniformly in $m$ tracks the parameter-space score $\nabla_\theta\log p(y\mid \theta(u_m;\lambda_k))$ almost surely. This result is pivotal for the NMTS framework: it validates that the fast layer supplies the slow layer with a ratio-free, asymptotically correct gradient surrogate of the log-likelihood (or ELBO component), thereby eliminating the instability and bias caused by directly dividing two Monte Carlo estimators. Uniformity over all outer samples $u_m$ is crucial for coupling the $M$ parallel fast recursions into a single slow update, and for later arguments that pass from the approximate (SAA) objective to the true objective. In short, Theorem~\ref{thm1} provides the consistency backbone that turns the nested, ratio-free construction into a sound SA for likelihood-free inference.

%\paragraph{Main ideas and technical difficulties in the proof.}
The proof proceeds in three steps. 
(i) \emph{Uniform boundedness on sample paths.} Lemmas~\ref{lemma1}--\ref{lemma2} show that the second moments of $D_{k,m}$ are uniformly bounded and, in fact, $\sup_{k,m}\|D_{k,m}\|<\infty$ w.p.1. These bounds rely on: (a) the step-size conditions, (b) the positive lower bound of the estimated density in Assumption~\ref{assumption2}.2 to control the contraction part $I-\alpha_k G_{2,k,m}$, and (c) square-integrability of the Monte Carlo estimators (Assumptions~\ref{assumption2}.1 and \ref{assumption2}.3). A martingale-square function argument shows that the noise accumulates at the $O(\sum \alpha_k^2)$ scale, hence remains controlled. 
(ii) \emph{ODE method with two timescales.} We embed the discrete dynamics into piecewise-constant interpolations $\{D_m^n(\cdot),\lambda^n(\cdot)\}$ and decompose the increment into a deterministic drift $H(u_m,D,\lambda)$ plus three perturbations: the Riemann-sum mismatch $\rho_m^n(t)$ and two martingale terms $V_m^n(t),W_m^n(t)$. Lemmas~\ref{lemma3}--\ref{lemma5} show that these perturbations vanish uniformly on every finite horizon. The projection-induced term in the slow recursion is handled via the normal-cone inequality, and Lemma~\ref{lemma7} uses the scale separation $\beta_k=o(\alpha_k)$ to freeze $\lambda$ on the fast-timescale, i.e., $\lambda^n(\cdot)\to \lambda(0)$ uniformly on compact sets. Passing to the limit yields the decoupled ODE $\dot D_m(t)=H(u_m,D_m(t),\lambda^\star)$ with $\dot\lambda(t)=0$. 
(iii) \emph{Global asymptotic stability of the score.} For fixed $\lambda^\star$, the limiting ODE is linear in $D_m$ with equilibrium $D_m^\star=p(y\mid \theta(u_m;\lambda^\star))^{-1}\nabla_\theta p(y\mid \theta(u_m;\lambda^\star))=\nabla_\theta\log p(y\mid \theta(u_m;\lambda^\star))$. A Lyapunov function built from the residual $\nabla_\theta p - p\,D_m$ shows global asymptotic stability. By the Arzelà–Ascoli theorem, uniform boundedness and equicontinuity justify taking limits and converging uniformly in $m$. %Then we will confirm that the ELBO gradient estimator $S_k$ tracks $\nabla_{\lambda}\hat{L}_M(\lambda)$. 

Then, Proposition~\ref{proposition2} shows that the aggregate statistic $S_k$, built from the fast-timescale trackers $\{D_{k,m}\}$, consistently recovers the approximate ELBO gradient $\nabla_\lambda \hat L_M(\lambda_k)$, ensuring that the slow-timescale update uses an asymptotically correct ascent direction at every iterate. 
This bridges the ratio-free estimator recursion and the optimization, removing ratio bias in the driving gradient.
%the gradient of approximate ELBO 
\begin{proposition}\label{proposition2}
      Assuming that Assumptions \ref{assumption1} and \ref{assumption2}.1-\ref{assumption2}.7 hold and $M$ is fixed, the sequence $\{S_{k}\}$ defined by Eq.(\ref{eq:s}) converges to the gradient of the approximate ELBO: 
    \begin{equation*}
        S_k -\nabla_\lambda\hat{L}_M(\lambda_k) \stackrel{a.s.} {\longrightarrow} 0, \quad k\rightarrow\infty.
    \end{equation*}
\end{proposition}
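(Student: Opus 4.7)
The plan is to exploit the fact that the difference $S_k - \nabla_\lambda \hat{L}_M(\lambda_k)$ can be written exactly as a linear functional of the fast-timescale tracking error $D_k - D^\star(\lambda_k)$, where $D^\star(\lambda)$ stacks the true score vectors $\nabla_\theta \log p(y\mid\theta(u_m;\lambda))$ across $m=1,\ldots,M$. Once this identity is established, uniform consistency of the inner recursion (Theorem \ref{thm1}) together with continuity and compactness yield the claim by a direct sandwich.

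First I would rewrite $\nabla_\lambda \hat{L}_M(\lambda_k)$ in the same matrix form as $S_k$. Using the chain-rule expression for $\nabla_\lambda \hat{L}_M$ together with the block definitions of $A(\lambda)$, $B(\lambda)$, $C(\lambda)$ and $E^M$, one checks that the $m$-th block of $E^M D^\star(\lambda_k)$ is exactly $\sum_{t=1}^T \nabla_\theta \log p(Y_t\mid\theta(u_m;\lambda_k))$, and therefore
\begin{equation*}
\nabla_\lambda \hat{L}_M(\lambda_k) \;=\; \frac{A(\lambda_k)}{M}\bigl(E^M D^\star(\lambda_k) + B(\lambda_k) - C(\lambda_k)\bigr).
\end{equation*}
Subtracting from the definition (\ref{eq:s}) of $S_k$ collapses every common term and leaves
\begin{equation*}
S_k - \nabla_\lambda \hat{L}_M(\lambda_k) \;=\; \frac{A(\lambda_k)}{M}\, E^M\bigl(D_k - D^\star(\lambda_k)\bigr).
\end{equation*}

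Next I would bound the right-hand side. Since $\Lambda$ is compact (Assumption \ref{assumption1}) and $A(\lambda)$ is continuous on $\Lambda$ (Assumption \ref{assumption2}.7), the operator norm $\|A(\lambda_k)\|$ is uniformly bounded almost surely. The matrix $E^M = I_M\otimes E$ is constant, hence also has bounded norm. Thus
\begin{equation*}
\bigl\|S_k - \nabla_\lambda \hat{L}_M(\lambda_k)\bigr\| \;\le\; \frac{\|A(\lambda_k)\|\,\|E^M\|}{M}\,\bigl\|D_k - D^\star(\lambda_k)\bigr\|.
\end{equation*}
Writing $\|D_k-D^\star(\lambda_k)\|^2 = \sum_{m=1}^M \|D_{k,m}-\nabla_\theta\log p(y\mid\theta(u_m;\lambda_k))\|^2 \le M\sup_m\|D_{k,m}-\nabla_\theta\log p(y\mid\theta(u_m;\lambda_k))\|^2$, Theorem \ref{thm1} gives that the supremum vanishes almost surely as $k\to\infty$, and the conclusion $S_k-\nabla_\lambda \hat{L}_M(\lambda_k)\to 0$ a.s. follows.

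There is no serious obstacle here, because Theorem \ref{thm1} has already absorbed the analytical difficulties. The only point requiring care is verifying that the dependence on $\lambda_k$ in $D^\star(\lambda_k)$ is handled correctly: the uniform convergence asserted by Theorem \ref{thm1} is along the trajectory $\{\lambda_k\}$ itself, so the comparison of $D_{k,m}$ to the score evaluated at the \emph{current} $\lambda_k$ is exactly the statement we need, and no additional equicontinuity-in-$\lambda$ argument is required beyond what is already embedded in the proof of Theorem \ref{thm1}.
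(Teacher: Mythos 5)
Your proposal is correct and follows essentially the same route as the paper: both isolate the difference $S_k-\nabla_\lambda\hat L_M(\lambda_k)$ as a bounded linear image of the fast-timescale tracking error $D_{k,m}-\nabla_\theta\log p(y\mid\theta(u_m;\lambda_k))$, use compactness of $\Lambda$ and continuity of the Jacobian $\nabla_\lambda\theta(u_m;\cdot)$ to bound the coefficients, and then invoke the uniform-in-$m$ almost sure convergence from Theorem \ref{thm1}. The only cosmetic difference is that you phrase the cancellation in stacked matrix notation while the paper writes it block-by-block.
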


The following theorem establishes the strong convergence of the slow-timescale recursion, showing that the parameter sequence $\{\lambda_k\}$ driven by the ratio-free gradient estimator indeed converges to the stationary point of the approximate ELBO problem.  
This result guarantees that the outer layer of the NMTS algorithm correctly tracks the deterministic ODE dynamics associated with $\nabla_\lambda \hat L_M(\lambda)$ and ultimately stabilizes at $\bar\lambda^M$.

%Then, we have the strong convergence of the sequence $\lambda_k$. Let $\bar{\lambda}$ be the optimal value of true ELBO, i.e., $\nabla_{\lambda}L(\bar{\lambda}) = 0$. The final object is to find $\bar{\lambda}$. Let $\bar{\lambda}^M$ be the optimal value of the approximate problem defined by outer layer samples $\{u_m\}_{m=1}^M$, i.e., $\nabla_{\lambda}\hat{L}_M(\bar{\lambda}^M) = 0$. The following theorem demonstrates that the sequence $\lambda_k$ generated by the algorithm converges and the limit point is $\bar{\lambda}^M$.
\begin{theorem}\label{thm2}
       Assuming that Assumptions \ref{assumption1} and \ref{assumption2}.1-\ref{assumption2}.7 hold, the iterative sequence $\{\lambda_k\}$ generated by iteration (\ref{equation5}) converges to a limit point of the following ODE: $$\dot{\lambda}(t)=\nabla_{\lambda}\hat{L}_M(\lambda)|_{\lambda=\lambda(t)}+Z(t),\quad w.p.1,$$ where $Z(t)$ is the minimum force applied to prevent $\lambda(t)$ from leaving the feasible domain. The limit point is $\bar{\lambda}^M$.
    \end{theorem}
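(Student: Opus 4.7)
The plan is to apply the ODE method for projected stochastic approximation \citep{Kushner1997StochasticAA,borkar2009stochastic}, leveraging Proposition \ref{proposition2} to treat the $S_k$-recursion as a noisy-but-asymptotically-correct gradient. Specifically, I would rewrite the slow iteration (\ref{equation5}) as
$$\lambda_{k+1} = \lambda_k + \beta_k \nabla_\lambda \hat{L}_M(\lambda_k) + \beta_k \xi_k + \beta_k Z_k, \qquad \xi_k := S_k - \nabla_\lambda \hat{L}_M(\lambda_k),$$
so that Proposition \ref{proposition2} supplies $\xi_k \to 0$ a.s. That is, the inner-layer tracking error behaves as a vanishing perturbation rather than as persistent noise, which is exactly the structural gain the NMTS design is meant to deliver. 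Because $\Lambda$ is compact, $\{\lambda_k\}$ is automatically bounded, and by Assumption \ref{assumption2}.8 the map $\lambda \mapsto \nabla_\lambda \hat{L}_M(\lambda)$ is continuous on $\Lambda$ and hence bounded there.

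Next, I would introduce the continuous-time interpolation on the timescale $t_k = \sum_{i=0}^{k-1}\beta_i$, setting $\lambda^n(t_k) = \lambda_{k+n}$ and extending piecewise-constantly (or linearly) in between. The boundedness of $\nabla_\lambda \hat{L}_M(\lambda_k)$ and $Z_k$, together with $\beta_k \to 0$ and $\xi_k \to 0$ a.s., make the family $\{\lambda^n(\cdot)\}$ uniformly bounded and equicontinuous on every finite interval. By an Arzelà–Ascoli argument (analogous to Lemmas \ref{lemma3}–\ref{lemma5} used for the fast timescale in Theorem \ref{thm1}), every subsequence admits a further subsequence converging uniformly on compacts to an absolutely continuous limit $\lambda(\cdot)$. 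Passing to the limit in the interpolation identity, and treating the projection increments via the normal-cone characterization $Z_k^\top(\lambda - \lambda_{k+1}) \geq 0$, the limit $\lambda(\cdot)$ satisfies the projected ODE
$$\dot{\lambda}(t) = \nabla_\lambda \hat{L}_M(\lambda(t)) + Z(t), \qquad \lambda(t) \in \Lambda,$$
where $-Z(t)$ lies in the normal cone to $\Lambda$ at $\lambda(t)$ and vanishes whenever $\lambda(t)$ is in the interior of $\Lambda$.

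To identify the limit point as $\bar{\lambda}^M$, I would use $\hat{L}_M$ itself as the Lyapunov function. Along any ODE trajectory,
$$\frac{d}{dt}\hat{L}_M(\lambda(t)) = \|\nabla_\lambda \hat{L}_M(\lambda(t))\|^2 + \nabla_\lambda \hat{L}_M(\lambda(t))^\top Z(t) \geq \|\nabla_\lambda \hat{L}_M(\lambda(t))\|^2 \geq 0,$$
where the inequality uses that $Z(t)$ is the minimal force keeping $\lambda(t)$ in $\Lambda$ and hence $\nabla_\lambda \hat{L}_M(\lambda(t))^\top Z(t) \geq 0$ by the normal-cone/KKT conditions. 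Thus $\hat{L}_M$ is a strict Lyapunov function, and the limit set is contained in the stationary set $\{\lambda \in \Lambda : \nabla_\lambda \hat{L}_M(\lambda) + Z = 0\}$. Since $\bar{\lambda}^M$ lies in the interior of $\Lambda$ (where $Z=0$) and the invertibility of the Hessian $\nabla_\lambda^2 L$ in Assumption \ref{assumption2}.8 (together with continuity and the SAA approximation) yields an isolated maximizer $\bar{\lambda}^M$ for $\hat{L}_M$ at least for $M$ large enough, the Lyapunov ascent forces $\lambda(t) \to \bar{\lambda}^M$.

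The main obstacle will be justifying the limit passage for the projection term: one must show that the discrete normal-cone corrections $Z_k$ accumulate into a well-defined continuous $Z(t)$ compatible with the projected ODE, while simultaneously absorbing the vanishing perturbation $\xi_k$ from Proposition \ref{proposition2}. A secondary subtlety is ruling out convergence to spurious stationary points on the boundary; here Assumption \ref{assumption2}.8 together with the interiority of $\bar{\lambda}^M$ is the key lever, since boundary stationary points would require $\nabla_\lambda \hat{L}_M$ to align with an outward normal, which the gradient-ascent nature of the ODE together with the strict Lyapunov decrease precludes generically.
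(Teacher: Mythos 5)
Your proposal follows essentially the same route as the paper: rewrite the slow recursion as gradient ascent plus a vanishing perturbation $\xi_k = S_k - \nabla_\lambda\hat L_M(\lambda_k)$ (the paper's $b_k$), which Proposition \ref{proposition2} kills a.s., then interpolate on the $\beta_k$-timescale and pass to the projected ODE, and finally cite a Lyapunov/invariance argument. Two issues are worth flagging.

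First, the sign in your Lyapunov computation is backwards. With $-Z(t)\in N_\Lambda(\lambda(t))$, the Moreau decomposition gives $\dot\lambda = \Pi_{T_\Lambda}(\nabla\hat L_M)$ and $-Z = \Pi_{N_\Lambda}(\nabla\hat L_M)$, which are orthogonal, so $\nabla\hat L_M^\top Z = -\|Z\|^2 \le 0$, not $\ge 0$. The correct chain is
\begin{equation*}
\frac{d}{dt}\hat L_M(\lambda(t)) = \nabla\hat L_M^\top\dot\lambda = \|\dot\lambda\|^2 + (-Z)^\top\dot\lambda = \|\dot\lambda\|^2 \ge 0,
\end{equation*}
using $\langle Z,\dot\lambda\rangle = 0$. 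Your final conclusion ($\hat L_M$ nondecreasing along trajectories, limit set contained in the stationary set) survives, but not via the inequality you wrote.

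Second, what you label as "the main obstacle" --- controlling how the discrete projection increments $Z_k$ accumulate into a well-defined continuous $Z(\cdot)$ --- is in fact the substantive content of the paper's proof, and you leave it unresolved. The paper establishes equicontinuity of $Z^n(\cdot)$ by contradiction: using $\|Z_k\|\le\|S_k\|\le\|h(\lambda_k)\|+\|b_k\|$ with $h$ bounded on the compact $\Lambda$ and $b_k\to0$ (from Proposition \ref{proposition2}), the increment $\sum\beta_i Z_i$ over any shrinking time window must vanish, contradicting the assumed lower bound $\epsilon$. Without this or an equivalent bound, the Arzelà--Ascoli step does not go through for the $Z^n$ family, and Kushner--Yin Theorem 5.2.3 cannot be invoked. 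Finally, note that you appeal to Assumption \ref{assumption2}.8 for uniqueness of the stationary point, but the theorem statement only assumes \ref{assumption2}.1--\ref{assumption2}.7; the paper's proof identifies the limit as a root of $\nabla_\lambda\hat L_M$ without needing Hessian invertibility at this stage.
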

This theorem confirms the overall stability of the NMTS framework: the slow-timescale iterates $\lambda_k$ converge to the optimum of the sample-based ELBO, completing the link between inner unbiased gradient estimation and outer parameter optimization.  
It provides the foundation for subsequent weak convergence and rate analyses, demonstrating that the proposed nested scheme preserves the almost sure convergence property of classical single-timescale SA despite its multi-layer coupling.
    
The following remark highlights the advantage of the NMTS algorithm compared to the STS algorithm. 
\begin{remark}\label{remark2}
        In the PDE case, the corresponding iterative process of STS is as below:
    \begin{equation}\label{single2}
    \small
        \lambda_{k+1} = \Pi_{\Lambda}\bigg(\lambda_k + \beta_k \frac{1}{M}\sum_{m=1}^M\bigg(\nabla_{\lambda}\theta(u_m;\lambda_k)\bigg(\sum_{t=1}^T\frac{G_{1}(X,Y_t,\theta_{k,m})}{G_{2}(X,Y_t,\theta_{k,m})} + \nabla_{\theta}\log p(\theta_{k,m}) - \nabla_{\theta}\log q_{\lambda}(\theta_{k,m})\bigg)\bigg)\bigg).
    \end{equation}
In this single-timescale formulation, the gradient is obtained by directly taking the ratio of two Monte Carlo estimators, rather than introducing an auxiliary variable $D_k$ to track the gradient recursively.  
Such a ratio can be substantially biased when the sample size $N$ is limited, and the stochastic denominator often leads to numerical instability.  
Consequently, the estimated gradient direction is imprecise, which deteriorates the optimization accuracy and stability.  
Both the theoretical analysis in Section~\ref{sec3.6} and the empirical evidence in Section~\ref{sec5} confirm that the proposed NMTS algorithm consistently outperforms the STS baseline in terms of bias reduction and convergence behavior.
    \end{remark}

 %Recall that $S_k^M$ represents the $k$th iteration of the simulation with $M$ outer layer samples. Owing to the uniform convergence of $\nabla_\lambda\hat{L}_M(\lambda_k)$ with respect to $M$, as established in Section \ref{sec3.1}, we can conclude that  $S_k^M$ converges to the gradient of the true ELBO $\nabla_\lambda L(\lambda_k)$. This convergence follows a double-layer process: first, the iteration number $k$ of the algorithm approaches infinity, and subsequently, the number of outer samples $M$ tends to infinity. 

The following proposition connects the inner recursion and the outer approximation. 
It shows that the recursive estimator $S_k^M$, which aggregates the outputs of the fast-timescale updates, 
asymptotically tracks the true gradient of the ELBO as both the iteration number and the number of outer samples grow. 
This result bridges the SA dynamics of the algorithm with the statistical consistency 
for SAA.
\begin{proposition}\label{proposition3}
    Assuming that Assumptions \ref{assumption1} and \ref{assumption2}.1-\ref{assumption2}.7 hold, the sequence $\{S_{k}\}$ defined by Eq.(\ref{eq:s}) converges to the gradient of the true optimization function: 
    \begin{equation*}
        \lim_{M\rightarrow\infty }\lim_{k\rightarrow \infty}\Vert S_k^M -\nabla_\lambda L(\lambda_k)\Vert = 0, \quad w.p.1.
    \end{equation*}
\end{proposition}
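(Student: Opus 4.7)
The plan is to split the error by a single triangle inequality into two pieces that have already been controlled by earlier results in the paper, and then take the two limits in the prescribed order. Concretely, for each $k$ and $M$, I would write
$$\|S_k^M - \nabla_\lambda L(\lambda_k^M)\| \le \|S_k^M - \nabla_\lambda \hat{L}_M(\lambda_k^M)\| + \|\nabla_\lambda \hat{L}_M(\lambda_k^M) - \nabla_\lambda L(\lambda_k^M)\|.$$
The first summand is the inner SA tracking error of the fast/slow recursion relative to the approximate (SAA) gradient, already addressed in Proposition \ref{proposition2}; the second summand is the outer SAA approximation error, evaluated along the algorithm's trajectory. Because $\lambda_k^M$ is produced by the projection $\Pi_\Lambda$ in iteration (\ref{equation5}), the iterates lie in the compact feasible set $\Lambda$ by construction, so uniform-in-$\lambda$ statements over $\Lambda$ can be applied pathwise.

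First, I would fix $M$ and send $k \to \infty$. Conditional on the (fixed) realization of the outer samples $\{u_m\}_{m=1}^M$, the function $\nabla_\lambda \hat{L}_M(\cdot)$ is deterministic, and Proposition \ref{proposition2} directly yields $\|S_k^M - \nabla_\lambda \hat{L}_M(\lambda_k^M)\| \to 0$ almost surely. For the second summand, I would avoid replacing $\lambda_k^M$ by its Theorem~\ref{thm2} limit $\bar{\lambda}^M$ (which would retain hidden dependence on $M$) and instead use the trivial but crucial bound
$$\|\nabla_\lambda \hat{L}_M(\lambda_k^M) - \nabla_\lambda L(\lambda_k^M)\| \le \sup_{\lambda \in \Lambda}\|\nabla_\lambda \hat{L}_M(\lambda) - \nabla_\lambda L(\lambda)\|,$$
whose right-hand side is independent of $k$. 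Combining these,
$$\limsup_{k\to\infty}\|S_k^M - \nabla_\lambda L(\lambda_k^M)\| \le \sup_{\lambda \in \Lambda}\|\nabla_\lambda \hat{L}_M(\lambda) - \nabla_\lambda L(\lambda)\|, \quad w.p.1.$$

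Then I would let $M \to \infty$. The right-hand side converges to zero almost surely by the uniform Glivenko–Cantelli conclusion of Proposition \ref{proposition1}, whose hypotheses are secured by Assumption \ref{assumption1} (compactness of $\Lambda$ together with the integrable Lipschitz envelope $m(u)$ on $h(u;\lambda)$). Putting the two displays together establishes the nested limit
$$\lim_{M\to\infty}\lim_{k\to\infty}\|S_k^M - \nabla_\lambda L(\lambda_k^M)\| = 0, \quad w.p.1,$$
which is exactly the claim.

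The main obstacle, modest but worth noting, is arranging the decoupling of the two layers correctly. Because the inner limit is taken along the random trajectory $\{\lambda_k^M\}$, a naive substitution $\lambda_k^M \leadsto \bar{\lambda}^M$ would couple the SAA error to the particular SAA solution and make the outer limit awkward to control with the tools at hand. Replacing the pointwise error by its supremum over $\Lambda$ dissolves this coupling, after which Propositions \ref{proposition1} and \ref{proposition2} plug in cleanly, and compactness of $\Lambda$ (via the projection) guarantees the bound is not vacuous. All other ingredients are routine given the results already proved.
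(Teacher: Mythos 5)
Your proof is correct and follows essentially the same route as the paper's: the same triangle-inequality split, the same appeal to Proposition \ref{proposition2} for the inner tracking error as $k\to\infty$, and the same use of the uniform (Glivenko–Cantelli) conclusion of Proposition \ref{proposition1} to kill the second term as $M\to\infty$, with the only cosmetic difference being that you bound the second summand by $\sup_{\lambda\in\Lambda}$ while the paper writes $\sup_k$—these coincide because the projected iterates stay in $\Lambda$.
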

Proposition~\ref{proposition3} thus ensures that the fast-timescale recursion delivers an asymptotically unbiased 
estimate of the true ELBO gradient, which is essential for guaranteeing the correctness of the subsequent 
slow-timescale parameter updates.
    
The next proposition establishes the final layer of convergence. 
It proves that the stationary point of the approximate optimization problem based on 
$M$ outer samples converges to the true optimum as $M$ increases, 
thereby closing the loop of the nested convergence analysis.
\begin{proposition}\label{proposition4}
    Assuming that Assumptions \ref{assumption1} and \ref{assumption2}.1-\ref{assumption2}.8 hold, then $$\lim_{M \rightarrow \infty}\bar{\lambda}^M = \bar{\lambda},\  w.p.1.$$
\end{proposition}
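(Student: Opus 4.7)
\noindent\textbf{Proof proposal for Proposition \ref{proposition4}.}

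The plan is to combine the uniform gradient convergence from Proposition \ref{proposition1} with the compactness of $\Lambda$ and the Hessian nondegeneracy in Assumption \ref{assumption2}.8 to conclude that every almost sure subsequential limit of $\{\bar{\lambda}^M\}$ must coincide with $\bar{\lambda}$. Throughout, I work on the probability-one event $\Omega_0$ on which the uniform convergence of Proposition \ref{proposition1} holds, and I exploit that $\bar{\lambda}^M \in \operatorname{int}(\Lambda)$ by assumption, so that the first-order condition $\nabla \hat{L}_M(\bar{\lambda}^M)=0$ holds for every $M$.

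First, I would upgrade uniform gradient convergence to uniform objective-value convergence. Fix any reference point $\lambda_0\in\Lambda$; pointwise convergence $\hat L_M(\lambda_0)\to L(\lambda_0)$ (strong law of large numbers) together with the line integral $\hat L_M(\lambda)-\hat L_M(\lambda_0)-[L(\lambda)-L(\lambda_0)]=\int_0^1 \bigl(\nabla \hat L_M-\nabla L\bigr)(\lambda_0+s(\lambda-\lambda_0))^{\top}(\lambda-\lambda_0)\,ds$ and compactness of $\Lambda$ yield
\[
\sup_{\lambda\in\Lambda}\bigl|\hat L_M(\lambda)-L(\lambda)\bigr|\xrightarrow{a.s.} 0.
\]
Next, by compactness, any realization of $\{\bar{\lambda}^M\}$ admits a convergent subsequence $\bar{\lambda}^{M_k}\to\lambda^*\in\Lambda$. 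The standard SAA sandwich
$L(\bar{\lambda}^{M_k})\ge \hat L_{M_k}(\bar{\lambda}^{M_k})-\sup_\lambda|\hat L_{M_k}-L|\ge \hat L_{M_k}(\bar{\lambda})-\sup_\lambda|\hat L_{M_k}-L|\ge L(\bar{\lambda})-2\sup_\lambda|\hat L_{M_k}-L|$
combined with continuity of $L$ gives $L(\lambda^*)\ge L(\bar{\lambda})$, so $\lambda^*$ is a global maximizer of $L$.

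It remains to show $\lambda^*=\bar{\lambda}$. Passing to the limit in $\nabla \hat L_{M_k}(\bar{\lambda}^{M_k})=0$ via the triangle inequality
\[
\|\nabla L(\lambda^*)\|\le \|\nabla L(\lambda^*)-\nabla L(\bar{\lambda}^{M_k})\|+\sup_{\lambda\in\Lambda}\|\nabla L(\lambda)-\nabla \hat L_{M_k}(\lambda)\|+\|\nabla \hat L_{M_k}(\bar{\lambda}^{M_k})\|
\]
and using continuity of $\nabla L$ together with Proposition \ref{proposition1} yields $\nabla L(\lambda^*)=0$. By Assumption \ref{assumption2}.8, $\nabla^2 L(\bar{\lambda})$ is invertible, hence (being the Hessian at a maximizer) negative definite; the inverse function theorem applied to the gradient map then forces $\bar{\lambda}$ to be an isolated stationary point, and combined with the fact that $\lambda^*$ is a global maximizer one identifies $\lambda^*=\bar{\lambda}$. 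Since every subsequential limit equals $\bar{\lambda}$, the full sequence converges, completing the proof on $\Omega_0$.

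The main obstacle is the identification of the subsequential limit $\lambda^*$ with $\bar{\lambda}$: uniqueness of the maximizer is only implicit through the invertible-Hessian assumption, which a priori gives only local uniqueness of a stationary point. I handle this by combining the maximizer characterization obtained from objective-value convergence with the isolated-zero property of $\nabla L$ at $\bar{\lambda}$; if the problem had multiple global maximizers, one would instead conclude convergence of $\bar{\lambda}^M$ to the set of global maximizers, and the stated pointwise limit would require an explicit uniqueness hypothesis. A secondary technical point is the lift from uniform gradient convergence to uniform function-value convergence, which I address by the line-integral identity above; this step is where compactness of $\Lambda$ and the $C^1$ regularity of $L$ in Assumption \ref{assumption2}.8 are essential.
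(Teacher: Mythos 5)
Your proof is correct and follows the same skeleton as the paper's: extract a convergent subsequence by compactness of $\Lambda$, pass to the limit in the first-order condition $\nabla\hat L_M(\bar\lambda^M)=0$ using the uniform gradient convergence of Proposition \ref{proposition1}, and then identify the limit with $\bar\lambda$ via the invertible-Hessian hypothesis. The paper's version is more terse: it argues by contradiction, obtains a second stationary point $\tilde\lambda\ne\bar\lambda$ of $L$ from the subsequential limit, and declares this inconsistent with Assumption \ref{assumption2}.8, skipping the function-value/maximizer step entirely.

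What you do differently, and what it buys: you insert an intermediate layer -- lifting uniform gradient convergence to uniform objective-value convergence via the line-integral identity and then running the standard SAA sandwich -- to establish that the subsequential limit $\lambda^*$ is a global maximizer of $L$, not merely a stationary point. This is a genuine strengthening of the argument: the paper only concludes $\nabla L(\tilde\lambda)=0$ and then leans on "invertible Hessian everywhere $\Rightarrow$ unique root of $\nabla L$," which is not literally true (invertibility alone gives local, not global, injectivity of the gradient map; global injectivity would require a Gale--Nikaido-type P-matrix condition, e.g.\ negative definiteness as in Assumption \ref{assumption3}.1, which Proposition \ref{proposition4} does not invoke). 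You correctly spotlight this: your proof, like the paper's, ultimately rests on an implicit uniqueness hypothesis for the stationary point (or global maximizer) of $L$, and you say so in your closing paragraph. Your maximizer characterization narrows the set of candidate limits from "stationary points" to "global maximizers," which is where uniqueness is most naturally imposed, so your version is a cleaner and slightly more robust route to the same conclusion. One small nit: the final identification "$\bar\lambda$ isolated $+$ $\lambda^*$ global maximizer $\Rightarrow \lambda^*=\bar\lambda$" is still a jump without the uniqueness premise, exactly as you acknowledge; stating that premise explicitly (uniqueness of the global maximizer of $L$, or strict concavity) would close the argument fully, and the same remark applies to the paper's own proof.
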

Proposition~\ref{proposition4} formally guarantees that the nested simulation optimization algorithm 
is statistically consistent: the limit of the algorithmic iterates coincides with the true 
maximum of the expected ELBO as the number of outer samples tends to infinity.
\subsection{Weak Convergence}\label{sec3.5}
Having established strong convergence in the previous subsection, we now turn to the characterization of the limiting distribution and convergence rate of the NMTS algorithm. 
While strong convergence guarantees that the iterates $\{\lambda_k\}$ and $\{D_k\}$ asymptotically approach their deterministic limits, it does not quantify how the stochastic noise introduced by finite samples propagates through the coupled recursions. For example, the fixed sample size $N$ used in the estimation of $G_1$ and $G_2$ in each iteration controls the variance of the stochastic gradients.
To address this, we study the weak convergence and asymptotic normality of the NMTS iterates, which describe their second-order stochastic behavior and allow us to derive convergence rates in distribution.

The weak convergence analysis requires additional regularity on the curvature of the objective and on the step-size sequences. 
The following assumption, standard in SA literature \citep{bottou2018optimization, hu2024quantile,cao2025black}, 
ensures that the algorithm operates within a locally stable regime where the limiting distribution exists and is asymptotically normal.

\begin{assumption}\label{assumption3}
\textbf{(1)} Let $H_M(\lambda)= \nabla_{\lambda}^2\hat{L}_M(\lambda)$, and denote its largest eigenvalue by $K_M(\lambda)$. There exists a constant $ K_L > 0$, such that $K_M(\lambda)<-K_L$  for every $\lambda \in \Lambda$. \\
\textbf{(2)} The step-size of the NMTS algorithm take the forms $\alpha_k=\frac{\alpha_0}{k^{a}}$, $\beta_k=\frac{\beta_0}{k^{b}}$, where $\frac{1}{2} < a < b \le 1$ and $\alpha_0$ and $\beta_0$ are positive constants.
\end{assumption}

We next present the asymptotic normality of the fast and slow iterates. 
This result formalizes the idea that, after appropriate rescaling, 
the deviations of $\lambda_k$ and $D_k$ around their equilibrium points 
converge in distribution to independent Gaussian limits. 
Their covariance matrices characterize the asymptotic variability 
of the algorithm induced by stochastic gradient noise at the corresponding time scales. 
The proof is based on the general weak convergence framework for multi-time-scale SA 
developed by \citet{mokkadem2006convergence}.
Detailed proofs are provided in Appendix~\ref{appendixB}.
\begin{proposition}\label{thm3}
    If Assumptions \ref{assumption1},  \ref{assumption2}.1-\ref{assumption2}.8, and \ref{assumption3}.1-\ref{assumption3}.2 hold, then we have
    \begin{equation}
\left(                 
\begin{array}{cc}
 \sqrt{\beta_k^{-1}}(\lambda_k-\bar{\lambda}^M)   \\ 
 \sqrt{\alpha_k^{-1}}(D_k-\bar{D})
\end{array}
\right)\stackrel{d}{\longrightarrow} \mathcal{N}\left( 0, \left(              
\begin{array}{ccc}
 \Sigma_\lambda& 0 \\ 
 0 & \Sigma_D
\end{array}
\right)\right), \quad k \rightarrow \infty,
\end{equation}
where $M$ is fixed and $\bar{\lambda}^M$ and $\bar{D}$ are the convergence points of iterations (\ref{equation4}) and (\ref{equation5}), respectively. The covariance matrices $\Sigma_\lambda$ and $\Sigma_D$ are defined in Equation (\ref{eq4}) in Appendix \ref{appendixB}.
\end{proposition}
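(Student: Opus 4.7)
The plan is to apply the joint CLT for multi-time-scale stochastic approximation in \citet{mokkadem2006convergence} after linearizing (\ref{equation4})--(\ref{equation5}) about their common limit $(\bar{\lambda}^M,\bar D)$, where $\bar D$ is the stacked vector whose $m$th block equals the equilibrium score $\nabla_\theta\log p(y\mid\theta(u_m;\bar{\lambda}^M))$. Theorem~\ref{thm1}, Proposition~\ref{proposition2} and Theorem~\ref{thm2} already give $(\lambda_k,D_k)\to(\bar{\lambda}^M,\bar D)$ almost surely, and since $\bar{\lambda}^M$ lies in the interior of $\Lambda$, the projection term $Z_k$ vanishes from some (random) finite index onwards, so only the unconstrained linearisation matters.

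First I would introduce the centred errors $\tilde D_k=D_k-\bar D$ and $\tilde\lambda_k=\lambda_k-\bar{\lambda}^M$ and expand the conditional drifts to first order, obtaining
\[
\tilde D_{k+1}=(I-\alpha_k\bar G_2)\tilde D_k+\alpha_k J_D\tilde\lambda_k+\alpha_k\,\delta M^{(1)}_{k+1}+\alpha_k\,r^{(1)}_k,
\]
\[
\tilde\lambda_{k+1}=(I+\beta_k H_M(\bar{\lambda}^M))\tilde\lambda_k+\beta_k J_\lambda\tilde D_k+\beta_k\,\delta M^{(2)}_{k+1}+\beta_k\,r^{(2)}_k,
\]
where $\bar G_2=\mathbb{E}[G_{2,k}(\bar{\lambda}^M)\mid\mathcal{F}_k]$ is block-diagonal and positive definite by Assumption~\ref{assumption2}.2, $H_M(\bar{\lambda}^M)$ is negative definite by Assumption~\ref{assumption3}.1, the Jacobians $J_D,J_\lambda$ come from differentiating the drifts in (\ref{equation4})--(\ref{equation5}) at the equilibrium, $\delta M^{(i)}_{k+1}$ are $\mathcal{F}_k$-martingale differences with uniformly bounded conditional second moments by Assumptions~\ref{assumption2}.1 and \ref{assumption2}.3, and the remainders satisfy $r^{(i)}_k=O(\|\tilde\lambda_k\|^2+\|\tilde D_k\|^2)$ by the twice-differentiability in Assumptions~\ref{assumption2}.7--\ref{assumption2}.8.

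I would then verify the hypotheses of the Mokkadem--Pelletier theorem: (i) both gain matrices $-\bar G_2$ and $H_M(\bar{\lambda}^M)$ are Hurwitz; (ii) the step sizes in Assumption~\ref{assumption3}.2 are polynomial with $\beta_k/\alpha_k\to 0$; (iii) the conditional covariance matrices of the martingale noises converge to deterministic limits $\Gamma_D$ and $\Gamma_\lambda$ as $\lambda_k\to\bar{\lambda}^M$, using continuity of the GLR estimators' second moments in $\theta$ together with the almost sure convergence already established; and (iv) the nonlinear remainders are genuinely of higher order. The scale separation $\beta_k/\alpha_k\to 0$ is what produces the block-diagonal structure: after rescaling by $\alpha_k^{-1/2}$ and $\beta_k^{-1/2}$, the cross-coupling contributions $J_D\tilde\lambda_k$ and $J_\lambda\tilde D_k$ pass through their respective linear filters at strictly lower order and vanish in the limit, so the joint limit is the product of two independent Gaussians whose covariance blocks $\Sigma_D$ and $\Sigma_\lambda$ are the unique solutions of the Lyapunov equations $\bar G_2\Sigma_D+\Sigma_D\bar G_2^\top=\Gamma_D$ and $-H_M(\bar{\lambda}^M)\Sigma_\lambda-\Sigma_\lambda H_M(\bar{\lambda}^M)^\top=\Gamma_\lambda$ (with the usual $\beta_0/\alpha_0$-dependent correction in the boundary case $b=1$). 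The main technical hurdle is to control the quadratic remainders $r^{(i)}_k$ at the CLT scale: one has to establish tightness of the rescaled iterates $\alpha_k^{-1/2}\tilde D_k$ and $\beta_k^{-1/2}\tilde\lambda_k$, typically via a Lyapunov/truncation argument that exploits the Hurwitz structure of the gain matrices, so that $\alpha_k^{-1/2}\alpha_k r^{(1)}_k$ and $\beta_k^{-1/2}\beta_k r^{(2)}_k$ are $o_P(1)$. Once tightness is in hand the CLT follows from the general two-time-scale result, and this tightness step is where the bulk of the work lies.
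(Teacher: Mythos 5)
Your overall strategy — linearize both recursions about the joint equilibrium $(\bar{\lambda}^M,\bar D)$, discard the projection term using interiority of $\bar{\lambda}^M$, identify the Hurwitz matrices on the two scales, and then invoke Theorem~1 of \citet{mokkadem2006convergence} — is exactly the paper's approach. The paper packages the linearization in block form via partial derivatives $Q_{11},Q_{12},Q_{21},Q_{22}$ of the drifts $f(\lambda,D)$ and $g(\lambda,D)$ at $(\bar{\lambda}^M,\bar D)$, observes that $Q_{21}=0$ by the equilibrium condition, sets $H=Q_{11}-Q_{12}Q_{22}^{-1}Q_{21}=Q_{11}=\nabla_\lambda^2\hat L_M(\bar\lambda^M)$, and reads off the covariance blocks from Mokkadem--Pelletier's integral formulas, which are precisely the Lyapunov solutions you describe.

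There is, however, one genuine misstep in your account of the noise propagation. In the paper's decomposition the slow update is $A_k=f(\lambda_k,D_k)$, which is a \emph{deterministic} function of the current iterate; there is no independent martingale noise $\delta M^{(2)}_{k+1}$ feeding the $\lambda$-recursion, so your proposed $\Gamma_\lambda:=\lim_k\mathbb E[\delta M^{(2)}_{k+1}(\delta M^{(2)}_{k+1})^\top\mid\mathcal F_k]$ would equal zero and your Lyapunov equation would return $\Sigma_\lambda=0$, which is wrong. The correct driving term is $\Gamma_\theta=Q_{12}Q_{22}^{-1}\Gamma_{22}Q_{22}^{-\top}Q_{12}^\top$: the fast-layer martingale covariance $\Gamma_{22}$, filtered by the quasi-stationary response $Q_{22}^{-1}$ of the fast iterate and then mapped into the slow recursion through the coupling $Q_{12}$. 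Related to this, the statement that the cross-coupling terms $J_D\tilde\lambda_k$ and $J_\lambda\tilde D_k$ ``vanish in the limit'' is misleading: the cross-coupling $Q_{12}\tilde D_k$ is precisely the channel through which the fast noise generates the entire asymptotic covariance of $\lambda_k$. What vanishes asymptotically is the \emph{correlation} between the rescaled slow and fast fluctuations (hence the block-diagonal limit), not the contribution of the coupling to $\Sigma_\lambda$. With that correction the remainder of your verification of the Mokkadem--Pelletier hypotheses (Hurwitz gains, polynomial step sizes with $\beta_k/\alpha_k\to0$, convergence of conditional covariances along the a.s.\ convergent trajectory, higher-order remainders controlled by tightness) matches the paper's proof.
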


Proposition~\ref{thm3} provides a probabilistic refinement of the strong convergence result. 
It shows that, beyond almost sure convergence, the properly normalized iterates 
follow a joint Gaussian law with block-diagonal covariance, indicating that the slow and fast components 
are asymptotically independent. 
This characterization quantifies the stochastic variability of the NMTS algorithm.

By Theorem \ref{thm1} and Theorem \ref{thm2}, $\bar{D}$ can be expressed as $\nabla _{\theta}\log p(y|\theta(u;\bar{\lambda}^M))$, which is a long vector with $T\times d\times M$ dimensions defined as the combination of $\{\nabla _{\theta}\log p(y|\theta(u_m;\bar{\lambda}^M))\}_{m=1}^M$. 
Having established almost-sure tracking on both time scales, we next quantify the fluctuation behavior: how the inner estimator $S_k$ (the gradient surrogate for slow-timescale) deviates from its limit at the correct scaling, and how this depends on the parallelization level $M$. 
The following theorem gives a central-limit characterization for $S_k$ under fixed $M$, which will serve as an input to the outer-layer convergence rate analysis.

\begin{theorem}\label{thm6}
If Assumptions \ref{assumption1},\ref{assumption2}.1-\ref{assumption2}.8, and \ref{assumption3}.1-\ref{assumption3}.2 hold and $M$ is fixed, we have
\begin{equation*}
    \sqrt{\alpha_k^{-1}}(S_k-\nabla_{\lambda}\hat{L}_M
(\bar{\lambda}^M)) =  \sqrt{\alpha_k^{-1}}S_k \stackrel{d}{\longrightarrow} \mathcal{N}(0, \Sigma_s^M), \quad k \rightarrow \infty, 
\end{equation*}
where $\Sigma_s^M = \frac{1}{M^2}A(\bar{\lambda}^M)E^M\Sigma_D(E^M)^{\top}A(\bar{\lambda}^M)^{\top}$.
\end{theorem}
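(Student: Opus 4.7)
The plan is to exploit the first-order optimality of $\bar{\lambda}^M$ to rewrite $S_k$ as a fluctuation around a deterministic zero and then linearize around the pair $(\bar{\lambda}^M, \bar{D})$, extracting the leading-order stochastic contribution from the fast recursion. Concretely, I would first verify that
\begin{equation*}
\nabla_{\lambda}\hat{L}_M(\bar{\lambda}^M) \;=\; \frac{A(\bar{\lambda}^M)}{M}\bigl(E^M \bar{D} + B(\bar{\lambda}^M) - C(\bar{\lambda}^M)\bigr) \;=\; 0,
\end{equation*}
which follows from the interior-optimum assumption on $\bar{\lambda}^M$ and the identification $\bar{D} = \nabla_\theta \log p(y\mid \theta(u;\bar{\lambda}^M))$ provided by Theorem~\ref{thm1}. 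This turns $S_k$ into the pure fluctuation $S_k - \nabla_{\lambda}\hat{L}_M(\bar{\lambda}^M)$, which is the object we must rescale.

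Next I would decompose this fluctuation into a ``fast-direction'' term and a smooth ``slow-direction'' remainder:
\begin{equation*}
S_k \;=\; \frac{A(\lambda_k)}{M}\,E^M\,(D_k-\bar{D}) \;+\; R_k,
\end{equation*}
where
\begin{equation*}
R_k \;=\; \frac{A(\lambda_k)}{M}\bigl(E^M\bar{D}+B(\lambda_k)-C(\lambda_k)\bigr) - \frac{A(\bar{\lambda}^M)}{M}\bigl(E^M\bar{D}+B(\bar{\lambda}^M)-C(\bar{\lambda}^M)\bigr).
\end{equation*}
The smoothness of $A$, $B$, $C$ on $\Lambda$ (Assumption~\ref{assumption2}.7) together with compactness of $\Lambda$ gives a local Lipschitz bound $\|R_k\| = O(\|\lambda_k-\bar{\lambda}^M\|)$ by a mean-value argument. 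Invoking Proposition~\ref{thm3}, which yields $\lambda_k-\bar{\lambda}^M=O_p(\sqrt{\beta_k})$ and $D_k-\bar{D}=O_p(\sqrt{\alpha_k})$ jointly with asymptotically independent Gaussian marginals, I obtain $R_k=O_p(\sqrt{\beta_k})$.

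I then multiply through by $\sqrt{\alpha_k^{-1}}$ and analyze each piece. The remainder satisfies $\sqrt{\alpha_k^{-1}}\,R_k = O_p(\sqrt{\beta_k/\alpha_k}) = o_p(1)$ by Assumption~\ref{assumption2}.6. For the main term, $A(\lambda_k)\to A(\bar{\lambda}^M)$ almost surely by continuity and the strong convergence $\lambda_k\to\bar{\lambda}^M$ established in Theorem~\ref{thm2}, while Proposition~\ref{thm3} gives $\sqrt{\alpha_k^{-1}}(D_k-\bar{D})\xrightarrow{d}\mathcal{N}(0,\Sigma_D)$. Combining via Slutsky's theorem and the continuous mapping theorem yields
\begin{equation*}
\sqrt{\alpha_k^{-1}}\,\frac{A(\lambda_k)}{M}E^M(D_k-\bar{D}) \;\xrightarrow{d}\; \frac{A(\bar{\lambda}^M)}{M}E^M\,\mathcal{N}(0,\Sigma_D) \;=\; \mathcal{N}\!\left(0,\tfrac{1}{M^2}A(\bar{\lambda}^M)E^M\Sigma_D(E^M)^\top A(\bar{\lambda}^M)^\top\right),
\end{equation*}
which is precisely $\mathcal{N}(0,\Sigma_s^M)$. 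One more application of Slutsky then gives the conclusion for $\sqrt{\alpha_k^{-1}} S_k$.

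The main technical subtlety, and what I would be most careful about, is the justification that $R_k$ is only of order $\sqrt{\beta_k}$ rather than being inflated by the coupling with the fast iterate. Because Proposition~\ref{thm3} already furnishes joint weak convergence with a \emph{block-diagonal} asymptotic covariance, the cross contribution of $\lambda_k-\bar{\lambda}^M$ and $D_k-\bar{D}$ is asymptotically independent, so no hidden covariance term appears in $\Sigma_s^M$. The step-size separation $\beta_k=o(\alpha_k)$ in Assumption~\ref{assumption2}.6 is exactly what kills the slow-direction contribution after the $\sqrt{\alpha_k^{-1}}$ rescaling; I would highlight this as the structural reason the fast-timescale noise alone governs the limit covariance.
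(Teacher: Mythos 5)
Your proof is correct and takes essentially the same approach as the paper: both identify that $\nabla_\lambda\hat{L}_M(\bar{\lambda}^M)=0$, isolate the fast-timescale fluctuation $\frac{A(\bar{\lambda}^M)}{M}E^M(D_k-\bar{D})$ as the leading term via Proposition~\ref{thm3}, and kill the slow-timescale contributions after the $\sqrt{\alpha_k^{-1}}$ rescaling using $\beta_k=o(\alpha_k)$ together with Slutsky's theorem. Your grouping of all slow-variable terms into a single Lipschitz remainder $R_k$ is slightly more economical than the paper's term-by-term decomposition (which invokes the Delta method on each of $A(\lambda_k)E^M D_k$, $A(\lambda_k)B(\lambda_k)$, $A(\lambda_k)C(\lambda_k)$ separately), but the underlying argument is identical.
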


Theorem \ref{thm6} formalizes that the inner gradient surrogate attains a $\sqrt{\alpha_k}$-scale Gaussian fluctuation limit. This CLT is the key ingredient to propagate fast-timescale noise into the gradient surrogate update and derive weak convergence rates for the composite estimator in what follows.

Next, we analyze how the inner Monte Carlo batch size $N$ and the outer parallelization level $M$ enter the fluctuation sizes. 
The following lemma isolates the order in $N$ and $M$ of the asymptotic covariance matrices in Proposition~\ref{thm3}.

\begin{lemma}\label{lemma11}
   Under the conditions in Proposition \ref{thm3}, $\Sigma_D$ is a covariance matrix with $T\times d \times M$ dimensions and its elements have an order of $O(N^{-1})$. While $\Sigma_{\lambda}$ is a covariance matrix with $l$ dimensions, and its element also has an order of $O(N^{-1})$. % as $M$ and $N$ tend to infinity
\end{lemma}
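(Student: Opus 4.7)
The plan is to trace the $N^{-1}$ scaling through the two Lyapunov-type covariance equations that define $\Sigma_D$ and $\Sigma_\lambda$ in the Mokkadem-Pelletier framework invoked in Proposition \ref{thm3}. The starting observation is structural: both $G_{1,k}$ and $G_{2,k}$ in Equation (\ref{G1G2}) are batch averages of $N$ i.i.d.\ single-run estimators, so the conditional covariance of the fast-timescale innovation
\[
\xi_{1,k} \;:=\; G_{1,k}(\lambda_k) - G_{2,k}(\lambda_k) D_k - \mathbb{E}\!\left[\,G_{1,k}(\lambda_k) - G_{2,k}(\lambda_k) D_k \,\middle|\, \mathcal{F}_k\right]
\]
is $O(N^{-1})$ entrywise, uniformly in $k$, by Assumptions \ref{assumption2}.1 and \ref{assumption2}.3 together with the a.s.\ boundedness of $D_k$ from the proof of Theorem \ref{thm1}.

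First, I would identify $\Sigma_D$ as the unique solution of the Lyapunov equation
\[
J_D \Sigma_D + \Sigma_D J_D^\top + \Gamma_D = 0,
\]
where $J_D = -\mathbb{E}[G_{2,k}(\bar{\lambda}^M)\mid\mathcal{F}_\infty]$ is the Jacobian of the fast ODE at the equilibrium $\bar D$ (Hurwitz by Assumption \ref{assumption2}.2, which gives a uniform lower bound $\epsilon>0$ on the density estimator) and $\Gamma_D$ is the asymptotic noise covariance, i.e., $\lim_{k\to\infty}\text{Cov}(\xi_{1,k}\mid\mathcal F_k)$ evaluated at $(\bar\lambda^M,\bar D)$. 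Because $\Gamma_D = \frac{1}{N}\,\tilde\Gamma_D$ with $\tilde\Gamma_D$ independent of $N$ (single-run covariance), the Lyapunov equation is linear in the inhomogeneous term, so $\Sigma_D = \frac{1}{N}\tilde\Sigma_D$ where $\tilde\Sigma_D$ depends only on the $G_1,G_2$ single-sample law and on $\bar\lambda^M$. This yields the $O(N^{-1})$ scaling for every entry of $\Sigma_D$, and in particular it has the stated dimension $T\times d\times M$ dictated by the block structure of $D_k$.

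Next, I would propagate the scaling to the slow timescale. By Theorem \ref{thm6}, the effective driving noise for the $\lambda_k$ recursion has asymptotic covariance $\Sigma_s^M = \frac{1}{M^2} A(\bar\lambda^M)E^M \Sigma_D (E^M)^\top A(\bar\lambda^M)^\top$, which inherits the $O(N^{-1})$ order directly from $\Sigma_D$. The slow covariance $\Sigma_\lambda$ then satisfies a second Lyapunov equation
\[
J_\lambda \Sigma_\lambda + \Sigma_\lambda J_\lambda^\top + \Sigma_s^M = 0,
\]
where $J_\lambda = \nabla_\lambda^2 \hat L_M(\bar\lambda^M)$ is negative definite by Assumption \ref{assumption3}.1 and hence Hurwitz. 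By the same linearity argument, $\Sigma_\lambda = \frac{1}{N}\tilde\Sigma_\lambda$ with $\tilde\Sigma_\lambda$ independent of $N$, giving the $O(N^{-1})$ order entrywise; its dimension is $l\times l$ since $\lambda \in \mathbb R^l$.

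The main obstacle I foresee is bookkeeping rather than substance: one has to verify that the noise term in the slow recursion contributes nothing at order larger than $N^{-1}$, i.e., that the cross-coupling between the two timescales does not inflate the slow covariance with $\alpha_k$-independent terms. This is precisely the content of the two-time-scale decoupling in \citet{mokkadem2006convergence}, which under the scale separation $\beta_k=o(\alpha_k)$ of Assumption \ref{assumption2}.6 guarantees that only $\Sigma_s^M$ enters the Lyapunov equation for $\Sigma_\lambda$. Once this decoupling is invoked, the $N^{-1}$ order of both matrices follows by the linear dependence of each Lyapunov solution on its inhomogeneous term, completing the argument.
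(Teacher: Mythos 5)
Your proposal follows essentially the same route as the paper: identify the Monte Carlo innovation covariance $\Gamma_{22}=\lim_k \mathbb{E}[W_k W_k^\top\mid\mathcal F_k]$ as $O(N^{-1})$ entrywise (batch average of $N$ i.i.d.\ single-run estimators, with $D_k$ a.s.\ bounded), then note that both $\Sigma_D$ and $\Sigma_\lambda$ are solutions of Lyapunov equations whose inhomogeneous terms inherit that scaling, and that Lyapunov solutions are linear in the inhomogeneous data while the Hurwitz drift matrices ($Q_{22}=-p(y\mid\theta(\bar\lambda^M))$ and $H=Q_{11}=\nabla^2_\lambda\hat L_M(\bar\lambda^M)$) have $O(1)$ entries. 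The dimension bookkeeping also matches.

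There is one inaccuracy worth flagging, though it does not change the conclusion. You identify the inhomogeneous term in the slow Lyapunov equation as $\Sigma_s^M = Q_{12}\Sigma_D Q_{12}^\top$, citing Theorem \ref{thm6}. That theorem gives the $\sqrt{\alpha_k^{-1}}$-scale fluctuation of $S_k$ itself, which is not the effective noise entering the slow-timescale CLT. In the Mokkadem--Pelletier framework (and as recorded in Equation (\ref{eq4})) the correct inhomogeneous term is $\Gamma_\theta = Q_{12}\,Q_{22}^{-1}\Gamma_{22}Q_{22}^{-\top}\,Q_{12}^\top$, which reflects the \emph{instantaneous inversion} of the fast dynamics rather than the fast iterate's stationary covariance $\Sigma_D = \int_0^\infty e^{Q_{22}t}\Gamma_{22}e^{Q_{22}t}\,dt$; these differ by a bounded factor of order $Q_{22}$ and coincide only up to a constant when $Q_{22}$ is scalar. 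Since $Q_{22}^{-1}$ has $O(1)$ entries, $\Gamma_\theta$ is still $O(N^{-1})$, so your order-of-magnitude argument survives the substitution, but if you were asked to compute $\Sigma_\lambda$ exactly, the $\Sigma_s^M$ version would be off by that factor. Replacing $\Sigma_s^M$ with $\Gamma_\theta$ and otherwise leaving the argument intact reproduces the paper's proof.
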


Lemma \ref{lemma11} indicates that the fundamental noise level is governed by the inner Monte Carlo batch size $N$. 
In particular, as $N\to\infty$ the inner-layer fluctuations vanish and the algorithm becomes deterministic. Similarly, an infinite number of outer-layer samples $M$ allows the ELBO function to be estimated exactly. In this scenario, the algorithm operates with infinitely many parallel faster scale iterations and one slower scale iteration, resulting in the asymptotic variance of constant order with respect to $M$. This is intuitive, as the number of outer-layer samples does not affect the asymptotic variance of the inner iterations. 

Building on the CLT for $S_k$ and the covariance orders above, we now provide an explicit weak convergence rate for the estimation error of the slow-timescale gradient itself, as a function of the iteration index $k$, inner sample size $N$, and the number of outer samples $M$.
This bound separates the contribution of inner Monte Carlo variability and the outer SAA error.

\begin{theorem}\label{thmS}
    If Assumptions \ref{assumption1}, \ref{assumption2}.1-\ref{assumption2}.8, and  \ref{assumption3}.1-\ref{assumption3}.2 hold,  then we have
    \begin{equation*}
        S_k^M - \nabla_{\lambda}L(\lambda_k) = O_p\bigg(\frac{\alpha_k^{\frac{1}{2}}}{N^{\frac{1}{2}}}\bigg) + O_p(M^{-\frac{1}{2}}).
    \end{equation*}
\end{theorem}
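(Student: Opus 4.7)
The plan is to prove Theorem~\ref{thmS} by splitting the total error into an \emph{inner-layer} contribution (the NMTS iterates tracking the SAA gradient) and an \emph{outer-layer} contribution (the SAA gradient approximating the true ELBO gradient), using the decomposition
\begin{equation*}
S_k^M - \nabla_{\lambda}L(\lambda_k)
= \bigl[S_k^M - \nabla_{\lambda}\hat{L}_M(\lambda_k)\bigr]
+ \bigl[\nabla_{\lambda}\hat{L}_M(\lambda_k) - \nabla_{\lambda}L(\lambda_k)\bigr].
\end{equation*}
Each bracket can then be bounded with the tools already built up in the paper.

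First I would handle the inner bracket. Writing it once more as
\begin{equation*}
S_k^M - \nabla_{\lambda}\hat{L}_M(\lambda_k)
= \bigl[S_k^M - \nabla_{\lambda}\hat{L}_M(\bar\lambda^M)\bigr]
+ \bigl[\nabla_{\lambda}\hat{L}_M(\bar\lambda^M) - \nabla_{\lambda}\hat{L}_M(\lambda_k)\bigr],
\end{equation*}
and using that $\bar\lambda^M$ is a stationary point of $\hat{L}_M$ (so the first summand equals $S_k^M$), Theorem~\ref{thm6} gives $S_k^M = O_p(\alpha_k^{1/2})$ with an asymptotic covariance $\Sigma_s^M$ whose entries are of order $O(N^{-1})$ by Lemma~\ref{lemma11} (since $\Sigma_s^M = M^{-2} A(\bar\lambda^M)E^M\Sigma_D (E^M)^\top A(\bar\lambda^M)^\top$ and $\Sigma_D = O(N^{-1})$). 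Hence the first summand is $O_p(\sqrt{\alpha_k/N})$. For the second summand, a first-order Taylor expansion of $\nabla_{\lambda}\hat{L}_M$ around $\bar\lambda^M$, combined with the smoothness in Assumption~\ref{assumption2}.8 and the rate $\lambda_k - \bar\lambda^M = O_p(\sqrt{\beta_k/N})$ from Proposition~\ref{thm3} and Lemma~\ref{lemma11}, yields $O_p(\sqrt{\beta_k/N})$. Because $\beta_k = o(\alpha_k)$ by Assumption~\ref{assumption2}.6, this term is absorbed into $O_p(\sqrt{\alpha_k/N})$, so the full inner-layer contribution is $O_p(\sqrt{\alpha_k/N})$.

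For the outer bracket, I would invoke the functional CLT in Proposition~\ref{proposition1}: the process $\sqrt{M}(\nabla_{\lambda}\hat{L}_M(\cdot) - \nabla_{\lambda}L(\cdot))$ converges in distribution to a tight Gaussian process $G_P$ on $\Lambda$. Tightness of the limit plus compactness of $\Lambda$ implies
\begin{equation*}
\sup_{\lambda\in\Lambda} \bigl\| \nabla_{\lambda}\hat{L}_M(\lambda) - \nabla_{\lambda}L(\lambda)\bigr\| = O_p(M^{-1/2}),
\end{equation*}
and since $\lambda_k \in \Lambda$ for all $k$ (by the projection), evaluating at $\lambda = \lambda_k$ preserves the rate. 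Adding the two bracketed rates yields the claimed bound.

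The main obstacle is the second summand of the inner bracket, because it couples the slow-timescale fluctuation rate of $\lambda_k$ into the fast-timescale estimator $S_k^M$, and requires the CLT of Proposition~\ref{thm3} to hold jointly with the Hessian Lipschitz behavior on a shrinking neighborhood of $\bar\lambda^M$. The key is that the scale separation $\beta_k/\alpha_k \to 0$ makes this coupling strictly smaller order than the leading $\sqrt{\alpha_k/N}$ term, so stability on both time scales only needs to be invoked qualitatively. A secondary, lesser obstacle is making the $O_p$ pass from evaluation at the deterministic $\lambda$ to the random iterate $\lambda_k$ for the outer error, which the uniform tightness in Proposition~\ref{proposition1} handles cleanly.
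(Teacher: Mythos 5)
Your proof is correct and follows the same overall split into an inner bracket $S_k^M-\nabla_\lambda\hat L_M(\lambda_k)$ and an outer SAA bracket $\nabla_\lambda\hat L_M(\lambda_k)-\nabla_\lambda L(\lambda_k)$ as the paper; the difference is in how the inner bracket is resolved. The paper expands $S_k^M-\nabla_\lambda\hat L_M(\lambda_k)$ algebraically into $\tfrac{1}{M}A(\lambda_k)E^M(D_k-\bar D)$ plus $\tfrac{1}{M}A(\lambda_k)E^M\bigl(\nabla_\theta\log p(\cdot;\bar\lambda^M)-\nabla_\theta\log p(\cdot;\lambda_k)\bigr)$ and bounds each variance directly using the CLT for $D_k-\bar D$, whereas you insert $\nabla_\lambda\hat L_M(\bar\lambda^M)=0$ (stationarity) so the first piece is exactly $S_k^M$ and is handled wholesale by Theorem~\ref{thm6}, and the second piece $\nabla_\lambda\hat L_M(\bar\lambda^M)-\nabla_\lambda\hat L_M(\lambda_k)$ by a Taylor expansion plus the $\lambda$-CLT of Proposition~\ref{thm3}. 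The two routes give the same $O_p(\sqrt{\alpha_k/N})+O_p(\sqrt{\beta_k/N})$ on the inner bracket; yours is cleaner in that it reuses Theorem~\ref{thm6} rather than re-deriving its content, and it is slightly more precise in attributing the second term's $N$-scaling to $\Sigma_\lambda$ (the paper writes $O(\beta_k)O(\Sigma_D)$, which is the same order but the less natural covariance). For the outer bracket you appeal to tightness of the Donsker limit in Proposition~\ref{proposition1} to pass to the random argument $\lambda_k$, which is also a touch more careful than the paper's brief variance statement. Both invocations of Lemma~\ref{lemma11} and the absorption of $\sqrt{\beta_k/N}$ into $\sqrt{\alpha_k/N}$ via $\beta_k=o(\alpha_k)$ match the paper's reasoning.
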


Theorem \ref{thmS} shows that the gradient-surrogate error decomposes additively into an inner-layer fluctuation term of order $\sqrt{\alpha_k/N}$ and an outer SAA term of order $M^{-1/2}$. 
This clean separation will allow us to combine inner and outer central limit behaviors to obtain the weak rate for the decision iterate $\lambda_k^M$.

We have shown that the iterative sequence $\lambda_k^M$ weakly converges to $\bar{\lambda}^M$. 
It is then natural to quantify the outer-layer statistical error due solely to the SAA, i.e., the gap between the $M$-sample optimizer $\bar{\lambda}^M$ and the true optimizer $\bar{\lambda}$.

\begin{theorem}\label{theorem5}
 If Assumptions \ref{assumption1}, \ref{assumption2}.1-\ref{assumption2}.8, and \ref{assumption3}.1-\ref{assumption3}.2 hold, then we have
    $$\sqrt{M}(\bar{\lambda}^M - \bar{\lambda}) \stackrel{d}{\longrightarrow} \mathcal{N}\bigg(0, \nabla^2{L}(\bar{\lambda})^{-1}\operatorname{Var}_u(h(u;\bar{\lambda}))\nabla^2{L}(\bar{\lambda})^{-\top}\bigg), \quad M \rightarrow \infty, $$
\end{theorem}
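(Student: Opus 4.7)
The plan is to establish the result via the classical SAA delta-method argument, combining the consistency of $\bar\lambda^M$ (Proposition~\ref{proposition4}) with the functional CLT for the SAA gradient process (Proposition~\ref{proposition1}) and the uniform consistency of the Hessian. First, I would invoke the assumption that $\bar\lambda^M$ lies in the interior of $\Lambda$ eventually (with probability one), which holds because $\bar\lambda^M\to\bar\lambda$ a.s.\ and $\bar\lambda$ is interior. This yields the first-order optimality conditions $\nabla\hat L_M(\bar\lambda^M)=0$ and $\nabla L(\bar\lambda)=0$, so the projection term plays no role asymptotically.

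Next, applying the mean value theorem coordinatewise to $\nabla\hat L_M$ between $\bar\lambda$ and $\bar\lambda^M$, there exists (componentwise) a point $\tilde\lambda^M$ on the segment joining them such that
\begin{equation*}
0 \;=\; \nabla \hat L_M(\bar\lambda^M) \;=\; \nabla \hat L_M(\bar\lambda) + H_M(\tilde\lambda^M)\bigl(\bar\lambda^M - \bar\lambda\bigr),
\end{equation*}
where $H_M(\cdot)=\nabla^2 \hat L_M(\cdot)$. Rearranging gives
\begin{equation*}
\sqrt{M}\bigl(\bar\lambda^M-\bar\lambda\bigr) \;=\; -\,H_M(\tilde\lambda^M)^{-1}\sqrt{M}\,\nabla\hat L_M(\bar\lambda).
\end{equation*}
For the gradient factor, Proposition~\ref{proposition1} (a Donsker-type functional CLT) evaluated at the fixed point $\bar\lambda$ yields
\begin{equation*}
\sqrt{M}\,\nabla \hat L_M(\bar\lambda) \;=\; \sqrt{M}\bigl(\nabla\hat L_M(\bar\lambda)-\nabla L(\bar\lambda)\bigr) \;\xrightarrow{d}\; \mathcal{N}\bigl(0,\operatorname{Var}_u(h(u;\bar\lambda))\bigr).
\end{equation*}

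For the Hessian factor, I would show $H_M(\tilde\lambda^M)\to \nabla^2 L(\bar\lambda)$ in probability. Since $\bar\lambda^M\to\bar\lambda$ a.s.\ (Proposition~\ref{proposition4}) and $\tilde\lambda^M$ lies between them, $\tilde\lambda^M\to\bar\lambda$ a.s. Combined with a uniform law of large numbers for $H_M$ on a neighborhood of $\bar\lambda$ (which follows from Assumption~\ref{assumption2}.8 and a dominating envelope analogous to Assumption~\ref{assumption1}, applied to second derivatives), continuous mapping gives $H_M(\tilde\lambda^M)^{-1}\to \nabla^2 L(\bar\lambda)^{-1}$, which is well defined by the invertibility condition in Assumption~\ref{assumption2}.8. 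Applying Slutsky's theorem to the product then produces
\begin{equation*}
\sqrt{M}(\bar\lambda^M-\bar\lambda)\;\xrightarrow{d}\;\mathcal{N}\bigl(0,\;\nabla^2 L(\bar\lambda)^{-1}\operatorname{Var}_u(h(u;\bar\lambda))\nabla^2 L(\bar\lambda)^{-\top}\bigr).
\end{equation*}

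The main technical obstacle I anticipate is rigorously justifying the uniform convergence of the Hessian $H_M$ needed to replace $H_M(\tilde\lambda^M)$ by $\nabla^2 L(\bar\lambda)$. The statement of Assumption~\ref{assumption1} controls only first-derivative Lipschitz behavior of $h$, so one has to either augment it with a second-order envelope or localize the argument to a shrinking neighborhood of $\bar\lambda$ and invoke a pointwise law of large numbers together with continuity. A clean way around this is to apply the stochastic equicontinuity of $\{\sqrt{M}(\nabla\hat L_M-\nabla L)\}$ supplied by Proposition~\ref{proposition1} to conclude $\sqrt{M}\nabla\hat L_M(\bar\lambda^M)=\sqrt{M}\nabla\hat L_M(\bar\lambda)+o_p(1)$ directly, bypassing the need to control $H_M$ away from $\bar\lambda$ and leaving only $\nabla^2 L(\bar\lambda)$ in the limit.
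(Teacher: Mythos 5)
Your proposal follows essentially the same route as the paper: a Taylor/mean-value expansion of $\nabla\hat L_M$ around $\bar\lambda$, the first-order optimality conditions $\nabla\hat L_M(\bar\lambda^M)=0$ and $\nabla L(\bar\lambda)=0$, the CLT for $\sqrt{M}\,\nabla\hat L_M(\bar\lambda)$ (which is precisely Proposition~\ref{proposition1} evaluated at the fixed point $\bar\lambda$), and Slutsky. The difference is cosmetic: the paper writes a Taylor expansion at $\bar\lambda$ with remainder $o(\bar\lambda^M-\bar\lambda)$ and Hessian $\nabla^2\hat L_M(\bar\lambda)$, whereas you use the mean value form with Hessian at an intermediate $\tilde\lambda^M$. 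Your proof is in fact more careful than the paper's: you correctly flag that replacing the random Hessian by $\nabla^2 L(\bar\lambda)$ requires a uniform-consistency argument that the paper's one-line "by Slutsky's Theorem" quietly sweeps under the rug.

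One small correction to your alternative route. Stochastic equicontinuity of $Z_M(\lambda):=\sqrt{M}\bigl(\nabla\hat L_M(\lambda)-\nabla L(\lambda)\bigr)$ together with $\bar\lambda^M\to\bar\lambda$ gives $Z_M(\bar\lambda^M)-Z_M(\bar\lambda)=o_p(1)$; using $\nabla L(\bar\lambda)=0$ this reads
\begin{equation*}
\sqrt{M}\,\nabla\hat L_M(\bar\lambda^M)\;=\;\sqrt{M}\,\nabla\hat L_M(\bar\lambda)\;+\;\sqrt{M}\,\nabla L(\bar\lambda^M)\;+\;o_p(1),
\end{equation*}
not $\sqrt{M}\,\nabla\hat L_M(\bar\lambda^M)=\sqrt{M}\,\nabla\hat L_M(\bar\lambda)+o_p(1)$ as written: the term $\sqrt{M}\,\nabla L(\bar\lambda^M)$ is exactly the nondegenerate quantity you want, and it equals $\nabla^2 L(\bar\lambda)\,\sqrt{M}(\bar\lambda^M-\bar\lambda)+o_p\bigl(\sqrt{M}\Vert\bar\lambda^M-\bar\lambda\Vert\bigr)$ by a deterministic Taylor expansion of $\nabla L$. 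Inserting $\nabla\hat L_M(\bar\lambda^M)=0$ and solving gives the claimed limit; the conclusion and its proof strategy are both sound once this term is restored.
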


Theorem \ref{theorem5} is a classical SAA CLT in our setting: it asserts that the outer layer optimizer concentrates at rate $M^{-1/2}$ around the true optimizer with a covariance determined by curvature and the variability of $h(u;\lambda)$ in $u$. 
This result will be coupled with the inner-layer fluctuation to yield the composite weak rate for $\lambda_k^M$.

Finally, combining the inner fluctuation of the slow-timescale iterate around $\bar{\lambda}^M$ and the outer SAA fluctuation of $\bar{\lambda}^M$ around $\bar{\lambda}$, we obtain the overall weak convergence rate for the NMTS decision iterate.
\begin{theorem}\label{theorem6}
    If Assumptions \ref{assumption1}, \ref{assumption2}.1-\ref{assumption2}.8, and \ref{assumption3}.1-\ref{assumption3}.2 hold, then we have
    \begin{equation*}
        \lambda_k^M - \bar{\lambda} = O_p\bigg(\frac{\beta_k^{\frac{1}{2}}}{N^{\frac{1}{2}}}\bigg) + O_p(M^{-\frac{1}{2}}).
    \end{equation*}
\end{theorem}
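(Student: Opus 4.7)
}

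The plan is to reduce the statement to the triangle inequality
\[
\lambda_k^M-\bar\lambda \;=\; (\lambda_k^M-\bar\lambda^M)+(\bar\lambda^M-\bar\lambda),
\]
and bound the two pieces by results already in the paper. The first piece encodes the inner-layer SA fluctuation (for fixed $M$), and the second piece encodes the outer-layer SAA fluctuation (as $M\to\infty$); both are already characterized as Gaussian limits in Proposition \ref{thm3} and Theorem \ref{theorem5}, respectively. Since the inner iteration noise (driven by the Monte Carlo samples $X$ generated during the recursion) and the outer sampling noise (the i.i.d.\ $\{u_m\}_{m=1}^M$ drawn at the start) are generated on disjoint sources of randomness, the two $O_p$ rates add without interaction.

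First, I would invoke Proposition \ref{thm3}, which gives
$\sqrt{\beta_k^{-1}}(\lambda_k^M-\bar\lambda^M)\stackrel{d}{\longrightarrow}\mathcal{N}(0,\Sigma_\lambda)$
for fixed $M$. By Lemma \ref{lemma11}, each entry of $\Sigma_\lambda$ is of order $O(N^{-1})$, so in operator norm $\|\Sigma_\lambda\|^{1/2}=O(N^{-1/2})$. Rescaling gives
\[
\lambda_k^M-\bar\lambda^M \;=\; O_p\!\bigl(\beta_k^{1/2}\bigr)\cdot O\!\bigl(N^{-1/2}\bigr) \;=\; O_p\!\Bigl(\tfrac{\beta_k^{1/2}}{N^{1/2}}\Bigr).
\]
Second, I would invoke Theorem \ref{theorem5}, which asserts
$\sqrt{M}(\bar\lambda^M-\bar\lambda)\stackrel{d}{\longrightarrow}\mathcal{N}\bigl(0,\nabla^2 L(\bar\lambda)^{-1}\operatorname{Var}_u(h(u;\bar\lambda))\nabla^2 L(\bar\lambda)^{-\top}\bigr)$,
yielding $\bar\lambda^M-\bar\lambda=O_p(M^{-1/2})$. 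Because tightness is preserved under sums, combining the two bounds through the triangle inequality produces the claimed composite rate
$\lambda_k^M-\bar\lambda = O_p\bigl(\beta_k^{1/2}/N^{1/2}\bigr)+O_p\bigl(M^{-1/2}\bigr)$.

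The main subtlety, and the step I expect to require the most care, is the legitimacy of treating the two $O_p$ terms additively despite their different asymptotic regimes: the first is taken with $M$ fixed and $k\to\infty$, while the second is taken with $M\to\infty$. One must ensure that $\bar\lambda^M$, the deterministic target of the inner SA for a given realization of $\{u_m\}$, is well-defined uniformly and that the inner CLT holds conditionally on the outer samples, so that unconditioning does not inflate the rate. Assumption \ref{assumption3}.1 (uniform strong concavity of $\hat L_M$) together with Proposition \ref{proposition4} guarantees that $\bar\lambda^M$ is interior, unique, and converges a.s.\ to $\bar\lambda$, which is precisely what is needed to make the conditional-on-$\{u_m\}$ application of Proposition \ref{thm3} valid and to let the two stochastic orders be combined via tightness on the product probability space. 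The remaining calculation is routine.
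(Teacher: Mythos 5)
Your proposal is correct and follows essentially the same route as the paper: decompose $\lambda_k^M-\bar\lambda$ into $(\lambda_k^M-\bar\lambda^M)+(\bar\lambda^M-\bar\lambda)$, bound the first term via Proposition \ref{thm3} and the $O(N^{-1})$ scaling of $\Sigma_\lambda$ from Lemma \ref{lemma11}, bound the second via Theorem \ref{theorem5}, and combine. The only surface-level difference is that the paper works with variances plus a Cauchy–Schwarz bound on the cross-covariance and then applies Chebyshev, whereas you extract $O_p$ rates directly from the CLT-tightness and add; these are equivalent.
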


Theorem \ref{theorem6} shows that the total weak error decomposes into two orthogonal sources: the inner Monte Carlo noise propagated through the slow-timescale dynamics at order $\sqrt{\beta_k/N}$, and the outer SAA error at order $M^{-1/2}$. 

\subsection{$\mathbb{L}^1$ Convergence Rate}\label{sec3.6}
Beyond asymptotic normality, we quantify the mean absolute error (MAE) of NMTS iterates. As a special case,  the MLE setting corresponds to $M=1$. So we first fix $M$ and expose how the timescale choice and inner Monte Carlo variance co-determine the $\mathbb{L}^1$ rate. This separates a deterministic tracking term due to timescale mismatch from a stochastic fluctuation term driven by batch size $N$.  While unbiasedness guarantees convergence of the algorithm, the convergence rate depends on the variance. It complements strong convergence by providing a characterization at the level of convergence rates, and it complements weak convergence: the former gives the order of the mean error, while the latter gives the limiting distribution and asymptotic variance of the random fluctuations.

The next theorem bounds the mean tracking error of the fast recursion in Equation (\ref{equation4}). It shows that the mismatch between the fast and slow iterates contributes an $O(\beta_k/\alpha_k)$ term, while inner Monte Carlo noise contributes an $O(\sqrt{\alpha_k/N})$ term.

\begin{theorem}\label{thm4}
    If $M$ is fixed, Assumptions \ref{assumption1}, \ref{assumption2}.1-\ref{assumption2}.8 and \ref{assumption3}.1-\ref{assumption3}.2 hold, the sequence $ {D_k}$ generated by recursion (\ref{equation4}) satisfies 
    \begin{equation}
        \mathbb{E}[\Vert D_{k}-\nabla _{\theta}\log p(y|\theta(u;\lambda_k))\Vert ] = O\bigg(\frac{\beta_k}{\alpha_k}\bigg) + O\bigg(\sqrt{\frac{\alpha_k}{N}}\bigg).
    \end{equation}
\end{theorem}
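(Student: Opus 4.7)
The plan is to reduce the claim to a contractive second-moment recursion for the fast-timescale tracking error and then pass to $\mathbb{L}^{1}$ via Jensen's inequality. Define the frozen-$\lambda$ target $D^{\star}_{k}:=\nabla_{\theta}\log p(y\mid\theta(u;\lambda_{k}))$ and the tracking error $e_{k}:=D_{k}-D^{\star}_{k}$. By Assumption~\ref{assumption2}.4, $\mathbb{E}[G_{1,k}\mid\mathcal{F}_{k}]$ and $\mathbb{E}[G_{2,k}\mid\mathcal{F}_{k}]$ reproduce $\nabla_{\theta}p$ and $\mathrm{diag}(p(Y_{t}\mid\theta(u;\lambda_{k})))$, so the innovation $\xi_{k}:=G_{1,k}-G_{2,k}D^{\star}_{k}$ is a conditional martingale difference, and by the iid $N$-batch averaging in \eqref{G1G2} together with Assumptions~\ref{assumption2}.1 and \ref{assumption2}.3 one gets $\mathbb{E}[\|\xi_{k}\|^{2}\mid\mathcal{F}_{k}] = O(1/N)$. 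Substituting $D_{k}=D^{\star}_{k}+e_{k}$ into \eqref{equation4} yields
\begin{equation*}
e_{k+1} = (I-\alpha_{k}G_{2,k})\,e_{k} + \alpha_{k}\xi_{k} - \Delta^{\star}_{k}, \qquad \Delta^{\star}_{k}:=D^{\star}_{k+1}-D^{\star}_{k},
\end{equation*}
with $\Delta^{\star}_{k}$ being $\mathcal{F}_{k}$-measurable since $\lambda_{k+1}$ is built from $\mathcal{F}_{k}$-measurable quantities.

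The core step is to bound $\mathbb{E}[\|e_{k+1}\|^{2}\mid\mathcal{F}_{k}]$ by expanding the squared norm into four pieces. (a) The quadratic $\mathbb{E}[\|(I-\alpha_{k}G_{2,k})e_{k}\|^{2}\mid\mathcal{F}_{k}] \le (1-2\alpha_{k}\epsilon + C\alpha_{k}^{2})\|e_{k}\|^{2}$, using the positive lower bound $\mathbb{E}[G_{2,k}\mid\mathcal{F}_{k}]\succeq\epsilon I$ (Assumption~\ref{assumption2}.2) and the moment bound of Assumption~\ref{assumption2}.3. (b) The noise term $\alpha_{k}^{2}\mathbb{E}[\|\xi_{k}\|^{2}\mid\mathcal{F}_{k}] = O(\alpha_{k}^{2}/N)$. (c) The $\xi_{k}$-cross term, which after $\mathbb{E}[\xi_{k}\mid\mathcal{F}_{k}]=0$ collapses to $-2\alpha_{k}^{2}e_{k}^{\top}\mathbb{E}[G_{2,k}^{\top}\xi_{k}\mid\mathcal{F}_{k}]$; here, the iid batch factorization together with the identity $\mathbb{E}[G_{2,k}\mid\mathcal{F}_{k}]D^{\star}_{k}=\mathbb{E}[G_{1,k}\mid\mathcal{F}_{k}]$ annihilates the off-diagonal contribution and leaves $\mathbb{E}[G_{2,k}^{\top}\xi_{k}\mid\mathcal{F}_{k}]=O(1/N)$, so this piece is absorbed into $O(\alpha_{k}^{2}/N)$. (d) The drift cross term $2\langle(I-\alpha_{k}\mathbb{E}[G_{2,k}\mid\mathcal{F}_{k}])e_{k},-\Delta^{\star}_{k}\rangle$, handled by Young's inequality with parameter $\alpha_{k}\epsilon$ to yield $\alpha_{k}\epsilon\|e_{k}\|^{2}+O(\|\Delta^{\star}_{k}\|^{2}/\alpha_{k})$. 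Invoking the Lipschitz smoothness of the score in $\lambda$ (Assumption~\ref{assumption2}.7) and the uniform boundedness of $D_{k}$ already used in the proof of Theorem~\ref{thm1}, we obtain $\|\lambda_{k+1}-\lambda_{k}\|=O(\beta_{k})$ and hence $\|\Delta^{\star}_{k}\|=O(\beta_{k})$. Summing the four pieces,
\begin{equation*}
\mathbb{E}[\|e_{k+1}\|^{2}] \le \bigl(1-\alpha_{k}\epsilon + O(\alpha_{k}^{2})\bigr)\mathbb{E}[\|e_{k}\|^{2}] + O\!\bigl(\alpha_{k}^{2}/N\bigr) + O\!\bigl(\beta_{k}^{2}/\alpha_{k}\bigr).
\end{equation*}

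A standard discrete Grönwall-type argument, applied under the polynomial step-size schedule of Assumption~\ref{assumption3}.2, then gives $\mathbb{E}[\|e_{k}\|^{2}] = O(\alpha_{k}/N) + O(\beta_{k}^{2}/\alpha_{k}^{2})$, and Jensen's inequality upgrades this to $\mathbb{E}[\|e_{k}\|] \le \sqrt{\mathbb{E}[\|e_{k}\|^{2}]} = O(\sqrt{\alpha_{k}/N}) + O(\beta_{k}/\alpha_{k})$, which is the claim. The main obstacle I expect is the cross-covariance control in piece (c): a naive bound $|\mathbb{E}[G_{2,k}^{\top}\xi_{k}\mid\mathcal{F}_{k}]|=O(1)$ would inject an $\alpha_{k}\|e_{k}\|^{2}$ term that, after Young's inequality, erodes the contraction coefficient below $-\alpha_{k}\epsilon$ and destroys the inner Monte Carlo rate. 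The resolution is structural: exploiting the iid batch factorization together with the population-level cancellation $\mathbb{E}[G_{2,k}\mid\mathcal{F}_{k}]D^{\star}_{k}=\mathbb{E}[G_{1,k}\mid\mathcal{F}_{k}]$ extracts the extra $1/N$ factor that underlies the $O(\sqrt{\alpha_{k}/N})$ term in the bound; a softer but related point is that the drift bound $\|\Delta^{\star}_{k}\|=O(\beta_{k})$ must be obtained uniformly along the trajectory, which is why the a.s. boundedness from the proof of Theorem~\ref{thm1} is needed as an input.
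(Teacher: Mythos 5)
Your proposal is correct and follows essentially the same strategy as the paper: a contractive $\mathbb{L}^{2}$ recursion for the tracking error $e_{k}=D_{k}-\nabla_{\theta}\log p(y\mid\theta(u;\lambda_{k}))$, decomposed into a contraction term (driven by the lower bound $\mathbb{E}[G_{2,k}\mid\mathcal{F}_{k}]\succeq\epsilon I$), an $O(\alpha_{k}^{2}/N)$ Monte Carlo noise term, and an $O(\beta_{k})$ slow-drift term, followed by Jensen's inequality to pass to $\mathbb{L}^{1}$. The one noteworthy difference is the choice of martingale innovation: you set $\xi_{k}=G_{1,k}-G_{2,k}D^{\star}_{k}$, which forces you to control the cross-covariance $\mathbb{E}[G_{2,k}^{\top}\xi_{k}\mid\mathcal{F}_{k}]$ — the point you flag as the main obstacle — by invoking the iid-batch factorization to extract the extra $1/N$ factor. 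The paper instead defines $W_{k}=(G_{1,k}-G_{2,k}D_{k})-(\nabla_{\theta}p-pD_{k})$ directly, which makes $G_{1,k}-G_{2,k}D_{k}=W_{k}-p(\lambda_{k})\zeta_{k}$ a clean mean-zero-plus-contraction split and avoids that cross-covariance issue altogether; it then handles the drift cross term by completing the square on $\sqrt{\mathbb{E}\|\zeta_{k}\|^{2}}$ rather than your Young's-inequality splitting. Both variants close to the same fixed point $O(\beta_{k}/\alpha_{k})+O(\sqrt{\alpha_{k}/N})$; the paper's decomposition is marginally more streamlined, while yours makes the $1/N$ mechanism more explicit.
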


This result isolates the bias–variance tradeoff on the fast scale: shrinking $\beta_k/\alpha_k$ improves tracking, while increasing $N$ reduces stochastic variation. It provides the key input for the slow recursion rate.

We now transfer the previous bound to the parameter update for Equation (\ref{equation5}). The following theorem shows that the slow sequence achieves the same $\mathbb{L}^1$ rate, hence removing the ratio-induced $O(N^{-1/2})$ bias that persists under single-timescale schemes.

\begin{theorem}[Faster convergence]\label{thm5}
    If $M$ is fixed, Assumptions \ref{assumption1}, \ref{assumption2}.1-\ref{assumption2}.8 and  \ref{assumption3}.1-\ref{assumption3}.2 hold, the sequence $\lambda_k$ generated by recursion (\ref{equation5}) satisfies 
    \begin{equation}\label{equationMTSBA}
        \mathbb{E}[\Vert \lambda_k-\bar{\lambda}^M\Vert] = O\bigg(\frac{\beta_k}{\alpha_k}\bigg) + O\bigg(\sqrt{\frac{\alpha_k}{N}}\bigg).
    \end{equation}
\end{theorem}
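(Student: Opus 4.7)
The plan is to run a Lyapunov argument on the squared distance $V_k := \|\lambda_k-\bar{\lambda}^M\|^2$, derive a linear recursion whose driving noise is the fast-scale tracking error, invoke Theorem \ref{thm4} at the second-moment level, close the argument with a discrete Gronwall-type inequality, and conclude via Jensen's inequality. The contraction that drives $V_k$ toward zero is supplied by the strong concavity of $\hat{L}_M$ from Assumption \ref{assumption3}.1, while the ratio-free structure of the algorithm enters only through the fluctuation $\epsilon_k:=S_k-\nabla_\lambda\hat{L}_M(\lambda_k)$, which is precisely the object controlled by the fast recursion via Theorem \ref{thm4}.

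First I would use the non-expansive property of the Euclidean projection together with the fact that $\bar{\lambda}^M$ lies in the interior of $\Lambda$ (so $\Pi_\Lambda(\bar{\lambda}^M)=\bar{\lambda}^M$) to bypass the Lagrange-type term $Z_k$ in Equation (\ref{equation5}) and obtain
\[
V_{k+1}\leq \|\lambda_k+\beta_k S_k-\bar{\lambda}^M\|^2 = V_k + 2\beta_k\langle\lambda_k-\bar{\lambda}^M,\,S_k\rangle+\beta_k^2\|S_k\|^2.
\]
Splitting $S_k=\nabla_\lambda\hat{L}_M(\lambda_k)+\epsilon_k$ and using $\nabla_\lambda\hat{L}_M(\bar{\lambda}^M)=0$ with Assumption \ref{assumption3}.1 gives the contraction $\langle\lambda_k-\bar{\lambda}^M,\nabla_\lambda\hat{L}_M(\lambda_k)\rangle\leq -K_L V_k$. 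Applying Young's inequality to the cross term and bounding $\|S_k\|^2\leq 2\|\nabla_\lambda\hat{L}_M(\lambda_k)\|^2+2\|\epsilon_k\|^2$, where the first summand is bounded by continuity of $\nabla_\lambda\hat{L}_M$ on the compact set $\Lambda$ (Assumptions \ref{assumption2}.7--\ref{assumption2}.8), yields the recursion
\[
\mathbb{E}[V_{k+1}] \leq (1-\beta_k K_L)\,\mathbb{E}[V_k] + C_1\beta_k\,\mathbb{E}[\|\epsilon_k\|^2] + C_2\beta_k^2.
\]

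To feed this recursion I would observe that, by Equation (\ref{eq:s}) and the boundedness of $A(\lambda)$ on $\Lambda$, $\|\epsilon_k\|\leq C\,\|D_k-\nabla_\theta\log p(y|\theta(u;\lambda_k))\|$. It therefore suffices to upgrade Theorem \ref{thm4} to a second-moment statement, namely $\mathbb{E}[\|D_k-\nabla_\theta\log p(y|\theta(u;\lambda_k))\|^2]=O((\beta_k/\alpha_k)^2+\alpha_k/N)$, which is obtained by re-running the two-time-scale Lyapunov argument behind Theorem \ref{thm4} on squared norms, using the uniform second-moment bounds of Assumptions \ref{assumption2}.1 and \ref{assumption2}.3. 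Substituting and applying a discrete Gronwall-type lemma with the polynomial step sizes of Assumption \ref{assumption3}.2 yields $\mathbb{E}[V_k]=O((\beta_k/\alpha_k)^2+\alpha_k/N)$, and Jensen's inequality delivers
\[
\mathbb{E}[\|\lambda_k-\bar{\lambda}^M\|]\leq\sqrt{\mathbb{E}[V_k]}=O\!\left(\frac{\beta_k}{\alpha_k}\right)+O\!\left(\sqrt{\frac{\alpha_k}{N}}\right).
\]

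The hard part is the $\mathbb{L}^2$ upgrade of the fast-scale tracking bound and its compatibility with the slow drift: because $\epsilon_k$ depends on $\lambda_k$ through both the Jacobian $A(\lambda_k)$ and the frozen-slow equilibrium implicit in the linearized fast dynamics, the second-moment analysis must carry the slow increment $\|\lambda_{k+1}-\lambda_k\|=O(\beta_k)$ through the contraction of the fast operator without losing a factor. As in the two-time-scale literature \citep{doan2022nonlinear,hong2023two,zeng2024two}, the $(\beta_k/\alpha_k)^2$ term arises precisely from this deterministic mismatch between the actual slow iterate and the frozen target of the fast recursion, while the $\alpha_k/N$ term is the diffusion induced by the fixed-batch Monte Carlo noise; balancing the two (e.g., $b=1$ and $a=2/3$) produces the optimal $\mathbb{L}^1$ rate $O(k^{-1/3})$ claimed in the abstract.
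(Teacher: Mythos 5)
Your proof is correct but takes a genuinely different route from the paper's. You run the Lyapunov argument at the $\mathbb{L}^2$ level on $V_k = \|\lambda_k - \bar{\lambda}^M\|^2$, handle the constraint via the non-expansiveness of the Euclidean projection together with $\Pi_\Lambda(\bar{\lambda}^M) = \bar{\lambda}^M$, feed in the second-moment tracking bound for the fast recursion, and finish with Jensen's inequality. The paper instead works directly at the $\mathbb{L}^1$ level: it takes norms of the recursion for $\psi_k = \lambda_k - \bar{\lambda}^M$ to obtain $\|\psi_{k+1}\| \le (1-\beta_k K_L)\|\psi_k\| + \beta_k\|\eta_k\| + \beta_k\|Z_k\|$, then controls the projection residual $Z_k$ with a separate probability estimate (using Markov's inequality and the interiority of $\bar{\lambda}^M$), and substitutes the $\mathbb{L}^1$ version of Theorem~\ref{thm4} before closing the recursion. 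Your projection treatment is cleaner than the paper's explicit $Z_k$ bound. The second-moment upgrade of Theorem~\ref{thm4} that you require is not an extra step in practice: the paper's proof of that theorem derives $\sqrt{\mathbb{E}[\|\zeta_k\|^2]} \le x_k$ with $x_k = O(\beta_k/\alpha_k) + O(\sqrt{\alpha_k/N})$ as an intermediate result, which is precisely the $\mathbb{L}^2$ control you invoke. The one place your route gives a formally weaker intermediate constant is the $O(\beta_k^2)$ term in the $\mathbb{L}^2$ recursion, which after Gronwall and Jensen contributes an $O(\sqrt{\beta_k})$ term to the $\mathbb{L}^1$ bound rather than the paper's $O(\beta_k)$; since $\beta_k = o(\alpha_k)$ and $N$ is fixed, $\sqrt{\beta_k} = o(\sqrt{\alpha_k/N})$, so this is absorbed and the final statement is unaffected. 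Both approaches buy essentially the same result; yours is slightly more modular (projection handled abstractly, Jensen at the end), while the paper's avoids squaring and thus side-steps the $O(\sqrt{\beta_k})$ bookkeeping.
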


Under polynomial stepsizes $\alpha_k=k^{-a}$, $\beta_k=k^{-b}$ with $\tfrac12<a<b\le1$, the right-hand side is minimized when $b-a=\tfrac{a}{2}$, i.e., $a=\tfrac{2}{3}$ and $b=1$. With this choice, both terms scale as $k^{-1/3}$, so the $\mathbb{L}^1$ error decays as $k^{-1/3}$; equivalently, the mean square error (MSE) decays as $k^{-2/3}$, which matches the optimal rate in the two-timescale SA literature \citep{doan2022nonlinear,hong2023two}. Intuitively, the fast tracker averages noise quickly enough via $\alpha_k$ while the slow update moves cautiously enough via $\beta_k$ so their errors shrink.

For comparison, we record the $\mathbb{L}^1$ rate of the ratio-based single-timescale (STS) update, which exhibits a nonvanishing $O(N^{-1/2})$ term that cannot be annealed over iterations. The asymptotic bias of stochastic gradient descent can also be referenced to \cite{doucet2017asymptotic}. 

\begin{proposition}\label{proposition 7}
   If $M$ is fixed, Assumptions \ref{assumption1}, \ref{assumption2}.1-\ref{assumption2}.8 and  \ref{assumption3}.1-\ref{assumption3}.2 hold, the sequence $\lambda_k$ generated by recursion (\ref{single2}) satisfies 
    \begin{equation}\label{STS2}
        \mathbb{E}[\Vert \lambda_k-\bar{\lambda}^M\Vert] = O(\beta_k) + O\bigg(\sqrt{\frac{1}{N}}\bigg),
    \end{equation}
\end{proposition}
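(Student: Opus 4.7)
The plan is to mirror the $\mathbb{L}^1$ analysis of Theorem \ref{thm5}, but with the key distinction that the STS recursion (\ref{single2}) plugs the ratio $G_1/G_2$ directly into each update. There is no auxiliary sequence $D_k$ absorbing the ratio deviation, so the gradient surrogate fluctuates by an irreducible $O(N^{-1/2})$ at every step that cannot be annealed over iterations; this is precisely the source of the persistent $O(1/\sqrt N)$ term in the bound, in contrast to the diminishing $O(\sqrt{\alpha_k/N})$ term obtained for NMTS.

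I would rewrite (\ref{single2}) as $\lambda_{k+1} = \Pi_\Lambda(\lambda_k + \beta_k \tilde g_k)$ with $\tilde g_k = \nabla_\lambda \hat L_M(\lambda_k) + \xi_k$, where $\xi_k$ collects the total deviation of the plug-in gradient from the true gradient of the approximate ELBO. Since $\bar\lambda^M\in\Lambda$, the non-expansiveness of $\Pi_\Lambda$ gives $\Vert\lambda_{k+1}-\bar\lambda^M\Vert \leq \Vert\lambda_k-\bar\lambda^M+\beta_k\tilde g_k\Vert$. Under Assumption \ref{assumption3}.1 (Hessian eigenvalues bounded above by $-K_L$) together with smoothness from Assumption \ref{assumption2}.8, the deterministic step is a contraction, $\Vert\lambda_k-\bar\lambda^M+\beta_k\nabla_\lambda\hat L_M(\lambda_k)\Vert \leq (1 - c\beta_k/2)\Vert\lambda_k-\bar\lambda^M\Vert$, for $k$ large enough that $\beta_k$ is small; the smoothness contribution is absorbed into an $O(\beta_k^2)$ remainder.

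The main technical content is to show $\mathbb{E}[\Vert\xi_k\Vert\mid\mathcal F_k] = O(N^{-1/2})$. Using a first-order expansion $G_1/G_2 - \nabla_\theta p/p = (G_1-\nabla_\theta p)/p - \nabla_\theta p\,(G_2-p)/p^2 + R_N$, the two leading terms have conditional $\mathbb{L}^1$ norm $O(N^{-1/2})$ by the second-moment bounds of Assumptions \ref{assumption2}.1 and \ref{assumption2}.3, and $R_N = O(1/N)$. Carrying this through the linear operation $A(\lambda_k)(\cdot)/M$ preserves the $O(N^{-1/2})$ order because $A(\lambda)$ is continuous on the compact set $\Lambda$ and hence uniformly bounded. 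Combined with the contraction, this yields the scalar recursion
\begin{equation*}
e_{k+1} \leq (1 - c\beta_k/2)\,e_k + \beta_k\,O(N^{-1/2}) + O(\beta_k^2),
\end{equation*}
with $e_k := \mathbb{E}\Vert\lambda_k-\bar\lambda^M\Vert$. A standard SA lemma (Chung-type) under the step-size schedule of Assumption \ref{assumption3}.2 then delivers $e_k = O(\beta_k) + O(N^{-1/2})$, as claimed.

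The main obstacle lies in the ratio-deviation bound: Assumption \ref{assumption2}.2 only lower-bounds the conditional expectation $\mathbb{E}[G_{2,k}\mid\mathcal F_k]\geq\epsilon$, while $G_{2,k}$ itself can realize arbitrarily small values. A clean resolution is to split on the event $\{G_{2,k} \geq \epsilon/2\}$, whose complement has probability $O(1/N)$ by Chebyshev via Assumption \ref{assumption2}.3, and to control the complementary contribution through the boundedness of $\Vert\lambda-\bar\lambda^M\Vert$ on the compact $\Lambda$ and the moment bound on $G_{1,k}$; both pieces yield at most $O(N^{-1/2})$ in $\mathbb{L}^1$. This is exactly the technicality that NMTS sidesteps by never forming the ratio explicitly, and comparing the two recursions---NMTS absorbs the ratio noise into the fast sequence $D_k$ and drives it to zero, while STS leaves it in the slow update as a permanent floor---explains both the qualitative gap and the matched $O(\beta_k)$ transients in the two bounds.
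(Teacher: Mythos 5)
Your high-level plan coincides with the paper's: decompose the error as a contraction in $\beta_k$ plus a gradient-surrogate fluctuation, then identify the persistent $O(N^{-1/2})$ floor with the ratio bias. The paper's own proof sets $\psi_k=\lambda_k-\bar\lambda^M$, $\eta_k=S_k'-\nabla_\lambda\hat L_M(\lambda_k)$, reuses the contraction/projection machinery from Theorem~\ref{thm5}, and bounds $\mathbb{E}\Vert\eta_k\Vert=O(N^{-1/2})$ by citing Theorem~1 in \cite{peng2017asymptotic}; you instead use non-expansiveness of $\Pi_\Lambda$ (cleaner than the paper's normal-cone and small-probability event argument) and attempt to prove the ratio-bias bound from scratch via a first-order expansion. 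Up to the ratio-bias bound, the two arguments are essentially the same modulo presentation.

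The gap is in your tail-splitting step, and it is not merely cosmetic. You claim $\mathbb{E}[\Vert\xi_k\Vert\mid\mathcal{F}_k]=O(N^{-1/2})$ and handle the event $\{G_{2,k}<\epsilon/2\}$ by invoking its $O(1/N)$ probability together with ``the boundedness of $\Vert\lambda-\bar\lambda^M\Vert$ on the compact $\Lambda$ and the moment bound on $G_{1,k}$.'' But these facts do not control the object you need. On the bad event, the quantity you want to control in $\mathbb{L}^1$ is $\Vert G_1/G_2-\nabla_\theta p/p\Vert$, and the denominator $G_2$ can be arbitrarily close to zero (or nonpositive): Assumption~\ref{assumption2}.2 lower-bounds only $\mathbb{E}[G_2\mid\mathcal{F}_k]$, not $G_2$ itself, and nothing in Assumptions~\ref{assumption2}.1--\ref{assumption2}.3 supplies a moment bound on $1/G_2$. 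So $\mathbb{E}[\Vert\xi_k\Vert\mathbb{1}\{\text{bad}\}]$ is not shown to be finite, let alone $O(N^{-1/2})$. The alternative reading of your remark---that one should bound $\Vert\lambda_{k+1}-\bar\lambda^M\Vert$ by $\operatorname{diam}(\Lambda)$ on the bad event, contributing $O(1/N)$ per step---also fails: that yields a recursion of the form $e_{k+1}\le(1-c\beta_k)e_k+\beta_k\,O(N^{-1/2})+O(1/N)$, and the additive $O(1/N)$ term (not multiplied by $\beta_k$) resolves to $O(1/(N\beta_k))$ by the usual SA rate lemma, which diverges as $\beta_k\to 0$ and is strictly worse than the claimed $O(\beta_k)+O(N^{-1/2})$. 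To close the argument self-containedly you would need an additional hypothesis (e.g., a moment condition on $G_2^{-1}$ or an a priori lower bound on $G_2$), which is exactly the content that the paper outsources to the cited ratio-estimator result.
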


Comparing Theorem~\ref{thm5} with Proposition~\ref{proposition 7} highlights the advantage of NMTS: the fixed $O(N^{-1/2})$ asymptotic bias in STS is replaced by a vanishing $O(\sqrt{\alpha_k/N})$ term due to the decreasing stepsize. The stepsize dependence changes from $O(\beta_k)$ to $O(\beta_k/\alpha_k)$, which also converge to $0$ due to the stepsize condition. This means NMTS gains accuracy with iterations rather than larger inner batchesize, delivering strictly sharper rates and better long-run accuracy.

We next incorporate the approximation error from the outer SAA layer. The following theorem combines inner tracking, inner Monte Carlo noise, and outer sampling error to bound the distance to the true optimizer $\bar{\lambda}$.

\begin{theorem}\label{theorem7}
    In the NMTS algorithm, if Assumptions \ref{assumption1}, \ref{assumption2}.1-\ref{assumption2}.8 and  \ref{assumption3}.1-\ref{assumption3}.2 hold, the sequence $\lambda_k^M$ generated by recursion (\ref{equation5}) satisfies 
    \begin{equation}\label{equation6}
        \mathbb{E}[\Vert \lambda_k^M-\bar{\lambda}\Vert ] = O\bigg(\frac{\beta_k}{\alpha_k}\bigg) + O\bigg(\sqrt{\frac{\alpha_k }{N}}\bigg) +  O\bigg(\sqrt{\frac{1}{M}}\bigg).
    \end{equation}
\end{theorem}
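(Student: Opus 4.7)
The plan is to decompose the total error via the triangle inequality into the inner (SA) iteration error and the outer (SAA) statistical error,
\begin{equation*}
\mathbb{E}[\Vert \lambda_k^M - \bar{\lambda}\Vert] \leq \mathbb{E}[\Vert \lambda_k^M - \bar{\lambda}^M\Vert] + \mathbb{E}[\Vert \bar{\lambda}^M - \bar{\lambda}\Vert],
\end{equation*}
and to handle the two summands with tools already developed in the paper. For the first (inner) term, I would invoke Theorem \ref{thm5} directly to obtain a bound of the form $O(\beta_k/\alpha_k) + O(\sqrt{\alpha_k/N})$. Since Theorem \ref{thm5} is stated for fixed $M$, I need to verify that its implicit constants are uniform in $M$; this follows by revisiting its proof and noting that the relevant Lipschitz, noise-variance and curvature constants depend only on the bounds in Assumption \ref{assumption2} that hold over $\Lambda$, while the $1/M$ factor attached to $A(\lambda_k)$ in Equation (\ref{equation5}) exactly compensates the aggregation of the $M$ parallel inner blocks entering $E^M D_k$.

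For the second (outer) term, the target is $\mathbb{E}[\Vert \bar{\lambda}^M - \bar{\lambda}\Vert] = O(1/\sqrt{M})$. Rather than trying to convert the weak convergence of Theorem \ref{theorem5} directly into an $\mathbb{L}^1$ rate, I would derive a second-moment bound and then apply Jensen's inequality. Starting from the first-order condition $\nabla \hat{L}_M(\bar{\lambda}^M) = 0$, a Taylor expansion around $\bar{\lambda}$ combined with $\nabla L(\bar{\lambda}) = 0$ and the invertibility of $\nabla^2 L(\bar{\lambda})$ (Assumption \ref{assumption2}.8) yields
\begin{equation*}
\bar{\lambda}^M - \bar{\lambda} = -\nabla^2 L(\bar{\lambda})^{-1}\bigl(\nabla \hat{L}_M(\bar{\lambda}) - \nabla L(\bar{\lambda})\bigr) + R_M,
\end{equation*}
where the remainder $R_M$ is quadratic in $\bar{\lambda}^M - \bar{\lambda}$. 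Taking squared norms and expectations, and using the identity $\mathbb{E}\Vert \nabla \hat{L}_M(\bar{\lambda}) - \nabla L(\bar{\lambda})\Vert^2 = \operatorname{Var}_u(h(u;\bar{\lambda}))/M$ recorded in Section \ref{sec3.1}, I obtain $\mathbb{E}\Vert \bar{\lambda}^M - \bar{\lambda}\Vert^2 = O(1/M)$ provided the remainder can be controlled. Jensen's inequality then delivers the desired $\mathbb{L}^1$ rate $O(1/\sqrt{M})$, and summing the two bounds yields Equation (\ref{equation6}).

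The main obstacle is the rigorous control of $R_M$, since a quadratic remainder can in principle swamp the leading linear term. I expect to dispatch it by combining the compactness of $\Lambda$ (so that $\Vert \bar{\lambda}^M - \bar{\lambda}\Vert$ is a.s.\ bounded by $\operatorname{diam}(\Lambda)$) with the uniform gradient convergence of Proposition \ref{proposition1}, which forces $\Vert \bar{\lambda}^M - \bar{\lambda}\Vert \to 0$ almost surely (by Proposition \ref{proposition4}) and hence gives a dominated-convergence bound on $\mathbb{E}\Vert R_M\Vert$ of order $o(1/\sqrt{M})$. A secondary obstacle is the $M$-uniformity of the constants hidden in Theorem \ref{thm5}, which is delicate because the Jacobian block $A(\lambda)/M$ and the stacked estimator $E^M D_k$ are coupled across all $M$ parallel fast recursions; I would address this by reexamining the bounds in Theorem \ref{thm4} and verifying that cross-block correlations only contribute through the already-uniform quantities in Assumptions \ref{assumption2}.1 and \ref{assumption2}.3. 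Once both steps are in place, the triangle-inequality decomposition immediately produces the three-term bound.
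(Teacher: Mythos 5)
Your decomposition $\mathbb{E}\Vert\lambda_k^M-\bar\lambda\Vert\le\mathbb{E}\Vert\lambda_k^M-\bar\lambda^M\Vert+\mathbb{E}\Vert\bar\lambda^M-\bar\lambda\Vert$ is exactly the one the paper uses: the paper re-derives the inner bound $O(\beta_k/\alpha_k)+O(\sqrt{\alpha_k/N})$ by rerunning the Theorem~\ref{thm4}/\ref{thm5} argument while tracking how the constants depend on $M$ (noting in particular that $\Vert D_k\Vert=O(\sqrt{M})$ is exactly offset by the $1/M$ in the aggregated update), and then appends the outer SAA error. So your route is correct and essentially the same as the paper's for the inner term. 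Where you differ is the outer term: the paper simply cites Proposition~\ref{proposition1} and asserts an ``additional bias $O(M^{-1/2})$,'' but Proposition~\ref{proposition1} only supplies a uniform strong law and a functional CLT, neither of which is directly an $\mathbb{L}^1$ rate. Your Taylor-plus-$\mathbb{L}^2$-plus-Jensen argument (using the exact identity $\mathbb{E}\Vert\nabla\hat L_M(\bar\lambda)-\nabla L(\bar\lambda)\Vert^2=\operatorname{Var}_u(h(u;\bar\lambda))/M$ from Section~\ref{sec3.1} and the invertibility of $\nabla^2L(\bar\lambda)$) makes the $O(M^{-1/2})$ $\mathbb{L}^1$ bound explicit where the paper is loose. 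This is arguably a more rigorous presentation of the same idea.

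One technical point you should shore up: your claim that dominated convergence gives $\mathbb{E}\Vert R_M\Vert=o(M^{-1/2})$ is slightly circular as stated. Bounding $\Vert R_M\Vert\lesssim\Vert\bar\lambda^M-\bar\lambda\Vert^2$ and then writing $\sqrt{M}\,\mathbb{E}\Vert R_M\Vert\lesssim\mathbb{E}\bigl[\sqrt{M}\Vert\bar\lambda^M-\bar\lambda\Vert\cdot\Vert\bar\lambda^M-\bar\lambda\Vert\bigr]$ requires uniform integrability of $\sqrt{M}\Vert\bar\lambda^M-\bar\lambda\Vert$ (not just tightness from the CLT) together with the a.s.\ convergence $\Vert\bar\lambda^M-\bar\lambda\Vert\to0$ from Proposition~\ref{proposition4}. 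That uniform integrability does follow here because $\Lambda$ is compact (so the normalized sequence has bounded higher moments via the same Donsker/bracketing bound behind Proposition~\ref{proposition1}), but it needs to be said; plain dominated convergence only yields $\mathbb{E}\Vert R_M\Vert\to0$, not the rate. Once that is spelled out, your proposal is complete.
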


This decomposition cleanly attributes error to three sources: (i) timescale mismatch, (ii) inner Monte Carlo variance, and (iii) outer SAA variance. Increasing $M$ reduces the outer error at the nominal $M^{-1/2}$ SAA rate, while the NMTS structure attenuates inner noise through the $\sqrt{\alpha_k}$ factor.

For completeness, we state the counterpart bound for STS with outer SAA, which retains the nonvanishing $O(N^{-1/2})$ term even after averaging over $M$ outer samples.

\begin{proposition}\label{proposition 8}
    In the STS algorithm, if Assumptions \ref{assumption1}, \ref{assumption2}.1-\ref{assumption2}.8  \ref{assumption3}.1-\ref{assumption3}.2 hold, the sequence $\lambda_k^M$ generated by recursion (\ref{single2}) satisfies 
    \begin{equation}\label{STS3}
        \mathbb{E}[\Vert \lambda_k^M-\bar{\lambda}\Vert ] = O(\beta_k) + O\bigg(\sqrt{\frac{1}{N}}\bigg) +  O\bigg(\sqrt{\frac{1}{M}}\bigg).
    \end{equation}
\end{proposition}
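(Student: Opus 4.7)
The plan is to decompose the global error via the triangle inequality around the SAA optimum, and reduce the claim to two already-established pieces:
\begin{equation*}
    \mathbb{E}\Vert \lambda_k^M - \bar{\lambda}\Vert \leq \mathbb{E}\Vert \lambda_k^M - \bar{\lambda}^M\Vert + \mathbb{E}\Vert \bar{\lambda}^M - \bar{\lambda}\Vert.
\end{equation*}
The first summand is bounded directly by Proposition~\ref{proposition 7} applied to the STS recursion (\ref{single2}), which yields $\mathbb{E}\Vert \lambda_k^M - \bar{\lambda}^M\Vert = O(\beta_k) + O(\sqrt{1/N})$. The constants in that bound depend only on the uniform objects in Assumptions~\ref{assumption1}--\ref{assumption3} (the curvature bound $K_L$, the Donsker envelope $m(u)$, and the moment bounds on $G_{1,k}, G_{2,k}$), all of which are $M$-independent, so this step introduces no extra $M$-dependence.

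The second summand is an outer-layer statistical error, and I would handle it via a direct M-estimator argument rather than lifting the weak limit in Theorem~\ref{theorem5}. Since $\bar{\lambda}^M$ and $\bar{\lambda}$ are interior optima by Assumption~\ref{assumption2}.8 and the strong concavity in Assumption~\ref{assumption3}.1, the first-order conditions $\nabla L(\bar{\lambda})=0$ and $\nabla \hat{L}_M(\bar{\lambda}^M)=0$, combined with a mean-value expansion of $\nabla \hat{L}_M$ along the segment from $\bar{\lambda}$ to $\bar{\lambda}^M$, give
\begin{equation*}
    \bar{\lambda}^M - \bar{\lambda} = -H_M(\tilde{\lambda})^{-1}\bigl(\nabla \hat{L}_M(\bar{\lambda}) - \nabla L(\bar{\lambda})\bigr),
\end{equation*}
for some $\tilde{\lambda}$ on that segment. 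Almost-sure consistency $\bar{\lambda}^M\to\bar{\lambda}$ from Proposition~\ref{proposition4} and uniform convergence of Hessians (inherited from the Donsker property in Proposition~\ref{proposition1}) imply that $\Vert H_M(\tilde{\lambda})^{-1}\Vert$ is eventually bounded above by $K_L^{-1}$. Finally, $\nabla \hat{L}_M(\bar{\lambda}) - \nabla L(\bar{\lambda})$ is a centered mean of $M$ i.i.d.\ terms $h(u_m;\bar{\lambda}) - \nabla L(\bar{\lambda})$ with variance $\operatorname{Var}_u(h(u;\bar{\lambda}))/M$, so Jensen's inequality delivers $\mathbb{E}\Vert \nabla \hat{L}_M(\bar{\lambda}) - \nabla L(\bar{\lambda})\Vert = O(M^{-1/2})$, and hence $\mathbb{E}\Vert \bar{\lambda}^M - \bar{\lambda}\Vert = O(M^{-1/2})$. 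Summing the two bounds yields the claimed rate.

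The main obstacle is bridging from the distributional limit in Theorem~\ref{theorem5} to an $\mathbb{L}^1$ rate of matching order: weak convergence alone does not control moments. The delta-method reduction above circumvents this by making the linearization explicit and relying on (i) eventual invertibility of the empirical Hessian, handled by the uniform negative eigenvalue bound $-K_L$ in Assumption~\ref{assumption3}.1, and (ii) tail control for the rare events where the linearization fails, handled by compactness of $\Lambda$, which makes $\Vert \bar{\lambda}^M - \bar{\lambda}\Vert$ uniformly bounded and absorbs the remainder into a term of order $o(M^{-1/2})$. Beyond this, the proof is a short assembly step: no new analysis of the STS recursion is needed because Proposition~\ref{proposition 7} has already done the iteration-level work, and the persistent $O(\sqrt{1/N})$ inside it is exactly the ratio-bias footprint that the NMTS scheme in Theorem~\ref{theorem7} replaces with the vanishing $O(\sqrt{\alpha_k/N})$.
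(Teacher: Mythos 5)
Your proposal is correct and matches the paper's intended argument: the paper declares Proposition~\ref{proposition 8} a ``direct corollary of the above two proofs'' (namely Proposition~\ref{proposition 7} for the inner STS error to $\bar{\lambda}^M$ and the $O(M^{-1/2})$ outer SAA term from Theorem~\ref{theorem7}), which is precisely the triangle-inequality decomposition you use. Your explicit M-estimator step for $\mathbb{E}\Vert\bar{\lambda}^M-\bar{\lambda}\Vert = O(M^{-1/2})$ — first-order conditions, a mean-value linearization, the uniform bound $\Vert H_M(\lambda)^{-1}\Vert\le K_L^{-1}$ from Assumption~\ref{assumption3}.1, and Jensen on the centered SAA average — is a cleaner and more self-contained derivation of the outer term than the paper's terse appeal to Proposition~\ref{proposition1} (which only gives a.s. uniform convergence and a functional CLT, not an $\mathbb{L}^1$ rate directly); the one cosmetic fix is to replace the single-point mean-value form with the integral form $\int_0^1 H_M(\bar\lambda + t(\bar\lambda^M-\bar\lambda))\,dt$, since the scalar mean-value theorem does not apply componentwise-simultaneously to a vector field, but Assumption~\ref{assumption3}.1 bounds the inverse of that integrated Hessian uniformly just as well, so the conclusion is unchanged.
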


Taken together, the above theorems and propositions establish that NMTS removes the ratio-induced asymptotic bias and yields strictly sharper $\mathbb{L}^1$ rates. Under the balanced choice $\alpha_k=k^{-2/3}$ and $\beta_k=k^{-1}$, the MAE decays as $k^{-1/3}$ and the MSE attains the optimal $k^{-2/3}$ order, offering a clear recipe for stepsize in practice. The proofs in this subsection can be found in Appendix \ref{appendixC}. Furthermore, the convergence of the variational parameter $\lambda_k^M$ induces the uniform convergence of the approximate posterior $q_{\lambda_k^M}(\theta)$. These results are detailed in Appendix \ref{sec3.8}.

\section{Extension: Training Two Neural Networks at Different Time Scales}\label{sec4}
In previous sections, we proposed the NMTS algorithm framework and established its asymptotic properties. The main idea involves using two coupled iterations to update parameters and eliminate ratio bias. Estimation and optimization are performed simultaneously through these coupled iterations: a faster iteration approximates the gradient of the log-likelihood function, while a slower iteration updates the variational parameter $\lambda$ in $q_{\lambda}(\theta)$. Additionally, the likelihood function and its gradient are estimated using unbiased estimators. However, when the simulator is sufficiently complex and unbiased estimators are challenging to obtain, more powerful tools are needed to approximate the likelihood function. Similarly, a more expressive variational distribution family $\{q_{\lambda}(\theta)\}$ may be required to better represent the true posterior when it is complex. 

To address the first challenge, a natural approach is to use a neural network to approximate the intractable likelihood function as an alternative to the GLR method \citep{papamakarios2019sequential}. The GLR method is advantageous due to its unbiasedness and simplicity, but relies on relatively strict regularity conditions \citep{Peng2020}. A neural network offers a flexible alternative when these conditions are not satisfied, though it provides a biased estimate of the likelihood function. Hence, we train a deep neural density estimator 
 $p_{\phi}(y|\theta)$ by minimizing the forward KL divergence between $p_{\phi}(y|\theta)$ and the true conditional density $p(y|\theta)$, which is defined as
\begin{equation*}
    KL(p(y|\theta)\Vert p_{\phi}(y|\theta)) = \mathbb{E}_{\theta\sim q_{\lambda}(\theta),y\sim p(y|\theta)}\bigg[\log \bigg(\frac{p(y|\theta)}{p_{\phi}(y|\theta)}\bigg)\bigg].
\end{equation*}
This optimization minimizes the divergence between the unknown conditional density 
$p(y|\theta)$ and the network $p_{\phi}(y|\theta)$ using samples $(\theta, y)$ generated from the simulator. The loss function for the neural network at each iteration is
\begin{equation*}
    L_{faster}(\phi) =   -\frac{1}{MN}\sum_{m=1}^M\sum_{i=1}^N \log p_{\phi}(y_{m,i}|\theta_m), \quad \theta_m\sim q_{\lambda}(\theta),\ y_{m,i} \sim p(y|\theta_m),
\end{equation*}
where $p_{\phi}(y|\theta)$ acts as a conditional density estimator. This network learns the true conditional density $p(y|\theta)$ by generating many samples from the simulator. While this process serves the same purpose as the GLR method—approximating the intractable likelihood—the estimation method differs. Here, $L_{faster}$  denotes that the neural network operates at a faster time scale, with a larger step-size. As in previous cases, while fixing $\lambda$ and iterating until convergence would provide accurate estimates, such an approach is computationally expensive. Thus, the coupled iterations are performed simultaneously, with the faster iteration preceding the slower one. 

To address the second challenge, another neural network $q_{\lambda}(\theta)$ can be employed to construct a more expressive posterior distribution. The loss function is the ELBO, as in Algorithm \ref{algor:2}: 
$$L_{slower}(\lambda) =  \mathbb{E}_{q_{\lambda}(\theta)}[\log p_{\phi}(Y|\theta) + \log p(\theta) - \log q_{\lambda}(\theta)].$$ Here $Y$ is the observed data and $p_{\phi}(Y|\theta)$ is the likelihood network trained at the faster time scale. Unlike Algorithm \ref{algor:2}, the convergence of $\phi$ is independent of the realization of $\theta$, so fixing the outer-layer samples is unnecessary. 

The choice of the variational distribution family $q_{\lambda}(\theta)$ is an important step. Our NMTS framework places no restrictions on the choice of the variational distribution family, which also implies its scalability and compatibility. Beyond simple choices such as the normal distribution, more sophisticated methods for selecting posterior distributions with good representational power have been studied. These include normalizing flows, such as planar flows, Masked Autoregressive Flow (MAF), Inverse Autoregressive Flow (IAF), and others \citep{rezende2015variational, papamakarios2017masked, dinh2016density, kingma2016improved}. Normalizing flows are a powerful technique used to model complex probability distributions by mapping them from simpler, more tractable ones. This is achieved through a learned transformation, which acts as a bijective function. These flows are highly advantageous due to their flexibility in approximating a wide array of distribution shapes. Additionally, the re-parameterization trick is employed to ensure low-variance stochastic gradient estimation. 

Thus, there are two networks here. The faster scale network $p_{\phi}(y|\theta)$ is used to update the parameters $\phi$ to track the intractable likelihood function $p(y|\theta)$, while the slower scale network $q_{\lambda}(\theta)$ is used to approximate the posterior by updating the variational parameter $\lambda$. Optimization and estimation are alternately updated by two coupled neural networks, respectively. These are two coupled iterations with each updated at two different scales, which are contained in our NMTS framework. The specific algorithm is given in Appendix \ref{appendix5.3} and numerical examples will be illustrated in Section \ref{sec5.3}.

In Algorithm \ref{algor:3}, the neural network estimator introduces a bias compared to the likelihood function. To account for this, Assumption \ref{assumption2}.4 is replaced by the following relaxed assumption: 
\begin{assumption}\label{assum5}
    $\mathbb{E}[p_{\phi_k}(Y_t|\theta) |\mathcal{F}_k] -p(Y_t|\theta) = O(\gamma_k^{(1)})\rightarrow 0$, $\mathbb{E}[\nabla_{\theta}p_{\phi_k}(Y_t|\theta) |\mathcal{F}_k] -\nabla_{\theta}p(Y_t|\theta) = O(\gamma_k^{(2)})\rightarrow 0$ for every $\theta$ and $t=1,2,\ldots,T$.
\end{assumption}\label{assumption5}
This assumption implies that at the first time scale, the bias in the neural network $p_{\phi_k}(Y_t|\theta)$ and its gradient diminishes at rates $O(\gamma_k^{(1)})$ and $O(\gamma_k^{(2)})$, respectively. These rates depend on the training settings and the network's properties, which may not be directly accessible. Under this assumption, the following proposition demonstrates that the shrinking bias at the faster time scale induces a corresponding bias reduction at the slower time scale.
%这里应该不需要根号！！
\begin{proposition}\label{thm9}
   If $M$ is fixed, Assumptions \ref{assumption2}.1-\ref{assumption2}.3,  \ref{assumption2}.5-\ref{assumption2}.8, \ref{assumption3}.1-\ref{assumption3}.2, and \ref{assum5} hold, the sequence ${\lambda_k}$ satisfies 
    \begin{equation*}
        \mathbb{E}[\Vert \lambda_k-\bar{\lambda}^M\Vert] = O(\beta_k) + O(\gamma_k^{(1)}) + O(\gamma_k^{(2)}).
    \end{equation*}
\end{proposition}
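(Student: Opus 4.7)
The plan is to mirror the $\mathbb{L}^1$ rate analysis behind Theorem \ref{thm5} and Proposition \ref{proposition 7}, but to replace the role of the fast-timescale tracking residual with a deterministic approximation bias induced by the neural-network likelihood $p_{\phi_k}$. Concretely, I would write the slower iteration as $\lambda_{k+1} = \Pi_{\Lambda}(\lambda_k + \beta_k G_k)$, where $G_k$ is the SAA-approximated ELBO gradient that substitutes $\nabla_{\theta}\log p_{\phi_k}$ for the intractable score. Decompose $G_k = \nabla_{\lambda}\hat{L}_M(\lambda_k) + b_k + \zeta_k$, where $b_k := \mathbb{E}[G_k\mid\mathcal{F}_k] - \nabla_{\lambda}\hat{L}_M(\lambda_k)$ captures the neural-network approximation bias and $\zeta_k$ is a martingale-difference noise term. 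The problem then reduces to bounding $\|b_k\|$ and solving a strongly concave SA recursion with biased drift.

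For the bias estimate I would use the identity
\[
\frac{\nabla_{\theta} p_{\phi_k}}{p_{\phi_k}} - \frac{\nabla_{\theta} p}{p}
= \frac{(\nabla_{\theta} p_{\phi_k} - \nabla_{\theta} p)\,p - \nabla_{\theta} p\,(p_{\phi_k} - p)}{p_{\phi_k}\,p}.
\]
Assumption \ref{assum5} controls the numerator entries at orders $\gamma_k^{(2)}$ and $\gamma_k^{(1)}$ respectively, Assumption \ref{assumption2}.1 bounds $\nabla_{\theta} p$, and an analogue of Assumption \ref{assumption2}.2 (transferred to $p_{\phi_k}$ through the shrinking bias) keeps the denominator bounded away from zero. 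Combined with the continuously differentiable, hence bounded, factor $A(\lambda)/M$ inherited from Assumption \ref{assumption2}.7, this delivers $\|b_k\| = O(\gamma_k^{(1)}) + O(\gamma_k^{(2)})$ uniformly in $\lambda_k$, the outer samples, and the observations.

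For the recursion, set $V_k := \|\lambda_k - \bar{\lambda}^M\|^2$. Non-expansiveness of $\Pi_{\Lambda}$, together with the local strong concavity of $\hat{L}_M$ from Assumption \ref{assumption3}.1, yields
\[
V_{k+1} \le V_k - 2\beta_k K_L V_k + 2\beta_k\langle \lambda_k-\bar{\lambda}^M,\,b_k+\zeta_k\rangle + \beta_k^2\|G_k\|^2.
\]
Taking conditional expectations, using the martingale property of $\zeta_k$, Cauchy–Schwarz on the cross term with $b_k$, and the $\mathbb{L}^2$-boundedness of $G_k$ (which relies on the same second-moment assumptions used in Lemma \ref{lemma1}) gives
\[
\mathbb{E}[V_{k+1}] \le (1-2\beta_k K_L)\mathbb{E}[V_k] + 2\beta_k\sqrt{\mathbb{E}[V_k]}\,\|b_k\| + O(\beta_k^2).
\]
A standard comparison argument, identical in spirit to the one invoked in the proofs of Theorem \ref{thm5} and Proposition \ref{proposition 7}, then extracts $\sqrt{\mathbb{E}[V_k]} = O(\beta_k) + O(\gamma_k^{(1)}) + O(\gamma_k^{(2)})$, and Jensen's inequality produces the stated $\mathbb{L}^1$ rate.

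The principal obstacle is Step 2, specifically obtaining a pathwise (not merely in-expectation) lower bound on $p_{\phi_k}$ to tame the denominator of the score error. Assumption \ref{assum5} only asserts shrinking mean bias, so I would need a warm-start argument showing that after finitely many faster-scale iterations $p_{\phi_k}$ stays above, say, $\epsilon/2$ almost surely; the early iterations contribute only a finite additive constant that is absorbed into the $O(\beta_k)$ term. A secondary subtlety is that $\gamma_k^{(1)}$ and $\gamma_k^{(2)}$ arise from the neural-network training dynamics and are not explicitly tied to $\beta_k$; consequently, the overall rate is the maximum of the three terms and cannot be further balanced without additional structural assumptions linking the training schedule of $\phi$ to the slow-timescale stepsize.
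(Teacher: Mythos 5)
Your proposal takes essentially the same route as the paper: bound the first-timescale score-ratio error $\Vert\nabla_{\theta}p_{\phi_k}/p_{\phi_k}-\nabla_{\theta}p/p\Vert = O(\gamma_k^{(1)})+O(\gamma_k^{(2)})$ using Assumptions~\ref{assumption2}.1--\ref{assumption2}.3 and \ref{assum5}, then push this bias through the strongly concave slow-timescale recursion to pick up the $O(\beta_k)$ summation tail. The only stylistic difference is that the paper reuses the $\mathbb{L}^1$ recursion of Theorem~\ref{thm5}/Proposition~\ref{proposition 7} (Equation~\eqref{eqthm8}) rather than a squared Lyapunov function followed by Jensen, and the pathwise lower bound on $p_{\phi_k}$ that you flag as a concern is already supplied by Assumption~\ref{assumption2}.2 applied to the network, since $p_{\phi_k}(Y_t\mid\theta)$ is $\mathcal{F}_k$-measurable so the conditional-expectation lower bound is a pathwise one.
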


\section{Numerical Experiments}\label{sec5}
In this section, we demonstrate the application of the NMTS algorithm framework, comprising three specific algorithms, to various cases. Algorithms \ref{algor:1}, \ref{algor:2}, and \ref{algor:3} are implemented sequentially. Section \ref{sec5.1} addresses the MLE case, while Section \ref{sec5.2} focuses on the PDE case. In Section \ref{sec5.3}, we showcase the application of our framework through an example of a food production system.

\subsection{MLE Case}\label{sec5.1}
We evaluate the proposed NMTS framework in the MLE setting on a latent‐variable model. Consider i.i.d. observations generated by the data-generating process $Y_t = g(X_t;\theta) = X_{1,t}+\theta X_{2,t},$ where $X_{1,t}, X_{2,t} \sim N(0,1)$ are independent. $Y_t$ is observable, but $X_t$ is a latent variable. The goal is to estimate $\theta$ based on observation $\{Y_t\}_{t=1}^T$. For this example, the MLE has an analytical form: $\hat{\theta} = \sqrt{\frac{1}{T}\sum\limits_{t=1}^T Y_t^2-1}$, which serves as a ground‐truth target for accuracy assessment.

 The true value $\theta$ is set to be 1. The faster and slower step-size is chosen as $\frac{20}{(k\log (k+1))^{2/3}}$ and $\frac{0.1}{k\log (k+1)}$, respectively, which satisfy the step-size condition of the NMTS algorithm. We set $T = 100$ observations, the feasible region $\Theta = [0.5,2]$, and the initial value $\theta_0 = 0.8$. The samples of $X_t = (X_{1,t}, X_{2,t})$ are simulated to estimate the likelihood function and its gradient at each iteration. We compare our NMTS algorithm with the STS method. In previous works, a large number of simulated samples per iteration (e.g., $10^5$) is required to ensure a negligible ratio bias from the log-likelihood gradient estimator. By employing our method, computational costs are reduced while improving estimation accuracy. 
 
 Figure \ref{ex1_2}\subref{ex1_2_1} exhibits the convergence results of NMTS and STS  with $N=10^4$ simulated samples based on 100 independent experiments. Compared to the true MLE, NMTS achieves lower bias and standard error than STS. The convergence curve is also more stable due to the elimination of the denominator estimator. The average CPU time per experiment for NMTS and STS is 0.7s and 0.72s, respectively,  indicating the gains come from the update rule rather than extra computation. Figure \ref{ex1_2}(b) depicts the convergence result with $10^5$ simulated samples based on 100 independent experiments. Even with a large number of simulated samples, NMTS outperforms STS since it suffers from the asymptotic bias. 
%\vspace{-0.2cm}
\begin{figure}[h]
  \centering
  \caption{Trajectories of NMTS and STS with different sample sizes based on 100 independent experiments} 
  \subfigure[Convergence curves with $N = 10^4$]{
    \centering
    \includegraphics[width=6cm]{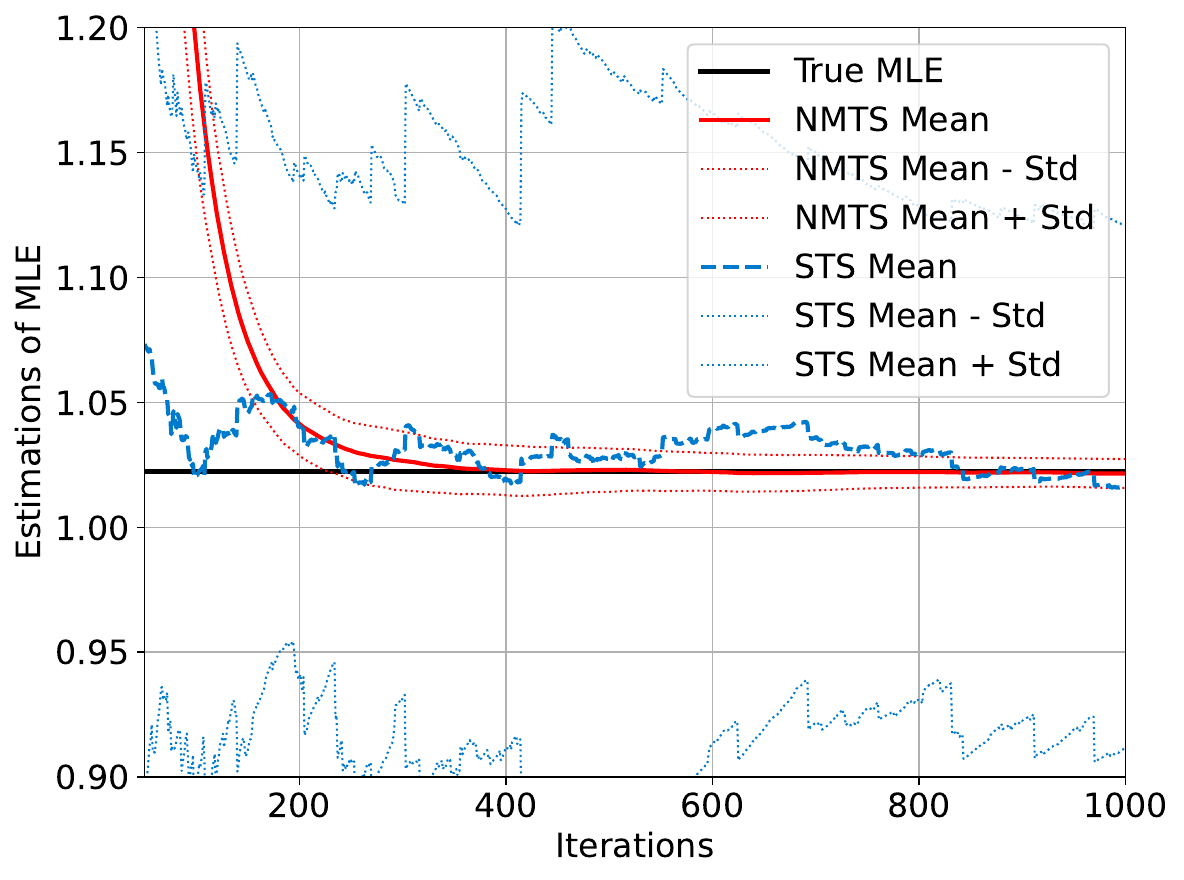}
    \label{ex1_2_1}
  }
  \subfigure[Convergence curves with $N = 10^5$]{
    \centering
    \includegraphics[width=6cm]{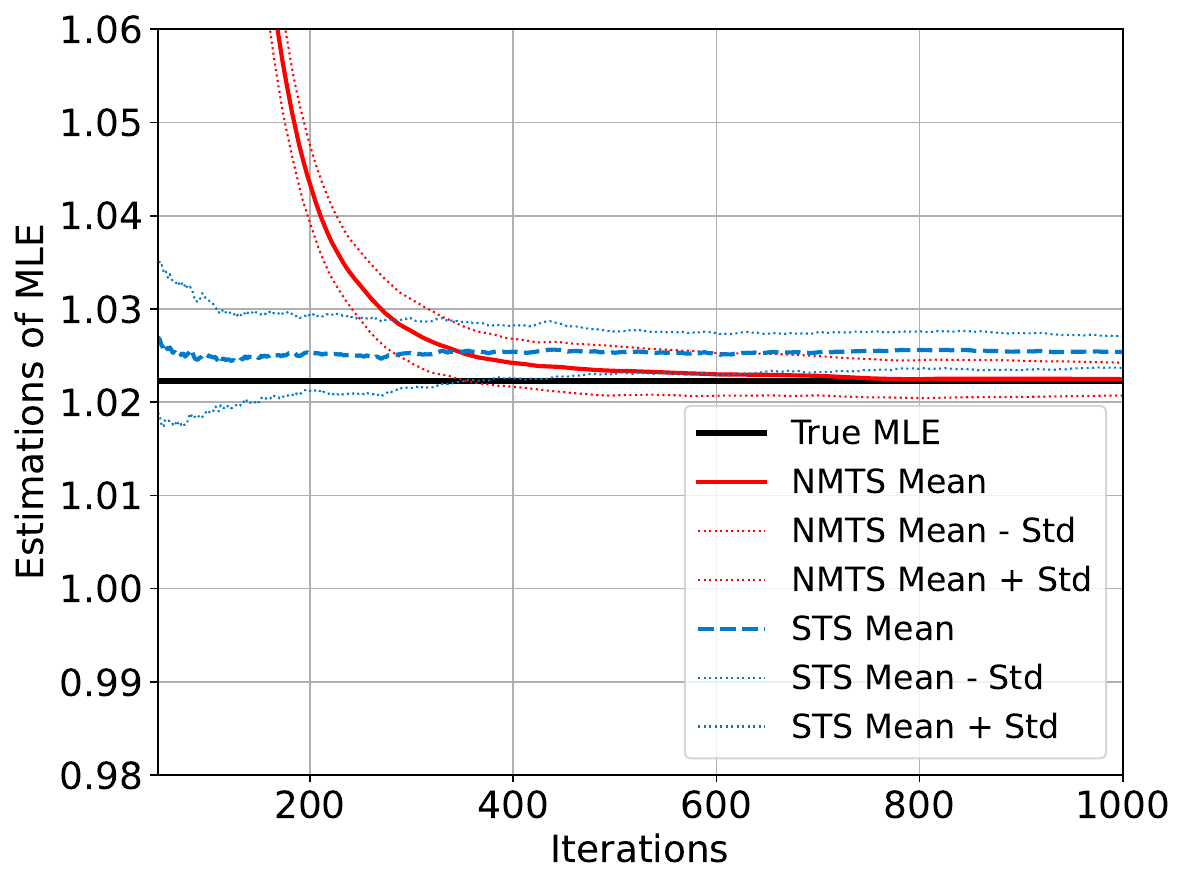}
    \label{ex1_2_2}
  }
\label{ex1_2}
\end{figure}
%\vspace{-0.5cm}

Table \ref{tab:average_bias} records the MAE for the two methods based on 100 independent experiments. Across all batch sizes, NMTS demonstrates significantly higher estimation accuracy than STS. These trends align with our theory: by removing the noisy denominator, NMTS suppresses the ratio bias and reduces variance under the same budget. 

\begin{table}[h]
\centering
\caption{The MAE of the NMTS and STS methods, based on 100 independent experiments after 10000 iterations}
\label{tab:average_bias}
\footnotesize
\begin{tabular}{ccc}
\toprule
\multirow{2}{*}{Batch size} & \multicolumn{2}{c}{\textbf{Absolute Bias $\pm$ std}}  \\
\cmidrule{2-3}
& NMTS & STS  \\
\midrule
$1$     & \boldmath $2.24\times 10^{-1} \pm 2.7\times 10^{-1}$ & $3.72\times 10^{-1} \pm 5.55\times 10^{-1}$ \\

$10$     & \boldmath $5.94\times 10^{-2} \pm 7.3\times 10^{-2}$ & $3.96\times 10^{-1} \pm 4.4\times 10^{-1}$ \\

$10^2$   &\boldmath $1.78\times10^{-2} \pm2.2 \times10^{-2}$ & $3.59\times10^{-1} \pm 3.9\times10^{-1}$  \\
$10^3$   &\boldmath $6.69\times10^{-3} \pm 8 \times10^{-3}$ & $1.36\times10^{-1} \pm 2\times10^{-1}$  \\
$10^4 $  &\boldmath $1.78\times10^{-3} \pm 2.2 \times10^{-3}$ & $6.56\times10^{-2} \pm 1.2\times10^{-1}$  \\
$10^5 $  &\boldmath $3.95\times10^{-4} \pm 7.2 \times10^{-4}$ & $2.4\times10^{-3} \pm 2.7\times10^{-3}$  \\
\bottomrule
\end{tabular}
\end{table}

Figure \ref{ex1_1} depicts the log-log plot of the MAE of the estimators versus the iteration number $k$. For each of the 100 settings, we independently sample observations and run NMTS and STS once. The log(accuracy) is defined as $\log\mathbb{E}[|\theta_{k}-\hat{\theta}|]$.  The observed convergence rates align with Theorem \ref{thm5} for NMTS. On the contrary, STS suffers from an asymptotic bias caused by the ratio gradient estimator determined by $N$, which aligns with Proposition \ref{proposition 7}. NMTS achieves higher accuracy sooner and continues to improve with $k$, whereas STS saturates due to ratio bias. These results confirm the superior performance of NMTS over STS. 

\begin{figure}[h]
  \centering
  \caption{Log-log plot of the MAE of the estimators versus the iteration step $k$ of NMTS and STS algorithm based on 100 independent experiments} 
  \subfigure[Convergence rate with $N = 10^2$]{
    \centering
    \includegraphics[width=6cm]{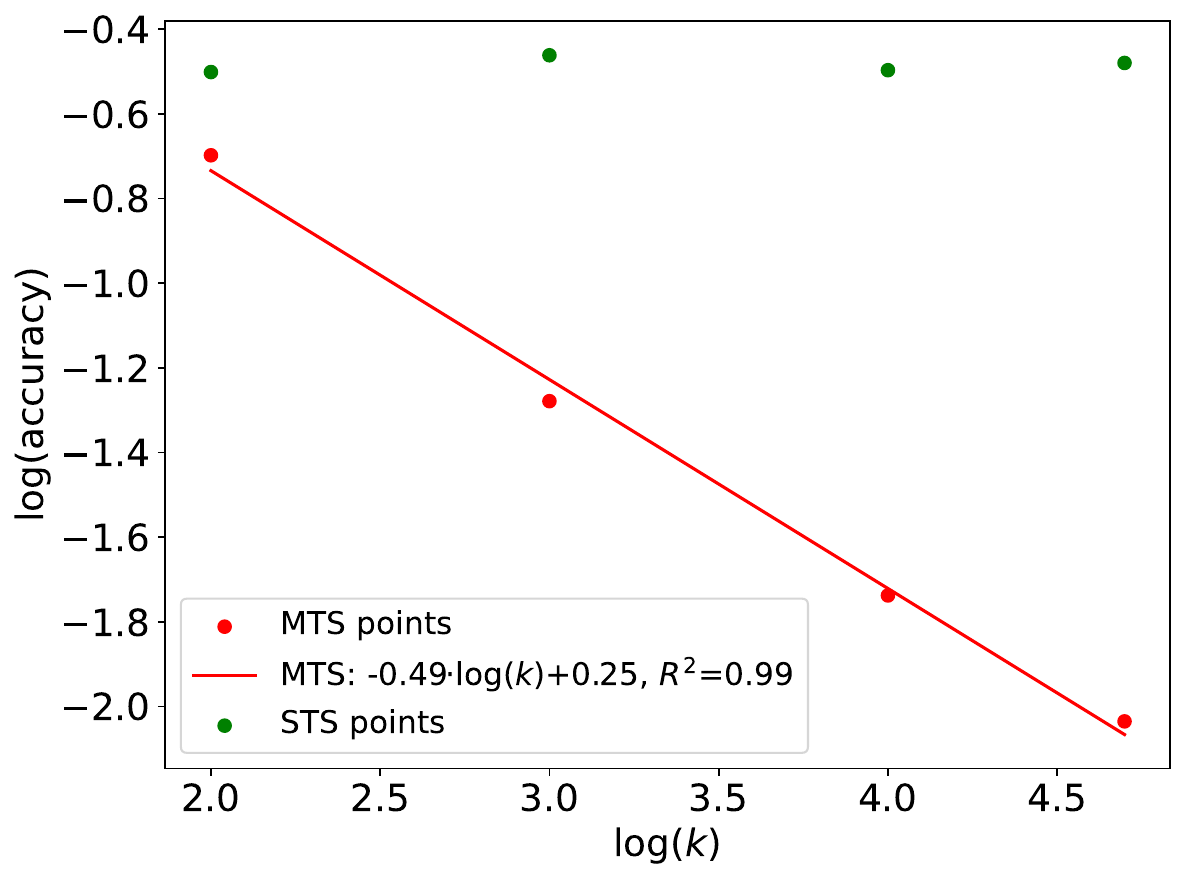}
    \label{ex1_3_1}
  }
  \subfigure[Convergence rate with $N = 10^4$]{
    \centering
    \includegraphics[width=6cm]{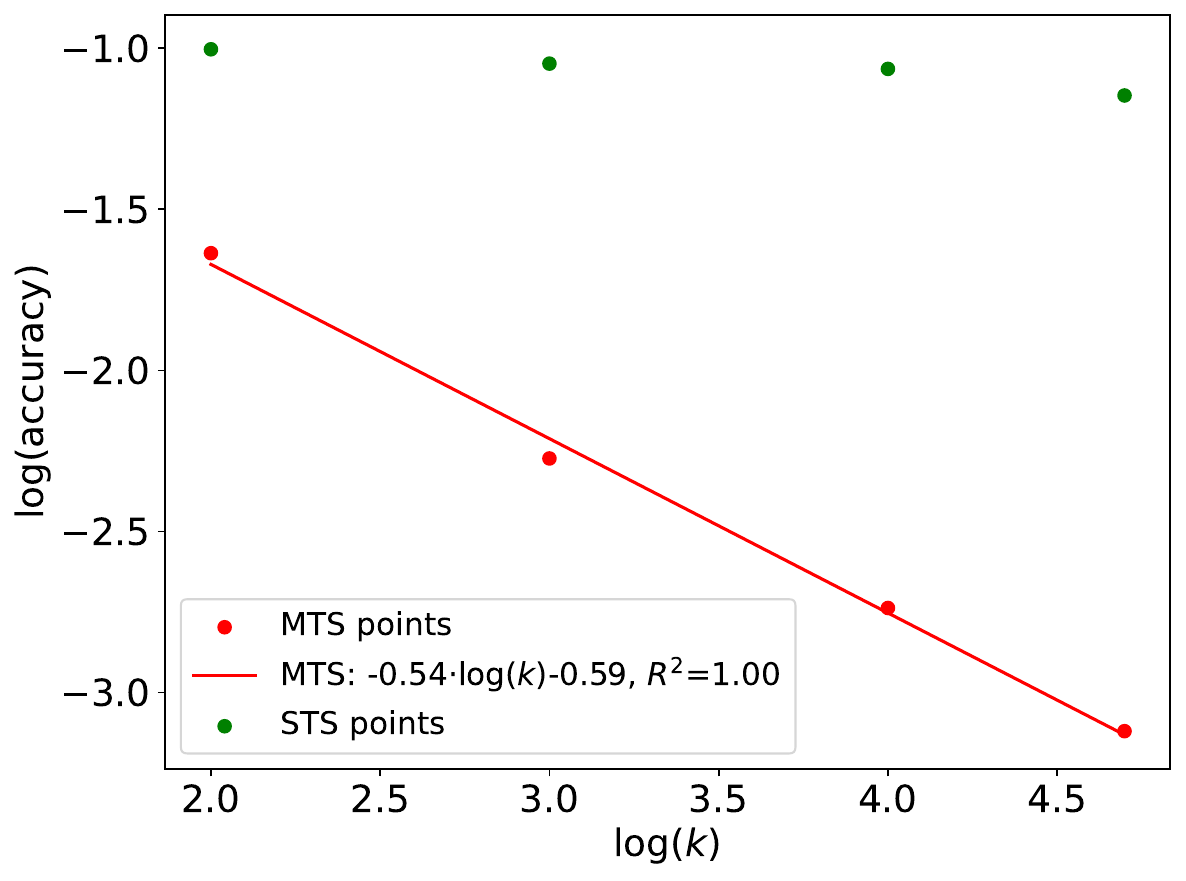}
    \label{ex1_3_2}
  }
\label{ex1_1}
\end{figure}

\subsection{PDE Case}\label{sec5.2}
We apply Algorithm \ref{algor:2} to test the NMTS framework in the PDE setting. Let the prior distribution of the parameter $\theta$ be the standard normal $N(0,1)$. The stochastic model is  $Y_t = X_t + \theta $ with latent variable $X_t\sim N(0,1)$. Given the observation $y=\{Y_t\}_{t=1}^T$, the goal is to compute the posterior distribution for $\theta$. It is straightforward to derive that the analytical posterior is $p(\theta|y) \sim N(\frac{n}{1+n}\bar{y},\frac{1}{1+n}).$

Let the posterior parameter $\lambda$ be $(\mu,\sigma^2)$. We want to use normal distribution $q_{\lambda}(\theta)$ to approximate the posterior of $\theta$,  i.e., $q_{\lambda}(\theta) \sim N(\mu,\sigma^2)$. Applying the re-parameterization technique,  we can sample $u$ from normal distribution $N(0,1)$ and set $\theta(u;\lambda) = \mu + \sigma u \sim N(\mu, \sigma^2)$. Here is just an illustrative example of normal distribution; the re-parameterization technique can be applied to other more general distributions \citep{figurnov2018implicit,ruiz2016generalized}.

In the PDE case, we can incorporate the data into the prior over and over again. Suppose there are only 10 independent observations for one batch. Set feasible region $\Lambda = [-1,10]\times[0.01,2]$ and initial value $\lambda_0= (0,1)$. First, we set $M=10$ outer layer samples $u_m$ and compare the NMTS algorithm with the analytical posterior and STS method. The faster and slower step-size is chosen as $\frac{10}{(k\log (k+1))^{2/3}}$ and $\frac{1}{k\log (k+1)}$, respectively. Figure \ref{ex2_1} displays the trajectories of NMTS and STS with sample size $10^4$ based on 100 independent experiments. Specifically, Figure \ref{ex2_1}\subref{ex2_1_1} exhibits the convergence for the posterior mean $\mu$ and Figure \ref{ex2_1}\subref{ex2_1_2} exhibits the convergence for the posterior variance $\sigma^2$. NMTS achieves lower bias and standard error than STS when compared to the true posterior parameters.

\begin{figure}[h]  
 \centering
 \caption{Trajectories of NMTS and STS with sample size $10^4$ based on 100 independent experiments}
 \label{ex2_1}

 \subfigure[Estimations of posterior mean]{
 \centering
 \includegraphics[width=6cm]{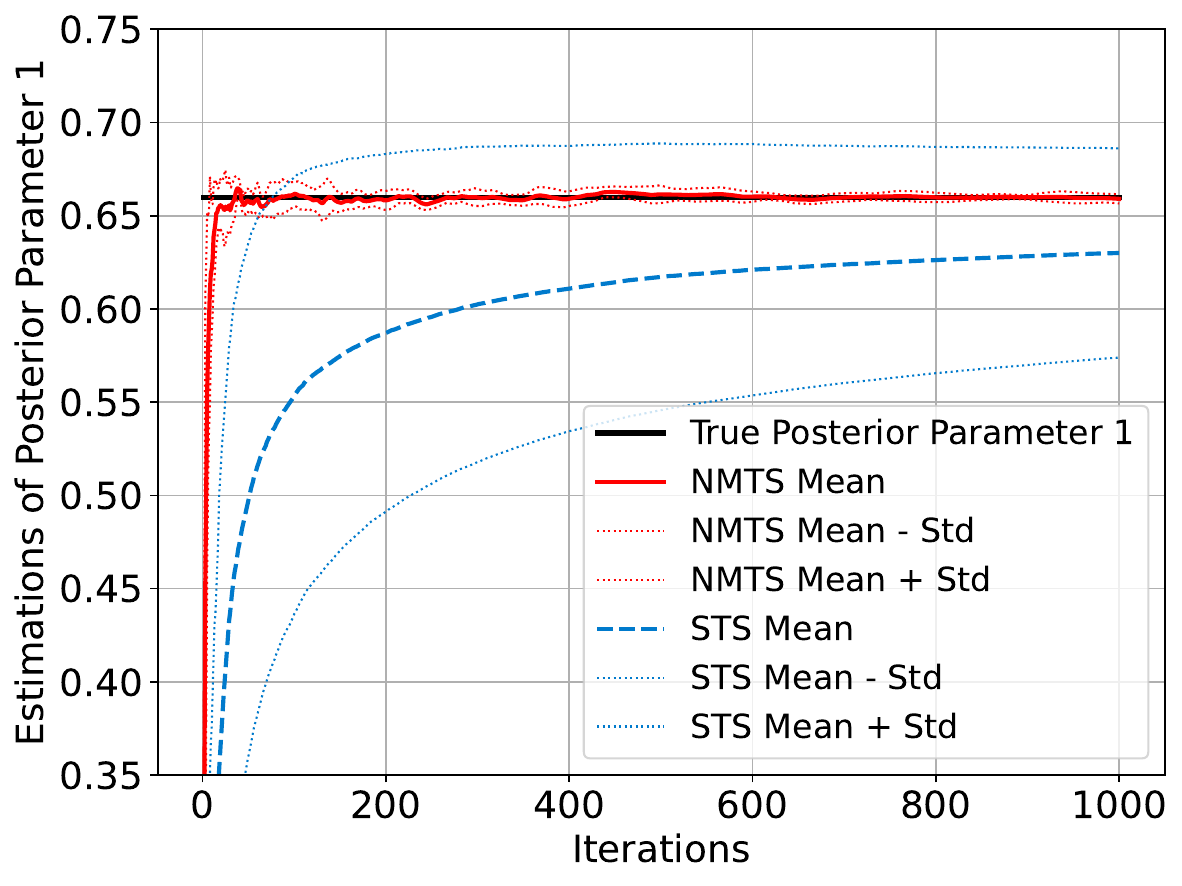}
 \label{ex2_1_1}
 }
 %\hspace{0.5cm}
 \subfigure[Estimations of posterior variance]{
 \centering
 \includegraphics[width=6cm]{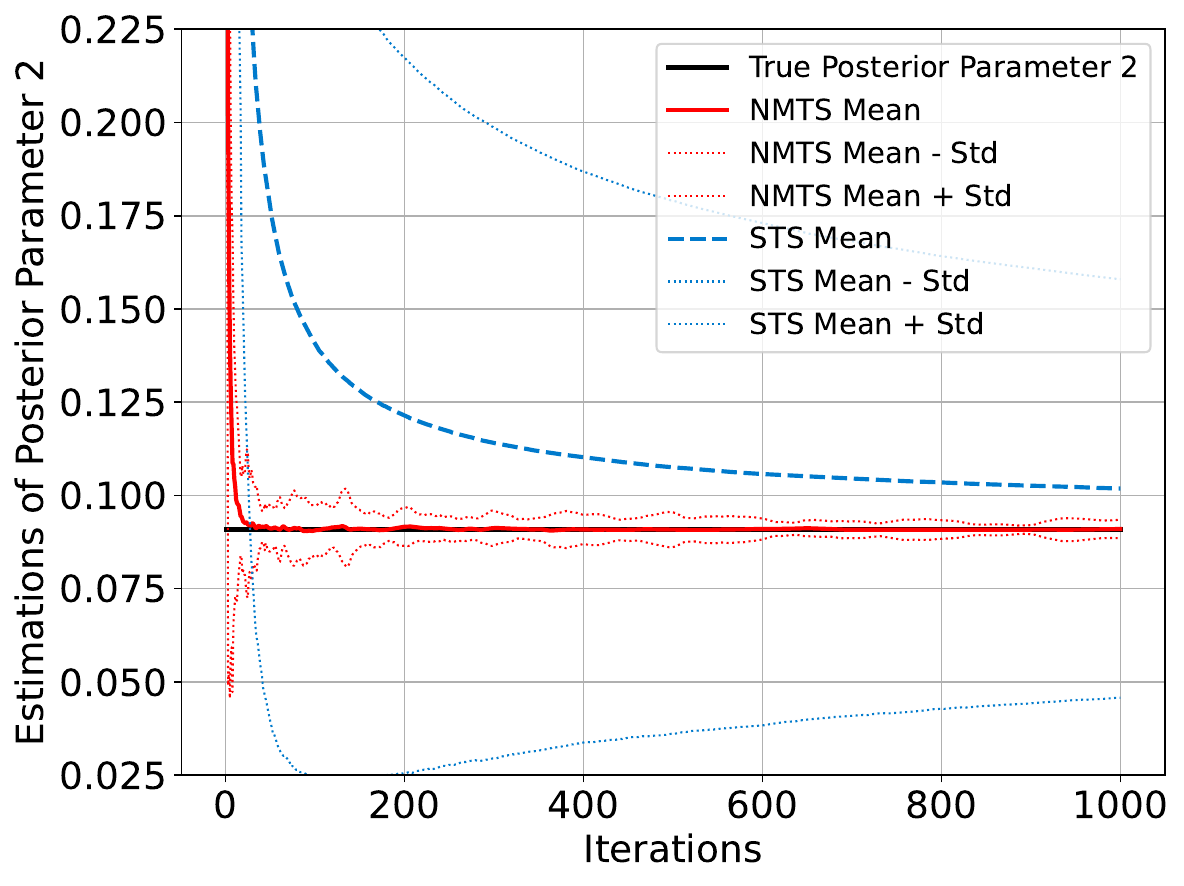}
 \label{ex2_1_2}
 }
\end{figure}
%\vspace{-0.5cm}

 Table \ref{ex2_3} records the absolute error for both estimators based on 100 independent experiments. Across all batch sizes, NMTS consistently outperforms STS in estimation accuracy. The accuracy is improved by one to two orders of magnitude.
 \begin{table}[h]
\centering
\caption{The MAE of the NMTS and STS methods, based on 100 independent experiments after 50000 iterations}
\label{ex2_3}
\footnotesize
\begin{tabular}{ccccc}
\toprule
\multirow{2}{*}{Batch size} & \multicolumn{2}{c}{\textbf{Posterior Mean}} & \multicolumn{2}{c}{\textbf{Posterior Variance}} \\
\cmidrule{2-5}
& NMTS & STS & NMTS & STS \\
\midrule
$10$     & \boldmath $1.19 \times 10^{-1}$ & $9.89 \times 10^{-1}$  & \boldmath $7.95 \times 10^{-2}$  &  $4.15\times 10^{-1}$ \\
$10^2$     & \boldmath $2.57 \times 10^{-3}$ & $1.69 \times 10^{-1}$  & \boldmath $4.18 \times 10^{-4}$  &  $1.47\times 10^{-1}$ \\
$10^3$     & \boldmath $8.38 \times 10^{-4}$ & $7.5 \times 10^{-3}$  & \boldmath $1.40 \times 10^{-4}$  &  $1.13\times 10^{-3}$ \\
$10^4$     & \boldmath $1.19 \times 10^{-4}$ & $8.56 \times 10^{-4}$  & \boldmath $5.83 \times 10^{-5}$  &  $7.49\times 10^{-4}$ \\
$10^5$     & \boldmath $6.30 \times 10^{-5}$ & $3.71 \times 10^{-4}$  & \boldmath $2.78 \times 10^{-5}$  &  $1.18\times 10^{-4}$ \\
\bottomrule
\end{tabular}
\end{table}
%\vspace{-0.3cm}
Figure~\ref{ex2_2} presents log--log MAE curves versus $k$ in 100 independent experiments when batch size $N=10^3$. The observed slopes for NMTS track the $k$-dependence predicted by Theorem~\ref{theorem7}, while STS displays an apparent floor consistent with ratio-induced bias by Proposition~\ref{proposition 8}. Together with the MLE case, the PDE experiments confirm that the ratio-free design of NMTS translates into tangible accuracy and stability gains in practice.
\begin{figure}[h]
  \centering
  \caption{Log-log plot of the MAE of the estimators versus the iteration step $k$ of NMTS and STS algorithm based on 100 independent experiments when $N=10^3$} 
  \subfigure[Convergence rate of posterior mean]{
    \centering
    \includegraphics[width=6cm]{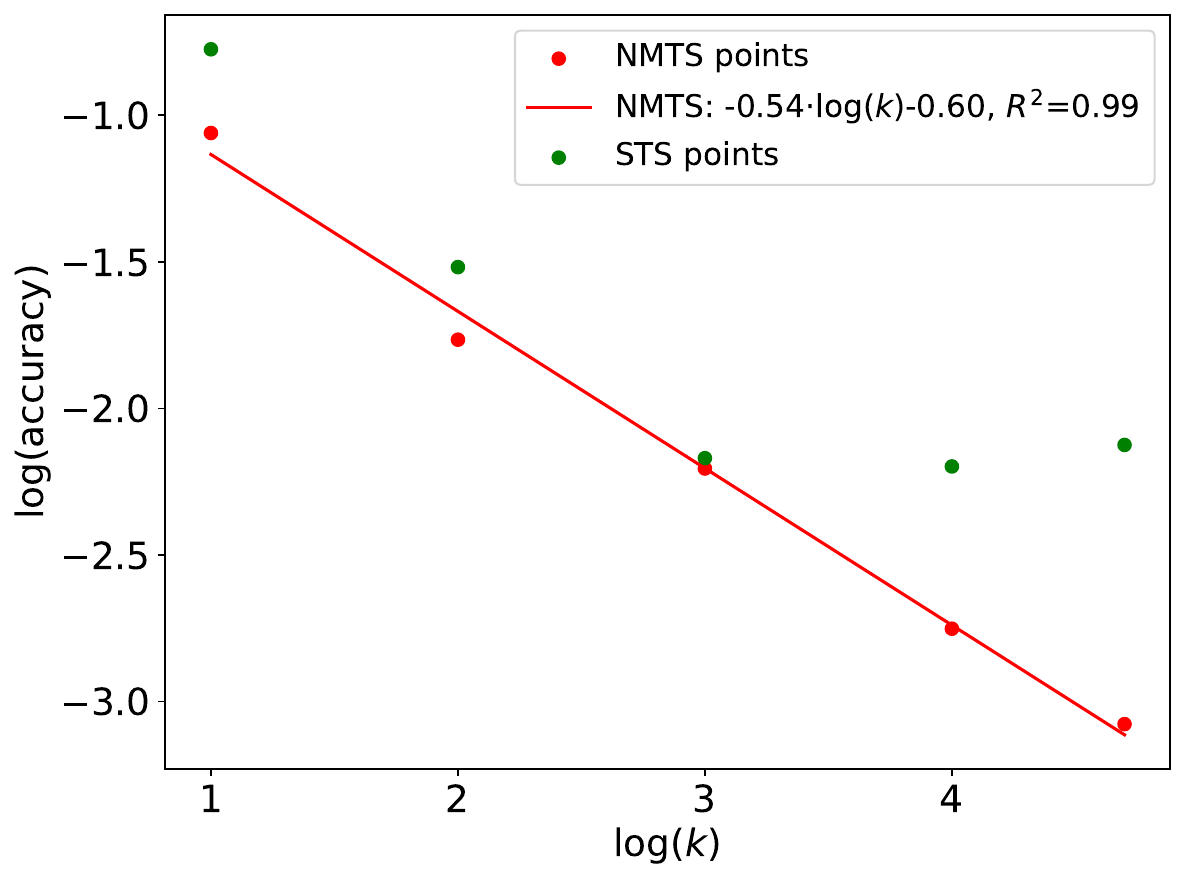}
    \label{ex2_2_1}
  }
  \subfigure[Convergence rate of posterior variance]{
    \centering
    \includegraphics[width=6cm]{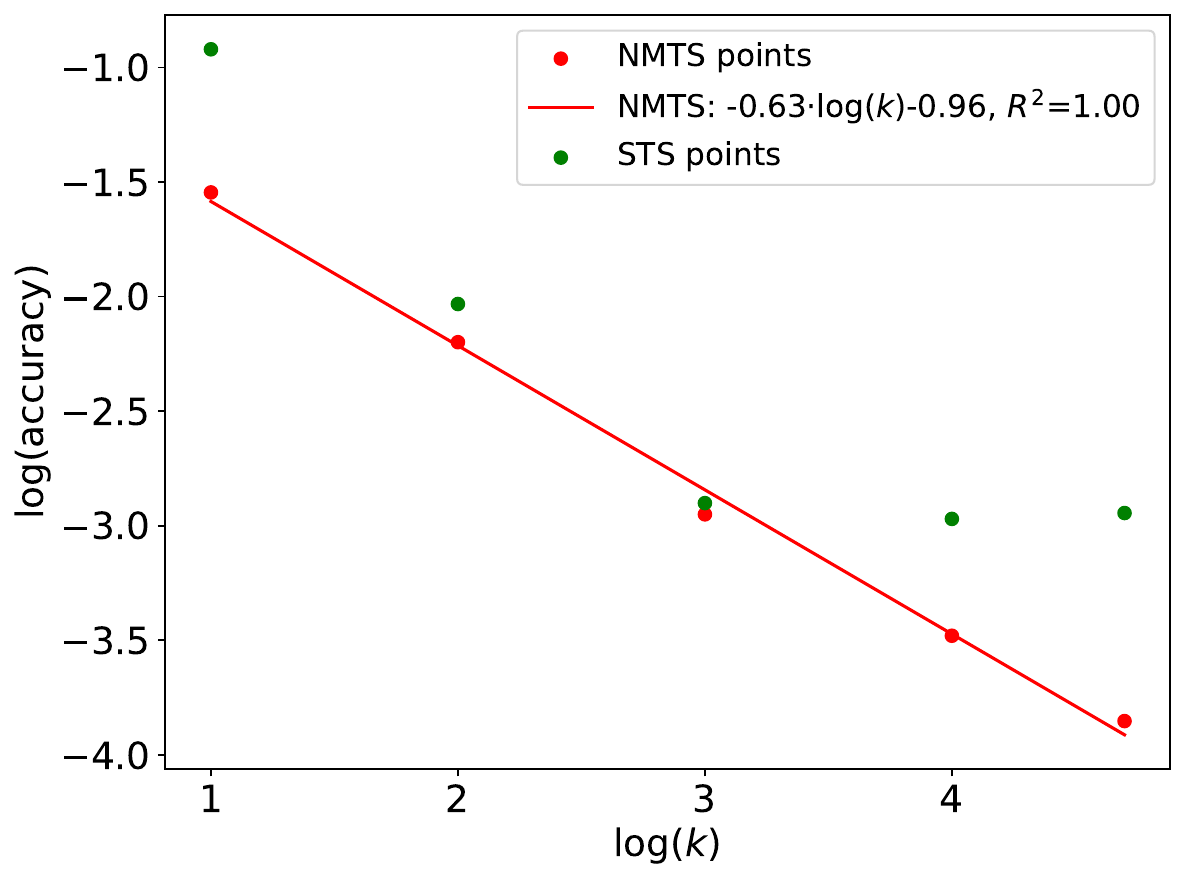}
    \label{ex2_2_2}
  }
\label{ex2_2}
\end{figure}

\subsection{MTS for Training Likelihood and Posterior Neural Networks}\label{sec5.3}
In this subsection, we employ neural networks to approximate likelihood functions and posteriors for more complicated models. In cases where the true posterior is unknown, direct comparisons between algorithms become challenging. Thus, Section \ref{sec5.3.1} illustrates the advantages of the NMTS framework using a toy example, while Section \ref{sec5.3.2} describes its application to a complex simulator where analytical likelihood is infeasible. 

\subsubsection{A Toy Example}\label{sec5.3.1}\ 

\indent We use the same problem setting as in \ref{sec5.2} and apply Algorithm \ref{algor:3}. MAF method and IAF method \citep{papamakarios2017masked,kingma2016improved} are applied to build a conditional likelihood estimator $p_{\phi}(y|\theta)$ and variational distribution family $q_{\lambda}(\theta)$, respectively based on their specific nature. Details of the MAF and IAF setups are provided in Appendix \ref{appendixE.2}.

The results demonstrate the superior accuracy of the NMTS algorithm compared to the corresponding STS algorithm. Figure \ref{ex3} shows that the posterior estimated by NMTS closely matches the true posterior, whereas the posterior estimated by STS exhibits noticeable deviation. Notably, NMTS achieves this improvement without additional computational burden, as the primary adjustment lies in the training speeds of the two neural networks.

\begin{figure}[h]
  \centering
  \caption{Posterior estimated by NMTS and STS through neural networks}
  \label{ex3}

  \subfigure[Posterior estimated by NMTS through neural networks]{
    \centering
    \includegraphics[width=6cm]{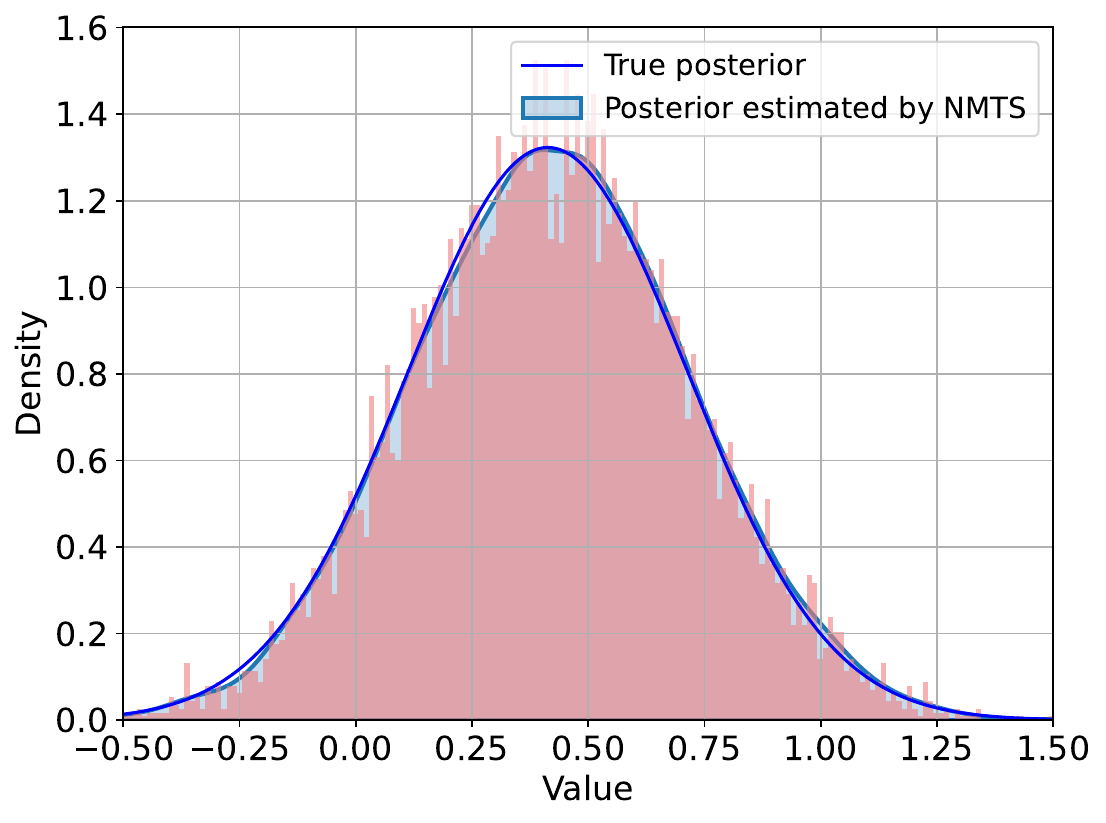}
    \label{ex3_1_1}
  }
  %\hspace{0.5cm}
  \subfigure[Posterior estimated by STS through neural networks]{
    \centering
    \includegraphics[width=6cm]{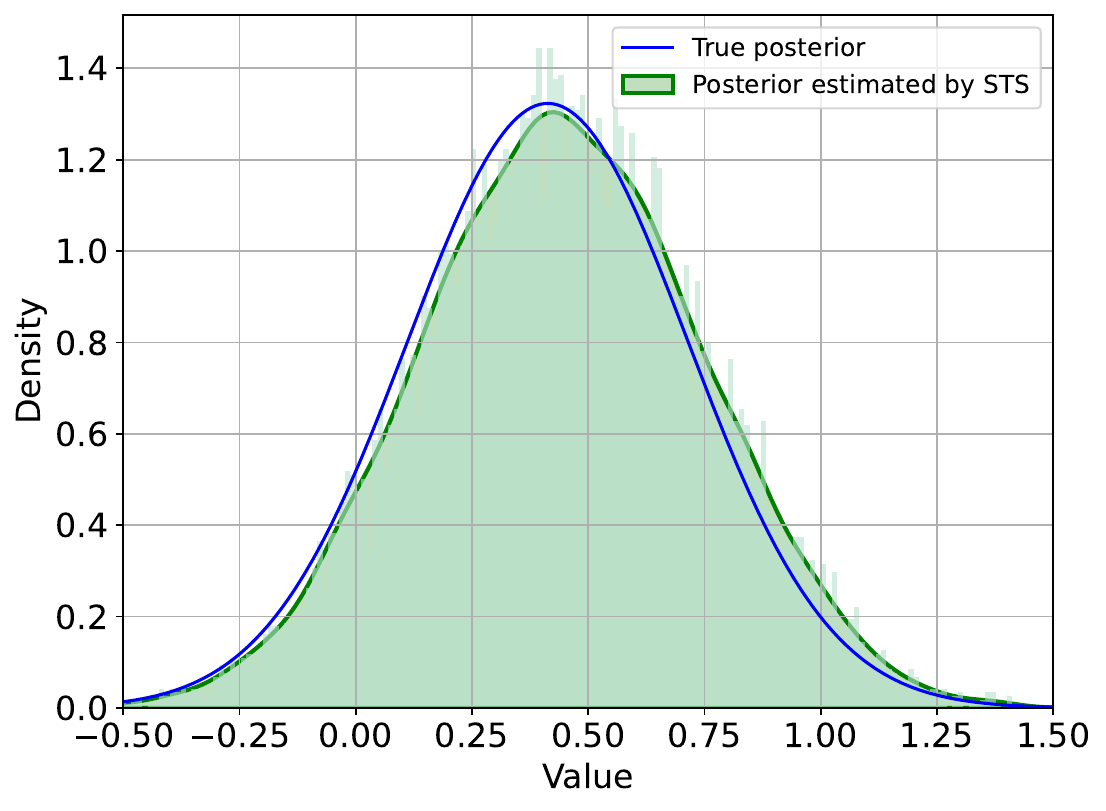}
    \label{ex3_1_2}
  }
\end{figure}
%\vspace{-0.8cm}
\subsubsection{Parameter Estimation in Food Preparation Process}\label{sec5.3.2} \

\indent In this section, we build a stochastic model as a simulator $Y(X;\theta)$, which portrays the food production process in a restaurant. Here $Y$ is the output, $X$ characterizes the stochasticity of the model, and $\theta$ comprises the parameters whose posterior distribution we aim to estimate. In this case, the analytical likelihood $p(Y|\theta)$ is absent and the joint posterior of parameters could be complex. We need a general variational parameter class, a neural network, to represent the posterior better, rather than a normal distribution with only two variational parameters in Section \ref{sec5.2}.

First, we introduce the setting of the simulator. Assume that order arrival follows a Poisson distribution with parameter 2. The food preparation process comprises three stages. At first, one clerk is %responsible for 
checking and processing the order, and the processing time follows a Gamma distribution with shape parameter 3 and inverse scale parameter 2. Next, three cooks are %responsible for 
preparing the food, where the preparation time is the first parameter $\theta_1$ whose posterior we want to estimate. After the food is prepared, one clerk is responsible for packing the food, and the packing time is the second parameter $\theta_2$ we want to estimate. Each procedure can be modeled as a single server or three servers queue with a buffer of unlimited capacity, where each job is served based on the first-in/first-out discipline. The final observation is the time series of the completion time of the food orders. This process is illustrated in Figure \ref{figure9} in Appendix \ref{appendixE.2}. To obtain the observations, we sample $\theta =  (\theta_1, \theta_2)$ ten times from independent Gamma distribution $(\Gamma(4,2), \Gamma(1,1))$. Then, by realizing the stochastic part $X$ and plugging them into the model, we can obtain a realization of the 10-dimensional output $\hat{Y}(X;\theta)$ as our observation. The posterior is estimated based on this observation. 

The prior of $\theta$ is set to be a uniform distribution: $\theta \sim \mathcal{U}(0,15)$.  MAF and IAF methods are also applied to build $p_{\phi}(y|\theta)$ and $q_{\lambda}(\theta)$ in setting the same as Section \ref{sec5.3.1}. The details for training can be found in Appendix \ref{appendixE.2}. Figure \ref{ex3_2}\subref{ex3_2_1} demonstrates the posterior estimated by NMTS, with the light blue region on the edge representing the marginal distribution. Due to the complexity of the joint density, employing a neural network as a general variational class is necessary. For the output performance measure, we generate another replicated output using parameters sampled from the posterior. Figure \ref{ex3_2}\subref{ex3_2_2} illustrates that the resulting sequence closely matches the original observations, despite the prior being far from the posterior.  This consistency suggests that the learned posterior concentrates on parameter regions that accurately capture the system’s dynamics, thereby  demonstrating that we have more understanding of the black box stochastic model. In this way, we show the scalability and superior performance of our method.

\begin{figure}[h]
  \centering
  \caption{Posterior estimated by NMTS through neural networks}
  \label{ex3_2}
  \subfigure[Density Plot with Marginal Distributions]{
    \centering
    \includegraphics[width=6cm]{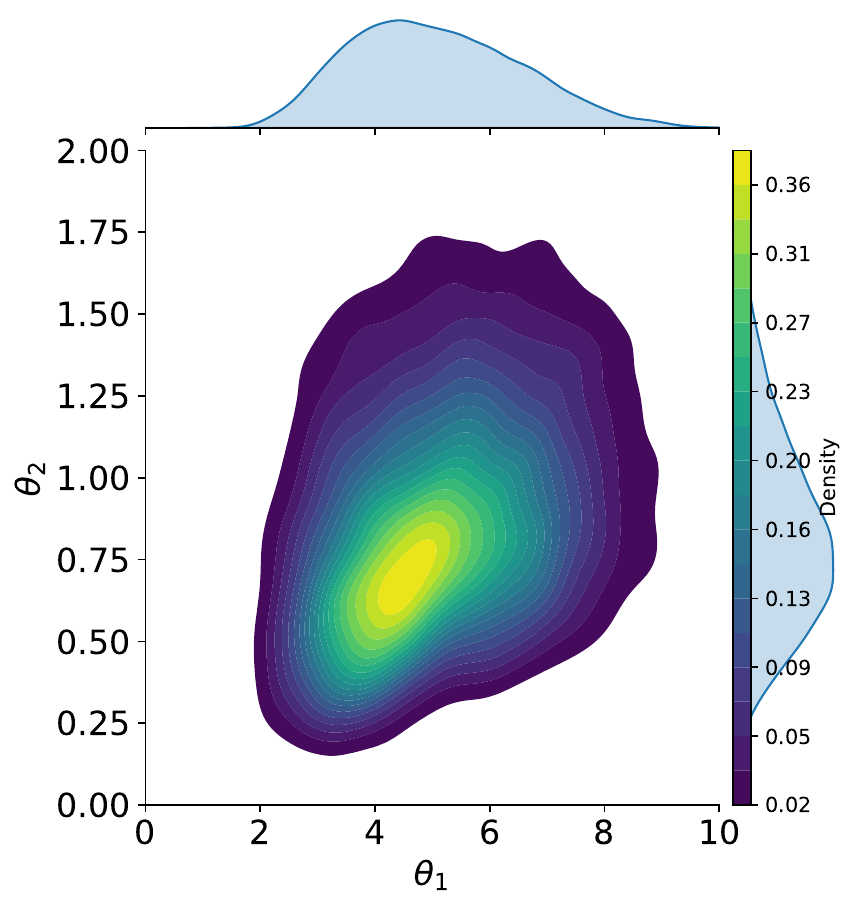}
    \label{ex3_2_1}
  }
  \subfigure[Output performance]{
    \centering
    \includegraphics[width=6cm]{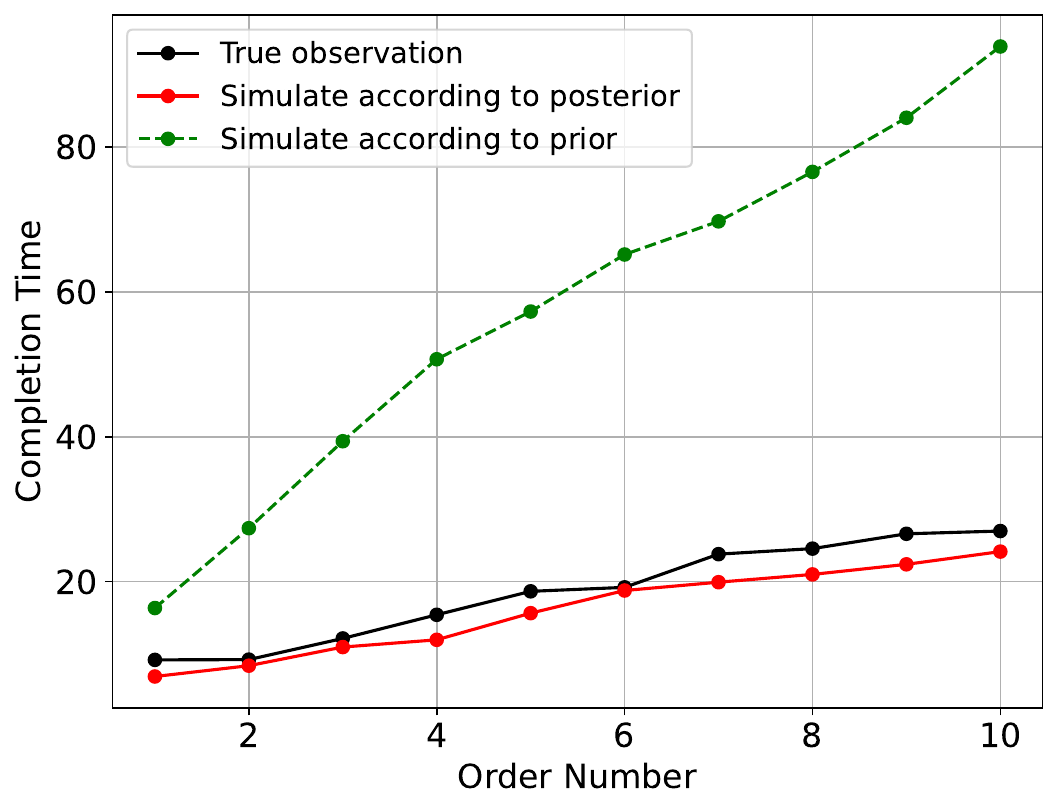}
    \label{ex3_2_2}
  }
\vspace{-0.3cm}
\end{figure}

\section{Conclusion}\label{sec6}

This article addresses the challenge of parameter calibration in stochastic models where the likelihood function is not analytically available. We introduce a ratio-free NMTS scheme that tracks the score on a fast-timescale and updates parameters on a slow-timescale, thereby removing the instability and bias caused by dividing two noisy Monte Carlo estimators. Different from a direct application of the vanilla MTS algorithm, we construct a nested SAA structure with $M$ outer scenarios and parallel fast trackers. On the theory side, we establish almost-sure convergence for dual layers, a central-limit characterization for the coupled iterates, and sharp $\mathbb{L}^1$ bounds that decompose error into a timescale mismatch term $O(\beta_k/\alpha_k)$ and an inner Monte Carlo term $O(\sqrt{\alpha_k/N})$, with an additional $O(M^{-1/2})$ outer SAA term when applicable.
 Furthermore, we have introduced neural network training to our model, showcasing the versatility and scalability of our framework. Future work encompasses eliminating ratio bias in more scenarios, and our framework can be more widely applied and extended.
%\blindmathpaper

%Here is a citation \cite{chow:68}.

% Acknowledgements and Disclosure of Funding should go at the end, before appendices and references

%\acks{All acknowledgements go at the end of the paper before appendices and references.
%Moreover, you are required to declare funding (financial activities supporting the
%submitted work) and competing interests (related financial activities outside the submitted work).
%More information about this disclosure can be found on the JMLR website.}

% Manual newpage inserted to improve layout of sample file - not
% needed in general before appendices/bibliography.

\begin{appendices}

\section*{\Large Appendix}
\section{The Uniform Convergence of Approximate Posterior}\label{sec3.8}
Now, we focus on the convergence of the approximate posterior $q_{\lambda}(\theta)$. Thanks to the fact that $\lambda_k$ converges in different senses as we proved in the sections before, we will prove the functional convergence of $q_{\lambda_k}(\theta)$ in this part. 
\begin{proposition}
     If $\theta$ satisfies $\frac{\partial q_{\lambda}(\theta)}{\partial \lambda}|_{\lambda = \bar{\lambda}^M} \neq 0$, and Assumptions \ref{assumption1}, \ref{assumption2}.1-\ref{assumption2}.8, and \ref{assumption3}.1-\ref{assumption3}.2 hold,  we have
    $$\sqrt{\beta_k^{-1}}(q_{\lambda_k}(\theta)-q_{\bar{\lambda}^M}(\theta)) \stackrel{d}{\longrightarrow} \mathcal{N}\bigg(0,\frac{\partial q_{\lambda}(\theta)}{\partial \lambda}|_{\lambda = \bar{\lambda}^M} \Sigma_{\lambda}\frac{\partial q_{\lambda}(\theta)}{\partial \lambda}|_{\lambda = \bar{\lambda}^M}^{\top} \bigg). $$
    Furthermore, if $\theta$ satisfies $\frac{\partial q_{\lambda}(\theta)}{\partial \lambda}|_{\lambda = \bar{\lambda}} \neq 0$,
    $$\sqrt{M}(q_{\bar{\lambda}^M}(\theta) - q_{\bar{\lambda}}(\theta) ) \stackrel{d}{\longrightarrow} \mathcal{N}\bigg(0, \frac{\partial q_{\lambda}(\theta)}{\partial \lambda}|_{\lambda = \bar{\lambda}}\nabla^2{L}(\bar{\lambda})^{-1}\operatorname{Var}_u(h(u;\bar{\lambda}))\nabla^2{L}(\bar{\lambda})^{-\top}\frac{\partial q_{\lambda}(\theta)}{\partial \lambda}|_{\lambda = \bar{\lambda}}^{\top}\bigg). $$
\end{proposition}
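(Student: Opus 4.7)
The plan is to recognize both statements as direct applications of the multivariate delta method, with the two weak-convergence inputs supplied by earlier results in the paper. In each case I fix the evaluation argument $\theta$ and treat $g(\lambda):=q_\lambda(\theta)$ as a deterministic scalar function of the variational parameter $\lambda\in\Lambda\subset\mathbb{R}^l$, whose row-vector gradient is $\partial_\lambda q_\lambda(\theta)$. The smoothness of $\lambda\mapsto q_\lambda(\theta)$ is consistent with the paper's gradient-based updates and with Assumption~\ref{assumption2}.7, which already requires $C(\lambda)=\nabla_\theta\log q_\lambda(\theta(u;\lambda))$ to be continuously differentiable in $\lambda$.

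For the first display, Proposition~\ref{thm3} provides $\sqrt{\beta_k^{-1}}(\lambda_k-\bar\lambda^M)\stackrel{d}{\longrightarrow}\mathcal N(0,\Sigma_\lambda)$ as $k\to\infty$. Applying the scalar-valued delta method to $g$ at $\bar\lambda^M$ yields
$$\sqrt{\beta_k^{-1}}\bigl(q_{\lambda_k}(\theta)-q_{\bar\lambda^M}(\theta)\bigr)\stackrel{d}{\longrightarrow}\mathcal N\!\left(0,\;\frac{\partial q_\lambda(\theta)}{\partial\lambda}\Big|_{\bar\lambda^M}\Sigma_\lambda\,\frac{\partial q_\lambda(\theta)}{\partial\lambda}\Big|_{\bar\lambda^M}^{\top}\right),$$
which is precisely the first claim. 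The hypothesis $\partial_\lambda q_\lambda(\theta)|_{\bar\lambda^M}\neq 0$ is needed only to keep the limit non-degenerate; without it the convergence still holds but the limit collapses to a point mass at zero.

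For the second display, I repeat the argument with Theorem~\ref{theorem5}, which gives $\sqrt M(\bar\lambda^M-\bar\lambda)\stackrel{d}{\longrightarrow}\mathcal N\bigl(0,\nabla^2 L(\bar\lambda)^{-1}\operatorname{Var}_u(h(u;\bar\lambda))\nabla^2 L(\bar\lambda)^{-\top}\bigr)$ as $M\to\infty$. Sandwiching this covariance with $\partial_\lambda q_\lambda(\theta)|_{\bar\lambda}$ on the left and its transpose on the right produces exactly the stated variance. The only genuinely non-routine step is verifying continuous differentiability of $\lambda\mapsto q_\lambda(\theta)$ pointwise in $\theta$ at the two evaluation points, which I expect to be the main, but mild, obstacle; for all variational families considered in the paper—the Gaussian parametrization used in Section~\ref{sec5.2} and the normalizing flows MAF/IAF used in Section~\ref{sec5.3}—this smoothness is automatic, so the proof reduces to two line-level invocations of the delta method followed by the sandwich-covariance algebra.
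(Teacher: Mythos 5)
Your proposal matches the paper's own (one-line) argument exactly: the paper states the result follows directly from the delta method applied to $\lambda\mapsto q_\lambda(\theta)$, with the two weak-convergence inputs supplied by Proposition~\ref{thm3} and Theorem~\ref{theorem5}, which is precisely what you do. Your added remarks on the role of the non-vanishing gradient hypothesis and on the smoothness of the variational families are correct and consistent with the paper's setup.
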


This conclusion is directly derived from the Delta Method \citep{Vaart_1998}. Let $q_{\bar{\lambda}}(\theta)$ be the projection of the true posterior to the variational parameter family $\{q_{\lambda}(\theta)\}$. That is to say $\bar{\lambda}$ is the root of the gradient of ELBO: $\nabla_{\lambda}L(\bar{\lambda}) = 0$. We make the following assumption.
\begin{assumption}\label{assumption4}
    The variational parameter family $\{q_{\lambda}(\theta)\}$ satisfies:
    $|q_{\lambda_1}(\theta)-q_{\lambda_2}(\theta)| \le L\Vert \lambda_1-\lambda_2\Vert,$ uniformly with respect to $\theta$.
\end{assumption}
Under Assumption \ref{assumption4}, we have the uniform convergence results of the posterior density function. 
\begin{proposition}\label{proposition 5}
    Under Assumptions  \ref{assumption1}, \ref{assumption2}.1-\ref{assumption2}.8 and \ref{assumption4}, the approximate posterior density function obtained by the algorithm converges uniformly to the $q_{\bar{\lambda}}(\theta)$:
    \begin{equation*}
        \lim_{M \rightarrow \infty}\lim_{k \rightarrow \infty}\sup_{\theta}|q_{\lambda_k^M}(\theta)-q_{\bar{\lambda}}(\theta)| = 0.
    \end{equation*}
\end{proposition}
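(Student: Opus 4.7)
The plan is to reduce the uniform-in-$\theta$ functional statement to the two already-established parametric convergence results by exploiting the uniform Lipschitz continuity of $\lambda \mapsto q_\lambda(\theta)$ guaranteed by Assumption \ref{assumption4}. First, I would insert the sample-based optimum $\bar{\lambda}^M$ as an intermediate point and apply the triangle inequality to split the target quantity into an ``algorithmic'' error and an ``SAA'' error:
\begin{equation*}
\sup_{\theta}\bigl|q_{\lambda_k^M}(\theta)-q_{\bar{\lambda}}(\theta)\bigr|
\;\le\;
\sup_{\theta}\bigl|q_{\lambda_k^M}(\theta)-q_{\bar{\lambda}^M}(\theta)\bigr|
\;+\;
\sup_{\theta}\bigl|q_{\bar{\lambda}^M}(\theta)-q_{\bar{\lambda}}(\theta)\bigr|.
\end{equation*}

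Next, I would invoke Assumption \ref{assumption4} to bound each supremum by a multiple of the corresponding parameter gap, obtaining
\begin{equation*}
\sup_{\theta}\bigl|q_{\lambda_k^M}(\theta)-q_{\bar{\lambda}}(\theta)\bigr|
\;\le\;
L\,\bigl\|\lambda_k^M-\bar{\lambda}^M\bigr\|
\;+\;
L\,\bigl\|\bar{\lambda}^M-\bar{\lambda}\bigr\|.
\end{equation*}
The first term is handled by Theorem \ref{thm2}, which establishes that, for every fixed outer-sample size $M$, the slow-timescale iterate $\lambda_k^M$ converges almost surely to $\bar{\lambda}^M$. Hence, taking $k \to \infty$ first (as dictated by the iterated-limit form of the statement), this term vanishes with probability one. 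The second term is entirely deterministic in $k$ and is controlled by Proposition \ref{proposition4}, which states $\bar{\lambda}^M \to \bar{\lambda}$ almost surely as $M \to \infty$.

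Combining the two limits in the prescribed order $\lim_{M\to\infty}\lim_{k\to\infty}$ yields the claim. The only subtle point, and the one I would emphasize in writing, is that the Lipschitz constant in Assumption \ref{assumption4} must be independent of $\theta$; this is precisely what allows the pointwise parameter convergence to be promoted to the uniform convergence of densities over $\theta$. Without this uniformity, the decomposition above would only produce a pointwise bound, and the supremum could fail to vanish even when $\|\lambda_k^M-\bar{\lambda}\|\to 0$. No genuine technical obstacle arises beyond ensuring this dependence is made explicit in the statement and invoked correctly.
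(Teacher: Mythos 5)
Your proof is correct and follows the same route the paper indicates (the paper simply states that Proposition \ref{proposition 5} "is directly derived from Assumption \ref{assumption4} and the convergence rate of $\lambda_k^M$"): insert $\bar{\lambda}^M$, apply the triangle inequality, use the $\theta$-uniform Lipschitz bound from Assumption \ref{assumption4}, then invoke Theorem \ref{thm2} for $k\to\infty$ and Proposition \ref{proposition4} for $M\to\infty$ in the prescribed order. Your remark about the Lipschitz constant being independent of $\theta$ is exactly the point that justifies promoting parameter convergence to uniform density convergence.
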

Similarly, we can study the uniform convergence rate of $q_{\lambda_k^M}(\theta)$.
\begin{proposition}\label{proposition 6}
Under Assumptions \ref{assumption1}, \ref{assumption2}.1-\ref{assumption2}.8, \ref{assumption3}.1-\ref{assumption3}.2, and \ref{assumption4}, we have
\begin{equation*}
\begin{aligned}
    &\sup_{\theta}|q_{\lambda_k^M}(\theta)-q_{\bar{\lambda}}(\theta)| = O_p(\beta_k^{\frac{1}{2}}N^{-\frac{1}{2}}) + O_p(M^{-\frac{1}{2}}),\\
  & \mathbb{E}[ \sup_{\theta}\Vert q_{\lambda_k^M}(\theta)-q_{\bar{\lambda}}(\theta)\Vert ] = O\bigg(\frac{\beta_k}{\alpha_k}\bigg) + O\bigg(\sqrt{\frac{\alpha_k}{N}}\bigg) +  O\bigg(\sqrt{\frac{1}{M}}\bigg).
\end{aligned}
\end{equation*}
\end{proposition}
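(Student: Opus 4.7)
The plan is to reduce the functional statement about $q_{\lambda_k^M}(\theta)$ to the already-established convergence rates of the variational parameter $\lambda_k^M$ by invoking Assumption~\ref{assumption4}. Specifically, since $|q_{\lambda_1}(\theta)-q_{\lambda_2}(\theta)|\le L\|\lambda_1-\lambda_2\|$ holds uniformly in $\theta$, taking the supremum over $\theta$ on both sides yields the functional Lipschitz bound
\begin{equation*}
\sup_{\theta}|q_{\lambda_1}(\theta)-q_{\lambda_2}(\theta)|\le L\|\lambda_1-\lambda_2\|,
\end{equation*}
which converts any parameter-space convergence result for $\lambda_k^M$ directly into a corresponding statement for the density.

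For the first (weak) rate, I would apply the above Lipschitz bound with $\lambda_1=\lambda_k^M$ and $\lambda_2=\bar{\lambda}$, yielding $\sup_{\theta}|q_{\lambda_k^M}(\theta)-q_{\bar{\lambda}}(\theta)|\le L\|\lambda_k^M-\bar{\lambda}\|$. Theorem~\ref{theorem6} already establishes $\lambda_k^M-\bar{\lambda}=O_p(\beta_k^{1/2}N^{-1/2})+O_p(M^{-1/2})$, so the stochastic order is inherited by the functional supremum, giving the first claim. For the $\mathbb{L}^1$ part, I would take expectations of the same Lipschitz inequality to obtain
\begin{equation*}
\mathbb{E}\!\left[\sup_{\theta}|q_{\lambda_k^M}(\theta)-q_{\bar{\lambda}}(\theta)|\right]\le L\,\mathbb{E}\!\left[\|\lambda_k^M-\bar{\lambda}\|\right],
\end{equation*}
and then invoke Theorem~\ref{theorem7} to conclude the desired $O(\beta_k/\alpha_k)+O(\sqrt{\alpha_k/N})+O(\sqrt{1/M})$ bound. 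Optionally, one can introduce the intermediate point $\bar{\lambda}^M$ and triangulate $\|\lambda_k^M-\bar{\lambda}\|\le\|\lambda_k^M-\bar{\lambda}^M\|+\|\bar{\lambda}^M-\bar{\lambda}\|$ to make the two error sources (inner-layer SA error and outer-layer SAA error) explicit, with the first term controlled by Theorem~\ref{thm5} and the second by Theorem~\ref{theorem5} or Proposition~\ref{proposition4}.

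The argument is essentially a one-line application of Assumption~\ref{assumption4} once the parameter-space convergence rates are in hand, so there is no substantive technical obstacle here—the heavy lifting was done in establishing Theorems~\ref{theorem6} and \ref{theorem7}. The only subtlety worth flagging is that Assumption~\ref{assumption4} must give uniform Lipschitz continuity in $\lambda$ with a constant independent of $\theta$; this is precisely what ensures that the supremum over $\theta$ on the left-hand side does not inflate the bound and that Fubini/measurability issues when taking $\mathbb{E}\sup_{\theta}$ are benign. With that uniformity in place, both claims of Proposition~\ref{proposition 6} follow immediately from the corresponding parameter-space results already proved in the paper.
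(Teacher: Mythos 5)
Your proof is correct and matches the paper's own argument exactly: the paper states that Proposition~\ref{proposition 6} "is directly derived from Assumption~\ref{assumption4} and the convergence rate of $\lambda_k^M$," which is precisely the Lipschitz-plus-rate-transfer argument you give, invoking Theorem~\ref{theorem6} for the $O_p$ claim and Theorem~\ref{theorem7} for the $\mathbb{L}^1$ claim. Your remark about the uniformity of the Lipschitz constant in $\theta$ is also the right point to flag as the one thing that makes the $\sup_\theta$ transfer legitimate.
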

The proofs of Proposition \ref{proposition 5} and Proposition \ref{proposition 6} are directly derived from Assumption \ref{assumption4} and the convergence rate of $\lambda_k^M$.

\section{Proof of Strong Convergence}\label{appendixA}
\textbf{Proof of Proposition \ref{proposition1}}:
\begin{proof}
    Define the parametric function class $\mathcal{C} = \{f_{\lambda}(x) = h(x;\lambda):\lambda \in \Lambda \}$. $\mathcal{C}$ is a collection of measurable functions indexed by a bounded set $\Lambda \subset \mathbb{R}^l$. Due to Assumption \ref{assumption1}, $\mathcal{C}$ is a P-Donsker by Example 19.7 in \cite[Chap 19]{Vaart_1998}. This implies 
\begin{equation*}
\sup_{f\in\mathcal{C}}|\mathbb{P}_nf-Pf| \stackrel{a.s.} {\longrightarrow} 0,
\end{equation*}
so the almost surely convergence is uniform with respect to $\lambda$. The functional CLT also holds.
  \end{proof}

To prove Theorem \ref{thm1}, we will first prove two essential lemmas that ensure the iterated sequence $D_{k,m}$ possesses uniform boundedness almost surely on each trajectory, which plays a crucial role in the subsequent convergence theory.

\begin{lemma}\label{lemma1}
Assuming that Assumptions \ref{assumption2}.1, \ref{assumption2}.2, \ref{assumption2}.3, and \ref{assumption2}.5(a) hold, it follows that $\sup_{k,m} \mathbb{E}[\Vert D_{k,m}\Vert ^2]<\infty$.
\end{lemma}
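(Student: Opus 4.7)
The plan is to derive a one-step Lyapunov-type recursion for $V_{k,m}:=\mathbb{E}[\|D_{k,m}\|^2]$, show that it becomes a contraction once $\alpha_k$ is small enough, and then invoke a standard perturbation bound that exploits $\sum_k\alpha_k^2<\infty$. Since the constants produced only involve $C_1,C_2,\epsilon$ from Assumptions \ref{assumption2}.1--\ref{assumption2}.3, the resulting bound will automatically be uniform in $m$.

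Concretely, I would expand
$$\|D_{k+1,m}\|^2=\|D_{k,m}\|^2+2\alpha_k D_{k,m}^{\top}G_{1,k,m}-2\alpha_k D_{k,m}^{\top}G_{2,k,m}D_{k,m}+\alpha_k^2\|G_{1,k,m}-G_{2,k,m}D_{k,m}\|^2,$$
take $\mathbb{E}[\,\cdot\mid\mathcal{F}_k]$, and bound the three random contributions separately. Using $\mathcal{F}_k$-measurability of $D_{k,m}$ and Young's inequality with parameter $\epsilon$, together with $\|\mathbb{E}[G_{1,k,m}\mid\mathcal{F}_k]\|^2\le C_1$ by Jensen and Assumption \ref{assumption2}.1, the linear $G_{1,k,m}$ cross term is bounded by $\epsilon\alpha_k\|D_{k,m}\|^2+\alpha_k C_1/\epsilon$. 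The key structural observation for the $G_{2,k,m}$ quadratic term is that $\mathbb{E}[G_{2,k,m}\mid\mathcal{F}_k]$ inherits the block-diagonal structure $\operatorname{diag}\{p(Y_t\mid\theta(u_m;\lambda_k))\,I_d\}_{t=1}^T$, and by Assumption \ref{assumption2}.2 each block is at least $\epsilon I_d$, so $D_{k,m}^{\top}\mathbb{E}[G_{2,k,m}\mid\mathcal{F}_k]D_{k,m}\ge \epsilon\|D_{k,m}\|^2$. Finally, $\|a-bD\|^2\le 2\|a\|^2+2\|b\|^2\|D\|^2$ and Assumptions \ref{assumption2}.1, \ref{assumption2}.3 yield $2\alpha_k^2 C_1+2\alpha_k^2 C_2\|D_{k,m}\|^2$ for the noise term.

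Combining gives the conditional recursion
$$\mathbb{E}[\|D_{k+1,m}\|^2\mid\mathcal{F}_k]\le\bigl(1-\alpha_k\epsilon+2\alpha_k^2 C_2\bigr)\|D_{k,m}\|^2+\alpha_k\tfrac{C_1}{\epsilon}+2\alpha_k^2 C_1,$$
and since $\alpha_k\to 0$, there exists $k_0$ with $2\alpha_k C_2\le\epsilon/2$ for $k\ge k_0$, so taking expectations,
$$V_{k+1,m}\le\bigl(1-\tfrac{\alpha_k\epsilon}{2}\bigr)V_{k,m}+\alpha_k\tfrac{C_1}{\epsilon}+2\alpha_k^2 C_1,\qquad k\ge k_0.$$
To close the loop I would introduce the shift $L:=2C_1/\epsilon^2$, for which the constant term $\alpha_k C_1/\epsilon$ is exactly absorbed by the contraction $\alpha_k\epsilon L/2$, yielding $V_{k+1,m}-L\le(1-\alpha_k\epsilon/2)(V_{k,m}-L)+2\alpha_k^2 C_1$. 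Iterating from $k_0$ and using $\prod_j(1-\alpha_j\epsilon/2)\le 1$ together with $\sum_j\alpha_j^2<\infty$ gives $V_{k,m}\le L+|V_{k_0,m}-L|+2C_1\sum_{j\ge k_0}\alpha_j^2$; the first $k_0$ iterations are handled by applying the one-step inequality $k_0$ times to the common deterministic initial value $D_0$, which gives a finite bound on $V_{k_0,m}$ independent of $m$.

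The main care lies in controlling the random matrix $G_{2,k,m}$: the GLR estimator need not be positive definite pathwise, so no pointwise coercivity on $-G_{2,k,m}$ is available. The argument therefore has to extract contraction purely from the conditional mean, which is exactly where the block-diagonal structure of $G_{2,k,m}=\operatorname{diag}\{G_2(X,Y_t,\theta_{k,m})\}\otimes I_d$ combined with Assumption \ref{assumption2}.2 becomes essential. Once the one-sided PSD bound on $\mathbb{E}[G_{2,k,m}\mid\mathcal{F}_k]$ is in place, the remaining work is the standard Robbins--Monro boundedness argument.
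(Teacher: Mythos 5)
Your proposal is correct, and it takes a genuinely different route from the paper's. The paper factors the update as $D_{k+1,m}=(I-\alpha_k G_{2,k,m})D_{k,m}+\alpha_k G_{1,k,m}$, then takes operator norms and bounds $\mathbb{E}[\|I-\alpha_k G_{2,k,m}\|^2\mid\mathcal{F}_k]$ by $1-\alpha_k\epsilon$ via the claim that $\|I-\alpha_k G_{2,k,m}\|=1-\alpha_k\lambda_{\min}(G_{2,k,m})$; this implicitly treats the random diagonal matrix $G_{2,k,m}$ as if its entries were pathwise nonnegative and below $2/\alpha_k$, which is more than Assumption \ref{assumption2}.2 (a lower bound on the conditional \emph{mean} only) actually gives. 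You instead expand $\|D_{k+1,m}\|^2$ fully, keep the $G_{2,k,m}$-quadratic term inside the conditional expectation, and exploit the fact that $\mathbb{E}[G_{2,k,m}\mid\mathcal{F}_k]=\operatorname{diag}\{p(Y_t\mid\theta_{k,m})I_d\}_t\succeq\epsilon I$ — this is exactly the coercivity that Assumption \ref{assumption2}.2 delivers, with no pathwise sign condition needed. Your final observation, that the contraction has to come from the conditional mean because GLR estimators are not pointwise nonnegative, is precisely the subtlety the paper glosses over, so your version is in fact the more careful of the two. The closing arguments also differ mildly but equivalently: the paper completes a square to show $\sqrt{V_{k,m}}\le\max\{\sqrt{V_{k,m-1}},2\sqrt{C_1}/\epsilon\}$ (needing only $\alpha_k\to0$), whereas you use the Lyapunov shift $L=2C_1/\epsilon^2$ plus $\sum_k\alpha_k^2<\infty$ to dominate the residual; both close the loop and both give bounds uniform in $m$ since the constants depend only on $C_1,C_2,\epsilon$. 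One small housekeeping point: when you iterate $W_{k+1}\le(1-\alpha_k\epsilon/2)W_k+2\alpha_k^2C_1$ from $k_0$, you should also take $k_0$ large enough that $\alpha_k\epsilon/2\le1$, so the contraction factors stay in $[0,1]$; with that, the telescoping estimate $W_k\le|W_{k_0}|+2C_1\sum_{j\ge k_0}\alpha_j^2$ is valid regardless of the sign of $W_{k_0}$.
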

    
    \begin{proof}
    According to the iteration formula in each parallel block, $D_{k+1,m} = (I-\alpha_k G_{2,k,m})D_{k,m} + \alpha_k G_{1,k,m}$, then we have
    $$ \Vert D_{k+1,m}\Vert ^2 \le \Vert I-\alpha_k G_{2,k,m}\Vert ^2\Vert D_{k,m}\Vert ^2 + 2\alpha_k\Vert I-\alpha_kG_{2,k,m}\Vert \cdot\Vert D_{k,m}\Vert \cdot\Vert G_{1,k,m}\Vert +\alpha_k^2\Vert G_{1,k,m}\Vert ^2.$$

    Notice the definition of $\mathcal{F}_k$, take the conditional expectation on both sides, we can get
    \begin{equation} \label{eq7}
    \small
        \begin{split}
        &\mathbb{E}[\Vert D_{k+1,m}\Vert ^2|\mathcal{F}_k] 
        \\\le&  \mathbb{E}[\Vert I-\alpha_k G_{2,k,m}\Vert ^2|\mathcal{F}_k]\cdot\Vert D_{k,m}\Vert ^2 + 2\alpha_k \mathbb{E}[\Vert I-\alpha_kG_{2,k,m}\Vert \cdot\Vert G_{1,k,m}\Vert  |\mathcal{F}_k]\cdot\Vert D_{k,m}\Vert + \alpha_k^2\mathbb{E}[\Vert G_{1,k,m}\Vert ^2|\mathcal{F}]  
        \\\le& \mathbb{E}[\Vert I-\alpha_k G_{2,k,m}\Vert ^2|\mathcal{F}_k]\Vert D_{k,m}\Vert ^2 + 2\alpha_k \sqrt{\mathbb{E}[\Vert G_{1,k,m}\Vert ^2|\mathcal{F}_k]} \sqrt{\mathbb{E}[\Vert I-\alpha_kG_{2,k,m}\Vert ^2 |\mathcal{F}_k]}\Vert D_{k,m}\Vert +\alpha_k^2 C_1. 
        \end{split}
    \end{equation}
    The second inequality comes from Cauchy-Schwarz(C-S) inequality and Assumption \ref{assumption2}.1. Note that
    \begin{equation*}
        \small \mathbb{E}[\Vert I-\alpha_k G_{2,k,m}\Vert ^2|\mathcal{F}_k] = \mathbb{E}[(1-\alpha_k \lambda_{2,k,m})^2|\mathcal{F}_k] = 1-\alpha_k(2\mathbb{E}[\lambda_{2,k,m}|\mathcal{F}_k]-\alpha_k\mathbb{E}[\lambda_{2,k,m}^2|\mathcal{F}_k])\le 1-\alpha_k \epsilon,
    \end{equation*} where $\lambda_{2,k,m}$ is the minimum eigenvalue of $G_{2,k,m}$. Due to Assumptions \ref{assumption2}.2 and \ref{assumption2}.3, the inequality in the above expression arises because $\alpha_k \rightarrow 0$, there exists $N_1>0$ and $N_1$ is independent of $u$, such that for every $k\ge N_1$, 
    $2\mathbb{E}[\lambda_{2,k,m}|\mathcal{F}_k]-\alpha_k\mathbb{E}[\lambda_{2,k,m}^2|\mathcal{F}_k]\ge  2 \epsilon -\alpha_kC_2\ge \epsilon \   w.p.1.$ 
    So Equation (\ref{eq7}) can be changed to
    $$\mathbb{E}[\Vert D_{k+1,m}\Vert ^2|\mathcal{F}_k] \le (1-\alpha_k \epsilon)\Vert D_{k,m}\Vert ^2 + \alpha_k^2C_1+2\alpha_k \sqrt{C_1}\sqrt{1-\alpha_k \epsilon}\Vert D_{k,m}\Vert .$$
    Take the expectation and apply the C-S inequality, the inequality  holds for every $k\ge N_1$,
    \begin{align*}
       & \mathbb{E}[\Vert D_{k+1,m}\Vert ^2] \le(1-\alpha_k\epsilon)\mathbb{E}[\Vert D_{k,m}\Vert ^2] + \alpha_k^2C_1+2\alpha_k \sqrt{C_1}\sqrt{1-\alpha_k \epsilon}\sqrt{\mathbb{E}[\Vert D_{k,m}\Vert ^2]}\\
        =&\bigg(\sqrt{1-\alpha_k \epsilon}\sqrt{\mathbb{E}[\Vert D_{k,m}\Vert ^2]} + \alpha_k \sqrt{C_1} \bigg)^2 
        \le \bigg((1-\frac{\alpha_k\epsilon}{2})\sqrt{\mathbb{E}[\Vert D_{k,m}\Vert ^2]} + \frac{\alpha_k\epsilon}{2}\frac{2\sqrt{C_1}}{\epsilon}\bigg)^2\\
        \le& \bigg(\max_k\{\sqrt{\mathbb{E}[\Vert D_{k,m}\Vert ^2]},\frac{2\sqrt{C_1}}{\epsilon}\}\bigg)^2.
    \end{align*}
     Since $D_0$ is independent of $u$, by using the boundness assumption, taking the expectation and taking superior with respect to $m$ in Equation (\ref{eq7}), it is easy to prove by induction that for every $k\le N_1$, $\sup_{m}\mathbb{E}[\Vert D_{k,m}\Vert ^2] < \infty$. 
    Therefore, 
        $\sup_{k,m}\mathbb{E}[\Vert D_{k,m}\Vert ^2] \le \max_{k\le N_1}\sup_{m}\mathbb{E}[\Vert D_{k,m}\Vert ^2] + \frac{4C_1}{\epsilon^2} < \infty.  $
    \end{proof}    
    \begin{lemma}\label{lemma2}
    Assuming Assumptions \ref{assumption2}.1, \ref{assumption2}.2, \ref{assumption2}.3, \ref{assumption2}.5(a) hold, $\sup_{k,m} \Vert D_{k,m}\Vert ^2<\infty, w.p.1.$
    \end{lemma}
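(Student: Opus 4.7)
The plan is to upgrade the $\mathbb{L}^2$ boundedness of Lemma~\ref{lemma1} to a pathwise almost-sure bound by decomposing the iterate into a convergent martingale noise term plus a contractive deterministic term and controlling each separately. Concretely, I would rewrite the recursion as
$$
D_{k+1,m}=D_{k,m}+\alpha_k\bigl(\nabla_\theta p_{k,m}-p_{k,m}D_{k,m}\bigr)+\alpha_k\xi_{k,m},
$$
where $p_{k,m}:=\mathbb{E}[G_{2,k,m}\mid\mathcal{F}_k]$, $\nabla_\theta p_{k,m}:=\mathbb{E}[G_{1,k,m}\mid\mathcal{F}_k]$, and $\xi_{k,m}:=(G_{1,k,m}-\nabla_\theta p_{k,m})-(G_{2,k,m}-p_{k,m})D_{k,m}$ is the resulting $\mathcal{F}_{k+1}$-martingale difference, which by Assumptions~\ref{assumption2}.1 and \ref{assumption2}.3 satisfies $\mathbb{E}[\|\xi_{k,m}\|^2\mid\mathcal{F}_k]\le 4C_1+4C_2\|D_{k,m}\|^2$ almost surely.

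Combining this growth bound with Lemma~\ref{lemma1} and Assumption~\ref{assumption2}.5(a) yields $\sum_k\alpha_k^2\,\mathbb{E}[\|\xi_{k,m}\|^2]<\infty$, so Doob's $\mathbb{L}^2$ martingale convergence theorem implies that $\mathcal{M}_{k,m}:=\sum_{j=0}^{k-1}\alpha_j\xi_{j,m}$ converges almost surely and in $\mathbb{L}^2$; in particular $\sup_k\|\mathcal{M}_{k,m}\|<\infty$ a.s. Next I would set $\tilde D_{k,m}:=D_{k,m}-\mathcal{M}_{k,m}$, which satisfies the noise-free recursion
$$
\tilde D_{k+1,m}=(I-\alpha_k p_{k,m})\tilde D_{k,m}+\alpha_k\bigl(\nabla_\theta p_{k,m}-p_{k,m}\mathcal{M}_{k,m}\bigr).
$$
By Assumption~\ref{assumption2}.2 the symmetric block-diagonal matrix $p_{k,m}$ has minimum eigenvalue $\ge\epsilon$ almost surely, and by Jensen together with Assumption~\ref{assumption2}.3 its spectral norm is a.s.\ bounded, so $\|I-\alpha_k p_{k,m}\|\le 1-\alpha_k\epsilon$ for all sufficiently large $k$, while the forcing term is a.s.\ bounded by a random constant $K_m$ (since $\|\nabla_\theta p_{k,m}\|\le\sqrt{C_1}$, $\|p_{k,m}\|\le\sqrt{C_2}$, and $\sup_k\|\mathcal{M}_{k,m}\|<\infty$ a.s.). A standard geometric-series contraction argument then gives $\sup_k\|\tilde D_{k,m}\|\le\max(\|\tilde D_{0,m}\|,K_m/\epsilon)<\infty$ a.s., and hence
$$
\sup_k\|D_{k,m}\|^{2}\le 2\bigl(\sup_k\|\tilde D_{k,m}\|^{2}+\sup_k\|\mathcal{M}_{k,m}\|^{2}\bigr)<\infty\ \text{a.s.}
$$
Uniformity over $m\in\{1,\dots,M\}$ follows from a finite union bound because $M$ is fixed and the constants $C_1,C_2,\epsilon$ produced by Assumptions~\ref{assumption2}.1-\ref{assumption2}.3 are independent of $m$.

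The main obstacle is that the natural Lyapunov function $V_{k,m}=\|D_{k,m}\|^2$ admits a one-step conditional bound containing a non-summable $O(\alpha_k)$ cross term of the form $2\alpha_k\sqrt{C_1(1-\alpha_k\epsilon)}\,\|D_{k,m}\|$, which obstructs a direct Robbins-Siegmund argument on $V_{k,m}$. The decomposition $D_{k,m}=\tilde D_{k,m}+\mathcal{M}_{k,m}$ circumvents this by isolating the noise into a pure martingale (tamed by Doob's theorem combined with the $\mathbb{L}^2$ bound of Lemma~\ref{lemma1}) and the remaining drift into a time-varying linear contraction (tamed by the uniform positive definiteness of $p_{k,m}$ from Assumption~\ref{assumption2}.2). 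A threshold-splitting Robbins-Siegmund argument on $V_{k,m}$, exploiting the fact that $2\sqrt{C_1}\sqrt{V_{k,m}}\le\epsilon V_{k,m}$ once $V_{k,m}>(2\sqrt{C_1}/\epsilon)^2$ so that the cross term is absorbed into the negative drift, would provide an alternative route to the same conclusion.
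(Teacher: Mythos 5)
Your proof is correct, and it takes a genuinely different route from the paper's. The paper unrolls the recursion into an explicit discrete variation-of-constants formula with four terms --- a geometric decay term, a bounded drift term $\sum_i \prod_{j>i}(1-\tilde\alpha_j)\,\tilde\alpha_i R_{i,m}$, and two separate weighted martingale sums built from $V_{i,m}=G_{1,i,m}-\mathbb{E}[G_{1,i,m}\mid\mathcal F_i]$ and $W_{i,m}=(\mathbb{E}[G_{2,i,m}\mid\mathcal F_i]-G_{2,i,m})D_{i,m}$ --- and then kills the martingale sums by showing $\sum_i\alpha_i V_{i,m}$ and $\sum_i\alpha_i W_{i,m}$ converge a.s.\ (via the $\mathbb{L}^2$ martingale convergence theorem using Lemma~\ref{lemma1}) and invoking Kronecker's lemma to turn these a.s.-convergent partial sums into vanishing geometrically weighted averages. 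You instead lump both noise sources into the single martingale difference $\xi_{k,m}$, establish a.s.\ convergence and hence a.s.\ boundedness of the cumulative martingale $\mathcal M_{k,m}=\sum_j\alpha_j\xi_{j,m}$ from exactly the same $\mathbb{L}^2$ estimate, and then perform a change of variable $\tilde D_{k,m}=D_{k,m}-\mathcal M_{k,m}$ that eliminates the noise from the recursion entirely, reducing the problem to a forced linear contraction $\tilde D_{k+1,m}=(I-\alpha_k p_{k,m})\tilde D_{k,m}+\alpha_k(\nabla_\theta p_{k,m}-p_{k,m}\mathcal M_{k,m})$ with an a.s.-bounded forcing. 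What each approach buys: the paper's summation-by-parts route gives an explicit decomposition of $D_{k,m}$ that is later reused (the $V^n_m(t),W^n_m(t)$ interpolants in Lemmas~\ref{lemma3}--\ref{lemma5}), whereas your martingale-subtraction route avoids Kronecker's lemma and the term-by-term bookkeeping entirely, yielding a shorter and more modular argument for this lemma in isolation. One small point of hygiene: you should note explicitly that $I-\alpha_k p_{k,m}$ is diagonal with entries in $(0,1-\alpha_k\epsilon]$ only for $k$ large enough that $\alpha_k\sqrt{C_2}<1$ (the paper introduces $N_2$ for this), so the contraction argument should be started from such an index, with the finitely many earlier iterates trivially bounded; this is a presentation detail rather than a gap.
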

    \begin{proof}
        Rewrite the iteration as
    \begin{equation}\label{eq8}
    \begin{split}
        D_{k+1,m} =& (I-\alpha_k G_{2,k,m})D_{k,m} + \alpha_k G_{1,k,m}\\
        =&(I- \alpha_k \mathbb{E}[G_{2,k,m}|\mathcal{F}_k])D_{k,m} + \alpha_k\mathbb{E}[G_{1,k,m}|\mathcal{F}_k] + \alpha_k W_{k,m} + \alpha_k V_{k,m}\\
        =&(I-U_{k,m})D_{k,m} + \tilde{\alpha}_kR_{k,m} + \alpha_kW_{k,m} + \alpha_k V_{k,m},
    \end{split}        
    \end{equation}
    where $W_{k,m} = (\mathbb{E}[G_{2,k,m}|\mathcal{F}_k]-G_{2,k,m})D_{k,m}$, $V_{k,m} = G_{1,k,m}-\mathbb{E}[G_{1,k,m}|\mathcal{F}_k],\\$ $U_{k,m}=\alpha_k \mathbb{E}[G_{2,k,m}|\mathcal{F}_k],$ $R_{k,m} = \mathbb{E}[G_{2,k,m}|\mathcal{F}_k]^{-1}\mathbb{E}[G_{1,k,m}|\mathcal{F}_k]$. By Assumptions \ref{assumption2}.2 and \ref{assumption2}.3, $U_{k,m}$ is a diagonal matrix and all of its elements are no less than $\alpha_k\epsilon$ and no more than $\alpha_k\sqrt{C_2} $. Since $\alpha_k$ tends to zero, there exists $N_2>0$, for every $k\ge N_2$, all of elements of $U_{k,m}$ are less than 1. Define some of the element of $U_{k,m}$ as $\tilde{\alpha}_k$ and $\alpha_k\epsilon<\tilde{\alpha}_k<\alpha_k\sqrt{C_2}<1$. So $\forall k\ge N_2$, take norm on both sides of Equation (\ref{eq8}):
    \begin{equation}\label{eq9}\small
    \begin{split}
    \Vert D_{k+1,m}\Vert  \le& \prod\limits_{i=N_2}^{k}(1-\tilde{\alpha}_i)\Vert D_{N_2,m}\Vert +\sum\limits_{i=N_2}^{k}\prod_{j=i+1}^k(1-\tilde{\alpha_j})\tilde{\alpha}_i\Vert R_{i,m}\Vert  \\ \ & + \Vert \sum\limits_{i=N_2}^{k}\prod_{j=i+1}^k(1-\tilde{\alpha}_j)\alpha_i W_{i,m}\Vert  + \Vert \sum\limits_{i=N_2}^{k}\prod_{j=i+1}^k(1-\tilde{\alpha}_j)\alpha_i V_{i,m}\Vert   .
    \end{split}
    \end{equation}
  
    (1) For the first term, by Assumption \ref{assumption2}.5, $\sum_{k=0}^{\infty}\alpha_k = \infty$, when $k\rightarrow \infty$. We have the inequality: 
    $\prod\limits_{i=N_2}^{k}(1-\tilde{\alpha}_i)\Vert D_{N_2,m}\Vert  \le e^{-\sum_{i=N_2}^k\tilde{\alpha}_i}\Vert D_{N_2,m}\Vert \le e^{-\epsilon \sum_{i=N_2}^k\alpha_i}\Vert D_{N_2,m}\Vert  \rightarrow 0.$

    (2) For the second term, by the C-S inequality and Assumption \ref{assumption2}.1, we have $$\Vert R_{i,m}\Vert  = \frac{\Vert \mathbb{E}[G_{1,i,m}|\mathcal{F}_i]\Vert }{\Vert \mathbb{E}[G_{2,i,m}|\mathcal{F}_i]\Vert } \le \frac{\mathbb{E}[\Vert G_{1,i,m}\Vert |\mathcal{F}_i]}{\Vert \mathbb{E}[G_{2,i,m}|\mathcal{F}_i]\Vert } \le \frac{\sqrt{C_1}}{\epsilon},\quad w.p.1.$$
    
    We prove this by induction: $\sum\limits_{i=N_2}^{k}\prod_{j=i+1}^{k}(1-\tilde{\alpha}_j)\tilde{\alpha}_i \le 1$. It is easy to check that the conclusion holds when $k=N_2$. Assume that the assumption holds for $k$. Then for $k+1$, we plug in the inequality of $k$, and noting that $0<\tilde{\alpha}_k<1$, we have   $\sum\limits_{i=N_2}^{k+1}\prod_{j=i+1}^{k+1}(1-\tilde{\alpha_j})\tilde{\alpha_i} \le \prod\limits_{j=i+1}^{k+1}(1-\tilde{\alpha}_j)\tilde{\alpha}_{k+1} + (1-\tilde{\alpha}_{k+1}) \le \tilde{\alpha}_{k+1}+1-\tilde{\alpha}_{k+1} = 1,$
    which implies the second term of Equation (\ref{eq9}) is bounded.

    (3) For the third term, since $W_{k,m} = (\mathbb{E}[G_{2,k,m}|\mathcal{F}_k]-G_{2,k,m})D_{k,m}$, and $D_{k,m}\in\mathcal{F}_k,$ so $\mathbb{E}[W_{k,m}|\mathcal{F}_k]= 0.$ Moreover, 
    \begin{equation*}
    \small
        \begin{aligned}
            \mathbb{E}[\sum\limits_{i=N_2}^k \alpha_i W_{i,m}|\mathcal{F}_k] = \sum\limits_{i=N_2}^{k} \alpha_i \mathbb{E}[W_{i,m}|\mathcal{F}_k] = \sum\limits_{i=N_2}^{k-1} \alpha_i \mathbb{E}[(\mathbb{E}[G_{2,i,m}|\mathcal{F}_i]-G_{2,i,m})D_{i,m}|\mathcal{F}_k] =  \sum\limits_{i=N_2}^{k-1} \alpha_i W_{i,m}.
        \end{aligned}
    \end{equation*}
    Thus, $\sum\limits_{i=N_2}^k \alpha_i W_{i,m}$ is a martingale sequence for every $m$. Note that for every $i<j$, $$\mathbb{E}[\langle W_{i,m},W_{j,m}\rangle] = \mathbb{E}[\mathbb{E}[\langle W_{i,m},W_{j,m}\rangle|\mathcal{F}_j]]=\mathbb{E}[\langle W_{i,m},\mathbb{E}[W_{j,m}|\mathcal{F}_j]\rangle]=0,$$ so we can derive that
    \begin{equation*}\small
        \begin{aligned}
           &\mathbb{E}[\Vert \sum\limits_{i=N_2}^{k}\alpha_iW_{i,m}\Vert ^2] = \sum\limits_{i=N_2}^k{\alpha_i}^2{\mathbb{E}[\Vert W_{i,m}\Vert ^2}] \le \sum\limits_{i=N_2}^k\alpha_i^2\mathbb{E}[\Vert \mathbb{E}[G_{2,i,m}|\mathcal{F}_i]-G_{2,i,m}\Vert ^2\cdot\Vert D_{i,m}\Vert ^2]\\ =& \sum\limits_{i=N_2}^k \alpha_i^2 \mathbb{E}[\mathbb{E}[\Vert \mathbb{E}[G_{2,i,m}|\mathcal{F}_i]-G_{2,i,m}\Vert ^2\Vert D_{i,m}\Vert ^2|\mathcal{F}_i]]\\
        =& \sum\limits_{i=N_2}^k \alpha_i^2 \mathbb{E}[\Vert D_{i,m}\Vert ^2(\mathbb{E}[\Vert \mathbb{E}[G_{2,i,m}|\mathcal{F}_i]\Vert ^2-2\left \langle \mathbb{E}[G_{2,i,m}|\mathcal{F}_i],G_{2,i,m} \right \rangle+\Vert G_{2,i,m}\Vert ^2|\mathcal{F}_i])]\\
        =&\sum\limits_{i=N_2}^k \alpha_i^2  \mathbb{E}[\Vert D_{i,m}\Vert ^2(  \mathbb{E}[\Vert G_{2,i}\Vert ^2|\mathcal{F}_i]  - \Vert \mathbb{E}[G_{2,i,m}|\mathcal{F}_i]\Vert ^2 )]\\
        \le&  \sum\limits_{i=N_2}^k \alpha_i^2 \mathbb{E}[\Vert D_{i,m}\Vert ^2\mathbb{E}[\Vert G_{2,i,m}\Vert ^2|\mathcal{F}_i]]
        = C_2\sum\limits_{i=N_2}^k \alpha_i^2\mathbb{E}[\Vert D_{i,m}\Vert ^2] < \infty, 
        \end{aligned}
    \end{equation*}
    where $\langle \cdot,\cdot \rangle$ represents the inner product of two matrices, the last inequality holds because of Assumptions \ref{assumption2}.3, \ref{assumption2}.5(a) and Lemma \ref{lemma1}.
    So $\sum\limits_{i=N_2}^k\alpha_i W_{i,m}$  is an $\mathbb{L}^2$ martingale. By the martingale convergence theorem, for every $u$, it converges.
    
    Let $a_i = \prod\limits_{j=N_2}^{i}\frac{1}{1-\tilde{\alpha}_j}$, and $\forall i>N_2,\ 0<a_i \le a_{i+1}$, we have $\lim_{i\rightarrow \infty}a_i \ge \lim_{i\rightarrow \infty}e^{\sum_{j=N_2}^i\tilde{\alpha}_j} \ge \lim_{i\rightarrow \infty}e^{\epsilon \sum_{j=N_2}^i\alpha_j} = \infty.$
    Furthermore, $$\sum\limits_{i=N_2}^k\prod_{j=i+1}^{k}(1-\tilde{\alpha}_j)\alpha_iW_{i,m} = \prod\limits_{j=N_2}^{k}(1-\tilde{\alpha}_j)\sum\limits_{i=N_2}^k\frac{1}{\prod_{j=N_2}^i(1-\tilde{\alpha}_j)}\alpha_iW_{i,m} = \frac{1}{a_k}\sum\limits_{i=N_2}^ka_i\alpha_iW_{i,m}.$$
   Because of $\sum_{i=N_2}^{\infty}\alpha_iW_{i,m}<\infty$, and $\lim_{i\rightarrow \infty}a_i = \infty$, by Kronecker's Lemma  \citep{Shiryaev1995ProbabilityE} we can reach the conclusion that for every $m$, $\lim_{k \rightarrow \infty}\frac{1}{a_k}\sum\limits_{i=N_2}^ka_i\alpha_iW_{i,m} = 0$. 
   Thus, $\lim_{k \rightarrow \infty} \sup_m \Vert \sum\limits_{i=N_2}^{k}\prod_{j=i+1}^k(1-\tilde{\alpha}_j)\alpha_i W_{i,m}\Vert  = 0.$ The uniform convergence is obvious because the supremum is taken in a finite set. A similar conclusion can be drawn for part (4), $\lim_{k \rightarrow \infty}\sup_m\Vert \sum\limits_{i=N_2}^{k}\prod_{j=i+1}^k(1-\tilde{\alpha}_j)\alpha_i V_{i,m}\Vert  =0.$ All the inequalities hold uniformly with respect to $m$, so by Equation (\ref{eq9}), $\sup_{k,m} \Vert D_{k,m}\Vert ^2<\infty,w.p.1$, which ends the proof.
      \end{proof}

Next, we proceed to prove the main part of the convergence theory. The key idea is to transform the discrete sequence $\{D_{k,m},\lambda_k\}$ into a continuous form. The iterative formulas (\ref{equation4}) and (\ref{equation5}) are approximated by a system of ODEs. First, we construct the corresponding step interpolation functions $\{D_m^k(t),\lambda^k(t)\}$  for the sequence. Then, we demonstrate that these functions $\{D_m^k(t),\lambda^k(t)\}$ converge to a solution of the ODE as the number of iterations becomes sufficiently large. The asymptotic stability point of this ODE corresponds to the limiting point of the iterative sequence $\{D_{k,m},\theta_k\}$. Finally, we show that the condition satisfied by this convergence point is $D=0$, $\lambda= \bar{\lambda}^M$.

    We begin the process of continuity by introducing the notation. Let $t_0 = 0$, $t_n = \sum_{i=0}^{n-1}\alpha_i$. Define $m(t)=\max\{n:t_n\le t\}$ for $t\ge 0$, and $m(t)=0$ for $t<0$. The function $m(t)$ represents the number of iterations that have occurred by the time $t$.
    
     Define the piecewise constant interpolation function for $D_k$: $D_m^0(t) = D_{k,m}, \forall t_k\le t <t_{k+1}$, $D_m^0(t) = D_{0,m}, \forall t<0.$ Define the translation process $D_m^n(t) = D_m^0(t_n+t)$, $t\in(-\infty,\infty).$
     
    Similarly define the piecewise constant interpolation function $\lambda^0(t)$ and the translation function $\lambda^n(t)$ of $\lambda$ as $\lambda^0(t) = \lambda_k,\forall t_k\le t <t_{k+1}$;  $\lambda^0(t) = 0, \forall t\le t_0.$   $\lambda^n(t) = \lambda^0(t_n+t),t\in(-\infty,\infty).$

    Rewrite the $m$th block of iterative equation (\ref{equation4})  
    %$D_{k+1,m} = D_{k,m} + \alpha_k( G_{1,k}(X,Y,\theta_{k,m})-G_{2,k}(X,Y,\theta_{k,m})D_{k,m})$ 
    as
    \begin{equation}\label{eq10}    D_{k+1,m}=D_{k,m}+\alpha_k(H(u_m,D_{k,m},\lambda_k)+b_{1,k}+b_{2,k}+V_{k,m}+W_{k,m}),
    \end{equation}
    where $H(u,D,\lambda) := \nabla_{\theta}p(y|\theta)|_{\theta=\theta(u;\lambda)}-p(y|\theta(u;\lambda))D,$
    $b_{1,k}:= \mathbb{E}[G_{1,k,m}|\mathcal{F}_k]-\nabla_{\theta}p(y|\theta)|_{\theta=\theta_{k,m}} = 0,\quad b_{2,k} := p(y|\theta_{k,m})D_{k,m} - \mathbb{E}[G_{2,k}|\mathcal{F}_k]D_{k,m} = 0,$ $V_{k,m} := G_{1,k,m}-\mathbb{E}[G_{1,k,m}|\mathcal{F}_k],\quad W_{k,m} := \mathbb{E}[G_{2,k,m}|\mathcal{F}_k]D_{k,m} - G_{2,k,m}D_{k,m},$
    where $b_{1,k}$ and $b_{2,k}$ are equal to 0 due to Assumption \ref{assumption2}.4.
    
    Define $H_{k,m} = H(u_m,D_{k,m},\lambda_k)$ for $k\ge0$, and $H_{k,m} = 0$ for $k<0$. Define the piecewise constant interpolation function of $H$ as $H_m^0(t) = \sum_{i=0}^{m(t)-1}\alpha_iH_{i,m},$ and the translation function of $H$ as $$H_m^n(t) = H_m^0( t+t_n)-H_m^0(t_n)=\sum\limits_{i=n}^{m(t_n+t)-1}\alpha_iH_{i,m},\ t\ge 0;\ \ \ H_m^n(t) = \sum\limits_{i=m(t_n+t)}^{n-1}\alpha_iH_{i,m},\ t<0.$$
    Two other terms of Equation (\ref{eq10}) are similarly defined; for simplicity, we omit the definition part of the negative numbers: 
    $ V_m^n(t) = \sum\limits_{i=n}^{m(t_n+t)-1}\alpha_iV_{i,m},\ W_m^n(t) = \sum\limits_{i=n}^{m(t_n+t)-1}\alpha_iW_{i,m}. $
    Make the Equation (\ref{eq10}) continuous and we can get
    \begin{equation}\label{eq13}
    \small
        \begin{split}
            D_m^n(t) =& D_{n,m}+\sum\limits_{i=n}^{m(t_n+t)-1}\alpha_i(H_{i,m}+V_{i,m}+W_{i,m})
            = D_m^n(0)+H_m^n(t)+V_m^n(t)+W_m^n(t)\\
            =& D_m^n(0)+\int_0^{t}H(u_m,D_m^n(s),\lambda^n(s))ds+\rho_m^n(t) +V_m^n(t)+W_m^n(t), 
        \end{split}
    \end{equation}
    where $\rho_m^n(t)=H_m^n(t)-\int_0^{t}H(u_m,D_m^n(s),\lambda^n(s))ds$.
    Since $\lambda_{k+1} = \lambda_k + \beta_kS_k+ \beta_k Z_k,$ define $\lambda_{k+1} = \lambda_k + \alpha_k\tilde{D}_k$, where $\tilde{D}_k = \frac{\beta_k}{\alpha_k}(S_k + Z_k).$
    Define $\eta^n(t) = \sum_{i=n}^{m(t_n+t)-1}\alpha_i\tilde{D}_i,$ we obtains the continuation of $\lambda_n$ as
    \begin{equation}\label{eq14}
        \lambda^n(t) = \lambda^n(0)+\eta^n(t).
    \end{equation} 
    The following lemmas reveal that $\rho_m^n(t)$, $V_m^n(t)$, $W_m^n(t)$, $\eta^n(t)$ all converge uniformly to 0 in a bounded interval of $t$. As a result, these terms can be neglected, and the asymptotic behavior of these continuous processes is governed by a system of ODEs.
    
\begin{lemma}\label{lemma3}
        Assuming that Assumptions \ref{assumption2}.1 to \ref{assumption2}.5 hold, and that $\mathbb{T}$ is a bounded interval on $\mathbb{R}$, we have $\lim_{n\rightarrow \infty}\sup_{t\in \mathbb{T},m}\Vert \rho_m^n(t)\Vert  = 0,w.p.1.$
    \end{lemma}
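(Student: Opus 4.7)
The plan is to exploit the fact that $D_m^n$ and $\lambda^n$ are piecewise constant step functions, so that the integral in the definition of $\rho_m^n(t)$ collapses to an exact Riemann sum plus a single residual edge term whose size is controlled by one step size $\alpha_k$.

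First I would fix $t \geq 0$ and set $k := m(t_n + t)$, so that $t_k - t_n \leq t < t_{k+1} - t_n$. By construction, for $s \in [t_i - t_n, t_{i+1} - t_n)$ with $n \leq i \leq k-1$ we have $\lambda^n(s) = \lambda_i$ and $D_m^n(s) = D_{i,m}$, hence $H(u_m, D_m^n(s), \lambda^n(s)) = H_{i,m}$. Partitioning $[0,t]$ along these breakpoints gives the identity
\begin{equation*}
\int_0^t H(u_m, D_m^n(s), \lambda^n(s))\, ds \;=\; \sum_{i=n}^{k-1} \alpha_i H_{i,m} \;+\; \bigl(t - (t_k - t_n)\bigr) H_{k,m}.
\end{equation*}
Since $H_m^n(t) = \sum_{i=n}^{k-1}\alpha_i H_{i,m}$, this yields the clean expression $\rho_m^n(t) = -\bigl(t - (t_k - t_n)\bigr) H_{k,m}$, whose norm is bounded by $\alpha_k \|H_{k,m}\|$. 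The case $t < 0$ is handled symmetrically with the analogous definition of $H_m^n(t)$ for negative times.

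Next I would establish a uniform bound on $\|H_{k,m}\|$ along almost every sample path. Recall $H_{k,m} = \nabla_\theta p(y|\theta(u_m;\lambda_k)) - p(y|\theta(u_m;\lambda_k))\,D_{k,m}$. By Assumption \ref{assumption2}.7, both $p$ and $\nabla_\theta p$ are continuous in $\theta$, and since $\lambda_k$ lies in the compact feasible set $\Lambda$ and there are only finitely many outer samples $u_1,\dots,u_M$, the maps $\lambda \mapsto p(y|\theta(u_m;\lambda))$ and $\lambda \mapsto \nabla_\theta p(y|\theta(u_m;\lambda))$ are uniformly bounded in $k$ and $m$. Combined with Lemma \ref{lemma2}, which yields $\sup_{k,m}\|D_{k,m}\| < \infty$ w.p.1, we conclude that $C_H := \sup_{k,m}\|H_{k,m}\| < \infty$ almost surely.

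Combining these two ingredients, for any bounded interval $\mathbb{T}$ and any $t \in \mathbb{T}$, $m \in \{1,\dots,M\}$ we obtain
\begin{equation*}
\|\rho_m^n(t)\| \;\leq\; \alpha_{m(t_n+t)}\, C_H.
\end{equation*}
Because $\sum_k \alpha_k = \infty$ forces $t_n \to \infty$, while $\mathbb{T}$ is bounded, there exists $n_0$ such that $m(t_n + t) \geq n - c$ for some constant $c$ and all $n \geq n_0$, $t \in \mathbb{T}$. Hence $\sup_{t \in \mathbb{T},\, m} \|\rho_m^n(t)\| \leq C_H \sup_{i \geq n - c} \alpha_i \to 0$ as $n \to \infty$, which is the desired claim. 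I do not anticipate a serious obstacle: the piecewise-constant structure makes the Riemann-sum residual explicit, and both the uniform boundedness of $\{D_{k,m}\}$ (already furnished by Lemma \ref{lemma2}) and the smoothness of $p$ (Assumption \ref{assumption2}.7) on the compact set $\Lambda$ are available off the shelf; finiteness of $M$ trivializes the uniformity in $m$.
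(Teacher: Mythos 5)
Your argument follows the paper's proof almost step for step: both identify $\rho_m^n(t)$ as exactly the residual of the Riemann sum over the last partial subinterval (the paper writes it as $-\int_{t_{n+d}}^{t_n+t}H(u_m,D_m^0(s),\theta^0(s))\,ds$, which on that subinterval is constant and equals your $-(t-(t_k-t_n))H_{k,m}$), both bound this residual by $\alpha_{m(t_n+t)}\cdot\sup\|H\|$, and both close by letting $\alpha_k\to 0$.

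Where you depart is in sourcing the uniform bound $\sup_{k,m}\|H_{k,m}\|<\infty$. You invoke Assumption~\ref{assumption2}.7 (continuity of $p$ and $\nabla_\theta p$) together with compactness of $\Lambda$. But the lemma is stated under Assumptions~\ref{assumption2}.1--\ref{assumption2}.5 only; Assumption~\ref{assumption2}.7 (and the compactness of $\Lambda$, which sits in Assumption~\ref{assumption1}) is outside the stated hypotheses. The paper obtains the same bound entirely within scope: by Assumption~\ref{assumption2}.4, $\nabla_\theta p(y\mid\theta(u;\lambda_k)) = \mathbb{E}[G_{1,k,m}\mid\mathcal{F}_k]$ and $p(y\mid\theta(u;\lambda_k)) = \mathbb{E}[G_{2,k,m}\mid\mathcal{F}_k]$, and then Jensen plus the second-moment bounds of Assumptions~\ref{assumption2}.1 and~\ref{assumption2}.3 give $\|\nabla_\theta p\|\le\sqrt{C_1}$ and $\|p\|\le\sqrt{C_2}$, hence $\|H_{k,m}\|\le\sqrt{C_1}+\sqrt{C_2}\|D_{k,m}\|\le\bar C$ w.p.\,1 by Lemma~\ref{lemma2}. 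So your derivation of the bound should be replaced by this moment-based argument to stay within the lemma's hypotheses. One further small imprecision: for $t<0$ the claim $m(t_n+t)\ge n-c$ with $c$ a fixed constant fails once $\alpha_i\to 0$, since covering a fixed backward interval requires ever more steps; what you actually need (and what does hold) is that $m(t_n+t)\ge m(t_n-\sup_{\mathbb T}|t|)\to\infty$ uniformly on $\mathbb{T}$, so $\alpha_{m(t_n+t)}\to 0$ uniformly. With those two corrections the proposal matches the paper's proof.
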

    \begin{proof}
        Given $T>0$, consider an arbitrary time $t\in[0,T]$. If there exists an integer $d$ such that $t=t_{n+d}-t_n$, then
       \begin{equation*}
       \footnotesize
        \begin{aligned}
            &\rho_m^n(t) = H_m^n(t)-\int_0^{t}H(u_m,D_m^n(s),\lambda^n(s))ds=\sum\limits_{i=n}^{m(t_n+t)-1}\alpha_iH_{i,m}-\int_0^{t_{n+d}-t_n}H(u_m,D_m^n(s),\lambda^n(s))ds \\ =& \sum\limits_{i=n}^{m(t_{n+d})-1}\alpha_iH_{i,m}-\int_{t_n}^{t_{n+d}}H(u_m,D_m^n(s-t_n),\lambda^n(s-t_n))ds = \sum\limits_{i=n}^{n+d- 1}\alpha_iH_{i,m}-\int_{t_n}^{t_{n+d}}H(u_m,D_m^0(s),\theta^0(s))ds = 0.
        \end{aligned}        
        \end{equation*}
        The last equality sign comes from the definition of $H_{i,m}$:  $H_{i,m} = H(u_m,D_{i,m},\lambda_i).$
        If there exists an integer $d$ satisfying $t_{n+d}<t_n+t<t_{n+d+1}$, then
        \begin{equation*}
            \begin{aligned}
                \rho_m^n(t) = \sum\limits_{i=n}^{n+d-1}\alpha_iH_{i,m}-\int_{t_n}^{t_{n}+t}H(u_m,D_m^0(s),\theta^0(s))ds=-\int_{t_{n+d}}^{t_{n}+t}H(u_m,D_m^0(s),\theta^0(s))ds.    
            \end{aligned}
        \end{equation*}
    Also by Assumptions \ref{assumption2}.1, \ref{assumption2}.3 and \ref{assumption2}.4, $\Vert \nabla_{\theta}p(y|\theta)|_{\theta = \theta(u;\lambda_k)}\Vert  = \Vert \mathbb{E}[G_{1,k,m}|\mathcal{F }_k]\Vert \le \sqrt{C_1}$, $\Vert p(y|\theta(u;\lambda_k))\Vert  =\Vert \mathbb{E}[G_{2,k,m}|\mathcal{F}_k]\Vert \le \sqrt{C_2}.$ Furthermore, note that 
 \begin{equation*}
        \begin{aligned}
        \Vert H(u_m,D_{k,m},\lambda_k)\Vert  = \Vert \nabla_{\theta}p(y|\theta)|_{\theta=\theta(u;\lambda_k)}-p(y|\theta)D_{k,m}\Vert   \le \sqrt{C_1}+\sqrt{C_2}\Vert D_{k,m}\Vert  \le \bar{C},\ w.p.1,
        \end{aligned}
    \end{equation*}
     where the last equality sign comes from Lemma \ref{lemma2} with $\Vert D_{k,m}\Vert$ being uniformly bounded. This leads to
      %\begin{equation*}
            $\Vert \rho_m^n(t)\Vert \le\Vert \int_{t_{n+d}}^{t_n+t}H(u_m,D_m^0(s),\theta^0(s))ds\Vert \le\alpha_{n+d}\bar{C}.$
        %\end{equation*}
        This holds for almost every orbit, the right end being independent of $t$ and $u$. By Assumption \ref{assumption2}.5, $\alpha_k\rightarrow 0$, this leads to the conclusion that $\lim_{n\rightarrow \infty}\sup_{t\in \mathbb{T}, m}\Vert \rho_m^n(t)\Vert  = 0,w.p.1.$  
     \end{proof}

     \begin{lemma}\label{lemma5}
         Assuming that Assumptions \ref{assumption2}.1 to \ref{assumption2}.5 hold, and that $\mathbb{T}$ is a bounded interval on $\mathbb{R}$, then when $n\rightarrow \infty$, $\sup_{t\in \mathbb{T},m}\Vert V_m^n(t)\Vert {\longrightarrow}0$ w.p.1. 
    \end{lemma}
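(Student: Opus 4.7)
The plan is to recognize that $V_m^n(t)$ is a tail increment of a square-integrable martingale and then use the martingale convergence theorem to reduce uniform smallness on a bounded $t$-window to a Cauchy-type statement for the partial sums. First, I would verify the martingale-difference structure: by Assumption \ref{assumption2}.4, $\mathbb{E}[V_{i,m}\mid \mathcal{F}_i]=0$, and by Assumptions \ref{assumption2}.1 and the standard inequality $\mathbb{E}[\|X-\mathbb{E}[X\mid\mathcal{F}]\|^2\mid\mathcal{F}]\le \mathbb{E}[\|X\|^2\mid\mathcal{F}]$, the conditional second moments satisfy $\mathbb{E}[\|V_{i,m}\|^2\mid\mathcal{F}_i]\le C_1$ uniformly in $i$ and in the fixed finite index set $m\in\{1,\dots,M\}$.

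Next, for each $m$, define the running martingale $M_k^{(m)}:=\sum_{i=0}^{k-1}\alpha_i V_{i,m}$ with respect to the filtration $\{\mathcal{F}_k\}$. Orthogonality of martingale differences yields
\begin{equation*}
\mathbb{E}\|M_k^{(m)}\|^2 \;=\; \sum_{i=0}^{k-1}\alpha_i^2\,\mathbb{E}\|V_{i,m}\|^2 \;\le\; C_1\sum_{i=0}^{\infty}\alpha_i^2 \;<\;\infty,
\end{equation*}
using Assumption \ref{assumption2}.5(a). Hence $\{M_k^{(m)}\}$ is an $\mathbb{L}^2$-bounded martingale and, by the martingale convergence theorem, converges almost surely (and in $\mathbb{L}^2$) to a finite limit $M_\infty^{(m)}$ as $k\to\infty$.

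Then I would translate the a.s.\ convergence into the desired uniform statement. Observe that by definition $V_m^n(t)=M_{m(t_n+t)}^{(m)}-M_n^{(m)}$ for $t\ge 0$, with the analogous expression $V_m^n(t)=-\bigl(M_n^{(m)}-M_{m(t_n+t)}^{(m)}\bigr)$ for $t<0$. Fix any bounded interval $\mathbb{T}\subset[-T,T]$. Because $\sum_i\alpha_i=\infty$ (Assumption \ref{assumption2}.5(a)), $t_n\to\infty$, so for $n$ large enough both indices $n$ and $m(t_n+t)$ are simultaneously at least any prescribed threshold, uniformly over $t\in\mathbb{T}$. Applying the Cauchy criterion to the a.s.\ convergent sequence $M_k^{(m)}$, for any $\varepsilon>0$ there exists $N_\varepsilon(\omega)$ such that $\|M_k^{(m)}-M_{k'}^{(m)}\|<\varepsilon$ whenever $k,k'\ge N_\varepsilon$. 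Combining these,
\begin{equation*}
\limsup_{n\to\infty}\sup_{t\in\mathbb{T}}\|V_m^n(t)\|\;=\;0 \quad \text{w.p.1},\qquad \text{for each fixed } m.
\end{equation*}

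Finally, since $m$ lies in the finite set $\{1,\dots,M\}$, taking the supremum over $m$ preserves the null set and yields $\sup_{t\in\mathbb{T},\,m}\|V_m^n(t)\|\to 0$ a.s. The only delicate point is the joint uniformity in $t$ and $m$: the $t$-uniformity is obtained cleanly from the Cauchy representation above, while $m$-uniformity is trivial for finite $M$. The argument does not require a maximal inequality, because we already control the entire tail of the martingale through the common random threshold $N_\varepsilon(\omega)$; a Doob-type maximal inequality would only be needed if we wanted a functional rate instead of a.s.\ vanishing.
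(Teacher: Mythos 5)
Your proof is correct and follows essentially the same route as the paper's: form the partial-sum martingale $\sum_{i}\alpha_i V_{i,m}$, use orthogonality of martingale increments and Assumption \ref{assumption2}.1 together with $\sum_k\alpha_k^2<\infty$ to establish $\mathbb{L}^2$-boundedness, invoke the martingale convergence theorem, express $V_m^n(t)$ as a tail increment, and take the supremum over the finite index set $m$. The only cosmetic difference is that you spell out the Cauchy-criterion step that the paper leaves implicit; also note that $\mathbb{E}[V_{i,m}\mid\mathcal{F}_i]=0$ holds by the very definition $V_{i,m}=G_{1,i,m}-\mathbb{E}[G_{1,i,m}\mid\mathcal{F}_i]$ rather than from Assumption \ref{assumption2}.4, which is a harmless misattribution.
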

    \begin{proof}
     Let $M_{n,m}=\sum_{i=0}^{n-1}\alpha_iV_{i,m}$, so
     \begin{equation*}\small
         \begin{aligned}
             \mathbb{E}[M_{n,m}|\mathcal{F}_{n-1}]=\sum_{i=0}^{n-1}\alpha_i\mathbb{E}[(G_{1,i,m}-\mathbb{E}[G_{1,i,m}|\mathcal{F}_i])|\mathcal{F}_{n-1}] = \sum_{i=0}^{n-2}\alpha_i(G_{1,i,m}-\mathbb{E}[G_{1,i,m}|\mathcal{F}_i]) = M_{n-1,m}.
         \end{aligned}
     \end{equation*}
        Thus $M_{n,m}$ is a martingale for every $m$. Note that for every $i<j$, $\mathbb{E}[\langle V_{i,m},V_{j,m}\rangle] = \mathbb{E}[\mathbb{E}[\langle V_{i,m},V_{j,m}\rangle|\mathcal{F}_j]]=\mathbb{E}[\langle V_{i,m},\mathbb{E}[V_{j,m}|\mathcal{F}_j]\rangle]=0,$ 
        so we can derive that:
        \begin{equation*}\footnotesize
            \begin{aligned}
                &\mathbb{E}[\Vert M_{n,m}\Vert ^2] = \mathbb{E}[\Vert \sum\limits_{i=0}^{n-1}\alpha_iV_{i,m}\Vert ^2] = \sum\limits_{i=0}^{n-1}\alpha_i^2\mathbb{E}[\Vert V_{i,m}\Vert ^2] = \sum\limits_{i=0}^{n-1}\alpha_i^2\mathbb{E}[\Vert G_{1,i,m}-\mathbb{E}[G_{1,i,m}|\mathcal{F}_i]\Vert ^2] \\=&\sum\limits_{i=0}^{n-1}\alpha_i^2\mathbb{E}[\mathbb{E}[\Vert G_{1,i,m}-\mathbb{E}[G_{1,i,m}|\mathcal{F}_i]\Vert ^2|\mathcal{F}_i]]=\sum\limits_{i=0}^{n-1}\alpha_i^2\mathbb{E}[\mathbb{E}[\Vert G_{1,i,m}\Vert ^2-2\langle G_{1,i,m},\mathbb{E}[G_{1,i,m}|\mathcal{F}_i]\rangle+\Vert \mathbb{E}[G_{1,i,m}|\mathcal{F}_i]\Vert ^2|\mathcal{F}_i]]\\=&\sum\limits_{i=0}^{n-1}\alpha_i^2(\mathbb{E}[\mathbb{E}[\Vert G_{1,i,m}\Vert ^2|\mathcal{F}_i]]-\Vert \mathbb{E}[G_{1,i,m}|\mathcal{F}_i]\Vert ^2) \le\sum\limits_{i=0}^{n-1}\alpha_i^2\mathbb{E}[\mathbb{E}[\Vert G_{1,i,m}\Vert ^2|\mathcal{F}_i]] \le\sum\limits_{i=0}^{n-1}C_1 \alpha_i^2  < \infty . 
            \end{aligned}
        \end{equation*}
        The right-hand side is independent of $m$. Therefore, $M_{n,m}$ is an  $\mathbb{L}^2$ martingale for every $m$ and by the martingale convergence theorem $\lim_n\sup_m \Vert M_{n,m}-M_m\Vert =0$. The uniform convergence is obvious because the supremum is taken in a finite set.
        So when $n\rightarrow \infty$, $\sup_{t\in \mathbb{T}, m}\Vert V_m^n(t)\Vert  = \sup_{t\in \mathbb{T}, m}\Vert \sum\limits_{i=n}^{m(t_n+t)-1}\alpha_iV_{i,m}\Vert =\sup_{t\in \mathbb{T}, m}\Vert M_{m(t_n+t),m}-M_{n,m}\Vert  \rightarrow 0,$ 
        i.e., $\sup_{t\in \mathbb{T}, m}\Vert V_m^n(t)\Vert {\longrightarrow}0$ w.p.1 when $n\rightarrow \infty$.
      \end{proof}
      \begin{lemma}
        Assuming that Assumptions \ref{assumption2}.1-\ref{assumption2}.5 hold, $\mathbb{T}$ is a bounded interval on $\mathbb{R}$, when $n\rightarrow \infty$, $\sup_{t\in \mathbb{T}, m}\Vert W_m^n(t)\Vert {\longrightarrow}0$ w.p.1.
    \end{lemma}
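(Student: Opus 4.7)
The plan is to mimic the martingale argument used for $V_m^n(t)$ in Lemma~\ref{lemma5}, the only new ingredient being that the summand $W_{i,m}$ contains the multiplicative factor $D_{i,m}$, which must be controlled uniformly via the second-moment bound established in Lemma~\ref{lemma1}. I will define the partial sums $N_{n,m}:=\sum_{i=0}^{n-1}\alpha_i W_{i,m}$ and show that, for each fixed $m$, $\{N_{n,m}\}_n$ is an $\mathbb{L}^2$-bounded martingale with respect to $\{\mathcal{F}_n\}$; once convergence of $N_{n,m}$ as $n\to\infty$ is established, the desired uniform smallness of $W_m^n(t)=N_{m(t_n+t),m}-N_{n,m}$ over a bounded interval $\mathbb{T}$ and over the finite index set $m=1,\dots,M$ follows immediately.

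For the martingale property, note that $D_{i,m}\in\mathcal{F}_i$, so
\begin{equation*}
\mathbb{E}[W_{i,m}\mid\mathcal{F}_i]=\bigl(\mathbb{E}[G_{2,i,m}\mid\mathcal{F}_i]-\mathbb{E}[G_{2,i,m}\mid\mathcal{F}_i]\bigr)D_{i,m}=0,
\end{equation*}
which gives $\mathbb{E}[N_{n+1,m}\mid\mathcal{F}_n]=N_{n,m}$. For $\mathbb{L}^2$-boundedness, use the pairwise orthogonality of martingale differences and conditioning on $\mathcal{F}_i$:
\begin{equation*}
\mathbb{E}[\|W_{i,m}\|^2\mid\mathcal{F}_i]\le \|D_{i,m}\|^2\,\mathbb{E}[\|G_{2,i,m}-\mathbb{E}[G_{2,i,m}\mid\mathcal{F}_i]\|^2\mid\mathcal{F}_i]\le C_2\|D_{i,m}\|^2,
\end{equation*}
where Assumption~\ref{assumption2}.3 controls the centered second moment of $G_{2,i,m}$. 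Taking expectations and invoking Lemma~\ref{lemma1} to bound $\sup_{i,m}\mathbb{E}[\|D_{i,m}\|^2]$ uniformly, together with Assumption~\ref{assumption2}.5(a) that $\sum_k\alpha_k^2<\infty$, yields
\begin{equation*}
\sup_n\mathbb{E}[\|N_{n,m}\|^2]=\sum_{i=0}^{\infty}\alpha_i^2\,\mathbb{E}[\|W_{i,m}\|^2]\le C_2\sup_{i,m}\mathbb{E}[\|D_{i,m}\|^2]\sum_{i=0}^{\infty}\alpha_i^2<\infty,
\end{equation*}
so the martingale convergence theorem gives $N_{n,m}\to N_{\infty,m}$ almost surely for each $m$.

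To conclude, observe that
\begin{equation*}
\sup_{t\in\mathbb{T},\,m}\|W_m^n(t)\|=\sup_{t\in\mathbb{T},\,m}\bigl\|N_{m(t_n+t),m}-N_{n,m}\bigr\|\le\sup_m\sup_{k\ge n}\|N_{k,m}-N_{n,m}\|,
\end{equation*}
and the right-hand side converges to $0$ almost surely as $n\to\infty$, since each of the finitely many Cauchy-convergent sequences $\{N_{n,m}\}_n$ ($m=1,\dots,M$) can be controlled simultaneously by taking the maximum of finitely many tail bounds. The only real subtlety—and the step that distinguishes this lemma from Lemma~\ref{lemma5}—is the interplay between $G_{2,i,m}$ and $D_{i,m}$ in $\mathbb{E}[\|W_{i,m}\|^2]$; once this is handled by the conditional-expectation trick together with the uniform $\mathbb{L}^2$-bound from Lemma~\ref{lemma1}, the rest of the argument is a direct replication of the martingale convergence step in Lemma~\ref{lemma5}.
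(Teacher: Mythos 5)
Your proposal is correct and takes essentially the same route as the paper: define the partial sums $\sum_i \alpha_i W_{i,m}$ as a martingale (which follows because $D_{i,m}\in\mathcal{F}_i$), bound its second moment by $C_2\sum_i\alpha_i^2\,\mathbb{E}\|D_{i,m}\|^2<\infty$ via Assumption \ref{assumption2}.3 and Lemma \ref{lemma1}, and invoke the martingale convergence theorem together with the finiteness of the index set $\{1,\dots,M\}$ to obtain uniformity. The paper's proof is terser — it refers back to the analogous computation in Lemma \ref{lemma2} (where this exact $\mathbb{L}^2$-martingale bound for $\sum\alpha_iW_{i,m}$ was already established) and to the structure of Lemma \ref{lemma5} — but your write-up is simply that same argument with the details spelled out.
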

    \begin{proof}
        Define $M'_{n,m}=\sum_{i=0}^{n-1}\alpha_iw_{i,m}$, then 
        %\begin{equation*}
        %\begin{split}
            %&\mathbb{E}[M'_{n,m}|\mathcal{F}_{n-1}]=\sum\limits_{i=0}^{n-1}\alpha_i\mathbb{E}[W_{i,m}|\mathcal{F}_{n-1}] =\sum\limits_{i=0}^{n-1}\alpha_i\mathbb{E}[\mathbb{E}[G_{2,i,m}|\mathcal{F}_i]D_{i,m}-G_{2,i,m}D_{i,m}|\mathcal{F}_i] \\ =&\sum\limits_{i=0}^{n-1}\alpha_i(\mathbb{E}[G_{2,i,m}|\mathcal{F}_i]D_{i,m}-G_{2,i,m}D_{i,m}) = \sum\limits_{i=0}^{n-2}\alpha_i(\mathbb{E}[G_{2,i,m}|\mathcal{F}_i]D_{i,m}-G_{2,i,m}D_{i,m}) = M'_{n-1,m}.
        %\end{split}
        %\end{equation*}
        $M'_{n,m}$ is a martingale sequence by Lemma \ref{lemma2}. Similar to lemma \ref{lemma3}, we can prove that $\mathbb{E}[\Vert M'_n\Vert ^2]\le\sum_{i=0}^{n-1}C_2 \alpha_i^2\mathbb{E}\Vert D_i\Vert ^2\le \infty$, so $M'_{n,m}$ is an $\mathbb{L}^2 $ martingale. By the martingale convergence theorem, we can reach the same conclusion.
      \end{proof}

     \begin{lemma}\label{lemma7}
        Assuming Assumptions \ref{assumption2}.1-\ref{assumption2}.6 hold, $\mathbb{T}$ is a bounded interval on $\mathbb{R}$, then $\\ \lim_{n\rightarrow \infty}\sup_{t\in \mathbb{T}}\Vert \eta^n(t)\Vert  = 0,w.p.1.$
    \end{lemma}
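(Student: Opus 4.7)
The plan is to exploit the timescale separation $\beta_k = o(\alpha_k)$ together with a uniform bound on the driving vector $S_i + Z_i$. Intuitively, in one ``fast-clock'' unit (measured through $\alpha_i$), the slow iterate $\lambda$ changes by an amount that is of order $\beta_i/\alpha_i \to 0$, so the accumulated $\eta^n(t)$ over any bounded fast-clock interval vanishes.

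First I would show $\sup_{i}\Vert S_i + Z_i\Vert \le C$ w.p.1 for some finite constant $C$. For $S_i$, I use the explicit form from Equation~(\ref{eq:s}): continuity of $A(\lambda)$, $B(\lambda)$, $C(\lambda)$ on the compact set $\Lambda$ (Assumption~\ref{assumption2}.7) gives uniform bounds on $A(\lambda_i)$, $B(\lambda_i)$, $C(\lambda_i)$, while Lemma~\ref{lemma2} provides $\sup_{k,m}\Vert D_{k,m}\Vert < \infty$ w.p.1; combining these, $\Vert S_i\Vert$ is uniformly bounded w.p.1. For $Z_i$, the projection characterization gives $\beta_i Z_i = \lambda_{i+1} - \lambda_i - \beta_i S_i$, so $\Vert Z_i\Vert \le \Vert S_i\Vert + \Vert \lambda_{i+1}-\lambda_i\Vert/\beta_i$; since $\lambda_{i+1}$ is the closest point in $\Lambda$ to $\lambda_i + \beta_i S_i$ and $\lambda_i \in \Lambda$, we have $\Vert \lambda_{i+1}-\lambda_i\Vert \le \beta_i\Vert S_i\Vert$, so $\Vert Z_i\Vert \le 2\Vert S_i\Vert$, hence uniformly bounded.

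Next I would estimate $\eta^n(t)$ directly. For $t \ge 0$, using $\tilde{D}_i = (\beta_i/\alpha_i)(S_i + Z_i)$,
\begin{equation*}
\Vert \eta^n(t)\Vert \;\le\; \sum_{i=n}^{m(t_n+t)-1} \alpha_i\,\frac{\beta_i}{\alpha_i}\,\Vert S_i + Z_i\Vert \;\le\; C \Bigl(\sup_{i\ge n}\tfrac{\beta_i}{\alpha_i}\Bigr)\sum_{i=n}^{m(t_n+t)-1}\alpha_i .
\end{equation*}
By construction, $\sum_{i=n}^{m(t_n+t)-1}\alpha_i \le t + \alpha_{m(t_n+t)-1} \le T + \alpha_0$ whenever $t \in \mathbb{T} \subset [-T,T]$ (the case $t<0$ is handled symmetrically, reversing the summation bounds). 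Hence
\begin{equation*}
\sup_{t\in \mathbb{T}}\Vert \eta^n(t)\Vert \;\le\; C\,(T+\alpha_0)\sup_{i\ge n}\frac{\beta_i}{\alpha_i}.
\end{equation*}
Assumption~\ref{assumption2}.6 gives $\beta_i/\alpha_i \to 0$, so the right-hand side tends to $0$ as $n\to\infty$, w.p.1, which is the claim.

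The only mildly delicate point is the uniform bound on $S_i + Z_i$ on every sample path; once this is in place, the remainder is a straightforward comparison between a $\beta$-weighted and an $\alpha$-weighted partial sum on a bounded time window. Because the argument is pathwise and the constants $C$, $T$ are non-random, the uniformity in $t \in \mathbb{T}$ and the almost-sure convergence follow immediately.
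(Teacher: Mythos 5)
Your proposal is correct and follows essentially the same strategy as the paper: bound $\Vert S_k\Vert$ uniformly via Lemma~\ref{lemma2} and continuity of $A,B,C$ on compact $\Lambda$, control the projection term $Z_k$ in terms of $\Vert S_k\Vert$, factor $\beta_k/\alpha_k$ out of $\tilde D_k$, and bound $\sum_{i=n}^{m(t_n+t)-1}\alpha_i$ by the length of $\mathbb{T}$. The one small deviation is the handling of $Z_k$: you use nonexpansiveness of the Euclidean projection onto a convex set, $\Vert\Pi_\Lambda(\lambda_i+\beta_i S_i)-\Pi_\Lambda(\lambda_i)\Vert\le\beta_i\Vert S_i\Vert$, which gives $\Vert Z_i\Vert\le 2\Vert S_i\Vert$, whereas the paper applies the normal-cone variational inequality $Z_k^\top(\lambda-\lambda_{k+1})\ge 0$ at $\lambda=\lambda_k$ to get the slightly sharper $\Vert Z_k\Vert\le\Vert S_k\Vert$. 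Both are standard and the constant does not matter here; the remainder of the argument and the final appeal to Assumption~\ref{assumption2}.6 are the same.
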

    \begin{proof}
        Given $T_0>0$, for every $t\in[0,T_0]$,  $\lambda_{k+1} = \lambda_k+\beta_k(S_k+Z_k)$, the direction of $Z_k$ is the projection direction from $\lambda_k+\beta_kS_k$ to feasible region $\Lambda$. By the property of the projection operator, $Z_k$ satisfies $Z_k^{\top}(\lambda_k-\lambda_{k+1})\ge 0,\ \forall \lambda \in \Lambda$. Furthermore, $$0\ge -Z_k^{\top}(\lambda_k-\lambda_{k+1})=-Z_k^{\top}(-\beta_k(S_k+Z_k))=\beta_kZ_k^{\top}S_k+\beta_k\Vert Z_k\Vert ^2.$$ Thus, $0\le \beta_k\Vert Z_k\Vert ^2\le -\beta_kZ_k^{\top}S_k\le\beta_k\Vert Z_k\Vert \cdot\Vert S_k\Vert.$ 
        Therefore, $\Vert Z_k\Vert \le\Vert S_k\Vert $ and 
        $\Vert \tilde{D}_k\Vert =\Vert \frac{\beta_k}{\alpha_k}(Z_k+S_k)\Vert \le\frac{\beta_k}{\alpha_k}(\Vert Z_k\Vert +\Vert S_k\Vert )\le\frac{2\beta_k\Vert S_k\Vert }{\alpha_k}.$ We can get boundness of $\Vert S_k\Vert$ due to Assumption \ref{assumption2}.7 and the boundness of $\Vert D_k\Vert$ and other terms. So when $k\rightarrow \infty$,$$\Vert \eta^n(t)\Vert =\Vert \sum\limits_{k=n}^{m(t_n+t)-1}\alpha_k\tilde{D}_k\Vert \le T_0\sup_{k\ge n}\Vert \tilde{D}_k\Vert \le  \frac{2\beta_k T_0}{\alpha_k}\sup_k\Vert S_k\Vert \rightarrow 0.$$
        The zero limit comes from Assumption \ref{assumption2}.6: $\ \beta_k=o(\alpha_k)$, which is one of the essential conditions for the convergence of NMTS algorithms. Then $\lim_{n\rightarrow \infty}\sup_{t\in \mathbb{T}}\Vert \eta^n(t)\Vert  = 0$ w.p.1.   
    \end{proof}

    Relate Equation (\ref{eq13}) and Equation (\ref{eq14}):
    \begin{equation}\label{eq11}\small
    \left\{
    \begin{aligned}
        D_{1}^n(t) =&  D_{1}^n(0)+\int_0^tH(u_1,D_{1}^n(s),\lambda^n(s))ds+\rho_{u_1}^n(t)+V_{u_1}^n(t)+W_{u_1}^n(t)\\
        \cdots \\
         D_{M}^n(t) =& D_{M}^n(0)+\int_0^tH(u_M, D_{M}^n(s),\lambda^n(s))ds+\rho_{u_M}^n(t)+V_{u_M}^n(t)+W_{u_M}^n(t)\\
        \lambda^n(t) =& \lambda^n(0)+\eta^n(t).
    \end{aligned}  \right.
    \end{equation}

      We show below, by the asymptotic property of this set of ODEs, that the sequence $D_{k.m}$ converges uniformly to the gradient $\nabla_{\theta}\log p(y|\theta)|_{\theta=\theta_{k,m}}$, where $\nabla_{\theta}\log p(y|\theta)|_{\theta= \theta_{k,m}}$ is a long vector with $T\times d$ dimensions and the $t$th block $\frac{\nabla_{\theta}p(Y_t|\theta_{k,m})}{p(Y_t|\theta_{k,m})}.$

    \textbf{Proof of Theorem \ref{thm1}}:

    \begin{proof}
    By Lemma \ref{lemma2}, $D_{k,m}$ is uniformly bounded, and $\lambda_k$ is also uniformly bounded by the projection operator. The functions $D_m^n(t)$ and $\lambda^n(t)$ are constructed by interpolating $D_{k,m}$ and $\lambda_k$, it follows that $\{D_{m}^n(t)\}_{m=1}^M$ and $\lambda^n(t)$ are uniformly bounded for almost every orbit. On the other hand, by Lemmas \ref{lemma3}-\ref{lemma7}, the sequences $\{D_{m}^n(t)\}_{m=1}^M$ and $\lambda^n(t)$ are equicontinuous along almost every sample path on every finite interval. Applying the Arzel$\grave{a}$-Ascoli theorem, we conclude that there exists a uniformly convergent subsequence of $\{D_{m}^n(t)\}_{m=1}^M$ and $\lambda^n(t)$ for almost every orbit. Let the limit of this subsequence be $\{D_{m}(t)\}_{m=1}^M$ and $\lambda(t)$.
    
    Note that in Lemma \ref{lemma1}, we proved that $H$ is uniformly bounded for almost all orbits. By the dominated convergence theorem, we can interchange the integrals and limits when taking the limit. Taking $n\rightarrow\infty$ in Equation (\ref{eq11}) and applying the uniform convergence established in  Lemmas \ref{lemma3}-\ref{lemma7}, Equation (\ref{eq11}) simplifies to
    \begin{equation*}\small
    \left\{
      \begin{aligned}
        D_{1}(t) =&  D_{1}(0)+\int_0^tH(u_1,D_{u_1}(s),\lambda(s))ds\\
        \cdots \\
         D_{M}(t) =& D_{M}(0)+\int_0^tH(u_M, D_{m}(s),\lambda(s))ds\\
        \lambda(t) =& \lambda(0).
    \end{aligned}  \right.
    \end{equation*}
        Its differential form is
    \begin{equation*}\small
        \left\{
        \begin{aligned}
        \dot{D}_{1}(t) =&  H(u_1,D_{1}(t),\lambda(t))\\
        \cdots \\
         \dot{D}_{M}(t) =&  H(u_M,D_{M}(t),\lambda(t))\\
        \dot{\lambda}(t) =& 0.
    \end{aligned}\right.
    \end{equation*}
    Then$$\lambda(t)=\lambda(0):= \tilde{\lambda}, \ \dot{D}_{u_m}(t)=H(u_m,D_{m}(t), \tilde{\lambda})=\nabla_{\theta}p(y|\theta(u;\lambda))|_{(u;\lambda)=(u_m; \tilde{\lambda})}-p(Y, \tilde{\lambda})D_{m}(t).$$
      This is a first-order linear ODE for a matrix $D$. For every $u$, construct the Lyapunov function as
      $$V(t) = \frac{1}{2}\Vert \nabla_{\theta}p(y|\theta)|_{\theta=\theta(u; \tilde{\lambda})}-p(y|\theta(u; \tilde{\lambda}))D_m(t)\Vert ^2,$$ 
      then 
      \begin{equation*}
      \footnotesize
      \begin{aligned}
          \dot{V} = -tr\bigg(p(y|\theta(u; \tilde{\lambda}))\bigg(\nabla_{\theta}p(y|\theta(u; \tilde{\lambda}))-p(y|\theta(u; \tilde{\lambda}))D_m(t)\bigg) \cdot \bigg(\nabla_{\theta}p(y|\theta(u; \tilde{\lambda}))-p(y|\theta(u; \tilde{\lambda}))D_m(t)\bigg)^{\top}\bigg) < 0,\\
      \end{aligned}
      \end{equation*}
      so  $D_m(t)$ has unique global asymptotic stable point of $p(y|\theta(u; \tilde{\lambda}))^{-1}\nabla_{\theta}p(y|\theta)|_{\theta=\theta(u_m; \tilde{\lambda})}.$
      Since $(D_{k,m},\lambda_{k})$ and $(D_m^n(\cdot),\lambda_m^n(\cdot))$ have the same asymptotic performance, so$$(D_{k,m},\lambda_k)\rightarrow (p(y|\theta(u; \tilde{\lambda}))^{-1}\nabla_{\theta}p(y|\theta)|_{\theta=\theta(u; \tilde{\lambda})},  \tilde{\lambda}).$$
      
   Note that
   \begin{equation*}
   \small
       \begin{aligned}
         \bigg|\bigg|D_{k,m}-\nabla_{\theta}\log p(y|\theta(u;\lambda))|_{\theta = \theta(u_m;\lambda_k)}\bigg|\bigg|\le&\bigg|\bigg|D_{k,m}-p(y|\theta(u; \tilde{\lambda}))^{-1}\nabla_{\theta}p(y|\theta)|_{\theta=\theta(u_m; \tilde{\lambda})}\bigg|\bigg| \\ +\bigg|\bigg|p(y|\theta(u; \tilde{\lambda}))^{-1}&\nabla_{\theta}p(y|\theta)|_{\theta=\theta(u_m; \tilde{\lambda})}-(p(y|\theta_{k,m}))^{-1}\nabla_{\theta}p(y|\theta)|_{\theta=\theta (u_m;\lambda_k)}\bigg|\bigg|.  
       \end{aligned}
   \end{equation*}    
   The first term converges to 0 previously shown, while the second term also converges to 0 by Assumption \ref{assumption2}.7, which states $\log p(y|\theta(u;\lambda))$ is continuously differentiable, and $\lambda_k \rightarrow \tilde{\lambda}$ when $k\rightarrow\infty$. This establishes the following convergence result.   
%$$\lim_{k\rightarrow\infty}\sup_{m}\bigg|\bigg|D_{k,m}-\nabla _{\theta}\log p(y|\theta(u;\lambda))\bigg|_{\theta(u;\lambda)=(u_m;\lambda_k)}\bigg|\bigg|=0, \quad w.p.1.     $$
   \end{proof}    
 Thus, we have proven that the sequence of $D_{k}$ converges asymptotically to the gradient of the likelihood function $\log p(y|\theta(u;\lambda))$. Later, we need to confirm that the limit point $ \tilde{\lambda}$ to which $\lambda_k$ converges is exactly the point where the gradient is 0, i.e., $\nabla_{\lambda}\hat{L}_M( \tilde{\lambda})=0.$

\textbf{Proof of Proposition \ref{proposition2}}:

\begin{proof}
Notice that
\begin{equation*}\small
\begin{aligned}
        A(\lambda_k)B(\lambda_k) = \sum_{m=1}^M\nabla_{\lambda}\theta(u_m;\lambda_k )\nabla_{\theta}\log p(\theta_{k,m}),\  A(\lambda_k)C(\lambda_k) = \sum_{m=1}^M\nabla_{\lambda}\theta(u_m;\lambda_k )\nabla_{\theta}\log q_{\lambda}(\theta_{k,m}).
\end{aligned}
\end{equation*}
By the definition of the two notations,
\begin{equation*}
\small
    \begin{aligned}
        &S_k - \nabla_{\lambda}\hat{L}_M(\lambda_k) = \frac{A(\lambda_k)}{M}\bigg(E^MD_k + B(\lambda_k) - C(\lambda_k)\bigg)\\ -& \frac{1}{M}\sum_{m=1}^M\nabla_{\lambda}\theta(u_m;\lambda_k)\bigg(E\nabla_{\theta}\log p(y|\theta(u_m;\lambda_k))+\nabla_{\theta}\log p(\theta(u_m;\lambda_k)) - \nabla_{\theta}\log q_{\lambda}(\theta(u_m;\lambda_k))\bigg)\\
            =& \frac{1}{M}\bigg(A(\lambda_k)E^MD_k - \sum_{m=1}^M\nabla_{\lambda}\theta(u_m;\lambda_k)E\nabla_{\theta}\log p(y|\theta(u_m;\lambda_k))\bigg)\\
            =& \frac{1}{M}\sum_{m=1}^M\nabla_{\lambda}\theta(u_m;\lambda_k)E\bigg(D_{k,m}-\nabla _{\theta}\log p(y|\theta(u_m;\lambda_k))\bigg).
    \end{aligned}
\end{equation*}
$\nabla_{\lambda}\theta(u_m;\lambda)$ is bounded since $\Lambda$ is a compact set and $\theta(u_m;\lambda)$ is continuously differentiable with respect to $\lambda$. By Theorem \ref{thm1}, we can reach the conclusion.
  \end{proof}

\textbf{Proof of Theorem \ref{thm2}}:
    \begin{proof}
          From the iterative equation, we have $\lambda_{k+1}=\lambda_k+\beta_k(S_k+Z_k)=\lambda_k+\beta_kh(\lambda_k)+\beta_kb_k+\beta_kZ_k,$ where $h(\lambda_k)=\nabla_{\lambda}\hat{L}_M(\lambda)|_{\lambda=\lambda_k}$, $b_k = -\nabla_{\lambda}\hat{L}_M(\lambda)|_{\lambda=\lambda_k}+S_k$. Define $\zeta_0=0$, $\zeta_n=\sum_{i=0}^{n-1}\beta_i$, $m(\zeta)=\max\{n:\zeta_n\le\zeta\}.$
          Under the time scale $\beta$, define the translation process similarly as before $\lambda^n(\cdot)$ and $Z^n(\cdot)$. Let $Z^n(\zeta)=\sum_{i=n}^{m(\zeta_n+\zeta)-1}\beta_iZ_i$ for $\zeta\ge 0.$
          Assume that for given $T_0>0,\ Z^n(\zeta)$ is not equicontinuous on $[0,T_0]$, then there exists a sequence $n_k\rightarrow\infty$, which is dependent on pathway, bounded time $\xi_k\in[0,T]$, $v_k\rightarrow0^+$, $\epsilon>0$, such that $\Vert Z^{n_k}(\xi_k+v_k)-Z^{n_k}(\xi_k)\Vert =\Vert \sum_{i=m(\zeta_{n_k}+\xi_k)}^{m(\zeta_{n_k}+\xi_k+v_k)}\beta_iZ_i\Vert \ge\epsilon.$ By the conclusion in Lemma \ref{lemma7} $\Vert Z_k\Vert \le\Vert S_k\Vert $, we have 
        $\Vert Z_k\Vert \le\Vert S_k\Vert \le\Vert \nabla_{\lambda}\hat{L}_M(\lambda)|_{\lambda=\lambda_k}\Vert +\Vert -\nabla_{\lambda}\hat{L}_M(\lambda)|_{\lambda=\lambda_k}+S_k\Vert =\Vert h(\lambda_k)\Vert +\Vert b_k\Vert.$
        Furthermore,
        \begin{equation}\label{eq12}
        \epsilon\le\Vert \sum\limits_{i=m(\zeta_{n_k}+\xi_k)}^{m(\zeta_{n_k}+\xi_k+v_k)}\beta_iZ_i\Vert \le\Vert \sum\limits_{i=m(\zeta_{n_k}+\xi_k)}^{m(\zeta_{n_k}+\xi_k+v_k)}\beta_i(\Vert h(\lambda_i)\Vert +\Vert b_i\Vert ).
        \end{equation}
    %Since $h(\lambda) = \nabla_{\lambda}\hat{L}_M(\lambda)$ is continuous, it is bounded in $\Lambda,$ and by Theorem \ref{thm1}, $$\lim_{k\rightarrow\infty}\sup_{m}\bigg|\bigg|D_{k,m}-\nabla _{\theta}\log p(y|\theta_{k,m})\bigg|\bigg|=0, \quad w.p.1.$$ 
    %Therefore,
    %\begin{equation*} 
    %\begin{aligned}
        %\Vert b_k\Vert  &= \bigg|\bigg|  \frac{1}{M}\sum_{m=1}^M \bigg(\nabla_{\lambda}\theta(u_m;\lambda_k)(ED_{k,m} + \nabla_{\theta}\log p(\theta_{k,m}) - \nabla_{\theta}\log q_{\lambda}(\theta_{k,m}))\bigg)  -\nabla_{\lambda}\hat{L}_M(\lambda_k)
        %      \bigg|\bigg| \\
        %&= \bigg|\bigg| \frac{1}{M}\sum_{m=1}^M \bigg(\nabla_{\lambda}\theta(u_m;\lambda_k) E(D_{k,m}- \nabla _{\theta}\log p(y|\theta_{k,m})) \bigg)  \bigg|\bigg| \rightarrow 0, \quad w.p.1. \\     
        %\end{aligned}
        %\end{equation*}
        Since $h(\lambda) = \nabla_{\lambda}\hat{L}_M(\lambda)$ is continuous, it is bounded in $\Lambda$.  By Proposition \ref{proposition2}, we have $\Vert b_k\Vert \rightarrow 0$ and $\beta_k\rightarrow0$ when $k\rightarrow \infty$. Therefore, the left-hand side of Equation (\ref{eq12}) is a constant, while the right end tends to 0, leading to a contradiction with the assumption that $Z^n(t)$ is not equicontinuous. Hence, $ Z^n(t)$ is equicontinuous. Moreover, $\lambda^n(t)$ is also equicontinuous on $[0,T_0]$. By applying Theorem 5.2.3 in \cite{Kushner1997StochasticAA}, we can verify that all conditions are satisfied, and the convergent subsequence of $(\lambda^n(\cdot),Z^n(\cdot))$ satisfies the ODE. Thus, the iterative sequence $\{\lambda_k\}$ converges to the limit point. Consequently, the value $ \bar{\lambda}^M$ obtained in Theorem \ref{thm2} is the equilibrium of the ODE, which satisfies $\nabla_{\lambda}\hat{L}_M(\lambda)|_{\lambda= \bar{\lambda}^M} = 0.$  Therefore, the limit of $\{\lambda_k\}$ is precisely the optimal value of the approximate ELBO.   \end{proof}

\textbf{Proof of Proposition \ref{proposition3}}:
\begin{proof}
We have 
    $\Vert S_k^M -\nabla_\lambda L(\lambda_k)\Vert  \le \Vert S_k^M-\nabla_\lambda\hat{L}_M(\lambda_k)\Vert  + \Vert \nabla_\lambda\hat{L}_M(\lambda_k) - \nabla_\lambda L(\lambda_k)\Vert$. 
    Let $k \rightarrow \infty$ first, Proposition \ref{proposition2} shows the first term tends to 0. Then let $M \rightarrow \infty$, Proposition \ref{proposition1} shows the uniform convergence with respect to $k$ as $M \rightarrow \infty$:
    $$\sup_{k}|\nabla_{\lambda} \hat{L}_M(\lambda_k)-\nabla_{\lambda} L(\lambda_k)| \stackrel{a.s.} {\longrightarrow} 0. $$
    
    For $\epsilon>0$, there exists $M_0>0$,  for every $M\ge M_0$, there exists $K_M$, $\Vert S_k^M-\nabla_\lambda\hat{L}_M(\lambda_k)\Vert < \epsilon/2$ holds for every $k\ge K_M$. Also, $\Vert \nabla_\lambda\hat{L}_M(\lambda_k) - \nabla_\lambda L(\lambda_k)\Vert< \epsilon/2$ holds for every $k$ when $M\ge M_0$. Therefore, for $\epsilon>0$, there exists $M_0>0$, for every $M\ge M_0$, there exists $K_M$, when $K>K_M$, $\Vert S_k^M -\nabla_\lambda L(\lambda_k)\Vert < \epsilon$, which ends the proof.   
    %Also by the convergence of $\lambda_k$ in Theorem \ref{thm2}, we end the proof.
  \end{proof}

\textbf{Proof of Proposition \ref{proposition4}}:
\begin{proof}
Suppose sequence $\{\bar{\lambda}^M\}$ satisfies $\nabla_{\lambda}\hat{L}_M(\bar{\lambda}^M) = 0$ and this proposition does not hold, there exists a subsequence of $\{\bar{\lambda}^M\}$ satisfying $\Vert\bar{\lambda}^{M_i}-\bar{\lambda}\Vert >\epsilon_0>0 $. Since $\Lambda$ in compact, this subsequence will converge to some point $\tilde{\lambda}$ and $\nabla_{\lambda}L(\tilde{\lambda})=\lim_{M \rightarrow \infty}\nabla_{\lambda}\hat{L}_M(\tilde{\lambda})=0$ by the uniform convergence given in Proposition \ref{proposition1}. So $\nabla_{\lambda}L$ has two different roots $\bar{\lambda}$ and $\tilde{\lambda}$, which contradicts to the Assumption \ref{assumption2}.8 that $\nabla_{\lambda}^2 L(\lambda)$ is reversible.
   \end{proof}
    %By the Taylor expansion, we have      $\nabla_{\lambda}\hat{L}_M(\bar{\lambda}^M) - \nabla_{\lambda}\hat{L}_M(\bar{\lambda}) = \nabla^2\hat{L}_M(\tilde{\lambda})(\bar{\lambda}^M - \bar{\lambda}),$
    %where $\tilde{\lambda}$ lies between $\bar{\lambda}^M$ and $\bar{\lambda}$. By Assumption \ref{assumption2}.8, $\nabla^2\hat{L}_M(\tilde{\lambda})$ and $\nabla^2 L(\bar{\lambda})$ are invertible and the uniform convergence given in Proposition \ref{proposition1}, we can reach the conclusion: $$\lim_{M \rightarrow \infty}\bar{\lambda}^M - \bar{\lambda} = \lim_{M \rightarrow \infty}-\nabla^2\hat{L}_M(\tilde{\lambda})^{-1}\nabla_{\lambda}\hat{L}_M(\bar{\lambda}) = 0.  $$  \end{proof}
    %By the Taylor expansion, we have $\nabla_{\lambda}L(\bar{\lambda}^M) - \nabla_{\lambda}L(\bar{\lambda}) = \nabla^2L(\tilde{\lambda})(\bar{\lambda}^M - \bar{\lambda}),$    %$\nabla_{\lambda}\hat{L}_M(\bar{\lambda}^M) - \nabla_{\lambda}\hat{L}_M(\bar{\lambda}) = \nabla^2\hat{L}_M(\tilde{\lambda})(\bar{\lambda}^M - \bar{\lambda}),$
    %where $\tilde{\lambda}$ lies between $\bar{\lambda}^M$ and $\bar{\lambda}$. By Theorem \ref{thm2} and continuity of $\nabla_{\lambda}{L}(\lambda)$, it follows that  $\lim_{M \rightarrow \infty}\nabla_{\lambda}L(\bar{\lambda}^M) = \nabla_{\lambda}{L}(\bar{\lambda}) = 0.$ From Assumption \ref{assumption2}.8, we can conclude We can reach the conclusion by Assumption \ref{assumption2}.8 that  $\nabla^2L(\lambda)$ is reversible in the field of $\bar{\lambda}$.

\section{Proof of Weak Convergence}\label{appendixB}
\textbf{Proof of Proposition \ref{thm3}}:
\begin{proof}
 Since $\lambda \in \Lambda$, we can omit the projection term $Z_k$ in recursion (\ref{equation5}). The convergence of $(\lambda_k, D_k)$ to $(\bar{\lambda}^M, \bar{D})$ has been proved. Let $f(\lambda,D) = \frac{A(\lambda)}{M}(E^MD + B(\lambda) - C(\lambda))$, $g(\lambda,D) = \nabla_{\theta}p(y|\theta(\lambda)) - p(y|\theta(\lambda))D$. Applying the Taylor expansion at the limit point $(\bar{\lambda}^M, \bar{D})$, we have
\begin{equation}\label{eq6}
\left(                 
\begin{array}{cc}
 f(\lambda,D)   \\ 
 g(\lambda,D) 
\end{array}
\right) = \left(                 
\begin{array}{ccc}
  Q_{11} & Q_{12} \\ 
  Q_{21} & Q_{22}
\end{array}
\right)\cdot \left( \begin{array}{cc}
 \lambda-\bar{\lambda}^M  \\ 
 D - \bar{D}
\end{array}
\right) + O \left( \left\Vert \begin{array}{cc}
 \lambda-\bar{\lambda}^M \\ 
 D - \bar{D}
\end{array}\right\Vert^2 \right),
\end{equation}
where $Q_{11}=\frac{\partial f(\lambda,D)}{\partial \lambda}|_{(\bar{\lambda}^M, \bar{D})}$, $Q_{12}=\frac{\partial f(\lambda,D)}{\partial D}|_{(\bar{\lambda}^M, \bar{D})} = \frac{A(\bar{\lambda}^M)E^M}{M}$, $Q_{21} = \frac{\partial g(\lambda,D)}{\partial \lambda}|_{(\bar{\lambda}^M, \bar{D})}$, $Q_{22}=\frac{\partial g(\lambda,D)}{\partial D}|_{(\bar{\lambda}^M, \bar{D})} = -p(y|\theta(\bar{\lambda}^M))$. By the optimal condition for limit point $f(\bar{\lambda}^M, \bar{D})=g(\bar{\lambda}^M, \bar{D})=0$, we have $Q_{11}=\nabla_{\lambda}^2\hat{L}_M(\bar{\lambda}^M)$, $Q_{21} = 0$.
In the framework of the NMTS algorithm,
\begin{equation*}
     \begin{cases}
         \lambda_{k+1} = \lambda_k + \beta_k A_k\\
         D_{k+1} = D_k + \alpha_k B_k,
     \end{cases}
 \end{equation*}
 where $A_k  = f(\lambda_k,D_k)$, $B_k  = G_{1,k}(\lambda_k) - G_{2,k}(\lambda_k)D_k  = g(\lambda_k,D_k)  + W_{k} $. Here $W_{k} = G_{1,k}(\lambda_k) -\nabla_{\theta}p(y|\theta_k)+ p(y|\theta_k)D_k - G_{2,k}(\lambda_k)D_k$, and $\mathbb{E}[W_k|\mathcal{F}_k] = 0$ by Assumption \ref{assumption2}.4.

 Set $H =  Q_{11} - Q_{12}Q_{22}^{-1}Q_{21} = Q_{11}=\nabla_{\lambda}^2\hat{L}_M(\bar{\lambda}^M)$, then the largest eigenvalue of $H$ is negative by Assumption \ref{assumption3}.1. Also, the largest eigenvalue of $Q_{22}$ is negative by its definition.
 
Define the following equations: 
$\Gamma_{22} = \lim_{k\rightarrow \infty}\mathbb{E}[W_kW_k^{\top}|\mathcal{F}_k]$, $ \Gamma_{\theta} = Q_{12}Q_{22}^{-1}\Gamma_{22}Q_{22}^{-\top}Q_{12}^{\top},$
\begin{equation}\label{eq4}
\begin{aligned}
    \Sigma_{\lambda} = \int_0^{\infty}\exp(Ht)\Gamma_{\theta}\exp(H^{\top}t)dt, \quad
    \Sigma_{D} = \int_0^{\infty}\exp(Q_{22}t)\Gamma_{22}\exp(Q_{22}t)dt.
\end{aligned}
\end{equation}
Therefore,  we will reach the conclusion by checking all the conditions and applying Theorem 1 in \cite{mokkadem2006convergence}.
  \end{proof}

\textbf{Proof of Theorem \ref{thm6}}:
\begin{proof}
By the definition of $S_k$ and $\nabla_{\lambda}\hat{L}_M(\bar{\lambda}^M)$,
    \begin{equation*}
    \small
        \begin{aligned}
            &S_k - \nabla_{\lambda}\hat{L}_M(\bar{\lambda}^M) = \frac{A(\lambda_k)}{M}\bigg(E^MD_k + B(\lambda_k) - C(\lambda_k)\bigg) - \frac{A(\bar{\lambda}^M)}{M}\bigg(E^M\bar{D}+B(\bar{\lambda}^M) - C(\bar{\lambda}^M)\bigg)\\
            =& \frac{1}{M}\bigg(A(\lambda_k)E^MD_k-A(\bar{\lambda}^M)E^M\bar{D}+A(\lambda_k)B(\lambda_k)-A(\bar{\lambda}^M)B(\bar{\lambda}^M)-A(\lambda_k)C(\lambda_k)+A(\bar{\lambda}^M)C(\bar{\lambda}^M)\bigg),
        \end{aligned}
    \end{equation*}
    where the first two terms satisfy
    \begin{equation*}
    \footnotesize
    \begin{aligned}
        &\sqrt{\alpha_k^{-1}}(A(\lambda_k)E^MD_k-A(\bar{\lambda}^M)E^M\bar{D}) 
        =\sqrt{\alpha_k^{-1}}(A(\lambda_k)E^MD_k-A(\bar{\lambda}^M)E^MD_k+A(\bar{\lambda}^M)E^MD_k-A(\bar{\lambda}^M)E^M\bar{D})\\
        =&\sqrt{\frac{\beta_k}{\alpha_k}}\sqrt{\beta_k^{-1}}(A(\lambda_k)-A(\bar{\lambda}^M))E^MD_k+\sqrt{\alpha_k^{-1}}A(\bar{\lambda}^M)E^M(D_k-\bar{D}).\\
    \end{aligned}
    \end{equation*}
    By the Delta method \citep{Vaart_1998} and Proposition \ref{thm3}, we have
    \begin{equation*}
        \begin{aligned}
           \sqrt{\beta_k^{-1}}((A(\lambda_k)-A(\bar{\lambda}^M)) \stackrel{d}{\longrightarrow}& A'(\bar{\lambda}^M)\mathcal{N}(0,\Sigma_{\lambda}),\\
           \sqrt{\alpha_k^{-1}}A(\bar{\lambda}^M)E^M(D_k-\bar{D})\stackrel{d}{\longrightarrow}&\mathcal{N}(0, A(\bar{\lambda}^M)E^M\Sigma_D(E^M)^{\top}A(\bar{\lambda}^M)^{\top}).
        \end{aligned}
    \end{equation*}
    Note that $\frac{\beta_k}{\alpha_k} \rightarrow 0$ and by Slutsky's Theorem,
    \begin{equation*}
        \sqrt{\frac{\beta_k}{\alpha_k}}\sqrt{\beta_k^{-1}}(A(\lambda_k)-A(\bar{\lambda}^M))E^MD_k = \sqrt{\frac{\beta_k}{\alpha_k}} E\bar{D} O_p(1) \stackrel{d}{\longrightarrow} 0.
    \end{equation*}
     The same weak convergence rate is also true for the convergence of $A(\lambda_k)B(\lambda_k)$ and $A(\lambda_k)C(\lambda_k)$:
    \begin{equation*}\small
    \begin{aligned}
        \sqrt{\alpha_k^{-1}}(A(\lambda_k)B(\lambda_k)-A(\bar{\lambda}^M)B(\bar{\lambda}^M)) =  \sqrt{\frac{\beta_k}{\alpha_k}}O_p(1) =o_p(1),  \sqrt{\alpha_k^{-1}}(A(\lambda_k)C(\lambda_k)-A(\bar{\lambda}^M)C(\bar{\lambda}^M)) = o_p(1).
    \end{aligned}
    \end{equation*}
    Combining all these terms, by Slutsky's Theorem, we will have 
    \begin{equation*}
        \sqrt{\alpha_k^{-1}}(A(\lambda_k)E^MD_k-A(\bar{\lambda}^M)E\bar{D}) \stackrel{d}{\longrightarrow} \mathcal{N}(0, A(\bar{\lambda}^M)E^M\Sigma_DE^{\top}A(\bar{\lambda}^M)^{\top}).
    \end{equation*}
    In conclusion, 
             $\sqrt{\alpha_k^{-1}}(S_k-\nabla_{\lambda}\hat{L}_M
(\bar{\lambda}^M)) = \sqrt{\alpha_k^{-1}}\frac{1}{M}A(\bar{\lambda}^M)E(D_k-\bar{D}) + o_p(1) \stackrel{d}{\longrightarrow} \mathcal{N}(0, \Sigma_s^M).  $
 \end{proof}
\textbf{Proof of Lemma \ref{lemma11}}:
\begin{proof}
    We can analyze the order with respect to $M$ and $N$ for every part. Define $O(\cdot)$ as the order of elements in a matrix. $Q_{11}$ is a square matrix with $l$ dimensions and all the elements in $Q_{11}$ are constant order since $Q_{11}=\nabla_{\lambda}^2\hat{L}_M(\bar{\lambda}^M)$. $Q_{12}$ is a matrix with $l$ rows and $M\times d\times T$ columns and the order of element is  $O(\frac{1}{M})$ by the form of $Q_{12}$ and the boundness of $A(\lambda)$.  So $H$ is a square matrix with $l$ dimensions and  $O(H) = O(1)$. $Q_{22}$ is a diagonal matrix with $M\times d\times T$ dimensions and for every element $O(Q_{22}) = O(1)$. Furthermore, by the variance of Monte Carlo simulation in Equation (\ref{G1G2}), $\Gamma_{22} = \lim_{k\rightarrow \infty}\mathbb{E}[W_kW_k^{\top}|\mathcal{F}_k] = O(\frac{1}{N})$. Then $O(\Gamma_{\theta})$ is a square matrix with $l$ dimensions and $O(\Gamma_{\theta}) = O(Q_{12}Q_{22}^{-1}\Gamma_{22}Q_{22}^{-\top}Q_{12}^{\top}) = O(\frac{1}{N})$. Therefore, $\Sigma_{\lambda}$ is a matrix with $l$ dimensions and $O(\Sigma_{\lambda}) = O(\frac{1}{N})$.
    
    $\Gamma_{22}$ is a square matrix with $M\times d\times T$ dimensions and $O(\Gamma_{22}) = O(\frac{1}{N})$.  Therefore, $\Sigma_D$ is also a square matrix with $M\times d\times T$ dimensions and its every element satisfies $O(\Sigma_D) = O(\frac{1}{N})$. 
  \end{proof}

\textbf{Proof of Theorem \ref{thmS}}:
\begin{proof}
     We can use the same method as Theorem \ref{thm6} to check that $\operatorname{Var}(S_k^M-\nabla_{\lambda}\hat{L}_M
    (\lambda_k)) = O(\frac{\alpha_k}{N})$. We have
    \begin{equation*}
    \footnotesize
        \begin{aligned}
            S_k^M - \nabla_{\lambda}\hat{L}_M
    (\lambda_k) =& \frac{A(\lambda_k)}{M}\bigg(E^MD_k + B(\lambda_k) - C(\lambda_k)\bigg) - \frac{A(\lambda_k)}{M}\bigg(\nabla_{\theta}\log p(y|\theta(u;\lambda_k)) +B(\lambda_k) - C(\lambda_k)\bigg)\\
            =& \frac{1}{M}\bigg(A(\lambda_k)E^MD_k-A(\lambda_k)E^M\nabla_{\theta}\log p(y|\theta(u;\lambda_k))\bigg)\\
            =& \frac{1}{M}A(\lambda_k)E^M\bigg(D_k-\bar{D}\bigg) + \frac{1}{M}A(\lambda_k)E^M\bigg(\nabla _{\theta}\log p(y|\theta(u;\bar{\lambda}^M))-\nabla_{\theta}\log p(y|\theta(u;\lambda_k))\bigg).
        \end{aligned}
    \end{equation*}
    Therefore, by Slutsky's Theorem and the Delta method, the asymptotic variance of the first term and the second term are
    \begin{equation*}
        \begin{aligned}
            &\operatorname{Var}\bigg(\frac{1}{M}A(\lambda_k)E^M(D_k-\bar{D})\bigg) = O(\alpha_k)O(\frac{1}{M^2}A(\bar{\lambda}^M)E^M\Sigma_D(E^M)^{\top}A(\bar{\lambda}^M)^{\top}) = O(\frac{\alpha_k}{N}),\\
            &\operatorname{Var}\bigg(\frac{1}{M}A(\lambda_k)E^M\bigg(\nabla _{\theta}\log p(y|\theta(u;\bar{\lambda}^M))-\nabla_{\theta}\log p(y|\theta(u;\lambda_k))\bigg)\bigg) = O(\beta_k)O(\Sigma_D) = O(\frac{\beta_k}{N}).
        \end{aligned}
    \end{equation*}
    Proposition \ref{proposition1} shows that $\operatorname{Var}(\nabla_{\lambda}\hat{L}_M
    (\lambda_k)- \nabla_{\lambda}{L}
    (\lambda_k)) = O(\frac{1}{M})$ uniformly for every $\lambda_k$. Then we have
\begin{equation*}
\small
\begin{aligned}
    &\operatorname{Var}(S_k^M - \nabla_{\lambda}L(\lambda_k)) = \operatorname{Var}(S_k^M-\nabla_{\lambda}\hat{L}_M
    (\lambda_k)) + \operatorname{Var}(\nabla_{\lambda}\hat{L}_M
    (\lambda_k)- \nabla_{\lambda}{L}
    (\lambda_k))\\ +& 2\operatorname{Cov}(S_k^M-\nabla_{\lambda}\hat{L}_M
    (\lambda_k), \nabla_{\lambda}\hat{L}_M
    (\lambda_k)- \nabla_{\lambda}{L}
    (\lambda_k)) \\ \le& 2\operatorname{Var}(S_k^M-\nabla_{\lambda}\hat{L}_M
    (\lambda_k)) + 2\operatorname{Var}(\nabla_{\lambda}\hat{L}_M
    (\lambda_k)- \nabla_{\lambda}{L}
    (\lambda_k))
    =  O(\frac{\alpha_k}{N}) + O(\frac{1}{M}).
\end{aligned}
\end{equation*}
By using Chebyshev's inequality, we can reach the conclusion.
  \end{proof}

\textbf{Proof of Theorem \ref{theorem5}}:
\begin{proof}
    By the Taylor expansion,     $\nabla_{\lambda}\hat{L}_M(\bar{\lambda}^M) - \nabla_{\lambda}\hat{L}_M(\bar{\lambda}) = \nabla^2\hat{L}_M(\bar{\lambda})(\bar{\lambda}^M - \bar{\lambda}) + o(\bar{\lambda}^M - \bar{\lambda}).$
    And notice that $\nabla_{\lambda}{L}(\bar{\lambda}) = \nabla_{\lambda}\hat{L}_M(\bar{\lambda}^M) = 0$, by Assumption \ref{assumption3}.1, we have 
    $\bar{\lambda}^M - \bar{\lambda} = \nabla^2\hat{L}_M(\bar{\lambda})^{-1}\bigg(\nabla_{\lambda}{L}(\bar{\lambda}) - \nabla_{\lambda}\hat{L}_M(\bar{\lambda})\bigg)  + o(\bar{\lambda}^M - \bar{\lambda}).$
    By Slutsky's Theorem and the asymptotic normality of $\nabla_{\lambda}\hat{L}_M(\bar{\lambda})$, we have  $$\sqrt{M}(\bar{\lambda}^M - \bar{\lambda}) \stackrel{d}{\longrightarrow} \mathcal{N}(0, \nabla^2{L}(\bar{\lambda})^{-1}\operatorname{Var}_u(h(u;\bar{\lambda}))\nabla^2{L}(\bar{\lambda})^{-\top}).  $$ 
 \end{proof}

\textbf{Proof of Theorem \ref{theorem6}}:
\begin{proof}
    Proposition \ref{thm3} and Lemma \ref{lemma11} show that $\operatorname{Var}(\lambda_k^M-\bar{\lambda}^M) =  O(\beta_k \Sigma_{\lambda}) = O(\frac{\beta_k}{N} )$. Theorem \ref{theorem5} shows that $\operatorname{Var}(\bar{\lambda}^M-\bar{\lambda}) = O(\frac{1}{M})$. Therefore, we have
\begin{equation*}
\begin{aligned}
    \operatorname{Var}(\lambda_k^M - \bar{\lambda}) &= \operatorname{Var}(\lambda_k^M-\bar{\lambda}^M) + \operatorname{Var}(\bar{\lambda}^M-\bar{\lambda}) + 2\operatorname{Cov}(\lambda_k^M-\bar{\lambda}^M, \bar{\lambda}^M-\bar{\lambda}) \\ \le& 2\operatorname{Var}(\lambda_k^M-\bar{\lambda}^M) + 2\operatorname{Var}(\bar{\lambda}^M-\bar{\lambda}) = O(\frac{\beta_k}{N})+O(\frac{1}{M}).
\end{aligned}
\end{equation*}
By using Chebyshev's inequality, we can reach the conclusion.
  \end{proof}

\section{Proof of $\mathbb{L}^1$ Convergence}\label{appendixC}
\textbf{Proof of Theorem \ref{thm4}}:
\begin{proof}
    Let $h(\lambda) = p(y|\theta(u;\lambda))^{-1}\nabla_{\theta}p(y|\theta(u;\lambda))$, $\zeta_k = D_k-h(\lambda_k)$, we have
    $$\zeta_{k+1} = \zeta_k + \alpha_k(G_{1}(\lambda_k) - G_2(\lambda_k)D_k) + h(\lambda_k) - h(\lambda_{k+1}).$$
    Since $p$ is twice continuously differentiable and $\Lambda$ is compact, $h$ is Lipschitz continuous on $\Lambda$ and denote its Lipschitz constant as $L_h$, then we have
    \begin{equation*}
    \small
        \begin{aligned}
            \Vert h(\lambda_k)-h(\lambda_{k+1})\Vert  \le L_h\Vert \lambda_k - \lambda_{k+1}\Vert  = L_h\Vert \beta_k \bigg(\frac{A(\lambda_k)}{M}(E^MD_k + B(\lambda_k) - C(\lambda_k))+Z_k\bigg)\Vert  \le 2L_h\beta_kC_D,
        \end{aligned}
    \end{equation*}
    where $C_D$ is the bound of $\frac{A(\lambda_k)}{M}(E^MD_k + B(\lambda_k) - C(\lambda_k))$ by Lemma \ref{lemma2} and the boundness of continuous function $A(\lambda)$, $B(\lambda)$ and $C(\lambda)$. Then we have
    \begin{equation*}
        \begin{aligned}
            \Vert \zeta_{k+1}\Vert ^2 \le \Vert \zeta_k\Vert ^2 + \alpha_k^2\Vert G_{1}(\lambda_k)-G_2(\lambda_k)D_k\Vert ^2 + 4L_h^2\beta_k^2C_D^2 + 4\Vert \zeta_k\Vert L_h\beta_kC_D + \\
            2\alpha_k\zeta_k^{\top}(G_{1}(\lambda_k) - G_2(\lambda_k)D_k) + 2\alpha_k(h(\lambda_k) - h(\lambda_{k+1}))^{\top}(G_{1}(\lambda_k) - G_2(\lambda_k)D_k) .
        \end{aligned}
    \end{equation*}
    By the form of $G_1$ and $G_2$ in Equation (\ref{G1G2}), we have $\mathbb{E}[\Vert G_{1}(\lambda_k) - \nabla_{\theta}p(y|\theta(u;\lambda_k))\Vert^2|\mathcal{F}_k] = O(\frac{1}{N})$, $\mathbb{E}[\Vert G_{2}(\lambda_k) - p(y|\theta(u;\lambda_k))\Vert^2|\mathcal{F}_k] = O(\frac{1}{N})$. Set $ W_k = G_{1}(\lambda_k) - G_2(\lambda_k)D_k  + p(\lambda_k)\zeta_k$ and it follows that $\mathbb{E}[W_k|\mathcal{F}_k] = 0$, $\mathbb{E}[\Vert W_k\Vert^2|\mathcal{F}_k] = O(\frac{1}{N})$. Take the conditional expectation on both sides, and we can yield
     \begin{equation*}
        \begin{aligned}
            \mathbb{E}[\Vert \zeta_{k+1}\Vert ^2|\mathcal{F}_k] \le&  \Vert \zeta_k\Vert ^2+ \alpha_k^2\mathbb{E}[\Vert W_k- p(\lambda_k)\zeta_k\Vert^2|\mathcal{F}_k] + 4L_h^2\beta_k^2C_D^2 + 4\Vert \zeta_k\Vert L_h\beta_kC_D -  
             2\alpha_k\zeta_k^{\top} p(\lambda_k)\zeta_k \\  &+ 2\alpha_k(h(\lambda_k) - h(\lambda_{k+1}))^{\top} (- p(\lambda_k)\zeta_k)\\      
             \le&\Vert \zeta_k\Vert ^2+  2\alpha_k^2(\frac{C_G}{N} + C_P^+\Vert\zeta_k\Vert^2)  + 4L_h^2\beta_k^2C_D^2 + 4\Vert \zeta_k\Vert L_h\beta_kC_D  - 2\alpha_k C_p^-\Vert \zeta_k \Vert ^2 \\
             & + 2\alpha_k*2L_h\beta_kC_D \Vert \zeta_k\Vert C_P^+ ,
        \end{aligned}
    \end{equation*}
    where $C_P^-$ and $C_P^+$ are the bounds of $\Vert p(\theta(u;\lambda))\Vert $ in $\Lambda$ and $C_G$ is the bound for the variance term in the Monte Carlo simulation.
    Taking the expectation again, when $k$ is large enough, we have
    \begin{equation*}
    \small
        \begin{aligned}
            &\mathbb{E}[\Vert \zeta_{k+1}\Vert ^2] \le  (1-2\alpha_kC_p^- + 2\alpha_k^2C_P^+ )\mathbb{E}[\Vert \zeta_{k}\Vert ^2]+ 4L_h\beta_kC_D( 1 + \alpha_k C_P^+)\mathbb{E}[\Vert \zeta_k\Vert ] + 4L_h^2\beta_k^2C_D^2 + 2\alpha_k^2\frac{C_G}{N} \\
            &\le  (1-\alpha_kC_p^- )\mathbb{E}[\Vert \zeta_{k}\Vert ^2]+ 4L_h\beta_kC_D( 1 + \alpha_k C_P^+)\sqrt{\mathbb{E}[\Vert \zeta_k\Vert ^2]} + 
            4L_h^2\beta_k^2C_D^2 + 2\alpha_k^2\frac{C_G}{N}  \\
           & \le \bigg(\sqrt{1-\alpha_kC_p^-}\sqrt{\mathbb{E}[\Vert \zeta_{k}\Vert ^2]}  + \frac{2\beta_kL_hC_D(1+\alpha_kC_p^+)}{\sqrt{1-\alpha_kC_p^-}}        \bigg)^2 + 2\alpha_k^2\frac{C_G}{N} \\
            &\le  \bigg((1-\frac{1}{2}\alpha_kC_p^-) \sqrt{\mathbb{E}[\Vert \zeta_{k}\Vert ^2]}  + C_3\beta_k        \bigg)^2 + \frac{\alpha_k^2}{N}C_4.
        \end{aligned}
    \end{equation*}
    Now, define the mapping $T_k(x) := \sqrt{\bigg((1-\frac{1}{2}\alpha_kC_p^-) x  + C_3\beta_k \bigg)^2 + \frac{\alpha_k^2}{N}C_4},$ 
    and consider the sequence of $\{x_k\}$ generated by $x_{k+1} = T_k(x_k)$ for all $k$ with $x_0 := \sqrt{\mathbb{E}[\Vert \zeta_0\Vert ^2]}$. A simple induction shows that $\sqrt{\mathbb{E}[\Vert \zeta_k\Vert ^2]} \le x_k$. In addition, it is obvious that the gradient of $T_k(x)$ is less than 1, which implies that $T_k$ is a contraction mapping. The unique fixed point is the form of 
    $$\bar{x} = O\bigg(\frac{\beta_k}{\alpha_k}\bigg) + O\bigg(\sqrt{\frac{\alpha_k}{N}}\bigg) + \text{higher order terms}.$$
    Then applying the same technique in  \cite{jiang2023quantile}, we can conclude that $\mathbb{E}[\Vert\zeta_k\Vert]$ has the same order. 
  \end{proof}
\textbf{Proof of Theorem \ref{thm5}}:
\begin{proof}
    Define $\psi_k = \lambda_k - \bar{\lambda}^M$, and $\eta_k = S_k - \nabla_{\lambda}\hat{L}_M(\lambda_k)$.
    Then$$\psi_{k+1} = \psi_k + \beta_k(S_k + Z_k) =  \psi_k + \beta_k\eta_k + \beta_k\nabla_{\lambda}\hat{L}_M(\lambda_k) + \beta_kZ_k.$$
    Apply the Taylor expansion of $\nabla_{\lambda}\hat{L}_M(\lambda_k)$ around $\bar{\lambda}^M$, it follows that $$\nabla_{\lambda}\hat{L}_M(\lambda_k) = \nabla_{\lambda}^2\hat{L}_M(\tilde{\lambda})(\lambda_k - \bar{\lambda}^M) = H(\tilde{\lambda}) \psi_k.$$
    We have $\psi_{k+1} = \psi_k + \beta_k(S_k + Z_k) =  (I +\beta_k H(\tilde{\lambda}) )\psi_k + \beta_k\eta_k + \beta_kZ_k.$
    By applying Rayleigh-Ritz inequality \citep{rugh1996linear} and Assumption \ref{assumption3}.1, we can get 
    \begin{equation}\label{equation28}
        \Vert \psi_{k+1}\Vert  \le \Vert I +\beta_k H(\tilde{\lambda}) \Vert\Vert\psi_k\Vert  + \beta_k\Vert \eta_k\Vert  + \beta_k \Vert Z_k\Vert  \le (1-\beta_kK_L)\Vert \psi_k\Vert  + \beta_k\Vert \eta_k\Vert  + \beta_k \Vert Z_k\Vert .
    \end{equation}

     We now derive a bound for $\mathbb{E}[\Vert Z_k\Vert]$. Since $\bar{\lambda}^M$ is in the interior of $\Lambda$, there is a constant $\epsilon_{\lambda}>0$ such that the $2\epsilon_{\lambda}$-neighborhood of $\bar{\lambda}^M$ is contained in $\Lambda$. Let $A_k = \{\Vert \lambda_{k+1}-\bar{\lambda}^M\Vert\ge 2\epsilon_{\lambda}\}$. We have 
     \begin{equation*}
         \begin{aligned}
             \mathbb{E}[\Vert Z_k\Vert] =& \mathbb{E}[\Vert Z_k\Vert\vert A_k]P(A_k) + \mathbb{E}[\Vert Z_k\Vert \vert A_k^c]P(A_k^c)
             \le  \mathbb{E}[\Vert S_k\Vert ]P(\Vert\lambda_{k+1}-\bar{\lambda}^M\Vert \ge 2\epsilon_{\lambda}) \\
             \le & \mathbb{E}[\Vert S_k\Vert ]P(\Vert\lambda_{k+1}-\lambda_{k}\Vert \ge \epsilon_{\lambda} \cup \Vert \bar{\lambda}^M-\lambda_{k}\Vert \ge \epsilon_{\lambda} ) \\
             \le & \mathbb{E}[\Vert S_k\Vert ]\bigg(\frac{\mathbb{E}[\Vert\lambda_{k+1}-\lambda_{k}\Vert]}{\epsilon_{\lambda}} + \frac{\mathbb{E}[\Vert \bar{\lambda}^M-\lambda_{k}\Vert]}{\epsilon_{\lambda}}\bigg)
             \le \frac{2\beta_k\mathbb{E}^2[\Vert S_k\Vert]}{\epsilon_{\lambda}} + \mathbb{E}[\Vert S_k\Vert]\frac{\mathbb{E}[\Vert \psi_k\Vert]}{\epsilon_{\lambda}},
         \end{aligned}
     \end{equation*}
     where the last step follows from $\Vert\lambda_{k+1}-\lambda_k\Vert \le \beta_k\Vert Z_k+S_k \Vert \le 2\beta_k \Vert S_k\Vert$. 

     Then we take expectation in Equation (\ref{equation28}) and substitute the bound to get
     \begin{equation*}
     \begin{aligned}
            \mathbb{E}[\Vert \psi_{k+1}\Vert] \le& (1-\beta_kK_L)\mathbb{E}[\Vert \psi_{k}\Vert] + \beta_k\mathbb{E}[\Vert \eta_{k}\Vert] + \beta_k\mathbb{E}[\Vert Z_{k}\Vert] \\ \le& 
            \bigg(1-\beta_k(K_L-\frac{\mathbb{E}[\Vert S_k\Vert]}{\epsilon_{\lambda}})\bigg)\mathbb{E}[\Vert \psi_{k}\Vert] + \beta_k\mathbb{E}[\Vert \eta_{k}\Vert]+  \frac{2\beta_k^2\mathbb{E}^2[\Vert S_k\Vert]}{\epsilon_{\lambda}}. 
     \end{aligned}
     \end{equation*}
     By Proposition \ref{proposition2}, $S_k -\nabla_\lambda\hat{L}_M(\lambda_k) \stackrel{a.s.} {\longrightarrow} 0$ as $k$ goes to infinity. Note that since $\lambda_k \rightarrow \bar{\lambda}^{M}$ w.p.1 and $\nabla_{\lambda}\hat{L}_M(\bar{\lambda}^{M}) = 0$, the continuity of $\nabla_{\lambda}\hat{L}_M(\cdot)$ shows that $\nabla_{\lambda}\hat{L}_M(\lambda_k) \rightarrow 0$. By dominated convergence theorem, $\mathbb{E}[\Vert S_k\Vert] \le \mathbb{E}[\Vert S_k -\nabla_\lambda\hat{L}_M(\lambda_k)\Vert] + \mathbb{E}[\Vert \nabla_\lambda\hat{L}_M(\lambda_k)\Vert] \rightarrow 0$, which implies there exists an integer $K_S>0$ such that $\mathbb{E}[\Vert S_k\Vert] \le \frac{K_L\epsilon_{\lambda}}{2}$ for all $k\ge K_S$. Therefore, we obtain that for all $k\ge K_S$,
     \begin{equation*}
         \mathbb{E}[\Vert \psi_{k+1}\Vert] \le 
            (1-\frac{\beta_kK_L}{2})\mathbb{E}[\Vert \psi_{k}\Vert] + \beta_k\mathbb{E}[\Vert \eta_{k}\Vert]+  \frac{2\beta_k^2\mathbb{E}^2[\Vert S_k\Vert]}{\epsilon_{\lambda}}. 
     \end{equation*}
     Successive use of this inequality yields
\begin{equation}\label{eqthm8}\footnotesize
     \begin{aligned}
            \mathbb{E}[\Vert \psi_{k}\Vert] \le& \prod_{i=K_L}^k(1-\frac{\beta_iK_L}{2})\mathbb{E}[\Vert \psi_{K_L}\Vert] + \sum_{i=K_L}^k\prod_{j=i+1}^k(1-\frac{\beta_jK_L}{2})\beta_i\mathbb{E}[\Vert \eta_{i}\Vert]+ \sum_{i=K_L}^k\prod_{j=i+1}^k(1-\frac{\beta_jK_L}{2})\frac{2\beta_i^2\mathbb{E}^2[\Vert S_i\Vert]}{\epsilon_{\lambda}}.
     \end{aligned}
     \end{equation}
     By Theorem \ref{thm4} and definition of $S_k$,
     %注意这个地方不对！！没有根号M，有个压缩的过程被忽略了
     \begin{equation*}\small
     \begin{aligned}
          \mathbb{E}[\Vert \eta_{k}\Vert] = \mathbb{E}[\Vert S_k-\nabla_{\lambda}\hat{L}_M(\lambda_k)\Vert] = \mathbb{E}[\Vert\frac{A(\lambda_k)}{M}E^M(D_k-\nabla_{\theta}\log p(y|\theta(u;\lambda_k)))\Vert] = O\bigg(\frac{\beta_k}{\alpha_k}\bigg) + O\bigg(\sqrt{\frac{\alpha_k}{N}}\bigg).
     \end{aligned}
     \end{equation*} 
     Due to Lemma \ref{lemma1}, $\mathbb{E}^2[\Vert S_k\Vert] = O(1)$.
     When $\alpha_k=\frac{A}{k^a}$ and $\beta_k=\frac{B}{k^b}$, we can apply Lemma 3 in  \cite{hu2024quantile} to estimate the order of this summation based on the order of $\mathbb{E}[\Vert \eta_{k}\Vert]$: 
     \begin{equation*}
     \small
     \begin{aligned}
         \sum_{i=K_L}^k\prod_{j=i+1}^k(1-\frac{\beta_jK_L}{2})\beta_i\mathbb{E}[\Vert \eta_{i}\Vert] = O\bigg(\frac{\beta_k}{\alpha_k}\bigg) + O\bigg(\sqrt{\frac{\alpha_k}{N}}\bigg), \  \sum_{i=K_L}^k\prod_{j=i+1}^k(1-\frac{\beta_jK_L}{2})\frac{2\beta_i^2\mathbb{E}^2[\Vert S_i\Vert]}{\epsilon_{\lambda}} = O(\beta_k).
     \end{aligned}
     \end{equation*}
     It is evident that 
         $\prod_{i=K_L}^k(1-\frac{\beta_iK_L}{2}) = e^{\sum_{i=K_L}^k \log(1-\frac{\beta_i K_L}{2})} \le e^{-\sum_{i=K_L}^k\frac{\beta_iK_L}{2}} \le O(\frac{1}{k}).$ Combine the above inequalities and leave out the higher-order terms, and we can get the conclusion.  
         % \begin{equation*}
         %     \mathbb{E}[\Vert \psi_{k}\Vert] = O\bigg(\frac{\beta_k}{\alpha_k}\bigg) + O\bigg(\sqrt{\frac{\alpha_k}{N}}\bigg). 
         % \end{equation*}
      \end{proof}
\textbf{Proof of Proposition \ref{proposition 7}}:
\begin{proof}
    Define $\psi_k = \lambda_k - \bar{\lambda}^M$, and $\eta_k = S_k^{'} - \nabla_{\lambda}\hat{L}_M(\lambda_k)$, where $S_k^{'}$  is the corresponding definition in STS in Equation (\ref{single2}).
Then$$\psi_{k+1} = \psi_k + \beta_k(S_k^{'} + Z_k) =  \psi_k + \beta_k\eta_k + \beta_k\nabla_{\lambda}\hat{L}_M(\lambda_k) + \beta_kZ_k.$$ A same derivation of Theorem \ref{thm5} leads us to the similar result as Equation (\ref{eqthm8}). Then we have the following results by applying Theorem 1 in \cite{peng2017asymptotic}:
\begin{equation*}
\begin{aligned}
    \mathbb{E}[\Vert \eta_{k}\Vert] =& \mathbb{E}[\Vert S_k^{'}-\nabla_{\lambda}\hat{L}_M(\lambda_k)\Vert]  \\ =&\mathbb{E}\bigg[\bigg\Vert\frac{A(\lambda_k)}{M}E^M\bigg(\frac{G_{1}(X,y,\theta_{k,m})}{G_{2}(X,y,\theta_{k,m})}-\nabla_{\theta}\log p(y|\theta(u;\lambda_k))\bigg)\bigg\Vert\bigg] 
   =O\bigg(\sqrt{\frac{1}{N}}\bigg).
\end{aligned}
\end{equation*} 
Therefore, it follows that
\begin{equation*}
         \sum_{i=K_L}^k\prod_{j=i+1}^k(1-\frac{\beta_jK_L}{2})\beta_i\mathbb{E}[\Vert \eta_{i}\Vert] = O\bigg(\sqrt{\frac{1}{N}}\bigg), \quad \sum_{i=K_L}^k\prod_{j=i+1}^k(1-\frac{\beta_jK_L}{2})\frac{2\beta_i^2\mathbb{E}^2[\Vert S_i^{'}\Vert]}{\epsilon_{\lambda}} = O(\beta_k).
     \end{equation*}
     Finally, we can get the conclusion:
         %\begin{equation*}
             $\mathbb{E}[\Vert \psi_{k}\Vert] = O(\sqrt{\frac{1}{N}}) + O(\beta_k).  $
         %\end{equation*}
\end{proof}

\textbf{Proof of Theorem \ref{theorem7}}:
\begin{proof}
By Lemma~\ref{lemma2}, we have a uniform bound for each block $\Vert D_{k,m}\Vert$, implying $\Vert D_{k,m}\Vert\le C_D$ almost surely. 
Since $D_k=[D_{k,1}^\top,\ldots,D_{k,M}^\top]^\top$, its Euclidean norm satisfies $\Vert D_k\Vert = O(\sqrt{M})$. 
Substituting this scaling into the recursive inequality for $\lambda_k$, we follow the same notation as Theorem~\ref{thm4}.

Let $\zeta_k = \lambda_k - \bar{\lambda}^M$ denote the error at iteration $k$. 
We first bound the difference in $h(\lambda_k)$:
\[
\Vert h(\lambda_k)-h(\lambda_{k+1})\Vert 
\le L_h\Vert\lambda_{k+1}-\lambda_k\Vert 
\le L_h\beta_k\Big\Vert\frac{A(\lambda_k)}{M}\Big(E^MD_k + B(\lambda_k) - C(\lambda_k)\Big)+Z_k\Big\Vert 
\le 2L_h\beta_k C_D,
\]
where $C_D$ bounds $\frac{A(\lambda_k)}{M}(E^MD_k + B(\lambda_k) - C(\lambda_k))$ uniformly by continuity and Lemma~\ref{lemma2}.

Expanding $\Vert \zeta_{k+1}\Vert^2$ gives
\begin{align*}
\Vert \zeta_{k+1}\Vert^2 
&\le \Vert \zeta_k\Vert^2 
+ \alpha_k^2\Vert G_{1}(\lambda_k)-G_2(\lambda_k)D_k\Vert^2 
+ 4L_h^2\beta_k^2C_D^2 
+ 4L_h\beta_kC_D\Vert \zeta_k\Vert  \\
&\quad + 2\alpha_k\zeta_k^{\top}(G_{1}(\lambda_k) - G_2(\lambda_k)D_k)
+ 2\alpha_k(h(\lambda_k) - h(\lambda_{k+1}))^{\top}(G_{1}(\lambda_k) - G_2(\lambda_k)D_k).
\end{align*}

Taking conditional expectation and applying Lemma~\ref{lemma2}, we obtain
\begin{align*}
\mathbb{E}[\Vert \zeta_{k+1}\Vert^2|\mathcal{F}_k] 
&\le \Vert \zeta_k\Vert^2 
-2\alpha_k\,\zeta_k^{\top}p(\lambda_k)\zeta_k
+4L_h\beta_kC_D\Vert\zeta_k\Vert 
+4L_h^2\beta_k^2C_D^2  \\
&\quad +2\alpha_k\Vert h(\lambda_k)-h(\lambda_{k+1})\Vert\,\Vert p(\lambda_k)\zeta_k\Vert
+\alpha_k^2\,\mathbb{E}[\Vert G_{1}(\lambda_k)-G_2(\lambda_k)D_k + p(\lambda_k)\zeta_k\Vert^2|\mathcal{F}_k].
\end{align*}

Using the Lipschitz bounds and the independence of the inner Monte Carlo samples, 
$\operatorname{Var}(G_1-G_2D_k)=O(1/N)$, while outer blocks contribute additively as $O(1/M)$ due to the sample-average structure across $\{u_m\}_{m=1}^M$. 
Hence
\[
\mathbb{E}[\Vert G_{1}(\lambda_k)-G_2(\lambda_k)D_k + p(\lambda_k)\zeta_k\Vert^2|\mathcal{F}_k]
\le C_G\Big(\frac{1}{N}+\Vert\zeta_k\Vert^2\Big).
\]

Substituting the bounds and using $C_p^-I\preceq p(\lambda_k)\preceq C_p^+I$, we obtain
\begin{align*}
\mathbb{E}[\Vert \zeta_{k+1}\Vert^2|\mathcal{F}_k]
&\le (1-\alpha_kC_p^-)\Vert \zeta_k\Vert^2 
+ 4L_h\beta_kC_D(1+\alpha_k C_p^+)\Vert\zeta_k\Vert
+ 2\alpha_k^2\Big(\frac{C_G}{N}+C_p^+\Vert\zeta_k\Vert^2\Big)
+ 4L_h^2\beta_k^2C_D^2.
\end{align*}

Taking the total expectation yields
\[
\mathbb{E}[\Vert \zeta_{k+1}\Vert^2]
\le (1-\alpha_kC_p^-)\mathbb{E}[\Vert\zeta_k\Vert^2]
+ C_1\beta_k\mathbb{E}[\Vert\zeta_k\Vert]
+ C_2\alpha_k^2\frac{1}{N}
+ C_3\beta_k^2,
\]
for some constants $C_1,C_2,C_3>0$.

Applying the same bounding technique as in Theorem~\ref{thm4}, define
\[
T_k(x):=\sqrt{\Big((1-\tfrac{1}{2}\alpha_kC_p^-)x+C_1\beta_k\Big)^2+\alpha_k^2\frac{C_2}{N}}.
\]
Its unique fixed point satisfies
\[
\bar{x}=O\Big(\frac{\beta_k}{\alpha_k}\Big)+O\Big(\sqrt{\frac{\alpha_k}{N}}\Big).
\]
Thus,
\[
\mathbb{E}\Vert\lambda_k^M-\bar{\lambda}^M\Vert
=O\Big(\frac{\beta_k}{\alpha_k}\Big)+O\Big(\sqrt{\frac{\alpha_k}{N}}\Big).
\]

Finally, by Proposition~\ref{proposition1}, the outer SAA introduces an additional bias $O(M^{-1/2})$ due to finite outer samples. Combining all components, we obtain
\[
\mathbb{E}\Vert\lambda_k^M-\bar{\lambda}\Vert
=O\Big(\frac{\beta_k}{\alpha_k}\Big)
+O\Big(\sqrt{\frac{\alpha_k}{N}}\Big)
+O\Big(\frac{1}{\sqrt{M}}\Big),
\]
which completes the proof.
 
\end{proof}

    %注意这个地方不对！！没有根号M，有个压缩的过程被忽略了
   % $$\mathbb{E}[\Vert \lambda_k^M-\bar{\lambda}\Vert ] = O\bigg(\frac{\sqrt{M}\beta_k}{\alpha_k}\bigg) + O\bigg(\sqrt{\frac{\alpha_k M}{N}}\bigg) +  O\bigg(\sqrt{\frac{1}{M}}\bigg). $$

Proposition \ref{proposition 8} is a direct corollary of the above two proofs, so we omit the proof.

\textbf{Proof of Proposition \ref{thm9}}:
\begin{proof}
We derive the convergence rate of the second time scale by the shrinking bias of the first time scale implied by Assumption \ref{assumption5}. Therefore, the proof is similar to Proposition \ref{proposition 7}. By Assumptions \ref{assumption2}.1- \ref{assumption2}.3 and \ref{assumption5}, we have
\begin{equation*}
    \mathbb{E}[\Vert\frac{\nabla_{\theta}p_{\phi_k}(y|\theta)}{p_{\phi_k}(y|\theta)}-\nabla_{\theta}\log p(y|\theta)\Vert]\le \frac{\sqrt{C_1}+\sqrt{C_2}}{\epsilon^2}(O(\gamma_k^{(1)})+O(\gamma_k^{(2)})) = O(\gamma_k^{(1)})+O(\gamma_k^{(2)}).
\end{equation*}
The same derivation of Theorem \ref{thm5} and Proposition \ref{proposition 7} leads us to a similar result as Equation (\ref{eqthm8}). Here $\eta_{k}$ is the bias of the first time scale. Then we have
\begin{equation*}
\begin{aligned}
    \mathbb{E}[\Vert \eta_{k}\Vert] = \mathbb{E}\bigg[\bigg\Vert\frac{A(\lambda_k)}{M}E^M\bigg(\frac{\nabla_{\theta}p_{\phi_k}(y|\theta)}{p_{\phi_k}(y|\theta)}-\nabla_{\theta}\log p(y|\theta(u;\lambda_k))\bigg)\bigg\Vert\bigg] =O(\gamma_k^{(1)})+O(\gamma_k^{(2)}).
\end{aligned}
\end{equation*} 
Therefore, it follows that
\begin{equation*} 
\small\begin{aligned}
    \sum_{i=K_L}^k\prod_{j=i+1}^k(1-\frac{\beta_jK_L}{2})\beta_i\mathbb{E}[\Vert \eta_{i}\Vert] = O(\gamma_k^{(1)})+O(\gamma_k^{(2)}), \quad \sum_{i=K_L}^k\prod_{j=i+1}^k(1-\frac{\beta_jK_L}{2})\frac{2\beta_i^2\mathbb{E}^2[\Vert S_i^{'}\Vert]}{\epsilon_{\lambda}} = O(\beta_k).
\end{aligned}
     \end{equation*}
     Finally, we can get the conclusion
         %\begin{equation*}
             $\mathbb{E}[\Vert \psi_{k}\Vert] = O(\gamma_k^{(1)})+O(\gamma_k^{(2)}) + O(\beta_k).  $
         %\end{equation*}
\end{proof} 

\section{Other Supplement Information}\label{appendixF}
\subsection{Supplement Information for GLR estimators}
Under the problem setting in Section \ref{section2.1} and Equation (\ref{G1G2}), the GLR estimator for density is
\begin{equation*}
    G_2(x,Y,\theta):=\mathbb{I}\{g(x;\theta)\le Y\}\psi(x;\theta),
\end{equation*}
where $\mathbb{I}$ is the indicator function and
\begin{equation*}
    \psi(x;\theta):= \big(\frac{\partial g(x;\theta)}{\partial x_1}\big)^{-1}\big(\frac{\partial \log f(x;\theta)}{\partial x_1}- \frac{\partial^2 g(x;\theta)}{\partial x_1^2}(\frac{\partial g(x;\theta)}{\partial x_1})^{-1}\big).
\end{equation*}
 The GLR estimator for the derivative of the density is
\begin{equation*}
\begin{aligned}
   G_1(x,Y,\theta)&:= \mathbb{I}\{g(x;\theta)\le Y\}\bigg(\frac{\partial \log f(x;\theta)}{\partial \theta}+ \frac{\partial \psi(x;\theta)}{\partial \theta}-(\frac{\partial g(x;\theta)}{\partial x_1})^{-1}\bigg[\frac{\partial ^2g(x;\theta)}{\partial \theta\partial x_1}\\ &+\big(\frac{\partial g(x;\theta)}{\partial x_1}\big)\bigg\{\frac{\partial \psi(x;\theta)}{\partial x_1} + \psi(x_1;\theta)\bigg(\frac{\partial \log f(x;\theta)}{\partial x_1}- \frac{\partial^2 g(x;\theta)}{\partial x_1^2}(\frac{\partial g(x;\theta)}{\partial x_1})^{-1}\bigg)\bigg\}\bigg]\bigg). 
\end{aligned}
\end{equation*}
By Theorem 1 and Theorem 2 in \cite{Peng2020}, we have $\mathbb{E}_X[G_1(X,Y,\theta)]=\nabla_{\theta}p(Y;\theta)$ and $\mathbb{E}_X[G_2(X,Y,\theta)]=p(Y;\theta)$ for every observation $Y$ under some soft  conditions.
\subsection{Supplement Information for Section \ref{sec5.3}}\label{appendixE.2}
In this part, we describe the methodologies employed to estimate the conditional density 
 $p_{\phi}(y|\theta)$ using an MAF network and to approximate the posterior 
 $q_{\lambda}(\theta)$ using an IAF network. Both networks utilize a similar architecture based on autoregressive models, leveraging their distinct advantages for density estimation and sampling. Autoregressive models facilitate the modeling of complex distributions by ensuring that each output feature depends solely on its preceding features. This is achieved through a masking mechanism called Masked Autoencoder for Distribution Estimation (MADE), as detailed in \cite{germain2015made}. Figure \ref{app1} illustrates the forward MAF algorithm workflow with a single MADE layer. 

\begin{figure}[h]
    \caption{The autoregressive layer in MAF}
    \centering
\includegraphics[scale=0.6,trim=200pt 240pt 100pt 100pt, clip]{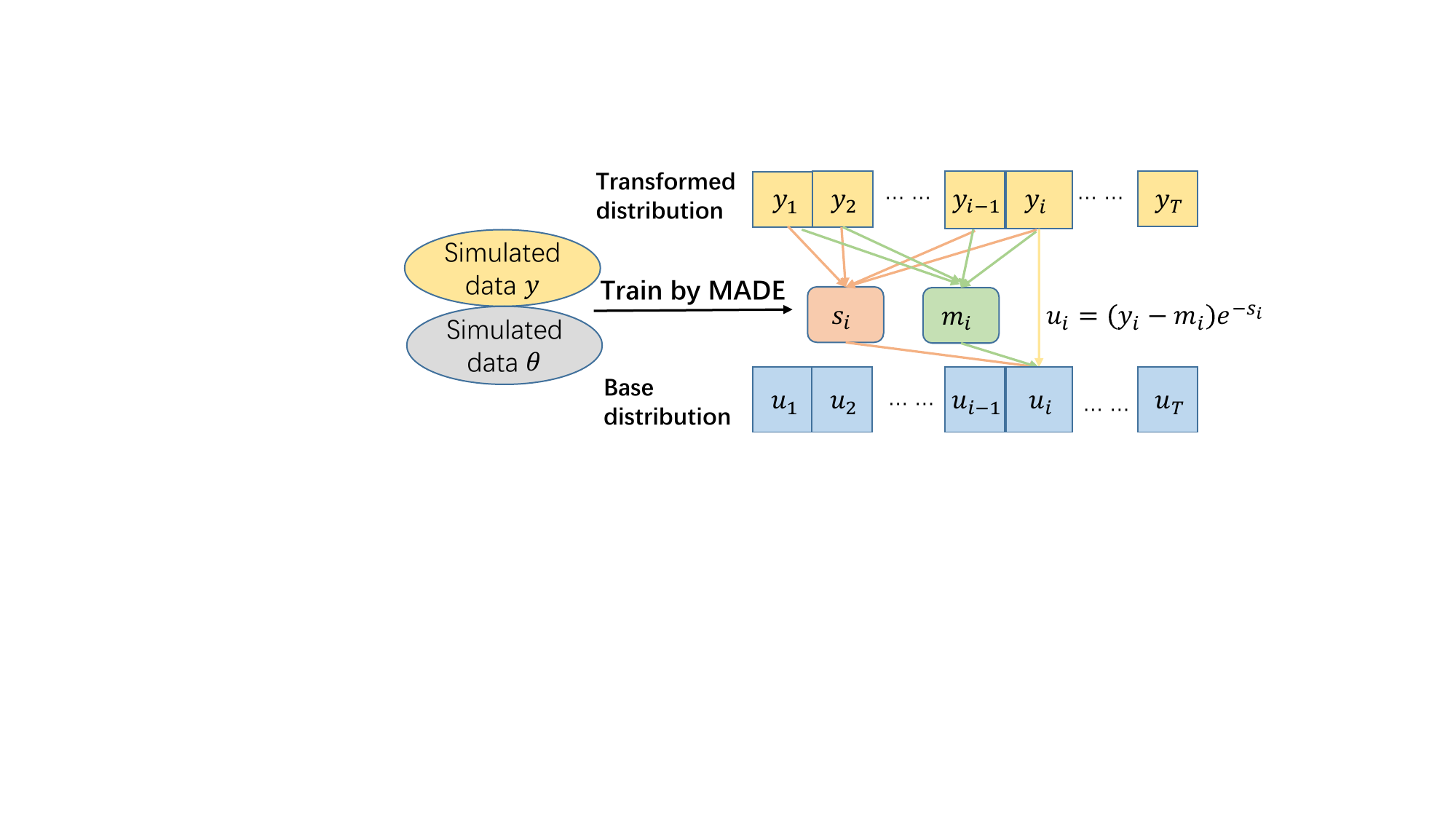}  
    \label{app1}
%    \vspace{-0.5cm}
\end{figure}

Our constructed MAF network consists of 5 MADE layers, with each MADE layer containing 3 hidden layers and 50 neurons per hidden layer. Each MADE layer produces a series of mean $m_i$ and scale parameters $e^{s_i}$ by training on simulated data $y$ and $\theta$. These parameters enable the transformation of the target distribution into a base distribution, typically a standard normal distribution, through an invertible transformation $u=T(y)$.  Note that $m_i$ and $s_i$ are only determined by $\theta$ and $y_{1:i-1}$ due to the autoregressive model in MADE, so $u_i$ can be calculated in parallel by formula $u_i=(y_i-m_i)e^{-s_i}$. Furthermore, the calculation of conditional density requires the Jacobian determinant:
$\log p(y|\theta) = \log p_{u}(u) + \log|det(\frac{\partial T}{\partial y})|.$ Since this Jacobian matrix is lower diagonal, hence determinant can be
computed efficiently, which ensures that we can efficiently calculate the conditional density $p_{\phi}(y|\theta)$ by plugging the value of $u$ and the Jacobian determinant.

 On the other hand, the IAF network mirrors the architecture of the MAF in Figure \ref{app1}, which serves as a variational distribution to model the posterior $q_{\lambda}(\theta)$. It also employs an autoregressive structure, which allows for effective sampling from the approximate posterior. Our IAF network comprises 5 autoregressive layers with 3 hidden layers and 11 neurons per hidden layer. The IAF network generates an invertible transformation that facilitates mapping from a base distribution to the approximate posterior distribution: $y_i = u_i\exp(s_i) + m_i$. Here $s_i$ and $m_i$ are determined by $u_{1:i-1}$, which makes it calculated in parallel. Therefore, IAF is particularly effective for sampling $\theta$ from its posterior.

In our experiment, after constructing the above two neural networks, we set up the training parameters as below. The learning rate for the faster scale is $\alpha_k = 10^{-3}$. While the learning rate for the slower scale is $\beta_k = 0.996^k\times10^{-3}$, satisfying the NMTS condition  $\beta_k/\alpha_k\rightarrow 0$. In every iteration, we simulate $M = 10^3$ outer layer samples and $N=1$ inner layer samples to train the two networks. After 10 rounds of coupled iterations, we can get the posterior of $\theta$ based on this sequence of observations $\hat{Y}(X;\theta)$. The process in Section \ref{sec5.3.2} is illustrated as follows.

\begin{figure}[H]
    \caption{The flowchart in Section \ref{sec5.3.2}}
    \centering
\includegraphics[scale=0.5,trim=60pt 260pt 10pt 100pt, clip]{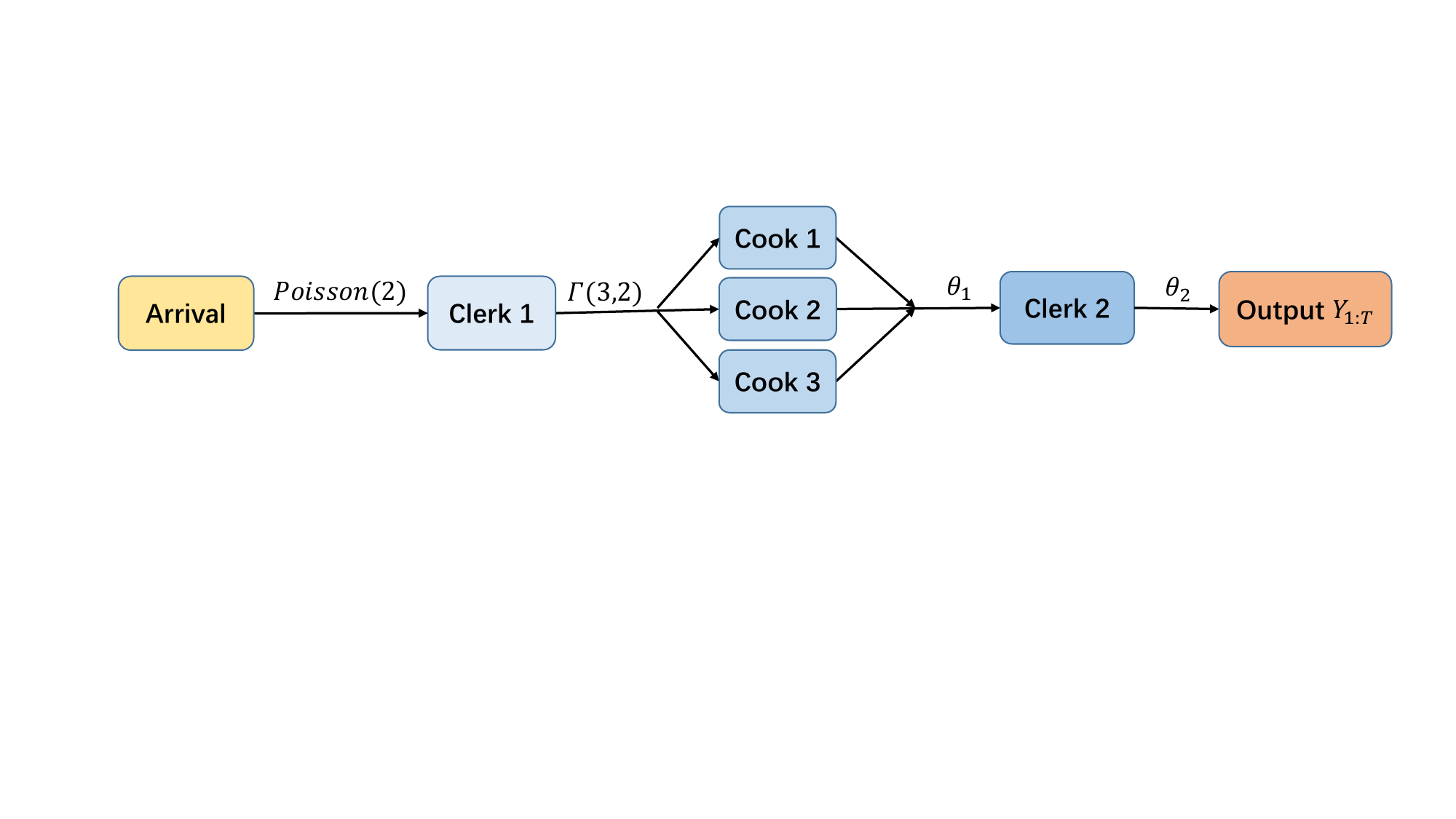}  
    \label{figure9}
%   \vspace{-0.5cm}
\end{figure}

\subsection{Algorithm \ref{algor:3}}\label{appendix5.3}
\begin{algorithm}[h] \small
   \caption{(NMTS for training likelihood and posterior neural networks)}
   \label{algor:3}
   \begin{algorithmic}[1]
   \State Input: data $Y$:$\{Y_t\}_{t=1}^T$, prior $p(\theta)$, iteration rounds $K$, number of outer layer samples and inter layer samples: $M$, $N$.
   \For {$k \text{ in } 0: K-1$}
   \State Simulate $\theta_{m}$ from $q_{\lambda_k}(\theta)$ for $m=1:M$;
   \State Sample $\{X_{m,i}\}$ and calculate the corresponding output $y_{m,i} = g(X_{m,i};\theta_{m})$ for $i=1:N$ and $m=1:M$;
   \State Train $p_{\phi_k}(y|\theta)$ with a faster speed: $\phi_{k+1} = \arg\min_{\phi} -\frac{1}{MN}\sum_{m,i} \log p_{\phi}(y_{m,i}|\theta_{m}).$
   \State Train $q_{\lambda_k}(\theta)$ with a slower speed: 
   $\lambda_{k+1}  = \arg\max_{\lambda}\mathbb{E}_{q_{\lambda}(\theta)}[\log p_{\phi_k}(Y|\theta) + \log p(\theta) - \log q_{\lambda}(\theta)].$      
   \EndFor
   \State Output: posterior $q_{\lambda_{K}}(\theta)$.
   \end{algorithmic}
\end{algorithm}

\end{appendices}

\bibliography{sample}

\end{document}